\documentclass[12pt]{article}


\usepackage[utf8]{inputenc}
\usepackage[T1]{fontenc}
\usepackage[scale=0.75]{geometry} 



\usepackage{amsmath}
\usepackage{amssymb}
\usepackage{mathrsfs} 
\usepackage{bbm} 
\usepackage{color}
\usepackage[svgnames]{xcolor}
\usepackage{graphics}
\usepackage{graphicx}
\usepackage{tikz} 
\usetikzlibrary{shapes.misc,automata,positioning,arrows}
\usepgflibrary{shapes.arrows} 
\usepackage[ruled,linesnumbered,vlined,scleft]{algorithm2e}
\usepackage{url}
\usepackage{comment}
\usepackage{theorem}
\usepackage{thmtools}

\usepackage{stmaryrd}
\usepackage{dsfont}




\newcommand{\ie}{\mbox{i.e.}}
\newcommand{\eg}{\mbox{e.g.}}
\newcommand{\cf}{\mbox{cf.}}

\newcommand{\wrt}{\mbox{w.r.t.}}
\newcommand{\st}{\mbox{s.t.}}

\newcommand{\etal}{\mbox{et al}}
\newcommand{\aka}{\mbox{a.k.a.}}
\newcommand{\myskip}{\medskip} 
\newcommand{\df}[1]{\textbf{#1}} 


\newcommand{\qed}{\rule{2mm}{2mm}}
\newcommand{\tuple}[1]{\langle #1 \rangle}
\newcommand{\defined}{=_{\mathrm{def}}}
\newcommand{\assigned}{\mathrel{\mathop:}=}
\newcommand{\grammar}{\mathrel{\mathop:}\assigned}


\newtheorem{definition}{Definition}
\newtheorem{lemma}{Lemma}
\newtheorem{corollary}{Corollary}
{\theorembodyfont{\upshape}
\newtheorem{example}{Example}
}
\newtheorem{remark}{Remark}
\newenvironment{proof}{\noindent\textbf{Proof:}\null\hfill\null\newline}


\newcommand{\ExpTime}{\textsc{exptime}}


\newcommand{\Lang}{\ensuremath{\mathcal{L}}} 
\newcommand{\KB}{\ensuremath{\mathcal{KB}}} 
\newcommand{\Th}{\ensuremath{\mathcal{KB}_{\text{inf}}}} 


\newcommand{\sat}{\Vdash}
\newcommand{\nsat}{\not\sat}
\newcommand{\entails}{\models}
\newcommand{\nentails}{\not\entails}
\newcommand{\proves}{\vdash}

\newcommand{\Mod}{\ensuremath{\text{\it Mod}}}


\newcommand{\CN}{\ensuremath{\mathsf{C}}} 
\newcommand{\RN}{\ensuremath{\mathsf{R}}} 
\newcommand{\IN}{\ensuremath{\mathsf{I}}} 
\newcommand{\dland}{\sqcap}
\newcommand{\dlor}{\sqcup}
\newcommand{\subs}{\sqsubseteq}


\newcommand{\TB}{\ensuremath{\mathcal{T}}} 
\newcommand{\AB}{\ensuremath{\mathcal{A}}} 
\newcommand{\DB}{\ensuremath{\mathcal{D}}} 

\newcommand{\Dom}{\Delta} 
\newcommand{\I}{\ensuremath{\mathcal{I}}} 

\newcommand{\ALC}{\ensuremath{\mathcal{ALC}}}


\newcommand{\dsubs}{\:\raisebox{0.45ex}{\ensuremath{\sqsubset}}\hskip-1.7ex\raisebox{-0.6ex}{\scalebox{0.9}{\ensuremath{\sim}}}\:}
\newcommand{\ndsubs}{\not\!\!\!\dsubs}

\newcommand{\dforall}{\mathrel{\scalebox{0.73}{\raisebox{0.46ex}{$\bigvee$}}}%
						\hskip-1.79ex\joinrel\scalebox{0.88}{\raisebox{0.4ex}{$\sim$}}}

\newcommand{\Typ}{\mathbf{T}}

\newcommand{\PI}{\ensuremath{\mathcal{P}}} 
\newcommand{\RI}{\ensuremath{\mathcal{R}}} 
\newcommand{\OI}{\ensuremath{\mathcal{O}}} 
\newcommand{\pref}{\prec} 
\newcommand{\npref}{\not\pref}
\newcommand{\incomp}{\ensuremath{\sim}} 

\newcommand{\rk}{\ensuremath{\mathsf{rk}}} 
\newcommand{\E}{\ensuremath{\mathcal{E}}}
\newcommand{\rank}{\ensuremath{\mathsf{rank}}} 
\newcommand{\C}{\ensuremath{\mathcal{C}}}



\newcommand{\Emp}{\ensuremath{\mathsf{Employee}}}
\newcommand{\Comp}{\ensuremath{\mathsf{Company}}}
\newcommand{\Stud}{\ensuremath{\mathsf{Student}}}
\newcommand{\EmpStud}{\ensuremath{\mathsf{EmpStud}}}
\newcommand{\Parent}{\ensuremath{\mathsf{Parent}}}
\newcommand{\Tax}{\ensuremath{\mathsf{Tax}}}
\newcommand{\Bird}{\ensuremath{\mathsf{Bird}}}
\newcommand{\Flies}{\ensuremath{\mathsf{Flies}}}
\newcommand{\Robin}{\ensuremath{\mathsf{Robin}}}
\newcommand{\Penguin}{\ensuremath{\mathsf{Penguin}}}
\newcommand{\Wings}{\ensuremath{\mathsf{Wings}}}
\newcommand{\Worker}{\ensuremath{\mathsf{Worker}}}
\newcommand{\Boss}{\ensuremath{\mathsf{Boss}}}
\newcommand{\Responsible}{\ensuremath{\mathsf{Responsible}}}


\newcommand{\pays}{\ensuremath{\mathsf{pays}}}
\newcommand{\emp}{\ensuremath{\mathsf{empBy}}}
\newcommand{\wk}{\ensuremath{\mathsf{worksFor}}}
\newcommand{\hasSuperior}{\ensuremath{\mathsf{hasSuperior}}}


\newcommand{\john}{\ensuremath{\mathsf{john}}}
\newcommand{\ibm}{\ensuremath{\mathsf{ibm}}}
\newcommand{\mary}{\ensuremath{\mathsf{mary}}}


\newenvironment{exContinued}
{\addtocounter{example}{-1}\begin{example}{\textit{{(continued)}}}}
  {\end{example}}


\begin{document}

\title{Theoretical Foundations of Defeasible Description Logics}

\author{Katarina Britz\footnote{CSIR-SU CAIR, Stellenbosch University, South Africa. email: abritz@sun.ac.za} \and Giovanni Casini\footnote{CSC, Universit\'{e} du Luxembourg, Luxembourg. email: giovanni.casini@gmail.com} \and Thomas Meyer\footnote{CAIR, University of Cape Town, South Africa. email: tmeyer@cs.uct.ac.za} \and Kody Moodley\footnote{Institute of Data Science, Maastricht University, Netherlands. \newline \indent \indent email: kody.moodley@maastrichtuniversity.nl} \and Uli~Sattler\footnote{Information Management Group, University of Manchester, United Kingdom. \newline \indent \indent email: sattler@cs.man.ac.uk} \and Ivan Varzinczak\footnote{CRIL, Univ.\ Artois \& CNRS, France. email: varzinczak@cril.fr}}

\date{}

\maketitle

\begin{abstract}
We extend description logics (DLs) with non-monotonic reasoning features. We start by investigating a notion of \emph{defeasible subsumption} in the spirit of defeasible conditionals as studied by Kraus, Lehmann and Magidor in the propositional case. In particular, we consider a natural and intuitive semantics for defeasible subsumption, and investigate KLM-style syntactic properties for both \emph{preferential} and \emph{rational} subsumption. Our contribution includes two representation results linking our semantic constructions to the set of preferential and rational properties considered. Besides showing that our semantics is appropriate, these results pave the way for more effective decision procedures for defeasible reasoning in~DLs. Indeed, we also analyse the problem of non-monotonic reasoning in~DLs at the level of \emph{entailment} and present an algorithm for the computation of \emph{rational closure} of a defeasible ontology. Importantly, our algorithm relies completely on classical entailment and shows that the computational complexity of reasoning over defeasible ontologies is no worse than that of reasoning in the underlying classical~DL~\ALC.

\noindent\textbf{Keywords}: Description logics, non-monotonic reasoning, defeasible subsumption, preferential semantics, rational closure.
\end{abstract}

\section{Introduction}\label{Introduction}

Description logics~(DLs)~\cite{BaaderEtAl2007} are central to many modern~AI and database applications since they provide the logical foundation of formal ontologies. Yet, as classical formalisms, DLs do not allow for the proper representation of and reasoning with defeasible information, as shown up in the following example, adapted from Giordano~\etal.'s~\cite{GiordanoEtAl2007}: Students do not get tax invoices; employed students do; employed students who are also parents do not. From a naïve (classical) formalisation of this scenario, one concludes that the notion of employed student is an oxymoron, and, consequently, the concept of employed student is unsatisfiable. But while concept unsatisfiability has been investigated extensively in ontology debugging and repair~\cite{MoodleyEtAl2011,SchlobachCornet2003}, our research problem here goes beyond that, as will become clear in the upcoming sections.

Endowing DLs with defeasible reasoning features is therefore a promising endeavour from the point of view of applications of knowledge representation and reasoning. Indeed, the past~25~years have witnessed many attempts to introduce defeasible-reasoning capabilities in a~DL setting, usually drawing on a well-established body of research on non-monotonic reasoning (NMR). These comprise the so-called preferential approaches~\cite{BritzEtAl2008,BritzEtAl2009,BritzEtAl2011c,CasiniStraccia2010,CasiniStraccia2013,GiordanoEtAl2007,GiordanoEtAl2008,GiordanoEtAl2013,GiordanoEtAl2015,QuantzRoyer1992,QuantzRyan1993}, circumscription-based ones~\cite{BonattiEtAl2011,BonattiEtAl2009,SenguptaEtAl2011}, amongst others~\cite{BaaderHollunder1993,BaaderHollunder1995,BonattiEtAl2015,DoniniEtAl2002,Governatori2004,GrosofEtAl2003,HeymansVermeir2002,PadghamZhang1993,QiEtAl2007,Straccia1993}.

Preferential extensions of~DLs turn out to be particularly promising, mainly because they are based on an elegant, comprehensive and well-studied framework for non-monotonic reasoning in the propositional case proposed by Kraus, Lehmann and Magidor~\cite{KrausEtAl1990,LehmannMagidor1992} and often referred to as the \emph{KLM approach}. Such a framework is valuable for a number of reasons. First, it provides for a thorough analysis of some formal properties that any consequence relation deemed as appropriate in a non-monotonic setting ought to satisfy. Such formal properties, which resemble those of a Gentzen-style proof system (see Section~\ref{DefeasibleSubsumption}), play a central role in assessing how intuitive the obtained results are and enable a more comprehensive characterisation of the introduced non-monotonic conditional from a logical point of view. Second, the KLM approach allows for many decision problems to be reduced to classical entailment checking, sometimes without blowing up the computational complexity compared to the underlying classical case. Finally, it has a well-known connection with the AGM-approach to belief revision~\cite{GardenforsMakinson1994,Rott2001} and with frameworks for reasoning under uncertainty~\cite{BenferhatEtAl1999,DuboisEtAl1994}. It is therefore reasonable to expect that most, if not all, of the aforementioned features of the KLM approach should transfer to KLM-based extensions of~DLs, too.

Following the motivation laid out above, several extensions to the KLM approach to description logics have been proposed recently~\cite{BritzEtAl2008,BritzEtAl2011c,BritzVarzinczak2016b,BritzVarzinczak2017-DL,BritzVarzinczak2018-FoIKS,BritzVarzinczak2019-AMAI,CasiniStraccia2010,CasiniStraccia2013,GiordanoEtAl2007,GiordanoEtAl2008,GiordanoEtAl2013,GiordanoEtAl2015,PenselTurhan2017,Varzinczak2018}, each of them investigating particular constructions and variants of the preferential approach. However, here our aim is to provide a \emph{comprehensive} study of the formal foundations of preferential defeasible reasoning in~DLs. By that we mean (\emph{i})~defining a general and intuitive \emph{semantics}; (\emph{ii})~showing that the corresponding \emph{representation results} (in the~KLM sense of the term) hold,  linking our semantic constructions with the KLM-style set of properties, and (\emph{iii})~presenting an appropriate analysis of \emph{entailment} in the context of ontologies with defeasible information with an associated decision procedure that is implementable.
\myskip

In the remainder of the paper, we shall take the following route: After providing the required background on the~DL we consider in this work as well as fixing the notation (Section~\ref{Preliminaries}), we introduce the notion of defeasible subsumption along with a set of KLM-inspired properties it ought to satisfy (Section~\ref{DefeasibleDLs}). In particular, using an intuitive semantics for the idea that ``usually, an element of the class~$C$ is also an element of the class~$D$'', we provide a characterisation (via representation results) of two important classes of defeasible statements, namely preferential and rational subsumption. In Section~\ref{Entailment}, we start by investigating two obvious candidates for the notion of entailment in the context of defeasible~DLs, namely preferential and modular entailment. These turn out \emph{not} to have all properties seen as important in a non-monotonic DL~setting, mimicking a similar result in the propositional case~\cite{LehmannMagidor1992}. Therefore, we propose a notion of rational entailment and show that it is the definition of consequence we are looking for. We take this definition further by exploring the relationship that rational entailment has with both Lehmann and Magidor's~\cite{LehmannMagidor1992} definition of rational closure and the more recent algorithm by Casini and Straccia~\cite{CasiniStraccia2010} for its computation (Section~\ref{RationalClosure}). After a discussion of, and comparison with, related work (Section~\ref{RelatedWork}), we conclude with a summary of our contributions and some directions for further exploration. Proofs of our results can  be found in the appendix.

\section{Logical preliminaries}\label{Preliminaries}

Description Logics (DLs)~\cite{BaaderEtAl2007} are decidable fragments of first-order logic with interesting properties and a variety of applications. There is a whole family of description logics, an example of which is \ALC\ and on which we shall focus in the present paper.\footnote{For the reader not conversant with Description Logics but familiar with modal logics, there are results in the literature relating some families of description logics to systems of modal logic. For example, a well-known result is the one linking the DL \ALC\ with the normal modal logic K~\cite{Schild1991}.}
\myskip

The (concept) language of $\ALC$ is built upon a finite set of atomic \emph{concept names}~$\CN$, a finite set of \emph{role names}~$\RN$ (\aka\ \emph{attributes}) and a finite set of \emph{individual names}~$\IN$ such that $\CN$, $\RN$ and $\IN$ are pairwise disjoint. In our scenario example, we can have for instance $\CN=\{\Emp,\Comp,\Stud,\EmpStud,\Parent,\Tax\}$, $\RN=\{\pays,\emp,\wk\}$, and $\IN=\{\john,\ibm,\mary\}$, with the respective obvious intuitions. With $A,B,\ldots$ we denote atomic concepts, with $r, s,\ldots$ role names, and with $a,b,\ldots$ individual names. Complex concepts are denoted with $C,D,\ldots$ and are built using the constructors $\lnot$~(complement), $\dland$ (concept conjunction), $\dlor$ (concept disjunction), $\forall$ (value restriction) and $\exists$ (existential restriction) according to the following grammar rules:
\[
C \grammar \top \mid \bot \mid \CN \mid (\lnot C) \mid (C\dland C) \mid (C\dlor C) \mid (\exists r.C) \mid (\forall r.C) 
\]

With $\Lang$ we denote the \emph{language} of all $\ALC$ concepts, which is understood as the smallest set of symbol sequences generated according to the rules above. When writing down concepts of~$\Lang$, we follow the usual convention and omit parentheses whenever they are not essential for disambiguation. Examples of $\ALC$ concepts in our scenario are $\Stud\dland\Emp$ and $\lnot\exists\pays.\Tax$.
\myskip

The semantics of $\ALC$ is the standard set-theoretic Tarskian semantics. An \emph{interpretation} is a structure $\I\defined\tuple{\Dom^{\I},\cdot^{\I}}$, where $\Dom^{\I}$ is a non-empty set called the \emph{domain}, and $\cdot^{\I}$ is an \emph{interpretation function} mapping concept names~$A$ to subsets $A^{\I}$ of $\Dom^{\I}$, role names~$r$ to binary relations $r^{\I}$ over $\Dom^{\I}$, and individual names~$a$ to elements of the domain~$\Dom^{\I}$, \ie, $A^{\I}\subseteq\Dom^{\I}$, $r^{\I}\subseteq\Dom^{\I}\times\Dom^{\I}$, and $a^{\I}\in\Dom^{\I}$.

Figure~\ref{Figure:DLInterpretation} depicts an interpretation for our scenario example with domain $\Dom^{\I}=\{x_{i} \mid 0\leq i\leq 10\}$, and interpreting the elements of the vocabulary as follows: $\Emp^{\I}=\{x_{1},x_{2},x_{5},x_{9}\}$, $\Comp^{\I}=\{x_{6},x_{10}\}$, $\Stud^{\I}=\{x_{1},x_{5},x_{7},x_{8}\}$,  $\EmpStud^{\I}=\{x_{1},x_{5}\}$, $\Parent^{\I}=\{x_{1},x_{2},x_{3}\}$, $\Tax^{\I}=\{x_{4}\}$, $\pays^{\I}=\{(x_{1},x_{0}),(x_{5},x_{4})\}$, $\emp^{\I}=\{(x_{9},x_{10})\}$, $\wk^{\I}=\{(x_{5},x_{6}),(x_{9},x_{10})\}$, $\john^{\I}=x_{5}$, $\ibm^{\I}=x_{6}$, $\mary^{\I}=x_{2}$.

\begin{figure}[h]
\begin{center}
\begin{tikzpicture}[->,>=stealth', auto,node_style/.style={circle,fill=black,minimum size=0.1mm}]
\node at (-0.35,6.5) {$\I:$};

\node at (0.4,6.7) {$\Delta^{\I}$};
\node at (1,5) {$\Tax^{\I}$};
\node at (13.85,6.35) {$\Parent^{\I}$};
\node at (5.5,0.5) {$\Stud^{\I}$};
\node at (9,0.5) {$\Emp^{\I}$};
\node at (13,4.2) {$\Comp^{\I}$};
\node at (6.7,4.45) {\rotatebox{90}{$\EmpStud^{\I}$}};

\draw [rounded corners,thick] (0,0) rectangle (15,7) ;
\draw (0.5,1.8) [rounded corners=5pt] -- (0.5,5.2) -- (3,5.2) -- (3,1.8) -- (0.5,1.8) -- cycle ; 
\draw (3.5,0.25) [rounded corners=5pt] -- (3.5,6.25) -- (3.5,6.75) -- (7.5,6.75) -- (7.5,0.25) -- cycle ; 
\draw (3.65,5.3) [rounded corners=5pt] -- (3.65,6.625) -- (14.5,6.625) -- (14.5,5.3) -- cycle ; 
\draw (3.8,3) [rounded corners=5pt] -- (3.8,6.5) -- (11,6.5) -- (11,0.25) -- (8,0.25) -- (8,3) -- cycle ; 
\draw (11.5,0.25) [rounded corners=5pt] -- (11.5,4.5) -- (14.5,4.5) -- (14.5,0.25) -- cycle ; 
\draw (3.95,3.15) [rounded corners=5pt] -- (3.95,6.35) -- (7,6.35) -- (7,3.15) -- cycle ; 

\node (x0) at (1.75,5.9) {$x_{0}$} ;
\node (x1) at (5.5,5.9) {$x_{1}$} ;
\node (x2) at (9.5,5.9) {$x_{2}(\mary)$} ;
\node (x3) at (13,5.9) {$x_{3}$} ;
\node (x4) at (1.75,3.5) {$x_{4}$} ;
\node (x5) at (5.5,3.5) {$x_{5}(\john)$} ;
\node (x6) at (13,3.5) {$x_{6}(\ibm)$} ;
\node (x7) at (4,1) {$x_{7}$} ;
\node (x8) at (7,1) {$x_{8}$} ;
\node (x9) at (9.5,1) {$x_{9}$} ;
\node (x10) at (13,1) {$x_{10}$} ;
	
\path (x1) edge [thick,swap,near end] node {{\footnotesize$\pays$}} (x0);
\path (x5) edge [thick,swap,near start] node {\ \quad{\footnotesize$\pays$}} (x4);
\path (x5) edge [thick] node {{\footnotesize$\wk$}\quad\quad\quad\ } (x6);
\path (x9) edge [thick,near start] node {{\footnotesize$\wk$}\quad\ } (x10);
\path (x9) edge [thick,near start,swap] node {{\footnotesize$\emp$}\quad} (x10);
\end{tikzpicture}
\end{center}
\caption{A DL interpretation.}
\label{Figure:DLInterpretation}
\end{figure}
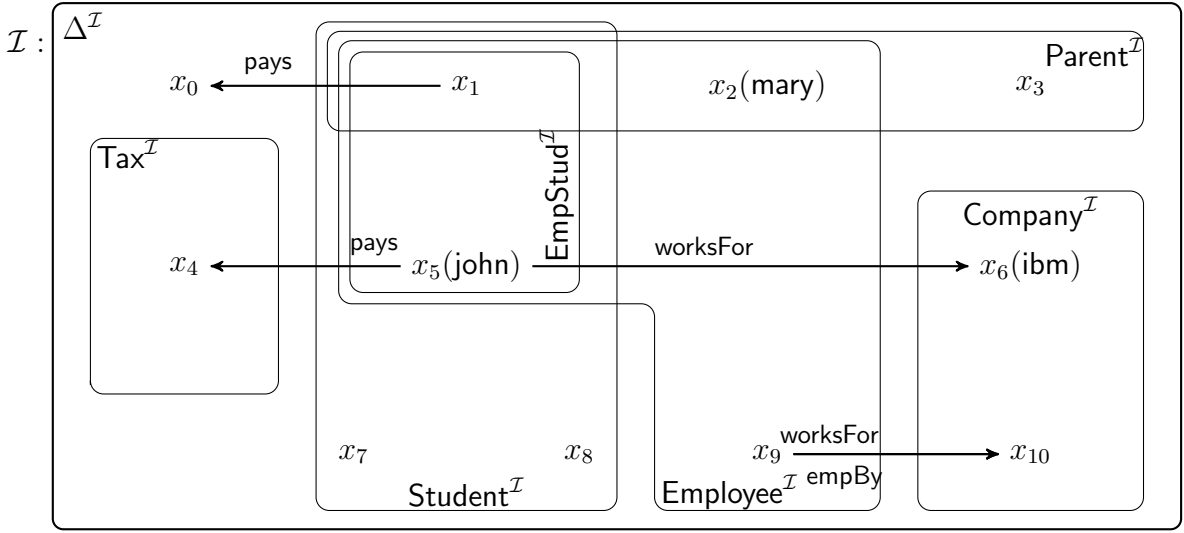

Let $\I=\tuple{\Dom^{\I},\cdot^{\I}}$ be an interpretation and define $r^{\I}(x) \defined \{y\in\Dom^{\I} \mid (x,y) \in r^{\I}\}$, for $r\in\RN$.  We extend the interpretation function~$\cdot^{\I}$ to interpret complex concepts of $\Lang$ as follows:
\[
\begin{array}{l}
\top^{\I}\defined\Dom^{\I};\\[0.2cm]
\bot^{\I}\defined\emptyset;\\[0.2cm]
(\lnot C)^{\I}\defined\Dom^{\I}\setminus C^{\I};\\[0.2cm]
(C\dland D)^{\I}\defined C^{\I}\cap D^{\I};\\[0.2cm]
(C\dlor D)^{\I}\defined C^{\I}\cup D^{\I};\\[0.2cm]
(\exists r.C)^{\I}\defined \{ x\in\Dom^{\I} \mid r^{\I}(x)\cap C^{\I}\neq\emptyset\};\\[0.2cm]
(\forall r.C)^{\I}\defined \{ x\in\Dom^{\I} \mid r^{\I}(x)\subseteq C^{\I}\}.
\end{array}
\]

For the interpretation~$\I$ in Figure~\ref{Figure:DLInterpretation}, we have $(\Parent\dland\Emp)^{\I}=\{x_{1},x_{2}\}$ and $(\exists\pays.\Tax)^{\I}=\{x_{5}\}$.
\myskip

Given $C, D\in\Lang$, a statement of the form $C\subs D$ is called a \emph{subsumption statement}, or \emph{general concept inclusion} (GCI), read ``$C$ is subsumed by $D$''. A concrete example of GCI is $\EmpStud\subs\Stud\dland\Emp$. $C\equiv D$ is an abbreviation for both $C\subs D$ and $D\subs C$. An $\ALC$ \emph{TBox}~\TB\ is a finite set of GCIs. Given $C\in\Lang$, $r\in\RN$ and $a,b\in\IN$, an \emph{assertional statement} (\emph{assertion}, for short) is an expression of the form $a:C$ or $(a,b):r$, read, respectively, ``$a$ is an instance of~$C$'' and ``$a$ is related to~$b$ via~$r$''. Examples of assertions are $\john:\EmpStud$ and $(\john,\ibm):\wk$. An $\ALC$ \emph{ABox}~\AB\ is a finite set of assertional statements. We denote statements with $\alpha, \beta, \ldots$. Given~$\TB$ and~$\AB$, with $\KB\defined\TB\cup\AB$ we denote an~$\ALC$ \emph{knowledge base}, \aka\ an \emph{ontology}, an example of which is given below:
\[
\TB=\left\{\begin{array}{c}
				\EmpStud\subs\Stud\dland\Emp,\\[0.15cm]
				\Stud\subs\lnot\exists\pays.\Tax,\\[0.15cm]
				\EmpStud\subs\exists\pays.\Tax,\\[0.15cm]
				\EmpStud\dland\Parent\subs\lnot\exists\pays.\Tax,\\[0.15cm]
				\Emp\subs\exists\wk.\Comp
		   \end{array}
	\right\}
\]\[
\AB=\{ \john:\EmpStud, \john:\Parent, (\john,\ibm):\wk \}
\]

An interpretation $\I$ \emph{satisfies} a GCI $C\subs D$ (denoted $\I\sat C\subs D$) if $C^{\I}\subseteq D^{\I}$. (And then $\I\sat C\equiv D$ if $C^{\I}=D^{\I}$.) $\I$ \emph{satisfies} an assertion $a:C$ (respectively, $(a,b):r$), denoted $\I\sat a:C$ (respectively, $\I\sat (a,b):r$), if $a^{\I}\in C^{\I}$ (respectively, $(a^{\I},b^{\I})\in r^{\I}$).

In the interpretation~$\I$ in Figure~\ref{Figure:DLInterpretation}, we have $\I\sat\EmpStud\subs\Stud\dland\Emp$, $\I\sat\john:\exists\pays.\Tax$ and $\I\nsat(\john,\ibm):\emp$.
\myskip

We say that an interpretation $\I$ is a \emph{model} of a TBox~\TB\ (respectively, of an ABox~\AB), denoted $\I\sat\TB$ (respectively, $\I\sat\AB$) if $\I\sat\alpha$ for every $\alpha$ in~\TB\ (respectively, in~\AB). We say that~$\I$ is a model of a knowledge base~$\KB=\TB\cup\AB$ if $\I\sat\TB$ and $\I\sat\AB$. It can be verified that the interpretation in Figure~\ref{Figure:DLInterpretation} is not a model of the example knowledge base above. (Actually, it is not hard to see that the knowledge base above admits no model.)

A statement~$\alpha$ is (classically) \emph{entailed} by a knowledge base~$\KB$, denoted $\KB\entails\alpha$, if every model of~$\KB$ satisfies~$\alpha$. If $\I\sat\alpha$ for all interpretations~$\I$, we say $\alpha$ is a \emph{validity} and denote this fact with $\entails\alpha$.
\myskip

The focus of the present paper being on defeasibility for description logic TBoxes only, we henceforth assume the ABox is empty. (We are currently in the process of extending our approach to description logic knowledge bases, with ABoxes included into the mix.) It is easy to see that, for~$\TB$ as above, we have $\TB\entails\EmpStud\subs\bot$.
\myskip

For more details on Description Logics in general and on~\ALC\ in particular, the reader is invited to consult the Description Logic Handbook~\cite{BaaderEtAl2007} and the introductory textbook on Description Logic~\cite{BaaderEtAl2017}.

\section{Foundations for defeasibility in DLs}\label{DefeasibleDLs}

In this section, we lay the formal foundations of our approach to defeasible reasoning in DL ontologies. For the most part, we build on the so-called preferential approach to non-monotonic reasoning~\cite{KrausEtAl1990,LehmannMagidor1992,Shoham1988}.

\subsection{Defeasible subsumption relations and their KLM-style properties}\label{DefeasibleSubsumption}

In a sense, class subsumption (alias concept inclusion) of the form $C\subs D$ is the main notion in DL ontologies. Given its implication-like intuition, subsumption lends itself naturally to defeasibility: ``provisionally, if an object falls under~$C$, then it also falls under~$D$'', as in ``usually, students are tax exempted''. In that respect, a defeasible version of concept inclusion is the starting point for an investigation of defeasible reasoning in DL ontologies. (We shall also address defeasibility of the entailment relation in later sections.)

\begin{definition}[Defeasible Concept Inclusion]\label{Def:DCI}
Let $C,D\in\Lang$. A \df{defeasible concept inclusion axiom} (DCI, for short) is a statement of the form $C\dsubs D$.
\end{definition}

A defeasible concept inclusion of the form $C\dsubs D$ is to be read as ``\emph{usually},  an instance of the class~$C$ is also an instance of the class~$D$''. For instance, the DCI $\Stud\dsubs\lnot\exists\pays.\Tax$ formalises the example above. Paraphrasing Lehmann~\cite{Lehmann1989}, the intuition of~$C\dsubs D$ is that ``if [the fact it belongs to]~$C$ were all the information about an object available to an agent, then [that it also belongs to]~$D$ would be a sensible conclusion to draw about such an object''. It is worth noting that~$\dsubs$, just as~$\subs$, is a `connective' sitting between the concept language (object level) and the meta-language (that of entailment) and it is meant to be the defeasible counterpart of the classical subsumption~$\subs$.

Being (defeasible) statements, DCIs will also be denoted by $\alpha,\beta,\ldots$ Whenever a distinction between GCIs and DCIs is in order, we shall make it explicitly.

\begin{definition}[Defeasible TBox]\label{Def:DefeasibleTbox}
A \df{defeasible TBox} (\df{DTBox}, for short) is a finite set of DCIs.
\end{definition}

Given a TBox~$\TB$ and a DTBox~$\DB$, we let $\KB\defined\TB\cup\DB$ and refer to it as a {\em defeasible knowledge base} (alias \emph{defeasible ontology}).

\begin{example}\label{Example:Student}
The following defeasible knowledge base gives a formal specification for our student scenario:
\[
\TB=\{\EmpStud\subs\Stud\}
\]\[
\DB=\left\{\begin{array}{c}
	\Stud\dsubs\lnot\exists\pays.\Tax,\\[0.1cm]
	\EmpStud\dsubs\exists\pays.\Tax,\\[0.1cm]
	\EmpStud\dland\Parent\dsubs\lnot\exists\pays.\Tax\\
	\end{array}
  \right\}
\] \hfill\ \qed
\end{example}

In our semantic construction later on, it will also be useful to be able to refer to infinite sets of concept inclusions. Let $\Th$ therefore denote a {\em defeasible theory}, defined as a defeasible knowledge base but without the restriction on $\TB$ and $\DB$ to finite sets.
\myskip

In order to assess the behaviour of the new connective and check it against both the intuition and the set of properties usually considered in a non-monotonic setting, it is convenient to look at a set of ${\dsubs}$-statements as a binary relation of the `antecedent-consequent' kind.

\begin{definition}[Defeasible Subsumption Relation]\label{Def:DefSubsumption}
A \df{defeasible subsumption relation} is a binary relation $\dsubs\subseteq\Lang\times\Lang$.
\end{definition}

The idea is to mimic the analysis of defeasible entailment relations carried out by Kraus~\etal.~\cite{KrausEtAl1990} in the propositional case, where entailment is seen as a binary relation on the set of propositional sentences. Here we shall adopt the view of subsumption as a binary relation on concepts of our description language.

Sometimes (\eg\ in the structural properties below) we shall write $(C,D)\in\dsubs$ in the infix notation, \ie, as $C\dsubs D$. The context will make clear when we will be talking about elements of a relation or statements (DCIs) in a defeasible knowledge base. Whenever disambiguation is in order, we shall flag it to the reader.

\begin{definition}[Preferential Subsumption Relation]\label{Def:PrefSubsumption}
A defeasible subsumption relation~$\dsubs$ is a \df{preferential subsumption relation} if it satisfies the following set of properties, which we refer to as the (DL versions of the) preferential~KLM properties:
\[
\begin{array}{lllll}
\\
(\text{Cons}) \ \top\ndsubs\bot &
(\text{Ref}) \ C \dsubs C &  
(\text{\small LLE}) \ {\displaystyle \frac{C\equiv D,\ C \dsubs E}{D \dsubs E}}\\[0.45cm]  (\text{And}) \ {\displaystyle \frac{C \dsubs D,\ C\dsubs E}{C \dsubs D \dland E}} & 
(\text{Or}) \ {\displaystyle \frac{C \dsubs E,\ D \dsubs E}{C \dlor D \dsubs E}} &
(\text{\small RW}) \ {\displaystyle \frac{C \dsubs D,\ D \subs E}{C \dsubs E}}\\[0.45cm] (\text{\small CM}) \ {\displaystyle \frac{C \dsubs D,\ C \dsubs E}{C\dland D \dsubs E}}
\end{array}
\]
\end{definition}
\myskip

The (Cons) property is a consequence of the adoption of a DL-based semantics, which enforces the non-emptiness of the domain, as will become clear in the next section.
The rest of the properties in Definition~\ref{Def:PrefSubsumption} result from a translation of the properties for preferential consequence relations proposed by Kraus~\etal.~\cite{KrausEtAl1990} in the propositional setting. They have been discussed at length in the literature for both the propositional and the~DL cases~\cite{BritzEtAl2008,BritzEtAl2011c,GiordanoEtAl2009a,GiordanoEtAl2009b,KrausEtAl1990,LehmannMagidor1992} and we shall not repeat so here.

If, in addition to the preferential properties above, the relation~$\dsubs$ also satisfies rational monotonicity~(RM) below, then it is said to be a {\em rational} subsumption relation:
\[
(\text{\small RM})\ \frac{C\dsubs D, \ C\ndsubs\lnot E}{C\dland E\dsubs D}
\]

Rational monotonicity is often considered a desirable property to have, one of the reasons stemming from the fact it is a necessary condition for the satisfaction of the principle of \emph{presumption of typicality}~\cite[Section 3.1]{Lehmann1995}. Such a principle is a simple yet intuitive formalisation of a form of reasoning we carry out when facing lack of information: we reason assuming that we are in the most typical possible situation, compatible with the information at our disposal. (More details will be provided in Section~\ref{Entailment}).

\subsection{Preferential semantics and representation results}\label{Semantics}

In this section, we present our semantics for preferential and rational subsumption by enriching standard~DL interpretations~$\I$ with an ordering on the elements of the domain~$\Dom^{\I}$. The intuition underlying this is simple and natural, and extends similar work done for the propositional case by Shoham~\cite{Shoham1988}, Kraus~\etal.~\cite{KrausEtAl1990}, Lehmann and Magidor~\cite{LehmannMagidor1992} and Booth~\etal.~\cite{BoothEtAl2015,BoothEtAl2012,BoothEtAl2013} to the case for description logics. This is not the first extension of this kind, as evidenced by the work of Boutilier~\cite{Boutilier1994}, Baltag and Smets~\cite{BaltagSmets2006,BaltagSmets2008}, Giordano~\etal.~\cite{GiordanoEtAl2007,GiordanoEtAl2009a,GiordanoEtAl2009b,GiordanoEtAl2012,GiordanoEtAl2013,GiordanoEtAl2015}, Britz~\etal.~\cite{BritzEtAl2013b,BritzEtAl2009b,BritzEtAl2008,BritzEtAl2009,BritzEtAl2011c} and Britz and Varzinczak~\cite{BritzVarzinczak2013,BritzVarzinczak2016b,BritzVarzinczak2018-JANCL,BritzVarzinczak2017-DL,BritzVarzinczak2018-JoLLI,BritzVarzinczak2018-FoIKS}. However, this is the first comprehensive semantic account of both preferential and rational subsumption relations, with accompanying representation results, based on the standard semantics for~description logics.

Informally, our semantic constructions are based on the idea that objects of the domain can be ordered according to their degree of {\em normality}~\cite{Boutilier1994} or {\em typicality}~\cite{BoothEtAl2012,BoothEtAl2013,BritzEtAl2009,GiordanoEtAl2007}. Paraphrasing Boutilier~\cite[pp.\ 110--116]{Boutilier1994},

\begin{quote}
Surely there is no {\em inherent} property of objects that allows them to be judged to be more or less normal in absolute terms. These orderings are purely `subjective' (in the sense that they can be thought of as part of an agent's belief state) and the space of orderings deemed plausible by the agent may (among other things) be determined by \eg\ empirical data. By using orderings in this way, we encode our (or the agent's) {\em expectations} about the objects corresponding to their perceived regularity or typicality. Those objects not violating our expectations are considered to be more normal than the objects that violate some.
\end{quote}

Hence we do not require that there exists something intrinsic about objects that makes one object more normal than another. Rather, the intention is to provide a framework in which to express all conceivable ways in which objects, with their associated properties and relationships with other objects, can be ordered in terms of typicality, in the same way that the class of all standard~DL interpretations constitute a framework representing all conceivable ways of representing the properties of objects and their relationships with other objects. Just as the latter are constrained by stating subsumption statements in a knowledge base, the possible orderings that are considered plausible are encoded by writing down DCIs.

That said, we are ready for the definition of the first semantic construction the present work relies on.

\begin{definition}[Preferential Interpretation]\label{Def:PrefInterpretation}
A \df{preferential interpretation} is a tuple $\PI\defined\tuple{\Dom^{\PI},\cdot^{\PI},\pref^{\PI}}$, where $\tuple{\Dom^{\PI},\cdot^{\PI}}$ is a (standard)~DL interpretation (which we denote by~$\I_{\PI}$ and refer to as the classical interpretation associated with~$\PI$), and~$\pref^{\PI}$ is a strict partial order on~$\Dom^{\PI}$ (\ie, $\pref^{\PI}$ is irreflexive and transitive) satisfying the smoothness condition (for every~$C\in\Lang$, if $C^{\PI}\neq\emptyset$, then $\min_{\pref^{\PI}}C^{\PI}\neq\emptyset$).\footnote{Given $X\subseteq\Dom^{\PI}$, with $\min_{\pref^{\PI}}X$ we denote the set $\{x\in X\mid$ for every $y\in X, y\npref^{\PI}x\}$.}
\end{definition}

Figure~\ref{Figure:PrefInterpretation} depicts a preferential interpretation in our scenario example where~$\Dom^{\PI}$ and~$\cdot^{\PI}$ are as in the interpretation~$\I$ shown in Figure~\ref{Figure:DLInterpretation}, and $\pref^{\PI}=\{(x_{7},x_{5}),(x_{8},x_{5}),(x_{9},x_{5}),(x_{5},x_{1})$, $(x_{7},x_{1}),(x_{8},x_{1}),(x_{9},x_{1}),(x_{9},x_{2})$, $(x_{10},x_{6})\}$, represented by the dashed arrows in the picture. (For the sake of presentation, in the picture we omit the transitive $\pref^{\PI}$-arrows.)

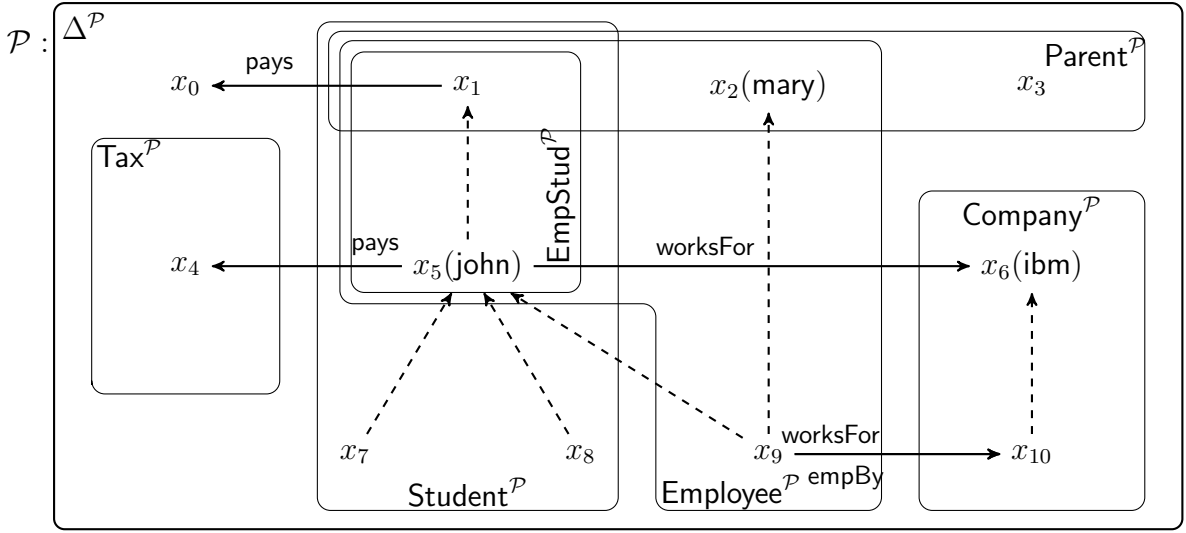
\begin{figure}[h]
\begin{center}
\begin{tikzpicture}[->,>=stealth', auto,node_style/.style={circle,fill=black,minimum size=0.1mm}]
\node at (-0.35,6.5) {$\PI:$};

\node at (0.4,6.7) {$\Delta^{\PI}$};
\node at (1,5) {$\Tax^{\PI}$};
\node at (13.85,6.35) {$\Parent^{\PI}$};
\node at (5.5,0.5) {$\Stud^{\PI}$};
\node at (9,0.5) {$\Emp^{\PI}$};
\node at (13,4.2) {$\Comp^{\PI}$};
\node at (6.7,4.45) {\rotatebox{90}{$\EmpStud^{\PI}$}};

\draw [rounded corners,thick] (0,0) rectangle (15,7) ;
\draw (0.5,1.8) [rounded corners=5pt] -- (0.5,5.2) -- (3,5.2) -- (3,1.8) -- (0.5,1.8) -- cycle ; 
\draw (3.5,0.25) [rounded corners=5pt] -- (3.5,6.25) -- (3.5,6.75) -- (7.5,6.75) -- (7.5,0.25) -- cycle ; 
\draw (3.65,5.3) [rounded corners=5pt] -- (3.65,6.625) -- (14.5,6.625) -- (14.5,5.3) -- cycle ; 
\draw (3.8,3) [rounded corners=5pt] -- (3.8,6.5) -- (11,6.5) -- (11,0.25) -- (8,0.25) -- (8,3) -- cycle ; 
\draw (11.5,0.25) [rounded corners=5pt] -- (11.5,4.5) -- (14.5,4.5) -- (14.5,0.25) -- cycle ; 
\draw (3.95,3.15) [rounded corners=5pt] -- (3.95,6.35) -- (7,6.35) -- (7,3.15) -- cycle ; 

\node (x0) at (1.75,5.9) {$x_{0}$} ;
\node (x1) at (5.5,5.9) {$x_{1}$} ;
\node (x2) at (9.5,5.9) {$x_{2}(\mary)$} ;
\node (x3) at (13,5.9) {$x_{3}$} ;
\node (x4) at (1.75,3.5) {$x_{4}$} ;
\node (x5) at (5.5,3.5) {$x_{5}(\john)$} ;
\node (x6) at (13,3.5) {$x_{6}(\ibm)$} ;
\node (x7) at (4,1) {$x_{7}$} ;
\node (x8) at (7,1) {$x_{8}$} ;
\node (x9) at (9.5,1) {$x_{9}$} ;
\node (x10) at (13,1) {$x_{10}$} ;
	
\path (x1) edge [thick,swap,near end] node {{\footnotesize$\pays$}} (x0);
\path (x5) edge [thick,swap,near start] node {\ \quad{\footnotesize$\pays$}} (x4);
\path (x5) edge [thick] node {{\footnotesize$\wk$}\quad\quad\quad\ } (x6);
\path (x9) edge [thick,near start] node {{\footnotesize$\wk$}\quad\ } (x10);
\path (x9) edge [thick,near start,swap] node {{\footnotesize$\emp$}\quad} (x10);


\path (x7) edge [thick,dashed] node {} (x5) ;
\path (x8) edge [thick,dashed] node {} (x5) ;
\path (x5) edge [thick,dashed] node {} (x1) ;
\path (x9) edge [thick,dashed] node {} (x5) ;
\path (x9) edge [thick,dashed] node {} (x2) ;
\path (x10) edge [thick,dashed] node {} (x6) ;

\end{tikzpicture}
\end{center}
\caption{A preferential interpretation.}
\label{Figure:PrefInterpretation}
\end{figure}

Preferential interpretations provide us with a simple and intuitive way to give a semantics to DCIs.

\begin{definition}[Satisfaction]\label{Def:Satisfaction}
Let $\PI$ be a preferential interpretation, $C,D\in\Lang$, $r\in\RN$ and $a,b\in\IN$. The \df{satisfaction relation}~$\sat$ is defined as follows:
\begin{itemize}\setlength{\itemsep}{1ex}
\item $\PI\sat C\subs D$ if $C^{\PI}\subseteq D^{\PI}$;
\item $\PI\sat C\dsubs D$ if $\min_{\pref^{\PI}}C^{\PI}\subseteq D^{\PI}$.
\end{itemize}
If $\PI\sat\alpha$, then we say $\PI$ \df{satisfies} $\alpha$. $\PI$ satisfies a defeasible knowledge base~$\KB$, written $\PI\sat\KB$, if $\PI\sat\alpha$ for every $\alpha\in\KB$, in which case we say~$\PI$ is a \df{preferential model} of~$\KB$. We say $C\in\Lang$ is \df{satisfiable} \wrt~$\KB$ if there is a model~$\PI$ of~$\KB$ \st\ $C^{\PI}\neq\emptyset$.
\end{definition}

It is easy to see that the addition of the $\pref^{\PI}$-component preserves the truth of all classical subsumption statements holding in the remaining structure:

\begin{lemma}\label{Lemma:TruthInInterpretation}
Let $\PI$ be a preferential interpretation. For every $C,D\in\Lang$, $\PI\sat C\subs D$ if and only if $\I_{\PI}\sat C\subs D$.
\end{lemma}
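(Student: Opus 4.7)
The plan is to observe that the definition of satisfaction for a classical GCI in a preferential interpretation makes no reference whatsoever to the ordering component $\pref^{\PI}$: by Definition~\ref{Def:Satisfaction}, $\PI \sat C \subs D$ iff $C^{\PI} \subseteq D^{\PI}$, and the extension $\cdot^{\PI}$ of complex concepts is determined entirely by $\Dom^{\PI}$ and the values of $\cdot^{\PI}$ on atomic concepts and role names, using exactly the same Tarskian clauses that define $\cdot^{\I_{\PI}}$ for a standard interpretation. Since $\I_{\PI}$ is defined to share the domain and interpretation function of $\PI$, we should expect $C^{\PI} = C^{\I_{\PI}}$ for every $C \in \Lang$.

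First I would make this identity precise by a routine structural induction on $C$. The base cases $\top$, $\bot$ and $A \in \CN$ are immediate from the definitions. For the inductive step, each of the boolean constructors $\lnot, \dland, \dlor$ and each of the role restrictions $\exists r.\cdot, \forall r.\cdot$ is interpreted in $\PI$ by the same set-theoretic clause used to interpret it in $\I_{\PI}$ (these clauses mention only $\Dom^{\PI}$, $\cdot^{\PI}$ on roles, and the already-defined extensions of subconcepts, never $\pref^{\PI}$); so the induction hypothesis propagates the identity $(\cdot)^{\PI} = (\cdot)^{\I_{\PI}}$ through each case.

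With this identity established, the biconditional of the lemma is then a one-line chain of equivalences: $\PI \sat C \subs D$ iff $C^{\PI} \subseteq D^{\PI}$ (Definition~\ref{Def:Satisfaction}) iff $C^{\I_{\PI}} \subseteq D^{\I_{\PI}}$ (by the induction) iff $\I_{\PI} \sat C \subs D$ (classical satisfaction for GCIs). There is no real obstacle here; the lemma is essentially a bookkeeping observation confirming that the preference order plays no role in the truth of classical subsumptions. The only subtlety worth flagging is that, although the excerpt does not restate the concept-formation clauses for preferential interpretations, they must be understood to coincide with the classical ones, and the induction above is precisely what makes this coincidence explicit.
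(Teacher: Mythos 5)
Your proof is correct and matches the paper's reasoning exactly: the paper states this lemma without proof precisely because, as you observe, $\I_{\PI}$ shares the domain and interpretation function of $\PI$, so concept extensions (and hence classical satisfaction) are computed by the same Tarskian clauses with $\pref^{\PI}$ playing no role. Your structural induction merely makes this omitted bookkeeping explicit, so there is nothing to fault.
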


It is worth noting that, due to smoothness of $\pref^{\PI}$, every (classical) subsumption statement is equivalent, with respect to preferential interpretations, to some DCI.

\begin{restatable}{lemma}{restatableClassicalStatements}
\label{Lemma:ClassicalStatements}
For every preferential interpretation~$\PI$, and every $C,D\in\Lang$, $\PI\sat C\subs D$ if and only if $\PI\sat C\dland\lnot D\dsubs\bot$.
\end{restatable}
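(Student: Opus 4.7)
The plan is to unpack both sides using the definitions of satisfaction and the interpretation function, and then exploit the smoothness condition on $\pref^{\PI}$ to close the gap between set-containment in the classical sense and emptiness of a minimal set.

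First, I would rewrite both sides explicitly. By Definition~\ref{Def:Satisfaction}, $\PI\sat C\subs D$ means $C^{\PI}\subseteq D^{\PI}$, which is equivalent to $C^{\PI}\cap(\Dom^{\PI}\setminus D^{\PI})=\emptyset$, i.e.\ $(C\dland\lnot D)^{\PI}=\emptyset$. On the other hand, $\PI\sat C\dland\lnot D\dsubs\bot$ unfolds to $\min_{\pref^{\PI}}(C\dland\lnot D)^{\PI}\subseteq\bot^{\PI}=\emptyset$, i.e.\ $\min_{\pref^{\PI}}(C\dland\lnot D)^{\PI}=\emptyset$. So the statement reduces to showing that $(C\dland\lnot D)^{\PI}=\emptyset$ iff $\min_{\pref^{\PI}}(C\dland\lnot D)^{\PI}=\emptyset$.

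The left-to-right direction is immediate: if a set is empty, so is the set of its $\pref^{\PI}$-minimal elements. The right-to-left direction is exactly where smoothness does the work. The smoothness condition in Definition~\ref{Def:PrefInterpretation} states that for every $E\in\Lang$, if $E^{\PI}\neq\emptyset$ then $\min_{\pref^{\PI}}E^{\PI}\neq\emptyset$. Taking the contrapositive and instantiating with $E=C\dland\lnot D$ yields: if $\min_{\pref^{\PI}}(C\dland\lnot D)^{\PI}=\emptyset$, then $(C\dland\lnot D)^{\PI}=\emptyset$, as required.

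There is no real obstacle here; the only non-trivial ingredient is the appeal to smoothness, which is precisely the condition that prevents a non-empty extension from having no $\pref^{\PI}$-minimal elements (e.g.\ due to infinite descending chains). Without smoothness the equivalence would fail in general, so it is worth flagging explicitly that this is where Definition~\ref{Def:PrefInterpretation}'s smoothness clause is used. The argument is otherwise just set-theoretic unfolding of the semantics.
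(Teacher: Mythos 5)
Your proof is correct and follows essentially the same route as the paper's: both reduce the claim to the equivalence between $(C\dland\lnot D)^{\PI}=\emptyset$ and $\min_{\pref^{\PI}}(C\dland\lnot D)^{\PI}=\emptyset$, with smoothness supplying the nontrivial direction (the paper phrases it contrapositively, picking a minimal element of a non-empty extension and deriving $x\in\bot^{\PI}$). Your version is if anything slightly more explicit than the paper's in flagging exactly where the smoothness clause of Definition~\ref{Def:PrefInterpretation} is invoked.
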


The following result, of which the proof can be found in Appendix~\ref{Sec:Sec3.2}, will come in handy later on.

\begin{restatable}{lemma}{restatableClosureDisjointUnionPref}
\label{Lemma:ClosureDisjointUnionPref}
Preferential interpretations are closed under disjoint union.
\end{restatable}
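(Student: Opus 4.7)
The plan is to construct the disjoint union explicitly and then check, by routine structural reasoning, that it inherits all the required properties from its summands. So, given a family $\{\PI_i\}_{i\in I}$ of preferential interpretations with $\PI_i=\tuple{\Dom^{\PI_i},\cdot^{\PI_i},\pref^{\PI_i}}$ and pairwise disjoint domains, I would define $\PI\defined\tuple{\Dom^{\PI},\cdot^{\PI},\pref^{\PI}}$ by setting $\Dom^{\PI}\defined\bigcup_{i\in I}\Dom^{\PI_i}$, $A^{\PI}\defined\bigcup_{i\in I}A^{\PI_i}$ for $A\in\CN$, $r^{\PI}\defined\bigcup_{i\in I}r^{\PI_i}$ for $r\in\RN$, and $\pref^{\PI}\defined\bigcup_{i\in I}\pref^{\PI_i}$ (so elements sitting in different summands are ${\pref^{\PI}}$-incomparable). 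Since the focus is on TBoxes, individual names can be handled by fixing any one summand for their interpretation, or simply ignored.

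Next, I would verify that $\pref^{\PI}$ is a strict partial order satisfying smoothness. Irreflexivity is immediate because $x\pref^{\PI}x$ would force $x\pref^{\PI_i}x$ for some $i$. Transitivity follows because a chain $x\pref^{\PI}y\pref^{\PI}z$ must lie entirely in one summand (cross-summand pairs being absent from $\pref^{\PI}$), and each $\pref^{\PI_i}$ is transitive. The smoothness check depends on the key auxiliary fact that, for every concept $C\in\Lang$, $C^{\PI}=\bigcup_{i\in I}C^{\PI_i}$.

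This auxiliary fact is the one step that needs to be done carefully, and it is the main (though standard) obstacle: it is proved by induction on the structure of $C$. The Boolean cases and the $\top,\bot$ cases are immediate from the definition. The role-restriction cases are the interesting ones: for $\exists r.C$, an element $x\in\Dom^{\PI_i}$ satisfies $\exists r.C$ in $\PI$ iff it has an $r^{\PI}$-successor in $C^{\PI}$; since $r^{\PI}$ is the disjoint union of the $r^{\PI_i}$, every $r^{\PI}$-successor of $x$ actually lies in $\Dom^{\PI_i}$, and by the induction hypothesis it lies in $C^{\PI}$ iff it lies in $C^{\PI_i}$. The $\forall r.C$ case is symmetric. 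Given this distribution law, smoothness of $\PI$ follows: if $C^{\PI}\neq\emptyset$, pick any $i$ with $C^{\PI_i}\neq\emptyset$; by smoothness of $\PI_i$ there is some $x\in\min_{\pref^{\PI_i}}C^{\PI_i}$; since no element outside $\Dom^{\PI_i}$ is ${\pref^{\PI}}$-below $x$, we get $x\in\min_{\pref^{\PI}}C^{\PI}$.

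Finally, I would note (for later use in the paper) that this construction preserves both classical and defeasible satisfaction in each component: from $C^{\PI}=\bigcup_i C^{\PI_i}$ one obtains $\PI\sat C\subs D$ iff $\PI_i\sat C\subs D$ for every $i$, and from $\min_{\pref^{\PI}}C^{\PI}=\bigcup_i\min_{\pref^{\PI_i}}C^{\PI_i}$ one likewise obtains $\PI\sat C\dsubs D$ iff $\PI_i\sat C\dsubs D$ for every $i$. In particular, $\PI$ is a preferential model of any DTBox that is satisfied by every $\PI_i$, which is the closure property one typically wants from a disjoint-union construction.
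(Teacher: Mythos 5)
Your proof is correct and takes essentially the same route as the paper's: you show by structural induction that concept extensions, and hence ${\pref}$-minimal elements, distribute over the summands (the paper establishes the same distribution fact via a preferential bisimulation argument) and then conclude that satisfaction of both classical and defeasible inclusions is preserved. If anything you are more thorough, since you explicitly verify that the union ordering is irreflexive, transitive and smooth, a check the paper leaves implicit.
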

\myskip

An obvious question that can now be raised is: ``How do we know our preferential semantics provides an appropriate meaning to the notion of defeasible concept inclusion?'' The following definition will help us in answering this question:

\begin{definition}[$\PI$-Induced Defeasible Subsumption]\label{Def:PInducedSubsumption}
Let $\PI$ be a preferential interpretation. Then $\dsubs_{\PI}\defined\{(C,D) \mid \PI\sat C\dsubs D\}$ is the \df{defeasible subsumption relation induced by}~$\PI$.
\end{definition}

The first important result we present here, which also answers the above raised question, shows that there is a full correspondence between the class of preferential subsumption relations and the class of defeasible subsumption relations induced by preferential interpretations. It is the DL analogue of a representation result proved by Kraus~\etal.~for the propositional case~\cite[Theorem~3]{KrausEtAl1990} and its proof can be found in Appendix~\ref{ProofRepResultPreferential}.

\begin{restatable}{theorem}{restatableRepResultPreferential}[Representation Result for Preferential Subsumption]\label{Theorem:RepResultPreferential}
A defeasible subsumption relation $\dsubs\subseteq\Lang\times\Lang$ is preferential if and only if there is a preferential interpretation~$\PI$ such that $\dsubs_{\PI}=\dsubs$.
\end{restatable}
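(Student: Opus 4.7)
The theorem is an ``if and only if,'' so the plan splits into a soundness direction and a completeness direction, with the latter being the substantial one.

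For the soundness direction (every induced $\dsubs_{\PI}$ is preferential), I would simply verify each of the seven properties in Definition~\ref{Def:PrefSubsumption} by a direct unfolding of Definition~\ref{Def:Satisfaction}. (Cons) follows because $\Dom^{\PI}$ is non-empty, so $\min_{\pref^{\PI}}\top^{\PI}$ is non-empty by smoothness and hence not contained in $\bot^{\PI}=\emptyset$. (Ref) is immediate from $\min_{\pref^{\PI}}C^{\PI}\subseteq C^{\PI}$. (LLE) follows because equivalent concepts have equal extensions in $\I_{\PI}$ (using Lemma~\ref{Lemma:TruthInInterpretation}), hence equal minimal sets. (And) and (RW) are elementary set-theoretic steps. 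The interesting cases are (Or) and (CM): for (Or) one uses the observation that every $\pref^{\PI}$-minimal element of $(C\dlor D)^{\PI}$ must be $\pref^{\PI}$-minimal in $C^{\PI}$ or in $D^{\PI}$, which relies on the partial-order property of $\pref^{\PI}$ together with smoothness; for (CM) one shows that if $C\dsubs E$ (i.e.\ $\min_{\pref^{\PI}}C^{\PI}\subseteq E^{\PI}$) and $C\dsubs D$, then every $\pref^{\PI}$-minimal element of $(C\dland D)^{\PI}$ is already $\pref^{\PI}$-minimal in $C^{\PI}$, hence lies in $E^{\PI}$.

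For the completeness direction, given $\dsubs$ satisfying the preferential properties, the plan is to build a canonical preferential interpretation~$\PI$ via a disjoint-union construction, using Lemma~\ref{Lemma:ClosureDisjointUnionPref} to ensure the result is still a preferential interpretation. For each pair $(C,D)$ with $C\ndsubs D$, I would pick a classical interpretation~$\I_{C,D}$ together with a distinguished element $x_{C,D}\in(C\dland\lnot D)^{\I_{C,D}}$ that is ``as normal a $C$-element as possible'', in the sense that $x_{C,D}\in E^{\I_{C,D}}$ for every $E$ such that $C\dsubs E$. The existence of such a witness interpretation is the first technical step and uses (Ref), (And), (LLE) and (RW) to show that the set $\{C\}\cup\{E\mid C\dsubs E\}\cup\{\lnot D\}$ is classically satisfiable whenever $C\ndsubs D$ (because otherwise one could derive $C\dsubs D$ by (And) and~(RW)). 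I would then take the disjoint union $\PI_{0}$ of all the $\I_{C,D}$ and define $\pref^{\PI}$ on~$\Dom^{\PI_{0}}$ by making each distinguished $x_{C,D}$ strictly below every non-distinguished element of $C^{\PI_{0}}$ in its own copy, and placing an order between copies that guarantees: (a) the $\pref^{\PI}$-minimal elements of $C^{\PI}$ are exactly the distinguished points drawn from witnesses for $(C,\cdot)$-pairs, and (b) the order is a smooth strict partial order.

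The main obstacle will be the third step: ensuring simultaneously that $\pref^{\PI}$ is transitive, irreflexive, smooth, and induces exactly $\dsubs$. One inclusion $\dsubs\;\subseteq\;\dsubs_{\PI}$ should follow because, by construction, every $\pref^{\PI}$-minimal element of $C^{\PI}$ satisfies every consequent $E$ with $C\dsubs E$. The reverse inclusion $\dsubs_{\PI}\;\subseteq\;\dsubs$ requires showing that whenever $C\ndsubs D$ the witness $x_{C,D}$ is genuinely $\pref^{\PI}$-minimal in $C^{\PI}$; this is where (Or) and (CM) are needed, in order to globally reconcile the local choices of witnesses across different pairs and to rule out unintended comparabilities. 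Smoothness must be checked by a separate argument exploiting the way witnesses are stratified; the cleanest route is probably to define the preference order in terms of the sets $\{E\mid C\dsubs E\}$ themselves (so that $y\pref^{\PI}x$ iff~$y$ satisfies strictly more ``normality requirements'' of the concept classes they both inhabit), which makes smoothness immediate and the representation argument a matter of translating the~KLM construction of Kraus, Lehmann and Magidor~\cite{KrausEtAl1990} into the description-logic setting with the help of Lemma~\ref{Lemma:ClassicalStatements} to move freely between GCIs and DCIs.
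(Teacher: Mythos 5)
Your soundness direction matches the paper's essentially line by line (one small nit: (Or) does not need smoothness, since an element minimal in $(C\dlor D)^{\PI}=C^{\PI}\cup D^{\PI}$ is automatically minimal in whichever disjunct contains it; only (CM) uses smoothness). In the completeness direction you correctly isolate the witness lemma: your compactness argument for the satisfiability of $\{C\}\cup\{E\mid C\dsubs E\}\cup\{\lnot D\}$ whenever $C\ndsubs D$ is exactly the paper's Lemma~\ref{Lemma:NormalityTwiddle}. But the step you yourself flag as ``the main obstacle''---the actual definition of $\pref^{\PI}$---is where the content of the theorem lies, and the order you sketch does not work as stated. Ordering objects by ``satisfying strictly more normality requirements'' makes neither smoothness ``immediate'' nor the minimality analysis tractable. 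The paper instead populates the domain with triples $\tuple{\I,x,C}$, one copy for \emph{each} concept $C$ for which the pair $(\I,x)$ is normal (not one witness per failing pair $(C,D)$), and sets $\tuple{\I,x,C}\pref^{\PI}\tuple{\I',y,D}$ iff $C\leq D$ and $x\notin D^{\I}$, where $C\leq D$ abbreviates $C\dlor D\dsubs C$. Both components are essential: reflexivity and transitivity of $\leq$ are derived from (Or), (CM) and the other postulates via rules in the style of Lemma~5.5 of Kraus~\etal~\cite{KrausEtAl1990}, while the side condition $x\notin D^{\I}$ yields irreflexivity and the key characterisation that $\tuple{\I,x,D}$ is minimal in $C^{\PI}$ iff $x\in C^{\I}$ and $D\leq C$; smoothness is then a separate, nontrivial lemma proved by exhibiting a normal witness for $C\dlor D$ below any non-minimal element. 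Your per-pair witnesses also leave the forward inclusion $\dsubs\subseteq\dsubs_{\PI}$ unsupported: a minimal element of $C^{\PI}$ may be a witness chosen for a different pair $(C',D')$, and to conclude it satisfies every consequent of $C$ you need the normality-transfer lemmas (if $(\I,x)$ is normal for $C'$, $C'\leq C$ and $x\in C^{\I}$, then $(\I,x)$ is normal for $C$), which must themselves be derived from the postulates; you gesture at (Or)/(CM) being needed here but never produce these facts.

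A second concrete gap concerns inconsistent concepts. For $\dsubs_{\PI}=\dsubs$ you must guarantee that every $C$ with $C\dsubs\bot$ has \emph{empty} extension in the constructed model: otherwise smoothness forces a minimal element of $C^{\PI}$, which by the minimality analysis would have to be normal for some concept below $C$ and ultimately satisfy $\bot$, so you would get $C\ndsubs_{\PI}\bot$ while $C\dsubs\bot$. The paper handles this by building only on classical interpretations in which all such concepts are empty, and by parking pairs that are normal for no concept in a top layer of the order; your unrestricted choice of witness interpretations $\I_{C,D}$ has no such safeguard, and Lemma~\ref{Lemma:ClosureDisjointUnionPref} cannot supply one: it preserves satisfaction of a knowledge base under disjoint unions of \emph{preferential} interpretations, whereas here the order must be defined globally, across copies, before any preferential interpretation exists (indeed the paper's proof never invokes that lemma).
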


What is perhaps surprising about this result is that no additional properties based on the syntactic structure of the underlying~DL are necessary to characterise the defeasible subsumption relations induced by preferential interpretations. We provide below a few properties involving the use of quantifiers that are satisfied by all preferential subsumption relations. (See Section~\ref{RationalClosure} for more on properties explicitly mentioning DL-specific constructs.)

The first two are `existential' and `universal' versions of cautious monotonicity~(CM):
\[
(\text{{\small CM}}_{\exists})\ \frac{\exists r.C\dsubs E,\ \exists r. C\dsubs\forall r.D}{\exists r.(C\dland D)\dsubs E}
\]\[
(\text{{\small CM}}_{\forall})\ \frac{\forall r. C\dsubs E,\ \forall r.C\dsubs\forall r.D}{\forall r.(C\dland D)\dsubs E}
\]

The third one is a rephrasing of the Rule of Necessitation in modal logic~\cite{Chellas1980}. It guarantees the absence of so-called \emph{spurious objects}~\cite{BritzEtAl2012} in the original preferential semantics for DLs by Britz~\etal.~\cite{BritzEtAl2011b,BritzEtAl2011c}. That is, if $C$ is unsatisfiable, then so is $\exists r.C$ (\cf\ Lemma~\ref{Lemma:ClassicalStatements}).
\[
\text{(Norm)}\ \frac{C\dsubs\bot}{\exists r.C\dsubs\bot}
\]

In addition to preferential interpretations, we are also interested in the study of \emph{modular} interpretations, which are preferential interpretations in which the $\pref$-component is a \emph{modular} ordering:

\begin{definition}[Modular Order]\label{Def:Modular}
Given a set $X$, $\pref\ \subseteq X\times X$ is \df{modular} if it is a strict partial order, and its associated incomparability relation $\incomp$, defined by $x\incomp y$ if neither $x\pref y$ nor $y\pref x$, is transitive.
\end{definition}

If $\pref$ is modular, then $\incomp$ is an equivalence relation.

\begin{definition}[Modular Interpretation]\label{Def:ModInterpretation}
A \df{modular interpretation} is a preferential interpretation $\RI=\tuple{\Dom^{\RI},\cdot^{\RI},\pref^{\RI}}$ such that~$\pref^{\RI}$ is modular.
\end{definition}

Intuitively, modular interpretations allow us to compare any two objects \wrt\ their plausibility. Those that are  incomparable are viewed as being equally plausible. As such, modular interpretations are special cases of preferential interpretations, where plausibility can be represented by any smooth strict partial order.


The main reason to consider modular interpretations is that they provide the semantic foundation of rational subsumption relations. This is made precise by our second important result below, which shows that the defeasible subsumption relations induced by modular interpretations are precisely the rational subsumption relations. Again, this is the~DL analogue of a representation result proved by Lehmann and Magidor for the propositional case~\cite[Theorem~5]{LehmannMagidor1992} and its proof can be found in Appendix~\ref{ProofRepResultRational}.

\begin{restatable}{theorem}{restatableRepResultRational}[Representation Result for Rational Subsumption]\label{Theorem:RepResultRational}
A defeasible subsumption relation $\dsubs\subseteq\Lang\times\Lang$ is rational if and only if there is a modular interpretation~$\RI$ such that $\dsubs_{\RI}=\dsubs$.
\end{restatable}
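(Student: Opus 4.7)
The plan is to prove both directions separately, mirroring the strategy already used in Theorem~\ref{Theorem:RepResultPreferential} and adapting Lehmann and Magidor's propositional argument~\cite{LehmannMagidor1992} to the DL setting. For the ``if'' direction (soundness), since every modular interpretation is a preferential interpretation, Theorem~\ref{Theorem:RepResultPreferential} immediately gives us that $\dsubs_\RI$ satisfies all the preferential KLM properties. Hence, the only thing left to verify is that rational monotonicity~(RM) holds in $\dsubs_\RI$. This is where modularity plays its role: the key combinatorial fact to exploit is that, in a modular strict partial order, $x \pref y$ implies that for every $z$, either $x \pref z$ or $z \pref y$. Concretely, assume $\RI \sat C \dsubs D$ and $\RI \not\sat C \dsubs \lnot E$, so $\min_{\pref^\RI} C^\RI \subseteq D^\RI$ and there is some $x \in \min_{\pref^\RI} C^\RI$ with $x \in E^\RI$. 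To show $\RI \sat C \dland E \dsubs D$, take any $y \in \min_{\pref^\RI}(C \dland E)^\RI$. If $y$ is already in $\min_{\pref^\RI} C^\RI$, we are done. Otherwise, smoothness provides some $z \in \min_{\pref^\RI} C^\RI$ with $z \pref^\RI y$; applying modularity to $z \pref^\RI y$ with the element $x$ (and using that $x,z$ are both $\pref^\RI$-minimal in $C^\RI$) forces $x \pref^\RI y$, contradicting the minimality of $y$ in $(C \dland E)^\RI$ since $x \in (C \dland E)^\RI$.

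For the ``only if'' direction (completeness), the plan is to build a canonical modular interpretation $\RI_\dsubs$ from a given rational $\dsubs$. I would start from the canonical preferential interpretation used to prove Theorem~\ref{Theorem:RepResultPreferential}, whose domain elements correspond to maximally coherent ``states'' (DL analogues of LM's worlds/types compatible with $\dsubs$), and then refine its order so as to make it modular. Following Lehmann and Magidor, the natural route is to define a rank function $\rk$ on states by a transfinite (or at most $\omega$-step, since $\Lang$ has only countably many concepts) construction: at stage~$0$, assign rank~$0$ to those states that are $\dsubs$-minimal in $\top$; at successor stages, assign rank $n{+}1$ to the states minimal in the substructure obtained by removing all states of rank $\leq n$. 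Set $x \pref^{\RI_\dsubs} y$ iff $\rk(x) < \rk(y)$. This yields a modular order on the domain.

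Next, I would verify the semantic correspondence $\dsubs_{\RI_\dsubs} = \dsubs$. One inclusion, namely $\dsubs \subseteq \dsubs_{\RI_\dsubs}$, essentially follows from the way states are constructed to respect $\dsubs$ together with the smoothness of the rank. The converse inclusion is where RM does the real work: it is needed to guarantee that every $\dsubs$-non-statement $C \ndsubs D$ is actually witnessed by a rank-minimal $C$-state lying outside~$D$. Concretely, one proves by induction on the rank that the set of states of rank~$\leq n$ is closed under the consequences sanctioned by iterated applications of~RM, so that the minimal $C$-states in $\RI_\dsubs$ coincide with the states an ``optimal'' rational construction would select. Smoothness of $\pref^{\RI_\dsubs}$ is automatic from the rank definition, and modularity follows by construction since ranks are linearly ordered.

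The hardest step is this last verification that $\dsubs_{\RI_\dsubs} = \dsubs$, and in particular showing that the rank construction terminates on the right states and does not collapse incomparable-but-equally-plausible situations. The main obstacle relative to the propositional case is that in $\ALC$ the ``states'' cannot be simply identified with propositional valuations: they must carry enough information to witness existential restrictions and to be consistent with universal restrictions, so the domain of $\RI_\dsubs$ has to be populated with witness elements for each $\exists r.C$ that is $\dsubs$-consistent at a given state. Lemma~\ref{Lemma:ClosureDisjointUnionPref} (closure under disjoint union) and Lemma~\ref{Lemma:ClassicalStatements} (reduction of GCIs to DCIs) will be crucial here, the former to freely combine witness sub-models without disturbing the ranking, and the latter to ensure that the classical part of $\dsubs$ is transparently captured by the defeasible part so that the whole construction can be driven uniformly by the KLM-style properties.
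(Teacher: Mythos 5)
Your ``if'' direction is correct and is essentially the paper's own argument: both proofs reduce the preferential properties to Theorem~\ref{Theorem:RepResultPreferential} and then verify (RM) using the same modularity fact, namely that $x\pref^{\RI}y$ implies, for every $z$, either $x\pref^{\RI}z$ or $z\pref^{\RI}y$, applied to show that any minimal element of $(C\dland E)^{\RI}$ must already be minimal in $C^{\RI}$. The genuine gap is in your ``only if'' direction. You build the canonical order by assigning numerical ranks to states, iteratively peeling off ``minimal'' states; but for an arbitrary rational $\dsubs$ over the infinite language $\Lang$ this peeling is not well defined. Before any order on states exists, ``the states minimal in the substructure obtained by removing all states of rank $\leq n$'' only makes sense via the exceptionality structure of concepts, and that structure, while it has a bottom (the class of $\top$), need not be well-founded above it: the exceptionality ordering on classes of consistent concepts can be dense (such relations are induced, e.g., by modular models whose concept heights are densely ordered --- smoothness, which is required only conceptwise, does not preclude this). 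In that case, after stage~$0$ the remaining structure has no minimal stratum and your induction stalls; your parenthetical claim that countability of $\Lang$ caps the construction at $\omega$ stages conflates countability with having order type at most $\omega$. This is exactly why the paper, following Lehmann and Magidor, never constructs a rank function: it defines ``$C$ is not more exceptional than $D$'' by $C\dlor D\ndsubs\lnot C$, shows this yields a strict order $<$ on equivalence classes $[C]$, and sets $\tuple{\I,x,C}\pref^{\RI}\tuple{\I',y,D}$ iff $[C]<[D]$. Modularity is then immediate by construction, no well-foundedness is ever needed, and smoothness is proved separately using the key lemma that $[C]<[D]$ implies $C\dsubs\lnot D$.

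Two further points. First, the verification $\dsubs_{\RI}=\dsubs$ --- which you rightly identify as the crux --- is left at the level of assertion in your proposal (``essentially follows'', an unspecified induction showing rank-closure under RM); in the paper this is where the real content sits, discharged by the exceptionality lemmas together with analogues of the minimality and normality lemmas from the preferential completeness proof, with RM entering through derived rules such as those in Lemma~\ref{Lemma:MoreProperties}. Second, the difficulty you anticipate about populating the domain with witnesses for existential restrictions is moot in this framework: domain elements are triples $\tuple{\I,x,C}$ where $\I$ is a \emph{full} classical interpretation, so role successors and witnesses for $\exists r.C$ come for free inside each $\I$, with roles connecting only elements originating from the same $\I$. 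Accordingly, neither Lemma~\ref{Lemma:ClosureDisjointUnionPref} nor any extra witness construction plays a role in the paper's representation proof; Lemma~\ref{Lemma:ClassicalStatements} is likewise not needed there.
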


Analogous to the case for cautious monotonicity above, the following `existential' and `universal' versions of rational monotonicity are satisfied by all rational subsumption relations:
\[
(\text{{\small RM}}_{\exists})\ \frac{\exists r.C\dsubs E,\ \exists r. C\ndsubs\forall r.\lnot D}{\exists r.(C\dland D)\dsubs E}
\]\[
(\text{{\small RM}}_{\forall})\ \frac{\forall r. C\dsubs E,\ \forall r.C\ndsubs\forall r.\lnot D}{\forall r.(C\dland D)\dsubs E}
\]

It is worth pausing for a moment to emphasise the significance of these two results (Theorems~\ref{Theorem:RepResultPreferential} and~\ref{Theorem:RepResultRational}). They provide exact semantic characterisations of two important classes of defeasible subsumption relations, namely preferential and rational subsumption, in terms of the classes of preferential and modular interpretations, respectively. As we shall see in Section~\ref{Entailment}, these results form the core of the investigation into an appropriate notion of entailment for defeasible~DL ontologies.

\section{Rationality in entailment}\label{Entailment}

\newcommand{\pre}{\ensuremath{\mathsf{pref}}}
\newcommand{\modular}{\ensuremath{\mathsf{mod}}}
\newcommand{\rat}{\ensuremath{\mathsf{rat}}}

From the standpoint of knowledge representation and reasoning, a pivotal question is that of deciding which statements are \emph{entailed} by a knowledge base. We shall devote the remainder of the paper to this matter, and in this section we lay out the formal foundations for that.

\subsection{Preferential entailment}\label{PreferentialEntailment}

In the exploration of a notion of entailment for defeasible ontologies, an obvious starting point is to consider a Tarskian definition of consequence:

\begin{definition}[Preferential Entailment]\label{Def:PrefEntailment}
A statement~$\alpha$ is \df{preferentially entailed} by a defeasible knowledge base~$\KB$, written $\KB\entails_{\pre}\alpha$, if every preferential model of~$\KB$ satisfies~$\alpha$.
\end{definition}

As usual, this form of entailment is accompanied by a corresponding notion of closure.

\begin{definition}[Preferential Closure]\label{Def:PrefClosure}
Let~$\KB$ be a defeasible knowledge base. With $\KB^{*}_{\pre}\defined\{\alpha\ \mid\ \KB\entails_{\pre}\alpha\}$ we denote the \df{preferential closure} of~$\KB$.
\end{definition}

Intuitively, the preferential closure of a defeasible knowledge base~$\KB$ corresponds to the `core' set of statements, classical and defeasible, that should hold given those in~$\KB$. Hence, preferential entailment and preferential closure are two sides of the same coin, mimicking an analogous result for preferential reasoning in the propositional~\cite{KrausEtAl1990} case.

Recall (\cf~the discussion following Definition~\ref{Def:DefeasibleTbox}) that a defeasible theory $\Th$ is a defeasible knowledge base without the restriction to finite sets. When assessing how appropriate a notion of entailment for defeasible ontologies is, the following definitions turn out to be useful, as will become clear in the sequel:

\begin{definition}[$\Th$-Induced Defeasible Subsumption]\label{Def:KBInducedSubsumption}
Let $\Th$ be a defeasible theory. Then (1)~$\DB_{\Th}\defined\{C\dsubs D \mid C\dsubs D\in\Th\}\cup\{C\dland\neg D\dsubs\bot \mid C\subs D\in\Th\}$ is the \df{DTBox induced by~$\Th$} and (2)~$\dsubs_{\Th}\defined\{(C,D) \mid C\dsubs D\in\DB_{\Th}\}$ is the \df{defeasible subsumption relation induced by}~$\Th$.
\end{definition}

So, the DTBox induced by $\Th$ is the set of defeasible subsumption statements contained in $\Th$, together with the defeasible versions of the classical subsumption statements in $\Th$. The defeasible subsumption relation induced by~$\Th$ is simply the defeasible subsumption relation corresponding to $\DB_{\Th}$.

\begin{definition}\label{Def:PreferentialKB}
A defeasible theory~$\Th$ is called \df{preferential} if the subsumption relation induced by it satisfies the preferential properties in Definition~\ref{Def:PrefSubsumption}.
\end{definition}

It turns out that the defeasible subsumption relation induced by the preferential closure of a defeasible knowledge base $\KB$ is exactly the intersection of the defeasible subsumption relations induced by the preferential defeasible theories containing $\KB$.

\begin{restatable}{lemma}{restatableLemmaPrefClosure}\label{Lemma:PrefClosure}
Let~$\KB$ be a defeasible knowledge base. Then
\[
\dsubs_{\KB^{*}_{\pre}} = \bigcap\{\dsubs_{\Th} \mid \KB\subseteq\Th \text{ and }\Th\text{ is preferential}\}.
\]
\end{restatable}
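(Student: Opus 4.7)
The plan is to prove the identity by a double inclusion, relying on the representation theorem for preferential subsumption (Theorem~\ref{Theorem:RepResultPreferential}) in both directions. A useful preliminary step is to observe that, for any preferential interpretation $\PI$, the theory $\Th_\PI \defined \{\alpha \mid \PI \sat \alpha\}$ of $\PI$ is itself preferential and satisfies $\dsubs_{\Th_\PI} = \dsubs_\PI$: the second component $\{C \dland \neg D \dsubs \bot \mid C \subs D \in \Th_\PI\}$ of $\DB_{\Th_\PI}$ is absorbed into the first by Lemma~\ref{Lemma:ClassicalStatements}, and Theorem~\ref{Theorem:RepResultPreferential} then guarantees that $\dsubs_\PI$ is a preferential subsumption relation, so that $\Th_\PI$ is preferential in the sense of Definition~\ref{Def:PreferentialKB}.

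For the $\subseteq$ direction, I take any preferential theory $\Th \supseteq \KB$ and apply Theorem~\ref{Theorem:RepResultPreferential} to obtain a preferential interpretation $\PI$ with $\dsubs_\PI = \dsubs_\Th$. Since $\KB \subseteq \Th$, every DCI of $\KB$ lies in $\DB_\Th$ and is therefore satisfied by $\PI$; and every GCI $C \subs D \in \KB$ gives rise to $C \dland \neg D \dsubs \bot \in \DB_\Th$, which in $\PI$ translates back via Lemma~\ref{Lemma:ClassicalStatements} into $\PI \sat C \subs D$. Hence $\PI$ is a preferential model of $\KB$. Now $(C,D) \in \dsubs_{\KB^*_\pre}$ means either $C \dsubs D \in \KB^*_\pre$, or $D = \bot$ and $C = C' \dland \neg D'$ with $C' \subs D' \in \KB^*_\pre$; the second case reduces, again by Lemma~\ref{Lemma:ClassicalStatements}, to $\KB \entails_\pre C \dsubs D$. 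In either case $\PI \sat C \dsubs D$, so $(C,D) \in \dsubs_\PI = \dsubs_\Th$.

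For the $\supseteq$ direction, I argue contrapositively. If $(C,D) \notin \dsubs_{\KB^*_\pre}$, then by the same unpacking $\KB \nentails_\pre C \dsubs D$, so there exists a preferential model $\PI$ of $\KB$ with $\PI \nsat C \dsubs D$. The theory $\Th_\PI$ from the preliminary step is preferential, contains $\KB$ (because $\PI \sat \KB$), and satisfies $(C,D) \notin \dsubs_\PI = \dsubs_{\Th_\PI}$, furnishing a preferential theory extending $\KB$ whose induced relation witnesses the failure.

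The one piece of non-trivial bookkeeping throughout is the translation between the syntactic ``induced DCI'' formulation of $\DB_\Th$ --- which treats GCIs as encoded DCIs of the form $C \dland \neg D \dsubs \bot$ --- and the semantic ``satisfies'' formulation on preferential interpretations. Once Lemma~\ref{Lemma:ClassicalStatements} identifies the two at the semantic level, the rest reduces to routine invocation of the representation theorem, so this translation is the main (and essentially the only real) obstacle in the proof.
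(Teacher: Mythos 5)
Your proof is correct and takes essentially the same route as the paper's: both reduce the claim to the definitions of preferential entailment and closure, use Lemma~\ref{Lemma:ClassicalStatements} to absorb GCIs into their defeasible encodings, and invoke Theorem~\ref{Theorem:RepResultPreferential} in both directions to pass between preferential models of $\KB$ and preferential theories containing $\KB$. The paper merely compresses the two inclusions you spell out (your construction of $\Th_{\PI}$ and your application of the representation theorem to a given $\Th$) into a single chain of biconditionals, so your version is just a more explicit rendering of the same argument.
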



It follows immediately that the preferential closure of a defeasible knowledge base $\KB$ is preferential, and induces the smallest defeasible subsumption relation induced by a preferential defeasible theory containing $\KB$.






Preferential entailment is not always desirable, one of the reasons being that it is monotonic, courtesy of the Tarskian notion of consequence it relies on (see Definition~\ref{Def:PrefEntailment}). In most cases, as witnessed by the great deal of work in the non-monotonic reasoning community, a move towards rationality is in order. Thanks to the definitions above and the result in Theorem~\ref{Theorem:RepResultRational}, we already know where to start looking for it.

\begin{definition}[Modular Entailment]\label{Def:ModularEntailment}
A statement~$\alpha$ is \df{modularly entailed} by a defeasible knowledge base~$\KB$, written $\KB\entails_{\modular}\alpha$, if every modular model of~$\KB$ satisfies~$\alpha$.
\end{definition}

As is the case for preferential entailment, modular entailment is accompanied by a corresponding notion of closure.

\begin{definition}[Modular Closure]\label{Def:ModularClosure}
Let~$\KB$ be a defeasible knowledge base. With $\KB^{*}_{\modular}\defined\{\alpha\ \mid\ \KB\entails_{\modular}\alpha\}$ we denote the \df{modular closure} of~$\KB$.
\end{definition}

\begin{definition}\label{Def:RationalalKB}
A defeasible theory~$\Th$ is called \df{rational} if it is preferential and $\dsubs_{\Th}$ is also closed under the rational monotonicity rule~(RM).
\end{definition}

For modular closure we get a result similar to Lemma \ref{Lemma:PrefClosure}.
\begin{restatable}{lemma}{restatableLemmaModClosure}\label{Lemma:ModClosure}
Let~$\KB$ be a defeasible knowledge base. Then
\[
\dsubs_{\KB^{*}_{\modular}} = \bigcap\{\dsubs_{\Th} \mid \KB\subseteq\Th \text{ and }\Th\text{ is rational}\}.
\]
\end{restatable}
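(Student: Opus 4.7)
The claim is the exact modular/rational analogue of Lemma~\ref{Lemma:PrefClosure}, so my plan is to mimic its two-direction proof verbatim, replacing every appeal to Theorem~\ref{Theorem:RepResultPreferential} by an appeal to Theorem~\ref{Theorem:RepResultRational}, and ``preferential interpretation/theory'' by ``modular interpretation / rational theory''. The only genuinely DL-specific subtlety to keep track of is the interplay between classical GCIs and their defeasified form $C \sqcap \lnot D \dsubs \bot$ delivered by Lemma~\ref{Lemma:ClassicalStatements}; this is what makes the passage between statements in $\Th$ and elements of $\dsubs_{\Th}$ well-behaved for both kinds of axioms.

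For the inclusion $\dsubs_{\KB^{*}_{\modular}} \subseteq \bigcap\{\dsubs_{\Th} \mid \KB \subseteq \Th,\ \Th \text{ rational}\}$, I would fix a pair $(C,D) \in \dsubs_{\KB^{*}_{\modular}}$ and an arbitrary rational theory $\Th$ with $\KB \subseteq \Th$. Since $\Th$ is rational, $\dsubs_{\Th}$ is a rational subsumption relation, so Theorem~\ref{Theorem:RepResultRational} yields a modular interpretation $\RI$ with $\dsubs_{\RI} = \dsubs_{\Th}$. I would then check that $\RI$ is a modular \emph{model} of $\KB$: for every DCI $E \dsubs F \in \KB \subseteq \Th$ we have $(E,F) \in \dsubs_{\Th} = \dsubs_{\RI}$, hence $\RI \sat E \dsubs F$; and for every GCI $E \subs F \in \KB \subseteq \Th$ the induced pair $(E \sqcap \lnot F, \bot)$ lies in $\dsubs_{\Th} = \dsubs_{\RI}$, so Lemma~\ref{Lemma:ClassicalStatements} gives $\RI \sat E \subs F$. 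Modular entailment then forces $\RI \sat C \dsubs D$, i.e.\ $(C,D) \in \dsubs_{\RI} = \dsubs_{\Th}$, as required.

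For the reverse inclusion, I would pick $(C,D)$ in the intersection and an arbitrary modular model $\RI$ of $\KB$, and define the (possibly infinite, but this is explicitly allowed for theories) set
\[
\Th_{\RI} \defined \{\alpha \mid \RI \sat \alpha\}.
\]
Clearly $\KB \subseteq \Th_{\RI}$. A short verification using Lemma~\ref{Lemma:ClassicalStatements} shows that $\dsubs_{\Th_{\RI}} = \dsubs_{\RI}$: any DCI in $\Th_{\RI}$ contributes a pair already satisfied by $\RI$, any GCI $E \subs F$ in $\Th_{\RI}$ contributes $(E \sqcap \lnot F,\bot)$ which is again in $\dsubs_{\RI}$ by that lemma, and conversely every $\RI$-satisfied DCI lies in $\Th_{\RI}$ by construction. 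By Theorem~\ref{Theorem:RepResultRational}, $\dsubs_{\RI}$ is a rational subsumption relation, so $\Th_{\RI}$ is rational. Hypothesis then gives $(C,D) \in \dsubs_{\Th_{\RI}} = \dsubs_{\RI}$, so $\RI \sat C \dsubs D$; since $\RI$ was arbitrary, $\KB \entails_{\modular} C \dsubs D$.

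I do not expect a real obstacle here: the argument is structurally identical to that of Lemma~\ref{Lemma:PrefClosure}, and the representation theorem does all the heavy lifting in both directions. The one spot I would write out carefully is the ``$\Th_{\RI}$ is rational'' step in the reverse inclusion, to make sure that one really exploits the fact that $\Th_{\RI}$ is closed under all consequences of $\RI$ (not merely that it contains $\KB$), since otherwise $\dsubs_{\Th_{\RI}}$ need not coincide with the full induced relation $\dsubs_{\RI}$ and the application of Theorem~\ref{Theorem:RepResultRational} would not go through.
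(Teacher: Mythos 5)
Your proposal is correct and is essentially the paper's own argument: the paper proves Lemma~\ref{Lemma:ModClosure} by replaying the proof of Lemma~\ref{Lemma:PrefClosure} with Definitions~\ref{Def:ModularEntailment} and~\ref{Def:ModularClosure} in place of the preferential ones and Theorem~\ref{Theorem:RepResultRational} in place of Theorem~\ref{Theorem:RepResultPreferential}, with Lemma~\ref{Lemma:ClassicalStatements} mediating between GCIs and their defeasible counterparts exactly as you do. You merely unfold the two inclusions (and the theory $\Th_{\RI}$ induced by a model) that the paper leaves implicit in its appeal to the representation theorem.
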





That is, the modular closure of a defeasible knowledge base $\KB$ induces the smallest defeasible subsumption relation induced by a rational defeasible theory containing $\KB$. However, the modular closure of a defeasible knowledge base $\KB$ is not necessarily rational. That is, if one looks at the set of statements (in particular the $\dsubs$-ones) modularly entailed by a knowledge base as a defeasible subsumption relation, then it need not satisfy the rational monotonicity property. This is so because modular entailment coincides with preferential entailment, as the following result, adapted from a well-known similar result in the propositional case~\cite[Theorem~4.2]{LehmannMagidor1992}, shows.



\begin{lemma}\label{Lemma:ModClosurePreferential}
$\KB^{*}_{\modular}=\KB^{*}_{\pre}$.
\end{lemma}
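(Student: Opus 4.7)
The plan is to establish the two inclusions separately. The inclusion $\KB^{*}_{\pre}\subseteq\KB^{*}_{\modular}$ is immediate: by Definition~\ref{Def:ModInterpretation}, every modular interpretation is a preferential interpretation, so the class of modular models of~$\KB$ sits inside the class of preferential models of~$\KB$, and any statement $\alpha$ satisfied by the latter is \emph{a fortiori} satisfied by the former.

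For $\KB^{*}_{\modular}\subseteq\KB^{*}_{\pre}$, I would argue contrapositively. Suppose $\alpha\notin\KB^{*}_{\pre}$, so that there is a preferential model $\PI$ of $\KB$ with $\PI\nsat\alpha$. By Lemma~\ref{Lemma:ClassicalStatements} I may assume $\alpha$ is a DCI $C\dsubs D$, because any classical subsumption $C\subs D$ is preferentially equivalent to the DCI $C\dland\lnot D\dsubs\bot$. Fix a witness $x_{0}\in\min_{\prec^{\PI}}C^{\PI}\setminus D^{\PI}$. The goal is to build a modular interpretation $\RI$ satisfying $\KB$ in which $x_{0}$ remains $\prec^{\RI}$-minimal in $C^{\RI}$ and outside $D^{\RI}$.

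My construction would assign ordinal ranks to the elements of (a disjoint-union extension of) $\PI$, obtained via Lemma~\ref{Lemma:ClosureDisjointUnionPref}, so that $x\prec^{\PI}y$ forces the rank of $x$ to be strictly below that of $y$, with $x_{0}$ placed at the minimum rank attained within $C^{\PI}$. Setting $x\prec^{\RI}y$ iff the rank of~$x$ is strictly below the rank of~$y$ then yields a modular order. Verification would split into two parts: (i)~every DCI $E\dsubs F\in\KB$ is preserved, because $\min_{\prec^{\RI}}E^{\RI}\subseteq\min_{\prec^{\PI}}E^{\PI}\subseteq F^{\PI}$ (note that refining $\prec^{\PI}$ to a modular order can only shrink the set of minima); and (ii)~$x_{0}\in\min_{\prec^{\RI}}C^{\RI}$ while $x_{0}\notin D^{\RI}$, so $\RI\nsat C\dsubs D$, as required.

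The hard part is constructing the rank function in a well-founded way, since smoothness of $\prec^{\PI}$ only guarantees minima on definable subsets rather than well-foundedness on arbitrary subsets of $\Dom^{\PI}$---exactly the obstacle addressed by Lehmann and Magidor in the propositional case~\cite[Thm.~4.2]{LehmannMagidor1992}. A cleaner route that avoids explicit rank juggling would be to leverage the representation results obtained in Section~\ref{Semantics}: the relation $\dsubs_{\PI}$ is preferential by Theorem~\ref{Theorem:RepResultPreferential}; extend it by a Zorn-style argument to a maximal preferential relation $\dsubs^{*}\supseteq\dsubs_{\PI}$ that still excludes the pair $(C,D)$; verify that the maximality forces $\dsubs^{*}$ to satisfy (RM) and hence to be rational; and then invoke Theorem~\ref{Theorem:RepResultRational} to obtain a modular interpretation $\RI$ with $\dsubs_{\RI}=\dsubs^{*}$, which is the sought modular counter-model witnessing $\alpha\notin\KB^{*}_{\modular}$.
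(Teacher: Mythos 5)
Your trivial inclusion ($\KB^{*}_{\pre}\subseteq\KB^{*}_{\modular}$, since modular models form a subclass of preferential models) and the reduction of the hard inclusion to DCIs via Lemma~\ref{Lemma:ClassicalStatements} are both correct, and your overall shape matches the intended argument: the paper itself gives no proof of this lemma, stating only that it is adapted from Lehmann and Magidor's Theorem~4.2~\cite{LehmannMagidor1992}, so the entire burden lies in the counter-model transfer that you sketch twice. Unfortunately both of your routes have a genuine gap exactly at the crucial step. In the first route, a rank function with $x\pref^{\PI}y$ implying a strictly smaller rank for $x$ exists \emph{only if} $\pref^{\PI}$ is well-founded, and the smoothness condition of Definition~\ref{Def:PrefInterpretation} guarantees minima only for sets of the form $C^{\PI}$ with $C\in\Lang$; there are smooth preferential interpretations containing infinite strictly descending chains that no concept of $\Lang$ carves out, and on such a model \emph{no} order-respecting ordinal ranking exists at all. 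You name this obstacle but do not resolve it, and Lemma~\ref{Lemma:ClosureDisjointUnionPref} cannot help, since disjoint unions preserve rather than remove descending chains. The natural repair is to first pass to a \emph{finite} counter-model by a filtration argument (in the style of the paper's Theorem~\ref{Theorem:FMP} — which, note, the paper proves only for modular models, and only in a later section, so it is not available off the shelf) and only then do the ranking; even in the finite case your requirement that $x_{0}$ receive the minimum rank attained within $C^{\PI}$ needs an explicit argument (it works because minimality of $x_{0}$ in $C^{\PI}$ means no element of $C^{\PI}$ lies strictly $\pref^{\PI}$-below $x_{0}$, so the ranks of all elements outside the down-set of $x_{0}$ can be uniformly inflated without breaking strictness), which you assert by fiat.

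The second, ``cleaner'' route fares worse: the pivotal claim that a maximal preferential extension of $\dsubs_{\PI}$ avoiding $(C,D)$ must satisfy (RM) is stated as a verification but is in fact equivalent to the lemma you are proving. Indeed, a maximal preferential relation avoiding $(C,D)$ is rational if and only if every preferential relation avoiding $(C,D)$ admits a \emph{rational} extension still avoiding $(C,D)$, and — through Theorems~\ref{Theorem:RepResultPreferential} and~\ref{Theorem:RepResultRational} — the latter is precisely the inclusion $\KB^{*}_{\modular}\subseteq\KB^{*}_{\pre}$. The Zorn step itself is fine (all the preferential rules have finitely many premises, and both the exclusion of $(\top,\bot)$ and of $(C,D)$ are preserved under unions of chains), but that is the easy half; without an independent argument that maximality forces (RM), the route begs the question. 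This is consonant with the literature: Lehmann and Magidor's own proof of the propositional analogue is model-theoretic, not a maximality argument, and adapting it (or the filtration-plus-finite-ranking route above) is the real content that your proposal leaves unproved.
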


As a result, modular entailment unfortunately falls short of providing us with an appropriate notion of non-monotonic entailment. In what follows, we overcome precisely this issue.

\subsection{Semantic rational entailment}\label{RationalEntailment}

In this section, we introduce a definition of semantic entailment which, as we shall see, is appropriate in the light of the discussion above. The constructions we are going to present are inspired by the semantic characterisation of rational closure by Booth and Paris~\cite{BoothParis1998} in the propositional case.
We shall give a corresponding proof-theoretic characterisation of our version of semantic entailment in Section~\ref{DefinitionRC}.
\myskip

We focus our attention on  a subclass of modular orders, referred to as \emph{ranked orders}:

\begin{definition}[Ranked Order]\label{Def:Ranked_order}
Given a set $X$, the binary relation $\pref\ \subseteq X\times X$ is a \df{ranked order} if there is a mapping $h_{\RI}:X\longrightarrow\mathbb{N}$ satisfying the following convexity property:
\begin{itemize}
\item for every $i\in \mathbb{N}$, if for some $x\in X$ $h_{\RI}(x)=i$, then, for every $j$ such that $0\leq j<i$, there is a $y\in X$ for which $h_{\RI}(y)=j$,
\end{itemize}
and \st\ for every $x,y\in X$, $x\pref y$ iff $h_{\RI}(x)<h_{\RI}(y)$.
\end{definition}

It is easy to see that a ranked order $\pref$ is also modular: $\pref$ is  a strict partial order, and, since two objects $x,y$ are incomparable (\ie, $x\sim y$) if and only if $h_{\RI}(x)=h_{\RI}(y)$, $\sim$ is a transitive relation. By constraining our preference relations to the ranked orders, we can identify a subset of the modular interpretations we refer to as the \emph{ranked interpretations}.

\begin{definition}[Ranked Interpretation]\label{Def:Ranked_interpretation}
A \df{ranked interpretation} is a modular interpretation $\RI=\tuple{\Dom^{\RI},\cdot^{\RI},\pref^{\RI}}$ s.t. $\pref^{\RI}$ is a ranked order.
\end{definition}

We now provide two basic results about ranked interpretations. First, all finite modular interpretations are ranked interpretations.


\begin{restatable}{lemma}{restatablefinitemodular}\label{finite_modular}
A modular interpretation $\RI=\tuple{\Dom^{\RI},\cdot^{\RI},\pref^{\RI}}$ s.t. $\Dom^{\RI}$ is finite is a ranked interpretation.
\end{restatable}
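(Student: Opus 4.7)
The plan is to exploit the finiteness of $\Dom^{\RI}$ to enumerate the $\sim^{\RI}$-equivalence classes of $\pref^{\RI}$ along an initial segment of $\mathbb{N}$, and to define $h_{\RI}$ accordingly. Recall that, because $\RI$ is modular, the incomparability relation $\sim^{\RI}$ (where $x\sim^{\RI}y$ iff neither $x\pref^{\RI}y$ nor $y\pref^{\RI}x$) is transitive, and by reflexivity/symmetry of incomparability it is thus an equivalence relation on $\Dom^{\RI}$.

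My first step is to descend to the quotient $\Dom^{\RI}/{\sim^{\RI}}$. Writing $[x]$ for the $\sim^{\RI}$-class of $x$, define $[x]\prec[y]$ iff $x\pref^{\RI}y$. I would check that this is well defined on classes: given $x\pref^{\RI}y$, $x\sim^{\RI}x'$ and $y\sim^{\RI}y'$, one must have $x'\pref^{\RI}y'$. If instead $y'\pref^{\RI}x'$ held, transitivity of $\pref^{\RI}$ together with the fact that $x\sim^{\RI}x'$ precludes $x\pref^{\RI}x'$ would let us chain $x\pref^{\RI}y$ with a derivation to contradict either $x\sim^{\RI}x'$ or $y\sim^{\RI}y'$; and if $x'\sim^{\RI}y'$ held, transitivity of $\sim^{\RI}$ would yield $x\sim^{\RI}y$, contradicting $x\pref^{\RI}y$. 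Hence $\prec$ is well defined. It inherits irreflexivity and transitivity from $\pref^{\RI}$, and any two distinct classes are $\prec$-comparable (otherwise their representatives would be $\sim^{\RI}$-related, forcing the classes to coincide). So $\prec$ is a strict total order on the quotient.

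The second step is the enumeration. Since $\Dom^{\RI}$ is finite, so is $\Dom^{\RI}/{\sim^{\RI}}$; write its $\prec$-increasing enumeration as $E_0\prec E_1\prec\cdots\prec E_{n-1}$. Set $h_{\RI}(x) \defined i$ whenever $x\in E_i$. Convexity is immediate because the image of $h_{\RI}$ is exactly the initial segment $\{0,1,\ldots,n-1\}$ of $\mathbb{N}$. Moreover, $x\pref^{\RI}y$ iff $[x]\prec[y]$ iff $h_{\RI}(x)<h_{\RI}(y)$, which is the second clause in Definition~\ref{Def:Ranked_order}. Consequently $\pref^{\RI}$ is a ranked order, so $\RI$ is a ranked interpretation.

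The only genuinely non-trivial point is showing that $\prec$ on the quotient is well defined and total; everything else is bookkeeping enabled by finiteness. That step is where modularity (transitivity of $\sim^{\RI}$) is used essentially, and I expect it to be the main technical hurdle, though it is a standard fact about modular strict partial orders.
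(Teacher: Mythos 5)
Your proof is correct, but it takes a genuinely different route from the paper's. The paper builds $h_{\RI}$ algorithmically by peeling off minimal layers: since a strict partial order on a finite set admits no cycles, $\min_{\pref^{\RI}}X\neq\emptyset$ for every nonempty $X\subseteq\Dom^{\RI}$, so it sets $h_{\RI}(x)=i$ exactly for $x\in\min_{\pref^{\RI}}\Dom^{\RI^{i}}$, removes those elements, and iterates until the domain is exhausted, leaving the convexity check and the verification that $x\pref^{\RI}y$ iff $h_{\RI}(x)<h_{\RI}(y)$ as routine. You instead pass to the quotient $\Dom^{\RI}/{\sim^{\RI}}$, use modularity to get that $\sim^{\RI}$ is an equivalence and that the induced order $\prec$ on classes is a well-defined strict \emph{total} order, and then enumerate the finitely many classes increasingly. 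The two constructions yield the same function, as they must by Proposition~\ref{unique_rank}; indeed your layer $E_i$ is exactly the set peeled at the paper's $i$-th step, because in a modular order the minimal elements of any subset constitute precisely one $\sim^{\RI}$-class intersected with that subset. What your route buys is transparency about where modularity enters: for a non-modular strict partial order the paper's peeling still produces an $h$ with convex image, but $h(x)<h(y)$ would no longer imply $x\pref^{\RI}y$, so the paper's ``easy to check'' step tacitly contains the quotient argument you make explicit; conversely, the paper's formulation is the direct algorithmic construction of the layers $(L_0,\ldots,L_n)$ used in Definition~\ref{Def:height}. One local remark: your well-definedness argument in the case $y'\pref^{\RI}x'$ is only gestured at; to complete it, compare $x$ with $y'$ by modular trichotomy --- if $x\sim^{\RI}y'$ then transitivity of $\sim^{\RI}$ gives $x\sim^{\RI}y$, contradicting $x\pref^{\RI}y$; if $y'\pref^{\RI}x$ then $y'\pref^{\RI}y$, contradicting $y\sim^{\RI}y'$; hence $x\pref^{\RI}y'$, whence $x\pref^{\RI}x'$ by transitivity, contradicting $x\sim^{\RI}x'$. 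This is exactly the standard fact you anticipated, so the omission is cosmetic rather than a gap.
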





Next, for every ranked interpretation~$\RI$, the function $h_{\RI}(\cdot)$ is unique.

\begin{restatable}{proposition}{restatableuniquerank}\label{unique_rank}
Given a ranked interpretation $\RI=\tuple{\Dom^{\RI},\cdot^{\RI},\pref^{\RI}}$, there is only one function $h_{\RI}:X\longrightarrow\mathbb{N}$ satisfying the convexity property and s.t. for every $x,y\in X$, $x\pref y$ iff $h_{\RI}(x)<h_{\RI}(y)$.
\end{restatable}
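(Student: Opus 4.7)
The plan is to show that the two candidate functions must agree level by level on the stratification of $\Dom^{\RI}$ induced by $\pref^{\RI}$. First I would observe that any function $h$ meeting the hypotheses determines the partition of $\Dom^{\RI}$ into its ``level sets'' $\{x \mid h(x) = k\}$, and that this partition is in fact \emph{independent} of $h$: since $x \pref^{\RI} y$ iff $h(x) < h(y)$, the incomparability relation $\incomp^{\RI}$ (i.e.\ neither $x \pref^{\RI} y$ nor $y \pref^{\RI} x$) coincides with the relation ``$h(x) = h(y)$'' on $\mathbb{N}$. Hence the $\incomp^{\RI}$-classes are exactly the level sets of $h$, and they inherit a strict total order $\prec$ from $\pref^{\RI}$ via $L \prec L'$ iff every $x \in L$ and every $y \in L'$ satisfy $x \pref^{\RI} y$. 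By convexity the image of $h$ is a downward-closed subset of $\mathbb{N}$, so the nonempty level sets can be listed in $\prec$-order as $E_0, E_1, E_2, \ldots$, indexed by an initial segment of $\mathbb{N}$.

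Now suppose $h_1$ and $h_2$ both satisfy the hypotheses of the proposition. I would prove by induction on $k$ that $h_1(x) = h_2(x) = k$ for every $x \in E_k$. For the base case, pick $x \in E_0$ and set $i \defined h_1(x)$; if $i > 0$, convexity of $h_1$ yields some $y$ with $h_1(y) = 0 < i$, whence $y \pref^{\RI} x$, contradicting the $\prec$-minimality of $E_0$. Thus $h_1(x) = 0$, and by the symmetric argument $h_2(x) = 0$. For the inductive step, assume the claim holds for all indices below $k$ and pick $x \in E_k$. Every $\pref^{\RI}$-predecessor of $x$ lies in some $E_j$ with $j < k$, so by the induction hypothesis has $h_1$-value $j < k$; consequently $h_1(x) \geq k$. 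If $h_1(x) = i > k$, convexity supplies a $y$ with $h_1(y) = k$, but then $y$ is $\pref^{\RI}$-incomparable with $x$ (since $h_1$ values $k$ and $i$ are both at least $k$, but no element below $x$ has $h_1$-value $\geq k$, and no element above $x$ has $h_1$-value $\leq i$), forcing $y \in E_k$ and hence $h_1(y) = h_1(x)$, contradicting $k \ne i$. Therefore $h_1(x) = k$, and symmetrically $h_2(x) = k$.

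The proof is essentially bookkeeping, and I would expect no genuine obstacle: the two ingredients needed are (i) that the $\incomp^{\RI}$-classes are canonically determined by $\pref^{\RI}$ alone, so both $h_1$ and $h_2$ see the same partition, and (ii) that convexity prevents any $h$ from ``skipping'' a natural number and therefore forces the label of each class to equal its position in the $\prec$-ordering. Once those two facts are in place the induction on $k$ closes immediately and yields $h_1 = h_2$. The only point that requires a little care is the mildly delicate argument inside the inductive step that rules out $h_1(x) > k$ by exhibiting an element of the same level as a spurious witness supplied by convexity.
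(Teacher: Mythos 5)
Your proof is correct and is essentially the paper's argument in inductive dress: the paper assumes two such functions, takes the least level $i$ at which they diverge, and uses convexity to produce a witness $z$ at the skipped level whose $\pref^{\RI}$-relation to the divergent element the two functions must disagree on, which is exactly what your induction over the $\prec$-ordered incomparability classes $E_k$ establishes level by level (your preliminary observation that the level sets of any admissible $h$ are the $\incomp$-classes is left implicit in the paper but is the same mechanism). One remark: the detour through incomparability in your inductive step is unnecessary and somewhat confusingly phrased (though, traced carefully, still valid) --- from $h_1(y)=k<i=h_1(x)$ the characterisation gives $y\pref^{\RI}x$ directly, which already contradicts your established fact that every $\pref^{\RI}$-predecessor of $x$ has $h_1$-value strictly below $k$.
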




Proposition~\ref{unique_rank} allows us to use the function $h_\RI(\cdot)$ to define the notions of \emph{height} and \emph{layers}.

\begin{definition}[Height \& Layers]\label{Def:height}
Given a ranked interpretation $\RI=\tuple{\Dom^{\RI},\cdot^{\RI},\pref^{\RI}}$, its characteristic ranking function $h_{\RI}(\cdot)$, and an object $x\in\Dom^{\RI}$, $h_{\RI}(x)$ is called the \df{height} of~$x$ in~$\RI$.

For every ranked interpretation $\RI=\tuple{\Dom^{\RI},\cdot^{\RI},\pref^{\RI}}$, we can partition the domain~$\Dom^{\RI}$ into a sequence of \emph{layers} $(L_{0},\ldots,L_{n},\ldots)$, where, for every object $x\in\Dom^{\RI}$, we have $x\in L_{i}$ iff $h_{\RI}(x)= i$.
\end{definition}

Intuitively, the lower the height of an object in an interpretation~$\RI$, the more typical (or normal) the object is in~$\RI$. We can also think of a level of typicality for concepts: the height of a concept $C\in\Lang$ in~$\RI$ is the index of the layer to which the restriction of the concept's extension to its $\pref^{\RI}$-minimal elements belong, \ie, $h_{\RI}(C)=i$ if $\emptyset\subset\min_{\pref^{\RI}}C^{\RI}\subseteq L_{i}$. As a convention, if $\min_{\pref^{\RI}}C^{\RI}=\emptyset$, that is, if $C^{\RI}=\emptyset$, then $h_{\RI}(C)=\infty$.
\myskip

The following result (proved in Appendix~\ref{ProofsEntailment}) will be useful for some of the proofs in later sections of the paper: 

\begin{restatable}[Finite-Model Property]{theorem}{restatableFiniteModelProperty}\label{Theorem:FMP}
Defeasible \ALC\ has the finite-model property. In particular, every  defeasible \ALC\ knowledge base that has a modular model, has also a finite ranked  model.
\end{restatable}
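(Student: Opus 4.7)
The plan is to prove the stronger ``in particular'' claim, from which the FMP for defeasible $\ALC$ follows immediately since a finite ranked model is, in particular, a finite modular one. Fix a modular model $\RI = \tuple{\Dom^{\RI}, \cdot^{\RI}, \pref^{\RI}}$ of a defeasible knowledge base $\KB = \TB \cup \DB$, and let $\Sigma$ be the (finite) closure under subconcepts of every concept occurring in $\KB$. For each $x \in \Dom^{\RI}$ define its \emph{type} $t(x) \defined \{C \in \Sigma : x \in C^{\RI}\}$; there are at most $2^{|\Sigma|}$ distinct types. For each type $t$ realised in $\RI$, the set $T_{t} \defined \{y \in \Dom^{\RI} : t(y) = t\}$ coincides with the $\RI$-extension of the $\Lang$-concept $\bigsqcap_{C \in t} C \dland \bigsqcap_{C \in \Sigma \setminus t} \lnot C$, so by smoothness of $\pref^{\RI}$ we may pick a representative $x_{t} \in \min_{\pref^{\RI}} T_{t}$.

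Next, I would assemble the finite interpretation $\RI' \defined \tuple{\Dom^{\RI'}, \cdot^{\RI'}, \pref^{\RI'}}$ by taking $\Dom^{\RI'} \defined \{x_{t} : t \text{ realised in } \RI\}$, setting $A^{\RI'} \defined \{x_{t} : A \in t\}$ for $A \in \CN$, and for each $r \in \RN$ adopting the standard smallest filtration $(x_{t}, x_{t'}) \in r^{\RI'}$ iff $(y, y') \in r^{\RI}$ for some $y \in T_{t}$ and $y' \in T_{t'}$. Define $\pref^{\RI'}$ as the restriction of $\pref^{\RI}$ to $\Dom^{\RI'}$: the restriction of a modular strict partial order to any subset remains modular, and on a finite domain a modular order is ranked by Lemma~\ref{finite_modular}, with smoothness automatic. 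A routine induction on the structure of $C \in \Sigma$ --- the standard filtration lemma for $\ALC$ --- then establishes $x_{t} \in C^{\RI'}$ iff $C \in t$, which immediately yields $\RI' \sat \TB$.

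The main obstacle lies in verifying the DCIs: for each $C \dsubs D \in \DB$ one needs $\min_{\pref^{\RI'}} C^{\RI'} \subseteq D^{\RI'}$. Suppose $x_{t} \in \min_{\pref^{\RI'}} C^{\RI'}$; the filtration lemma gives $C \in t$ and hence $x_{t} \in C^{\RI}$. The critical sub-claim is that $x_{t}$ is also $\pref^{\RI}$-minimal in $C^{\RI}$: otherwise some $y \in C^{\RI}$ satisfies $y \pref^{\RI} x_{t}$, and writing $t' \defined t(y)$, the $\pref^{\RI}$-minimality of $x_{t'}$ within $T_{t'}$ together with the modularity of $\pref^{\RI}$ --- which ensures that the quotient by incomparability is linearly ordered, so that $x_{t'} \sim^{\RI} y$ or $x_{t'} \pref^{\RI} y$ combined with $y \pref^{\RI} x_{t}$ lifts to a strict inequality at the quotient level --- forces $x_{t'} \pref^{\RI} x_{t}$; since $C \in t'$, this would contradict the $\RI'$-minimality of $x_{t}$. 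From $\RI \sat C \dsubs D$ and $x_{t} \in \min_{\pref^{\RI}} C^{\RI}$ we conclude $x_{t} \in D^{\RI}$, hence $D \in t$, and the filtration lemma gives $x_{t} \in D^{\RI'}$, as required.
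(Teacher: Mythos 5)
Your proof is correct, but it takes a genuinely different route from the paper's. The paper (Definition~\ref{Def:FMC} together with Lemmas~\ref{Lemma:FiniteModelConcept}, \ref{Lemma:FiniteRanked}, \ref{Lemma:TransIndiff} and~\ref{Lemma:FMP}) performs a \emph{quotient} filtration: the finite domain consists of the $\approx_{\Gamma}$-equivalence classes, and a fresh preference is defined by $[x]_{\Gamma}\prec^{\RI'}[y]_{\Gamma}$ iff some element of $[x]_{\Gamma}$ is strictly below every element of $[y]_{\Gamma}$; this forces the paper to prove from scratch that the induced relation is an irreflexive, transitive order with transitive incomparability, and DCI satisfaction is then verified by pushing the ``minimal witness below all exceptions'' property through the quotient. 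You instead pick, for each realised type $t$, a $\pref^{\RI}$-minimal representative of $T_{t}$ --- correctly observing that $T_{t}$ is the extension of an $\Lang$-concept, so that the smoothness condition guarantees such a representative exists even over an infinite domain --- and take the \emph{restriction} of $\pref^{\RI}$ to the representatives, whence strict-partial-orderhood, modularity, and (on a finite domain) smoothness and rankedness via Lemma~\ref{finite_modular} all come for free. The verification burden is shifted to your minimality-transfer claim ($\pref^{\RI'}$-minimal implies $\pref^{\RI}$-minimal), whose case analysis correctly uses the modular-order fact that $x\incomp y$ and $y\pref z$ imply $x\pref z$; this is essentially the same fact the paper's height-based argument exploits implicitly. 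Your route buys fewer order-theoretic lemmas; the paper's quotient version is more uniform and is reused verbatim for the counter-model variant (Lemma~\ref{Lemma:FCMP}), which the paper needs later --- though your construction adapts just as easily by adding the query concepts to $\Sigma$. Finally, a cosmetic difference: you close $\Sigma$ under subconcepts only while the paper also closes $\Gamma$ under negation; since your types record membership and non-membership of every concept in $\Sigma$, this omission is harmless.
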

\myskip

Given a set of ranked interpretations, we can introduce a new form of model merging, \emph{ranked union}.

\begin{definition}[Ranked Union]
Given a countable set of ranked interpretations $\mathfrak{R}=\{\RI_1,\RI_2,\ldots\}$, a ranked interpretation $\RI^{\mathfrak{R}}\defined\tuple{\Dom^{\mathfrak{R}},\cdot^{\mathfrak{R}},\pref^{\mathfrak{R}}}$ is the \df{ranked union} of $\mathfrak{R}$ if the following holds:
\begin{itemize}
\item $\Dom^{\mathfrak{R}}\defined\coprod_{\RI\in\mathfrak{R}}\Delta^{\RI}$, \ie, the disjoint union of the domains from $\mathfrak{R}$, where each $\RI\in\mathfrak{R}$ has the elements $x,y,\ldots$ of its domain renamed as $x_{\RI}$, $y_{\RI}$, \ldots\ so that they are all distinct in $\Dom^{\mathfrak{R}}$;
\item $x_{\RI}\in A^{\mathfrak{R}}$ iff $x\in A^{\RI}$;
\item $(x_{\RI},y_{\RI'})\in r^{\mathfrak{R}}$ iff $\RI=\RI'$ and $(x,y)\in r^{\RI}$;
\item for every $x_{\RI}\in \Dom^{\mathfrak{R}}$, $h_{\mathfrak{R}}(x_{\RI})=h_{\RI}(x)$.
\end{itemize}
The latter condition corresponds to imposing that $x_{\RI}\pref^{\mathfrak{R}}y_{\RI'}$ iff $h_{\RI}(x)<h_{\RI'}(y)$.
\end{definition}

Informally, the ranked union of a set of ranked interpretations is the result of merging all their layers of height $i$ into a single layer of height $i$, for all $i$. 


\begin{restatable}{lemma}{restatableClosureDisjointUnionMod}\label{Lemma:ClosureDisjointUnionMod}
Given a set of ranked models of a defeasible knowledge base $\KB$, their ranked union is itself a ranked model of $\KB$.
\end{restatable}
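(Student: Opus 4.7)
The plan is to show, in three steps, that $\RI^{\mathfrak{R}}$ is (i) a well-defined ranked interpretation, (ii) preserves the extension of every concept on each original domain, and (iii) therefore satisfies every GCI and DCI already satisfied by each $\RI\in\mathfrak{R}$.

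For (i), first observe that $\pref^{\mathfrak{R}}$, being induced by the height function $h_{\mathfrak{R}}$, is automatically a strict partial order whose incomparability is transitive, hence modular. The convexity condition transfers from the components: if some $x_\RI\in\Dom^{\mathfrak{R}}$ has $h_{\mathfrak{R}}(x_\RI)=i$, then $h_\RI(x)=i$, and since $\RI$ itself is convex there exist, for each $0\le j<i$, witnesses $y^{(j)}\in\Dom^\RI$ with $h_\RI(y^{(j)})=j$, whose relabelled copies $y^{(j)}_\RI$ realise every lower height in $\Dom^{\mathfrak{R}}$. Smoothness is similarly inherited: if $C^{\mathfrak{R}}\neq\emptyset$, pick the minimum $k\in\mathbb{N}$ such that some $\RI\in\mathfrak{R}$ has an element of height $k$ in $C^\RI$ (well-defined since $\mathbb{N}$ is well-ordered), and by smoothness of $\RI$ this element can be chosen $\pref^\RI$-minimal in $C^\RI$; its copy is then $\pref^{\mathfrak{R}}$-minimal in $C^{\mathfrak{R}}$.

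For (ii), I would prove by structural induction on $C\in\Lang$ the claim
\[
\text{for every }\RI\in\mathfrak{R}\text{ and every }x\in\Dom^\RI,\quad x_\RI\in C^{\mathfrak{R}}\;\Longleftrightarrow\; x\in C^\RI.
\]
The atomic, boolean and $\top/\bot$ cases are immediate from the construction of $\cdot^{\mathfrak{R}}$. The quantifier cases are the substantive part, and hinge on the fact that $r^{\mathfrak{R}}$ contains no ``cross-interpretation'' pairs: by definition $(x_\RI,y_{\RI'})\in r^{\mathfrak{R}}$ forces $\RI=\RI'$. Consequently $r^{\mathfrak{R}}(x_\RI)=\{y_\RI\mid y\in r^\RI(x)\}$, and both $(\exists r.C)^{\mathfrak{R}}$ and $(\forall r.C)^{\mathfrak{R}}$ reduce on each copied element to the corresponding $\RI$-extension via the inductive hypothesis.

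For (iii), let $\alpha\in\KB$. If $\alpha$ is a GCI $C\subs D$, then for any $x_\RI\in C^{\mathfrak{R}}$, step (ii) gives $x\in C^\RI\subseteq D^\RI$ (since $\RI\sat\alpha$), so $x_\RI\in D^{\mathfrak{R}}$. If $\alpha$ is a DCI $C\dsubs D$, take $x_\RI\in\min_{\pref^{\mathfrak{R}}}C^{\mathfrak{R}}$; by (ii), $x\in C^\RI$, and since every $\pref^\RI$-lower witness in $C^\RI$ would yield, by (ii) again, a $\pref^{\mathfrak{R}}$-lower element of $C^{\mathfrak{R}}$ (heights agree by construction), we must have $x\in\min_{\pref^\RI}C^\RI$. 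Because $\RI\sat C\dsubs D$ this gives $x\in D^\RI$, hence $x_\RI\in D^{\mathfrak{R}}$, as required.

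The main obstacle is the induction in step (ii): one has to be careful with the quantifier cases to use exactly the fact that the ranked union keeps role edges strictly within each original interpretation, which is what prevents a $\forall r.C$-obligation that holds in one $\RI$ from being broken by neighbours imported from another $\RI'$. Everything else in the argument is essentially bookkeeping once (ii) is in place.
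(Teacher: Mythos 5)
Your proof is correct and follows essentially the same route as the paper's: both hinge on the structural induction showing $x_{\RI}\in C^{\mathfrak{R}}$ iff $x\in C^{\RI}$ (valid precisely because role edges never cross components) together with the height preservation $h_{\mathfrak{R}}(x_{\RI})=h_{\RI}(x)$. The remaining differences are presentational: you finish element-wise via minimal elements of $C^{\mathfrak{R}}$, whereas the paper derives $h_{\RI^{\mathfrak{R}}}(C)=\min\{h_{\RI}(C)\mid\RI\in\mathfrak{R}\}$ and argues by contradiction on heights, and you additionally verify well-definedness of the ranked union (convexity, modularity, smoothness), which the paper leaves implicit.
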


Let $\KB$ be a defeasible knowledge base and let $\Delta$ be a fixed countably infinite set. Define
\[
\Mod_{\Dom}(\KB) \defined \{\RI=\tuple{\Dom^{\RI},\cdot^{\RI},\pref^{\RI}} \mid \RI\sat\KB, \RI\text{ is ranked and } \Dom^{\RI}=\Delta\}.
\] 
The following result shows that the set $\Mod_{\Dom}(\KB)$ suffices to characterise modular entailment (the proof is in Appendix~\ref{ProofsEntailment}):

\begin{restatable}{lemma}{restatableCountablyInfiniteDomain}
\label{Lemma:CountablyInfiniteDomain}
For every $\KB$ and every $C,D\in\Lang$, $\KB\entails_{\modular}C\dsubs D$ iff $\RI\sat C\dsubs D$, for every $\RI\in\Mod_{\Dom}(\KB)$.
\end{restatable}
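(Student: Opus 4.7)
The proof proceeds by showing the two implications separately. The forward direction is immediate: since every ranked order is modular, every $\RI\in\Mod_\Dom(\KB)$ is a modular model of $\KB$, and so if $C\dsubs D$ holds in all modular models of $\KB$, it certainly holds in all members of $\Mod_\Dom(\KB)$.

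For the converse, I would argue the contrapositive. Suppose $\KB\not\entails_\modular C\dsubs D$, so there is a modular model $\RI^*$ of $\KB$ together with an element $x^*\in\min_{\pref^{\RI^*}}C^{\RI^*}\setminus D^{\RI^*}$ witnessing $\RI^*\not\sat C\dsubs D$. The task is to produce an $\RI\in\Mod_\Dom(\KB)$ that likewise refutes $C\dsubs D$.

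The construction proceeds in three steps. First, invoke Theorem~\ref{Theorem:FMP} (in the slightly strengthened form discussed below) to obtain a finite ranked model $\RI_f$ of $\KB$ containing some $x_f\in\min_{\pref^{\RI_f}}C^{\RI_f}\setminus D^{\RI_f}$. Second, pick a countably infinite family $\{\RI_f^{(n)}\}_{n\in\mathbb{N}}$ of disjoint isomorphic copies of $\RI_f$ and form their ranked union $\RI^\omega$; by Lemma~\ref{Lemma:ClosureDisjointUnionMod} this is a ranked model of $\KB$, and its domain is countably infinite since each copy is finite and non-empty. Since the ranked union preserves heights, every copy of $x_f$ remains $\pref^{\RI^\omega}$-minimal in $C^{\RI^\omega}$ while lying outside $D^{\RI^\omega}$, so $\RI^\omega\not\sat C\dsubs D$. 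Third, fix any bijection between $\Dom^{\RI^\omega}$ and the ambient countably infinite set $\Delta$ and transport the interpretation along it; the resulting structure $\RI$ is isomorphic to $\RI^\omega$, lies in $\Mod_\Dom(\KB)$, and still refutes $C\dsubs D$, yielding the required countermodel.

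The only genuinely non-routine point is the strengthening of the finite-model property used in the first step. Theorem~\ref{Theorem:FMP} as stated promises merely some finite ranked model of $\KB$, whereas here we need one that also refutes the specific DCI $C\dsubs D$. The filtration-style construction underpinning finite-model results of this kind typically preserves both the satisfaction of a fixed finite set of statements and the satisfiability of a fixed concept by a $\pref$-minimal witness, so the required refinement should follow from a direct inspection of the appendix proof of Theorem~\ref{Theorem:FMP}; checking that this stronger statement really is delivered is the main obstacle to be overcome.
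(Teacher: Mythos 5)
Your proof is correct and follows essentially the same route as the paper's: a finite ranked counter-model obtained from the finite-model property, a ranked union of countably many disjoint copies of it (Lemma~\ref{Lemma:ClosureDisjointUnionMod}), and finally transport of the resulting structure along a bijection onto the fixed domain $\Delta$. The strengthened finite-model property you flag as the one non-routine point is exactly what the appendix already supplies as Lemma~\ref{Lemma:FCMP}, proved precisely as you anticipate --- by adding $C$ and $D$ (closed under subconcepts and negation) to the filtration set $\Gamma$ --- so no further checking is needed.
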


Therefore, we can use just the set of interpretations in $\Mod_{\Dom}(\KB)$ to decide the consequences of~$\KB$ \wrt\ modular entailment.

We can now use the set $\Mod_{\Dom}(\KB)$ as a springboard to introduce what will turn out to be a canonical modular interpretation for $\KB$. Using $\Mod_{\Dom}(\KB)$ and ranked union we can define the following relevant model.

\begin{definition}[Big Ranked Model]\label{Def:Bigrankedmodel}
Let $\KB$ be a defeasible knowledge base.  The \df{big ranked model} of \KB~is the ranked model $\OI\defined\tuple{\Dom^{\OI},\cdot^{\OI},\pref^{\OI}}$ that is the ranked union of the models in $\Mod_{\Dom}(\KB)$.
\end{definition}





Given Lemma~\ref{Lemma:ClosureDisjointUnionMod}, we can state the following:

\begin{corollary}\label{Corollary:IsModularModel}
$\OI$ is a ranked model of~$\KB$.
\end{corollary}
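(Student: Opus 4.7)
The plan is to observe that this corollary is essentially an immediate consequence of Lemma~\ref{Lemma:ClosureDisjointUnionMod} together with the definition of $\OI$. By construction, $\Mod_{\Dom}(\KB)$ is a set of ranked interpretations each of which satisfies $\KB$; this holds by the very definition of $\Mod_{\Dom}(\KB)$, which restricts attention to those ranked interpretations $\RI$ with $\Dom^{\RI}=\Delta$ that satisfy $\KB$. Hence $\Mod_{\Dom}(\KB)$ is a (countable) set of ranked models of~$\KB$, and $\OI$ is, by Definition~\ref{Def:Bigrankedmodel}, the ranked union of exactly this set.

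Given that, the only thing to do is to invoke Lemma~\ref{Lemma:ClosureDisjointUnionMod}, which tells us that the ranked union of a set of ranked models of $\KB$ is itself a ranked model of $\KB$. Applied to $\Mod_{\Dom}(\KB)$, this yields that $\OI$ is a ranked model of~$\KB$, which is precisely the statement of the corollary. The only implicit point worth flagging (which the reader can check directly from the ranked-union construction) is that $\Mod_{\Dom}(\KB)$ is indeed countable, or at least that ranked union is well-defined on it, so that Lemma~\ref{Lemma:ClosureDisjointUnionMod} applies without caveat; in any case, since the lemma has already been proved, no further work is required and the main obstacle is essentially notational rather than mathematical.
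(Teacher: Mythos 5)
Your proof is correct and is essentially identical to the paper's own argument: the paper presents the corollary as an immediate consequence of Lemma~\ref{Lemma:ClosureDisjointUnionMod} applied to Definition~\ref{Def:Bigrankedmodel}, since $\OI$ is by construction the ranked union of the ranked models in $\Mod_{\Dom}(\KB)$. Your side remark about countability is a fair observation that the paper glosses over (its ranked-union definition is stated for countable families, while $\Mod_{\Dom}(\KB)$ need not be countable, though the construction goes through for arbitrary families), but it does not change the substance of the argument.
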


Armed with the definitions and results above, we are now ready to provide an alternative definition of entailment in the context of defeasible ontologies:

\begin{definition}[Rational Entailment]\label{Def:RationalEntailment}
A statement~$\alpha$ is \df{rationally entailed} by a knowledge base~$\KB$, written $\KB\entails_{\rat}\alpha$, if $\OI\sat\alpha$.
\end{definition}

That such a notion of entailment indeed deserves its name is witnessed by the following result, a consequence of Corollary~\ref{Corollary:IsModularModel} and Theorem~\ref{Theorem:RepResultRational}:

\begin{corollary}
Let $\KB$ be a defeasible knowledge base. $\{C\dsubs D\mid \OI\sat C\dsubs D\}$ is rational.
\end{corollary}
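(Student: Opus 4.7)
The plan is to observe that the set $\{C \dsubs D \mid \OI \sat C \dsubs D\}$ is, by Definition~\ref{Def:PInducedSubsumption}, precisely the defeasible subsumption relation $\dsubs_{\OI}$ induced by $\OI$. By Theorem~\ref{Theorem:RepResultRational}, a defeasible subsumption relation is rational if and only if it is induced by some modular interpretation. So it suffices to exhibit a modular interpretation whose induced relation coincides with $\dsubs_{\OI}$, and the obvious candidate is $\OI$ itself.

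First I would verify that $\OI$ is indeed a modular interpretation. By construction (Definition~\ref{Def:Bigrankedmodel}) $\OI$ is the ranked union of the members of $\Mod_{\Dom}(\KB)$, so its preference order $\pref^{\OI}$ is induced by the ranking function $h_{\OI}(x_{\RI}) = h_{\RI}(x)$ inherited from the components. In particular, $\pref^{\OI}$ is a ranked order on $\Dom^{\OI}$, and as noted immediately after Definition~\ref{Def:Ranked_order}, every ranked order is modular. Hence $\OI$ is a ranked interpretation, and in particular a modular one; it only remains to confirm that $\OI$ satisfies the smoothness condition required of any preferential interpretation, which is immediate from the existence of minimal layers in a ranked order (whenever $C^{\OI} \neq \emptyset$, pick the layer of least index containing a $C$-instance, and every element there is $\pref^{\OI}$-minimal in $C^{\OI}$).

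Next, by Corollary~\ref{Corollary:IsModularModel}, $\OI$ is a model of $\KB$, so the relation $\dsubs_{\OI}$ is well-defined as the set of DCIs satisfied by $\OI$. Applying the right-to-left direction of Theorem~\ref{Theorem:RepResultRational} with $\RI := \OI$ yields that $\dsubs_{\OI}$ is a rational subsumption relation. Unfolding the definition, this is exactly the assertion that $\{C \dsubs D \mid \OI \sat C \dsubs D\}$ is rational, which is what was claimed.

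There is no substantive obstacle here: the work has all been done upstream, in establishing the representation result for rational subsumption (Theorem~\ref{Theorem:RepResultRational}) and in showing that the ranked union construction produces a ranked model of $\KB$ (Lemma~\ref{Lemma:ClosureDisjointUnionMod} and Corollary~\ref{Corollary:IsModularModel}). The only point that warrants even a moment's care is the observation that the ranked union is automatically a ranked (hence modular) interpretation, which is immediate from the definition of ranked union; from there the corollary is a one-line appeal to the representation theorem.
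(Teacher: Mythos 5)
Your proof is correct and matches the paper's own argument: the corollary is stated there as an immediate consequence of Corollary~\ref{Corollary:IsModularModel} (that $\OI$ is a ranked, hence modular, model of $\KB$) together with the if-direction of Theorem~\ref{Theorem:RepResultRational}. Your additional explicit check of smoothness via minimal layers is harmless extra care rather than a deviation.
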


In conclusion, rational entailment is a good candidate for the appropriate notion of defeasible consequence we have been looking for. Of course, a question that arises is whether a notion of closure, in the spirit of preferential and modular closures, that is equivalent to it can be defined. In the next section, we address precisely this matter.

\section{Rational closure for defeasible knowledge bases}\label{RationalClosure}

We now turn our attention to the exploration, in a DL setting, of the well-known notion of \emph{rational closure} of a defeasible knowledge base as studied by Lehmann and Magidor~\cite{LehmannMagidor1992} for propositional logic. For the most part, we base our constructions on the work by Casini and Straccia~\cite{CasiniStraccia2010,CasiniStraccia2013}, amending it wherever necessary. (An alternative semantic characterisation of rational closure in DLs has also been proposed by Giordano~\etal.~\cite{GiordanoEtAl2013,GiordanoEtAl2015}.) As we shall see, rational closure provides a proof-theoretic characterisation of rational entailment and the complexity of its computation is no higher than that of computing entailment in the underlying classical~DL.

\subsection{Rational closure and a correspondence result}\label{DefinitionRC}

Rational closure is a form of inferential closure based on modular entailment $\entails_{\modular}$, but it extends its inferential power. Such an extension of modular entailment is obtained by formalising the already mentioned principle of \emph{presumption of typicality}~\cite[Section 3.1]{Lehmann1995}. That is, under possibly incomplete information, we always assume that we are dealing with the most typical possible situation that is compatible with the information at our disposal. We first define what it means for a concept to be {\em exceptional}, a notion that is central to the definition of rational closure:

\begin{definition}[Exceptionality]\label{Def:Exceptionality}
Let $\KB$ be a defeasible knowledge base and $C\in\Lang$. We say~$C$ is \df{exceptional} in $\KB$ if $\KB\entails_{\modular}\top\dsubs\lnot C$. A DCI $C\dsubs D$ is exceptional in \KB\ if $C$ is exceptional in~$\KB$.
\end{definition}

A concept $C$ is considered exceptional in a knowledge base~$\KB$ if it is not possible to have a modular model of~$\KB$ in which there is a typical object (\ie, an object at least as typical as all the others) that is in the interpretation of~$C$. Intuitively, a DCI is exceptional if it does not concern the most typical objects, \ie, it is about less normal (or exceptional) ones. This is an intuitive translation of the notion of exceptionality used by Lehmann and Magidor~\cite{LehmannMagidor1992} in the propositional framework, and has already been used by Casini and Straccia~\cite{CasiniStraccia2010} and Giordano~\etal.~\cite{GiordanoEtAl2015} in their investigations into defeasible reasoning for description logics.

Applying the notion of exceptionality iteratively, we associate with every concept~$C$ a \emph{rank} in~$\KB$, which we denote by $\rank_{\KB}(C)$. We extend this to DCIs and associate with every statement $C\dsubs D$ a rank, denoted $\rank_{\KB}(C\dsubs D)$:

\begin{enumerate}
\item Let $\rank_{\KB}(C)=0$, if $C$ is not exceptional in~$\KB$, and let $\rank_{\KB}(C\dsubs D)=0$ for every DCI having~$C$ in the LHS, with $\rank_{\KB}(C)=0$. The set of DCIs in~$\DB$ with rank~$0$ is denoted as $\DB^\rank_{0}$.
\item Let $\rank_{\KB}(C)=1$, if $C$ does not have a rank of~$0$ and it is not exceptional in the knowledge base $\KB^1$ composed of $\TB$~and the exceptional part of~$\DB$, that is, $\KB^1=\tuple{\TB,\DB\setminus\DB^\rank_{0}}$. If $\rank_{\KB}(C)=1$, then let $\rank_{\KB}(C\dsubs D)=1$ for every DCI $C\dsubs D$. The set of DCIs in~$\DB$ with rank $1$ is denoted $\DB^\rank_{1}$.
\item In general, for $i>0$, a concept~$C$ is assigned a rank of~$i$ if it does not have a rank of $i-1$ and it is not exceptional in~$\KB^{i}=\tuple{\TB,\DB\setminus\bigcup_{j=0}^{i-1}\DB^{\rank}_{j}}$. If $\rank_{\KB}(C)=i$, then $\rank_{\KB}(C\dsubs D)=i$, for every DCI $C\dsubs D$ having~$C$ in the LHS. The set of DCIs in~$\DB$ with rank~$i$ is denoted $\DB^{\rank}_{i}$.
\item By iterating the previous steps, we eventually reach a subset $\E\subseteq\DB$ such that all the DCIs in~$\E$ are exceptional (since~$\DB$ is finite, we must reach such a point). If $\E\neq\emptyset$, we define the rank of the DCIs in $\E$ as~$\infty$, and the set~$\E$ is denoted $\DB^\rank_\infty$. Moreover, we set $\rank_{\KB}(C)=\infty$ for every~$C$ in the LHS of some DCI in~$\DB^\rank_\infty$.
\end{enumerate}
The notion of rank can also be extended to GCIs as follows: 
$\rank_{\KB}(C\subs D) = \rank_{\KB}(C\dland\lnot D)$.

Following on the procedure above, the DTBox~$\DB$ is partitioned into a finite sequence $\tuple{\DB^{\rank}_{0},\ldots,\DB^{\rank}_{n},\DB^{\rank}_{\infty}}$ ($n\geq 0$), where $\DB^{\rank}_{\infty}$ may possibly be empty. So, through this procedure we can assign a rank to every~DCI.

We can check that for a concept $C$ has a rank of $\infty$ iff it is not satisfiable in any model of~$\KB$, that is, $\KB\entails_{\modular} C\subs\bot$.

\begin{restatable}{lemma}{restatableInfiniteRank}
\label{Lemma:InfiniteRank}
For every knowledge base $\KB$ and every concept $C$, $\rank_{\KB}(C)=\infty$ iff $\KB\entails_{\modular}C\subs\bot$.
\end{restatable}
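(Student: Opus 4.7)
The plan is to prove the two directions separately: $(\Rightarrow)$ by an induction on the height of concepts in ranked models of $\KB$, invoking the finite-model property (Theorem~\ref{Theorem:FMP}); and $(\Leftarrow)$ by explicitly assembling a ranked model of $\KB$ that satisfies $C$ out of witness models for the knowledge bases $\KB^j$ produced by the rank construction.

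For $(\Rightarrow)$, assume $\rank_\KB(C) = \infty$, fix any modular model $\RI$ of $\KB$, and appeal to the finite-model property to reduce to the case where $\RI$ is a finite ranked model. I would prove by induction on $k$ the auxiliary claim: \emph{for every concept $D$, if $h_\RI(D) \leq k$ then $\rank_\KB(D) \leq k$.} The base case is immediate, since a $\pref^\RI$-minimal element in $D^\RI$ refutes $\top \dsubs \lnot D$ in $\RI$, so $D$ is not exceptional in $\KB$. For the inductive step with $h_\RI(D) = k+1$, form $\RI' \defined \tuple{\Dom^\RI, \cdot^\RI, \pref^{\RI'}}$ by collapsing the bottom $k+1$ layers, setting $h_{\RI'}(x) \defined \max(0, h_\RI(x) - (k+1))$. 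By the contrapositive of the inductive hypothesis, every DCI $D' \dsubs E'$ belonging to $\KB^{k+1}$ has $\rank_\KB(D') \geq k+1$ (possibly $\infty$), hence $h_\RI(D') \geq k+1$, so the collapse shifts every element of $D'^\RI$ uniformly and the $\pref^{\RI'}$-minimal instances of $D'$ coincide with its $\pref^\RI$-minimal ones; together with the fact that GCIs depend only on the classical structure, this yields $\RI' \sat \KB^{k+1}$. A $D$-instance originally at $\RI$-layer $k+1$ now sits at $\RI'$-layer $0$, witnessing non-exceptionality of $D$ in $\KB^{k+1}$ and hence $\rank_\KB(D) \leq k+1$. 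The contrapositive then shows that $\rank_\KB(C) = \infty$ forces $C^\RI = \emptyset$ in every finite ranked model of $\KB$, and therefore $\KB \entails_\modular C \subs \bot$.

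For $(\Leftarrow)$, assume $\rank_\KB(C) = i < \infty$; I plan to build a ranked model of $\KB$ in which $C$ is satisfiable. Since $C$ is not exceptional in $\KB^i$, fix a (finite) ranked model $\mathcal{M}_i$ of $\KB^i$ in which some $\pref$-minimal element belongs to $C$. For each $j < i$, every LHS of a DCI in $\DB^\rank_j$ is non-exceptional in $\KB^j$, and by a ranked union (Lemma~\ref{Lemma:ClosureDisjointUnionMod}) over single-LHS witnesses one obtains a ranked model $\mathcal{W}_j$ of $\KB^j$ whose minimal layer carries a witness for each such LHS simultaneously. Iterating from $j = i-1$ down to $j = 0$, define $\mathcal{M}_j$ as the disjoint union of (i) $\mathcal{M}_{j+1}$ with all heights shifted upward by one and (ii) $\mathcal{W}_j$ retaining its heights. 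Convexity holds because the new layer $0$ is occupied by $\mathcal{W}_j$; the TBox survives under disjoint union in the spirit of Lemma~\ref{Lemma:ClosureDisjointUnionPref}; each DCI in $\DB^\rank_j$ is satisfied by $\mathcal{W}_j$'s fresh minimal witnesses; and every DCI in $\KB^{j+1}$ is preserved by a case analysis on whether its $\pref$-minimal LHS-instance in $\mathcal{M}_j$ comes from $\mathcal{W}_j$ or from the shifted copy of $\mathcal{M}_{j+1}$, the corresponding component already satisfying it. Since $C$ is satisfiable in $\mathcal{M}_{j+1}$ and the shift merely moves its instances up, $C$ remains satisfiable at every stage. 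Thus $\mathcal{M}_0$ is a modular model of $\KB$ with $C^{\mathcal{M}_0} \neq \emptyset$, so $\KB \nentails_\modular C \subs \bot$. The main technical obstacle I anticipate is precisely in this $(\Leftarrow)$ stacking: verifying that disjoint union with a rank shift preserves ranked-order convexity and every DCI of $\KB^{j+1}$, whose LHS's $\pref$-minima may now reside on either side of the union. In $(\Rightarrow)$ the subtle point is that the induction must be stated uniformly for \emph{every} concept, so that the IH can be applied to the LHSs of the arbitrary DCIs populating $\KB^{k+1}$.
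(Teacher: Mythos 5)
Your proposal is correct, but its two halves relate to the paper's proof differently. For the $(\Leftarrow)$ direction you essentially rediscover the paper's construction: the paper likewise fixes, for each rank level $j$, finite ranked models of $\KB^{j}$ placing each antecedent of $\DB^{\rank}_{j}$ at height $0$ (Corollary~\ref{Cor:FMP_rank0}), merges them by ranked union (Lemma~\ref{Lemma:ClosureDisjointUnionMod}), and stacks downward from level $n$ to level $0$ to obtain a model of $\KB$ in which $C$ is satisfiable. The only real difference is that the paper places the previously built model \emph{entirely above} all layers of the witness model, whereas you shift it up by a single layer and interleave; your variant is sound, since all height-$0$ objects of $\mathcal{M}_{j}$ come from $\mathcal{W}_{j}$ (so no unwanted minima appear for the DCIs in $\DB^{\rank}_{j}$), and your case analysis on which side of the union the minimal LHS-instances of the remaining DCIs fall covers the tie case correctly. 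For $(\Rightarrow)$, by contrast, your route is genuinely different. The paper works with a finite ranked model of the weakened knowledge base $\TB\cup\DB^{\rank}_{\infty}$ containing a $C$-instance at some layer $i>0$ and repeatedly \emph{rotates} the bottom layer to the top, checking that each rotation preserves $\TB\cup\DB^{\rank}_{\infty}$ (antecedents of infinite rank never sit at height $0$), until the $C$-instance reaches layer $0$, contradicting exceptionality. You instead prove, by induction with a bottom-collapsing construction on a finite ranked model of the \emph{full} $\KB$, the quantitative claim that $h_{\RI}(D)\leq k$ implies $\rank_{\KB}(D)\leq k$ for every concept $D$. This is stronger than what the lemma needs, and it buys something concrete: the inequality $\rank_{\KB}(C)\leq h_{\RI}(C)$ for models of $\KB$ is exactly the half of the height--rank correspondence that the paper's proof of Lemma~\ref{Lemma:Characterisation} asserts as ``immediate'' without a real argument, so your induction supplies that justification as a by-product. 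Your closing worry is well placed, and your formulation resolves it: the induction hypothesis must quantify over \emph{all} concepts (for the fixed model $\RI$), so that its contrapositive applies to the antecedents populating $\KB^{k+1}$, which is what makes the collapse shift those antecedents' extensions uniformly and preserve their minima. The only cosmetic omission on your side is the degenerate case where $\KB$ has no modular model at all, which the paper dispatches in its first line and which your argument in any case covers vacuously.
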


\begin{example}\label{Example:Exceptionality}
Let $\KB=\TB\cup\DB$, where~$\TB$ and~$\DB$ are as in Example~\ref{Example:Student}, \ie, $\TB=\{\EmpStud\subs\Stud\}$ and 
\[
\DB=\left\{\begin{array}{c}
						\Stud\dsubs\lnot\exists\pays.\Tax,\\[0.1cm]
						\EmpStud\dsubs\exists\pays.\Tax,\\[0.1cm]
						\EmpStud\dland\Parent\dsubs\lnot\exists\pays.\Tax\\
					 \end{array}
		\right\}
\]
Examining the concepts on the LHS of each DCI in~$\KB$, one can verify that $\Stud$ is not exceptional \wrt~$\KB$. Therefore, $\rank_{\KB}(\Stud)=0$. We also find that $\rank_{\KB}(\EmpStud)\neq{0}$ and $\rank_{\KB}(\EmpStud\dland\Parent)\neq{0}$ because both concepts are exceptional \wrt~$\KB$. Hence $\DB^{\rank}_{0}=\{\Stud\dsubs\lnot\exists\pays.\Tax\}$ and $\KB^{0}=\TB\cup\DB^{\rank}_{0}$.

$\KB^{1}$ is composed of~$\TB$ and~$\DB\setminus\DB^{\rank}_{0}$. We find that $\EmpStud$ is \emph{not} exceptional \wrt~$\KB^{1}$ and therefore $\rank_{\KB}(\EmpStud)=1$. Since $\EmpStud\dland\Parent$ is exceptional \wrt~$\KB^{1}$, $\rank_{\KB}(\EmpStud\dland\Parent)\neq{1}$. Thus $\DB^{\rank}_{1}=\{\EmpStud\dsubs\exists\pays.\Tax\}$. Similarly, $\KB^{2}$ is composed of~$\TB$ and $\{\EmpStud\dland\Parent\dsubs\lnot\exists\pays.\Tax\}$. We have that $\EmpStud\dland\Parent$ is not exceptional \wrt~$\KB^{2}$ and therefore $\rank_{\KB}(\EmpStud\dland\Parent)=2$. Finally, for this example, $\DB^{\rank}_{\infty}=\emptyset$. \hfill\ \qed
\end{example}

Adapting Lehmann and Magidor's construction for propositional logic~\cite{LehmannMagidor1992}, the  rational closure of a defeasible knowledge base~$\KB$ is defined as follows:

\begin{definition}[Rational Closure]\label{Def:RationalClosure}
Let $\KB$ be a defeasible knowledge base and $C,D\in\Lang$.
\begin{enumerate}
\item $C\dsubs D$ is in the rational closure of~$\KB$~if 
\[
\rank_{\KB}(C\dland D)<\rank_{\KB}(C\dland\lnot D) \text{ or } \rank_{\KB}(C)=\infty.
\]
\item $C\subs D$ is in the rational closure of~$\KB$~if $\rank_{\KB}(C\dland\lnot D)=\infty$.
\end{enumerate}
\end{definition}

Informally, the definition above says that $C\dsubs D$ is in the rational closure of~$\KB$ if the modular models of~$\KB$ tell us that some instances of $C\dland D$ are more plausible than all instances of $C\dland\lnot D$, while $C\subs D$ is in the rational closure of~$\KB$ if the instances of $C\dland\lnot D$ are impossible.
\myskip

\begin{exContinued}
Applying the definition above to the knowledge base in Example~\ref{Example:Exceptionality}, we can verify that $\Stud\dsubs\lnot\exists\pays.\Tax$ is in the rational closure of~$\KB$ because $\rank_{\KB}(\Stud\dland\lnot\exists\pays.\Tax)=0$ and $\rank_{\KB}(\Stud\dland\exists\pays.\Tax)>0$. The latter can be derived from the fact that $\Stud\dland\exists\pays.\Tax$ is exceptional \wrt~$\KB$. Similarly, one can derive that both DCIs $\EmpStud\dsubs\exists\pays.\Tax$ and $\EmpStud\dland\Parent\dsubs\lnot\exists\pays.\Tax$ are in the rational closure of~$\KB$ as well. \hfill\ \qed
\end{exContinued}

We now state the main result of the present section, which provides an answer to the question raised at the end of Section~\ref{RationalEntailment}. (The proof can be found in Appendix~\ref{ProofsComputingRC}.)

\begin{restatable}{theorem}{restatableCharacterisation}
\label{Theorem:Characterisation}
Let $\KB$ be a defeasible knowledge base having a modular model. A statement $\alpha$ is in the rational closure of~$\KB$ iff $\KB\entails_{\rat}\alpha$.
\end{restatable}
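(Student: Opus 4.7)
The plan is to reduce the theorem to the single key equality
\[
h_{\OI}(C) = \rank_{\KB}(C) \quad \text{for every } C \in \Lang,
\]
with the convention that $h_{\OI}(C) = \infty$ when $C^{\OI} = \emptyset$. Once this equality is in hand, both clauses of Definition~\ref{Def:RationalClosure} translate directly into semantic conditions on $\OI$. For a DCI, $\KB \entails_{\rat} C \dsubs D$ means $\min_{\pref^{\OI}} C^{\OI} \subseteq D^{\OI}$, which holds iff $C^{\OI} = \emptyset$ or $h_{\OI}(C \dland D) < h_{\OI}(C \dland \lnot D)$; by the key equality this matches exactly the condition $\rank_{\KB}(C) = \infty$ or $\rank_{\KB}(C \dland D) < \rank_{\KB}(C \dland \lnot D)$. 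For a GCI, $\KB \entails_{\rat} C \subs D$ means $(C \dland \lnot D)^{\OI} = \emptyset$, which, via the key equality and Lemma~\ref{Lemma:InfiniteRank}, is equivalent to $\rank_{\KB}(C \dland \lnot D) = \infty$.

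The key equality I would prove by induction on $i \in \mathbb{N}$, showing $h_{\OI}(C) \geq i$ iff $\rank_{\KB}(C) \geq i$, and then treating the case $i = \infty$ separately via Lemma~\ref{Lemma:InfiniteRank} together with the observation that $\rank_{\KB}(C) = \infty$ iff $C^{\OI} = \emptyset$. The base case $i = 1$ asserts that $C^{\OI}$ has no element of height $0$ iff $C$ is exceptional in $\KB$. The direction ``exceptional implies no height-$0$ witness'' uses $\OI \sat \KB$ (Corollary~\ref{Corollary:IsModularModel}). For the converse, if $C$ is not exceptional then some modular model of $\KB$ places a $C$-element in its minimal layer; Lemma~\ref{Lemma:CountablyInfiniteDomain} lets us take this model from $\Mod_{\Dom}(\KB)$, and since the ranked union defining $\OI$ preserves heights, the corresponding element witnesses $h_{\OI}(C) = 0$.

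The inductive step is the main technical obstacle. The goal is to relate exceptionality in the reduced knowledge base $\KB^{i} = \tuple{\TB, \DB \setminus \bigcup_{j<i} \DB^{\rank}_{j}}$ to the absence of $C$-elements of height $< i$ in $\OI$. The strategy is to form the truncation $\OI_{i}$ obtained by discarding all elements of $\OI$ of height $< i$ and shifting the remaining heights down by $i$, and to establish two facts: (i)~$\OI_{i}$ is a ranked model of $\KB^{i}$, and (ii)~every ranked model of $\KB^{i}$ in $\Mod_{\Dom}(\KB^{i})$ can be extended, by gluing on suitable lower layers via the disjoint ranked union of Lemma~\ref{Lemma:ClosureDisjointUnionMod}, into a ranked model of $\KB$ in $\Mod_{\Dom}(\KB)$. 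Together, (i) and (ii) imply that the big ranked model of $\KB^{i}$ coincides with $\OI_{i}$, so applying the base case to $\KB^{i}$ yields $h_{\OI}(C) \geq i+1$ iff $h_{\OI_{i}}(C) \geq 1$ iff $C$ is exceptional in $\KB^{i}$ iff $\rank_{\KB}(C) \geq i+1$.

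The hardest sub-step will be verifying~(i): that removing the DCIs of rank $< i$ is exactly enough to make $\OI_{i}$ a model of what remains. A DCI of rank $< i$ need not be satisfied in $\OI_{i}$ because its original $\pref^{\OI}$-minimal witnesses are discarded, while a DCI of rank $\geq i$ has all its minimal witnesses surviving truncation, and the shift of heights merely relabels but does not reorder them. Making this precise requires the inductive hypothesis to identify the removed DCIs as exactly those with LHS rank $< i$, plus careful bookkeeping to show that satisfaction of each remaining DCI is invariant under truncation. The finite-model property (Theorem~\ref{Theorem:FMP}) can be invoked to keep the exceptionality procedure well-defined and to allow reducing arguments about arbitrary modular models to those in $\Mod_{\Dom}(\KB)$. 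Once these gluing and invariance arguments are settled, the three-way correspondence between height in $\OI$, rank in $\KB$, and membership in the rational closure yields the theorem.
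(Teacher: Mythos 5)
Your overall architecture is the same as the paper's: the paper also reduces the theorem to the single equality $h_{\OI}(C)=\rank_{\KB}(C)$ for every $C\in\Lang$ (its Lemma~\ref{Lemma:Characterisation} plus the proof of Theorem~\ref{Theorem:Characterisation}), handles $i=\infty$ via Lemma~\ref{Lemma:InfiniteRank}, and proves the finite case by induction on the rank, with a base case identical to yours. The gap is in your inductive step~(i). The truncation $\OI_i$ obtained by \emph{deleting} all elements of height $<i$ is not a sound object to reason with in $\ALC$: removing domain elements severs role edges, so the extensions of quantified concepts change for the surviving elements, and the identity $C^{\OI_i}=C^{\OI}\cap\{x\mid h_{\OI}(x)\geq i\}$ that your ``relabels but does not reorder'' argument tacitly presupposes fails. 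Concretely, take $\KB=\{\top\dsubs\lnot A,\ A\dsubs\exists r.B\}$, so that $\rank_{\KB}(A)=1$ and $\KB^{1}$ contains $A\dsubs\exists r.B$. Among the components of $\OI$ there is a ranked model of $\KB$ with a $B$-instance $y\notin A$ at height $0$ and an $A$-instance $x$ at height $1$ whose \emph{only} $r$-successor is $y$ (pad with harmless height-$0$ elements to get the domain $\Dom$). Truncating at $i=1$ deletes $y$, leaving $x$ as a $\pref$-minimal $A$-instance of $\OI_1$ with no $r$-successor at all, so $\OI_1\nsat A\dsubs\exists r.B$ and claim~(i) is false; with it falls the chain $h_{\OI}(C)\geq i+1$ iff $h_{\OI_i}(C)\geq 1$. (Your claim that (i) and (ii) would make the big ranked model of $\KB^{i}$ \emph{coincide} with $\OI_i$ also overreaches — their domains differ; at best one could hope for agreement of the height function on concepts, which is exactly what the counterexample blocks.)

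The paper avoids this by never truncating: its inductive step is bottom-up. For $C$ with $\rank_{\KB}(C)=i$ it builds a single model of the \emph{full} $\KB$ realising $h(C)=i$, by taking (via the induction hypothesis) components in $\Mod_{\Dom}(\KB)$ that witness each DCI of rank $j<i$ at height exactly $j$, and placing above their ranked union a finite ranked model of $\TB\cup\DB^{\rank}_{\geq i}$ with $h(C)=0$, shifted up by $i$. Since roles never cross components, concept extensions are computed componentwise and are undisturbed, which is precisely the invariance your deletion destroys. If you want to keep your top-down shape, the fix is the layer-rotation trick from the paper's proof of Lemma~\ref{Lemma:InfiniteRank}: keep $\Dom^{\OI}$ and $\cdot^{\OI}$ fixed (so all concept extensions are preserved) and move the offending low layers to the top instead of deleting them; the inductive hypothesis guarantees that no LHS of a DCI of rank $\geq i$ has instances below height $i$, so the rotated interpretation still models $\KB^{i}$, and your gluing step~(ii) — which is essentially the paper's construction and is fine — supplies the converse direction. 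As written, however, the inductive step does not go through.
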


An easy corollary of this result is that rational closure preserves the equivalence between GCIs of the form $C\subs D$ and their defeasible counterparts ($C\dland\lnot D\dsubs\bot$).

\begin{corollary}
$C\subs D$ is in the rational closure of a defeasible knowledge base $\KB$ iff $C\dland\lnot D\dsubs\bot$ is in the restriction of the closure of $\KB$ under rational entailment to defeasible concept inclusions.
\end{corollary}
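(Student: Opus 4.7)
The plan is to reduce both sides of the biconditional to the same rank-theoretic condition on $\KB$. By Theorem~\ref{Theorem:Characterisation} (assuming $\KB$ has a modular model; the degenerate case can be handled separately since everything is rationally entailed), the closure of $\KB$ under rational entailment restricted to DCIs coincides with the DCI part of the rational closure of $\KB$. So the right-hand side is equivalent to the statement ``$C\dland\lnot D\dsubs\bot$ is in the rational closure of $\KB$''.

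Next, I would apply Definition~\ref{Def:RationalClosure} to unpack both sides. Clause~(2) gives directly that $C\subs D$ is in the rational closure iff $\rank_{\KB}(C\dland\lnot D)=\infty$. For the DCI $(C\dland\lnot D)\dsubs\bot$, clause~(1) says it is in the rational closure iff
\[
\rank_{\KB}\bigl((C\dland\lnot D)\dland\bot\bigr) < \rank_{\KB}\bigl((C\dland\lnot D)\dland\lnot\bot\bigr) \quad\text{or}\quad \rank_{\KB}(C\dland\lnot D)=\infty.
\]
Since $(C\dland\lnot D)\dland\bot\equiv\bot$ and $(C\dland\lnot D)\dland\lnot\bot\equiv C\dland\lnot D$, and since ranks respect classical concept equivalence, the displayed condition simplifies to $\rank_{\KB}(\bot)<\rank_{\KB}(C\dland\lnot D)$ or $\rank_{\KB}(C\dland\lnot D)=\infty$. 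Observing that $\rank_{\KB}(\bot)=\infty$ (because $\top\dsubs\lnot\bot$ collapses to the universally satisfied $\top\dsubs\top$, so $\bot$ is exceptional at every layer of the construction; this can be confirmed via Lemma~\ref{Lemma:InfiniteRank} since $\KB\entails_{\modular}\bot\subs\bot$), the first disjunct cannot hold. Hence the condition reduces exactly to $\rank_{\KB}(C\dland\lnot D)=\infty$, matching the left-hand side.

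The two small facts needed to close the argument are: (i) ranks respect classical concept equivalence, which follows from the fact that exceptionality is defined via $\KB\entails_{\modular}\top\dsubs\lnot C$ and the left-logical-equivalence property built into preferential (hence rational) subsumption; and (ii) $\rank_{\KB}(\bot)=\infty$, justified above. Neither presents a genuine obstacle; the main work has already been done in Theorem~\ref{Theorem:Characterisation}, and the corollary is then a direct computation via Definition~\ref{Def:RationalClosure}.
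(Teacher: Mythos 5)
Your proof is correct, but it takes a different low-level route from the one the paper intends. The paper presents this as an immediate consequence of Theorem~\ref{Theorem:Characterisation} combined with Lemma~\ref{Lemma:ClassicalStatements}: since membership in the rational closure coincides with satisfaction in the big ranked model~$\OI$, and since Lemma~\ref{Lemma:ClassicalStatements} says that \emph{every} preferential interpretation satisfies $C\subs D$ iff it satisfies $C\dland\lnot D\dsubs\bot$, both sides of the corollary reduce to the single condition $\OI\sat C\subs D$ --- a purely semantic one-liner needing no rank arithmetic. You instead use Theorem~\ref{Theorem:Characterisation} only to rewrite the right-hand side as membership of $C\dland\lnot D\dsubs\bot$ in the rational closure, and then work entirely at the level of the ranking function: unpacking both clauses of Definition~\ref{Def:RationalClosure}, using invariance of $\rank_{\KB}(\cdot)$ under classical equivalence (correctly justifiable, since exceptionality depends only on concept extensions in modular models, though your appeal to LLE is a slightly indirect way of saying this) and the fact that $\rank_{\KB}(\bot)=\infty$ via Lemma~\ref{Lemma:InfiniteRank}, to collapse clause~(1) for $C\dland\lnot D\dsubs\bot$ into exactly the condition $\rank_{\KB}(C\dland\lnot D)=\infty$ of clause~(2). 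Your handling of the degenerate case (no modular model) is also sound: there all ranks are $\infty$ by Lemma~\ref{Lemma:InfiniteRank}, so both sides hold trivially. What the rank-theoretic route buys, beyond the paper's semantic shortcut, is a small structural insight: it shows that clause~(2) of Definition~\ref{Def:RationalClosure} is precisely clause~(1) applied to the defeasible counterpart of a GCI, so the corollary is witnessed already inside the definition of rational closure, without passing through~$\OI$; the cost is that you need the two auxiliary rank facts, which the semantic argument avoids entirely.
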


Rational entailment from a knowledge base can therefore be formulated as membership checking of the rational closure of the knowledge base. Of course, from an application-oriented point of view, this raises the question of how to compute membership of the rational closure of a knowledge base, and what is the complexity thereof. This is precisely the topic of the next section.

\subsection{Rational entailment checking}\label{ComputingRC}

\newcommand{\mat}[1]{\overline{#1}}
\newcommand{\excep}{\ensuremath{\mathsf{Exceptional}}}
\newcommand{\compRank}{\ensuremath{\mathsf{ComputeRanking}}}
\newcommand{\compClosure}{\ensuremath{\mathsf{RationalClosure}}}
\newcommand{\fix}{\ensuremath{\mathsf{fix}}}
\newcommand{\Cn}[1]{{\ensuremath{\text{\it Cn}_{#1}}}}

We now present an algorithm to effectively check the rational entailment of a DCI from a defeasible knowledge base. Our algorithm is a modification of the one given by Casini and Straccia~\cite{CasiniStraccia2010} for defeasible \ALC. Their algorithm had to be modified slightly since it does not always give back the correct result in case $\DB^\rank_\infty\neq\emptyset$ --- \cf\ Item~4 in the description in Section~\ref{DefinitionRC}.
\myskip

Let $\KB=\TB\cup\DB$ be a defeasible knowledge base. The first step of the algorithm is to assign a rank to each DCI in~$\DB$. Central to this step is the exceptionality function $\excep(\cdot)$, which computes the semantic notion of exceptionality of Definition~\ref{Def:Exceptionality}. The function makes use of the notion of \emph{materialisation} to reduce concept exceptionality checking to entailment checking:

\begin{definition}[Materialisation]\label{Def:Materialisation}
Let $\DB$ be a set of DCIs. With $\mat{\DB}\defined\{\lnot C\dlor D \mid C\dsubs D\in\DB\}$ we denote the \df{materialisation} of~$\DB$.
\end{definition}

We can show that, given $\KB=\TB\cup\DB$ and $\DB'\subseteq\DB$, if  $\TB\entails\bigsqcap\mat{\DB'}\subs\lnot C$, where~$\entails$ denotes \emph{classical}~\ALC\ entailment, a DCI $C\dsubs D$ is exceptional \wrt\ $\TB\cup\DB'$, thereby justifying the use of Line~3 of function~$\excep(\cdot)$.  The proof of the following lemma can be found in Appendix~\ref{ProofsComputingRC}.

\begin{restatable}{lemma}{restatableExceptionality}
\label{Lem:Exceptionality}
For $\KB=\TB\cup\DB$, if $\TB\entails\bigsqcap\mat{\DB}\subs\lnot C$, then $C\dsubs D$ is exceptional \wrt\ $\TB\cup\DB$.
\end{restatable}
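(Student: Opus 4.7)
The plan is to unpack the definition of exceptionality and reduce the problem to a statement about minimal elements in arbitrary modular models of $\KB$. Recall that $C\dsubs D$ is exceptional w.r.t.\ $\KB$ iff $\KB\entails_{\modular}\top\dsubs\lnot C$, i.e., iff in every modular model $\RI$ of $\KB$ we have $\min_{\pref^{\RI}}\Dom^{\RI}\subseteq(\lnot C)^{\RI}$. So I fix an arbitrary modular model $\RI$ of $\KB$ and show this inclusion, under the hypothesis that $\TB\entails\bigsqcap\mat{\DB}\subs\lnot C$.

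The key step is the following claim: every $\pref^{\RI}$-minimal element of $\Dom^{\RI}$ belongs to $(\bigsqcap\mat{\DB})^{\RI}$. To establish this, fix such a minimal $x$ and an arbitrary $E\dsubs F\in\DB$; I want $x\in(\lnot E\dlor F)^{\RI}$. If $x\notin E^{\RI}$, we are done. Otherwise $x\in E^{\RI}$, and since $x$ is $\pref^{\RI}$-minimal in the whole domain $\Dom^{\RI}$, it is also $\pref^{\RI}$-minimal within the subset $E^{\RI}\subseteq\Dom^{\RI}$, so $x\in\min_{\pref^{\RI}}E^{\RI}$. Because $\RI\sat E\dsubs F$ (as $\RI$ is a model of $\KB$), Definition~\ref{Def:Satisfaction} gives $\min_{\pref^{\RI}}E^{\RI}\subseteq F^{\RI}$, hence $x\in F^{\RI}\subseteq(\lnot E\dlor F)^{\RI}$. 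Since $E\dsubs F$ was arbitrary, $x$ lies in the intersection of all materialisations, proving the claim.

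Next, I combine the claim with the classical hypothesis. Since $\RI$ is a model of $\KB$, in particular $\RI\sat\TB$, and by Lemma~\ref{Lemma:TruthInInterpretation} the associated classical interpretation $\I_{\PI}$ of (the underlying preferential interpretation) $\RI$ also satisfies $\TB$. From $\TB\entails\bigsqcap\mat{\DB}\subs\lnot C$ it then follows that $(\bigsqcap\mat{\DB})^{\RI}\subseteq(\lnot C)^{\RI}$. Chaining this with the claim yields $\min_{\pref^{\RI}}\Dom^{\RI}\subseteq(\lnot C)^{\RI}$, which by Definition~\ref{Def:Satisfaction} is exactly $\RI\sat\top\dsubs\lnot C$. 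As $\RI$ was an arbitrary modular model of $\KB$, we conclude $\KB\entails_{\modular}\top\dsubs\lnot C$, i.e., $C\dsubs D$ is exceptional w.r.t.\ $\KB$.

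I do not anticipate real obstacles here: the argument is essentially definitional, and the only subtlety is the simple set-theoretic observation that $\pref^{\RI}$-minimality in $\Dom^{\RI}$ transfers to minimality in any subset to which $x$ belongs. One mild care point is whether $\mat{\DB}$ might be empty, but in that case $\bigsqcap\mat{\DB}=\top$ and the argument degenerates harmlessly (the claim is trivial and the hypothesis reduces to $\TB\entails\top\subs\lnot C$, from which $\RI\sat\top\dsubs\lnot C$ is immediate).
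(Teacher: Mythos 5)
Your proof is correct and takes essentially the same approach as the paper: the paper argues contrapositively (a modular countermodel to $\top\dsubs\lnot C$ yields a $\pref^{\RI}$-minimal object which, via the associated classical interpretation, witnesses $\TB\nentails\bigsqcap\mat{\DB}\subs\lnot C$), while you run the identical argument in the direct direction. The key observation in both — that $\pref^{\RI}$-minimality in the whole domain transfers to every $E^{\RI}$, so minimal objects satisfy all materialisations, combined with Lemma~\ref{Lemma:TruthInInterpretation} to apply the classical entailment — is the same; you merely spell out explicitly the step the paper dismisses as immediate.
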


Given a set of DCIs $\DB'\subseteq\DB$, $\excep(\TB,\DB')$ returns a subset~$\E$ of~$\DB'$ such that $\E$ is exceptional \wrt\ $\TB\cup\DB'$.

\begin{function}[ht]
\KwIn{$\TB$ and $\DB'\subseteq\DB$}
\KwOut{$\E\subseteq\DB'\text{ such that }\E\text{ is exceptional \wrt\ }\TB\cup\DB'$}
$\E\assigned\emptyset$\;
\ForEach{$C\dsubs D\in\DB'$}{	
   	\If{$\TB\entails\bigsqcap\mat{\DB'}\subs\lnot C$}{$\E\assigned\E\cup\{C\dsubs D\}$}
 }
\textbf{return} $\E$
\caption{Exceptional($\TB,\DB'$)}\label{Func:Exceptional}
\end{function}

While the converse of Lemma~\ref{Lem:Exceptionality} does not hold, it follows from Lemma~\ref{Lemma:SameRankings} below that this reduction to classical entailment checking, when applied iteratively (lines 4--14 in Algorithm $\compRank(\cdot)$), fully captures the semantic notion of exceptionality of Definition~\ref{Def:Exceptionality}.

\begin{exContinued}
If we feed the knowledge base in Example~\ref{Example:Exceptionality} to the function $\excep(\cdot)$, we obtain the output
\[
\E=\{\EmpStud\dsubs\exists\pays.\Tax, \EmpStud\dland\Parent\dsubs\lnot\exists\pays.\Tax\}.
\]
This is because both concepts on the LHS of the DCIs in~$\DB'$ are exceptional \wrt~$\KB$ in Example~\ref{Example:Exceptionality}. \hfill\ \qed
\end{exContinued}

We now describe the overall ranking algorithm, presented in the function $\compRank(\cdot)$ below. The algorithm makes a finite sequence of calls to the function $\excep(\cdot)$, starting from the knowledge base $\KB=\TB\cup\DB$. The algorithm terminates with a partitioning of the axioms in the DTBox, from which a ranking of axioms can easily be obtained.


\begin{function}[ht]
\caption{ComputeRanking($\KB$)\label{Func:Ranking}}
\KwIn{$\KB=\TB\cup\DB$}
\KwOut{$\KB^*=\TB^*\cup\DB^*$, a partitioning $R=\{\DB_{0},\ldots,\DB_{n}\}$ for $\DB^*$,  and an exceptionality ranking $\E$}
$\TB^*\assigned\TB$\;
$\DB^*\assigned\DB$\;
$R\assigned\emptyset$\;
\Repeat{$\DB^*_{\infty}=\emptyset$}
	{
	$i\assigned 0$\;
	$\E_{0}\assigned\DB^*$\;
	$\E_{1}\assigned \text{Exceptional}(\TB^*,\E_{0}$)\;
	\While{$\E_{i+1}\neq\E_{i}$}{
		$i\assigned i + 1$\;
		$\E_{i+1}\assigned \text{Exceptional}(\TB^*,\E_{i}$)\;
	}
	$\DB^*_{\infty}\assigned\E_{i}$\;
	$\TB^*\assigned\TB^*\cup\{C\subs D\mid C\dsubs D\in\DB^{*}_{\infty}\}$\;
	$\DB^*\assigned\DB^*\setminus\DB^*_{\infty}$\;
	}
	$\E\assigned (\E_0,\ldots,\E_{i-1})$\;
\textbf{return} ($\KB^*=\TB^*\cup\DB^*$, $\E$)\;
\end{function}

We initialise $\TB^{*}$ to $\TB$ and $\DB^{*}$ to $\DB$ (Lines~1 and~2 of $\compRank(\cdot)$). We then repeatedly invoke the function~$\excep(\cdot)$ to obtain a sequence of sets of DCIs $\E_{0},\E_{1},\ldots$, where $\E_{0}=\DB^{*}$ and each $\E_{i+1}$ is the set of exceptional axioms in $\E_{i}$ (Lines~4--14 of $\compRank(\cdot)$).

Now, let $\C_{\DB^{*}}\defined\{C \mid C\dsubs D\in\DB^{*}\}$, \ie, $\C_{\DB^{*}}$ is the set of all \emph{antecedents} of DCIs in~$\DB^{*}$. The exceptionality ranking of the DCIs in~$\DB^{*}$ computed by $\excep(\cdot)$ makes use of $\TB^{*}$, $\mat{\DB^{*}}$, and $\C_{\DB^{*}}$. That is, it checks, for each concept $C\in\C_{\DB^{*}}$, whether $\TB^{*}\entails\bigsqcap\mat{\DB^{*}}\subs\lnot C$. In case $C$ is exceptional, every DCI $C\dsubs D\in\DB^{*}$ is exceptional \wrt\ $\KB^{*}=\TB^{*}\cup\DB^{*}$ and is added to the set $\E_{1}$.

If $\E_{1}\neq\E_{0}$, then we call $\excep(\cdot)$ for $\TB^{*}\cup\E_{1}$, defining the set $\E_{2}$, and so on. Hence, given $\KB^{*}=\TB^{*}\cup\DB^{*}$, we construct a sequence $\E_{0},\E_{1},\ldots$ in the following way:
\begin{itemize}
\item $\E_{0}\assigned\DB^{*}$;
\item $\E_{i+1}\assigned\excep(\TB^{*},\E_{i})$, for $i\geq0$.
\end{itemize}

\begin{exContinued}
Using the knowledge base of Example~\ref{Example:Exceptionality}, we initialise $\TB^{*}=\{\EmpStud\subs\Stud\}$ and 
\[
\DB^{*}=
  \left\{
    \begin{array}{c}
      \Stud\dsubs\lnot\exists\pays.\Tax,\\[0.1cm] 
      \EmpStud\dsubs\exists\pays.\Tax,\\[0.1cm]
      \EmpStud\dland\Parent\dsubs\lnot\exists\pays.\Tax 
    \end{array}
  \right\}.
\]

We then obtain the following exceptionality sequence:
\[
\E_{0}=\left\{\begin{array}{c}
				\Stud\dsubs\lnot\exists\pays.\Tax,\\[0.1cm]
				\EmpStud\dsubs\exists\pays.\Tax,\\[0.1cm]
				\EmpStud\dland\Parent\dsubs\lnot\exists\pays.\Tax
			  \end{array}
		\right\}
\]\[
\E_{1}=\left\{\begin{array}{c}
				\EmpStud\dsubs\exists\pays.\Tax,\\[0.1cm]
				\EmpStud\dland\Parent\dsubs\lnot\exists\pays.\Tax
			  \end{array}
		\right\}
\]\[
\E_{2}=\{\EmpStud\dland\Parent\dsubs\lnot\exists\pays.\Tax\}
\]
\hspace*{\fill}\qed
\end{exContinued}

Since $\DB^{*}$ is finite, the construction will eventually terminate with a fixed point $\E_{\fix} = \excep(\TB^{*},\E_{\fix})$. If this fixed point is non-empty, then the axioms in there are said to have infinite rank. We therefore set $\DB^{*}_\infty\defined\E_{\fix}$ (Line~11 of $\compRank(\cdot)$), and the classical translations of these axioms are moved to the TBox. Hence we redefine the knowledge base in the following way (Lines~12 and 13 of $\compRank(\cdot)$):
\begin{itemize}
\item $\TB^{*}\assigned\TB^{*}\cup\{C\subs D\mid C\dsubs D\in\DB^{*}_\infty\}$;
\item $\DB^{*}\assigned\DB^{*}\setminus\DB^{*}_\infty$.
\end{itemize}

Function~$\compRank(\cdot)$ must terminate since~$\DB$ is finite, and at every iteration, $\DB^{*}$ becomes smaller (hence, we have at most $|\DB|$ iterations). In the end, we obtain a knowledge base $\KB^{*}=\TB^{*}\cup\DB^{*}$ which is modularly equivalent to the original knowledge base $\KB=\TB\cup\DB$ (see Lemma~\ref{Prop:PrefEquivalence} below), in which $\DB^{*}$ has no DCIs of infinite rank (all the classical knowledge implicit in the DTBox has been moved to the TBox). We say that such a knowledge base is in \emph{rank normal form}.


We also obtain a final exceptionality sequence $\E=(\E_{0},\E_{1},\ldots,\E_{i-1})$ (see Line~15 of $\compRank(\cdot)$). Given $\E$, it is possible to  partition the set $\DB^{*}$ into the sets $\DB_{0},\ldots,\DB_{n}$,  for some $n=i-1\geq 0$:
\begin{itemize}
    \item For every $j$, $0\leq j\leq n$, $\DB_{j}\assigned\E_{j}\setminus\E_{j+1}$;
    \item $R\assigned (\DB_{0},\ldots,\DB_{n})$.
\end{itemize}

The sequence $R$ is a partition of the DTBox according to the level of exceptionality of each defeasible inclusion in it.

\begin{exContinued}
For~$\KB$ as in Example~\ref{Example:Exceptionality}, we obtain the partition $R=\{\DB_{0},\DB_{1},\DB_{2}\}$, where $\DB_{0}=\{\Stud\dsubs\lnot\exists\pays.\Tax\}$, $\DB_{1}=\{\EmpStud\dsubs\exists\pays.\Tax\}$ and $\DB_{2}=\{\EmpStud\dland\Parent\dsubs\lnot\exists\pays.\Tax\}$. \hfill\ \qed
\end{exContinued}

At this stage, we have moved all the classical information implicit the DTBox to the TBox, and ranked all the remaining~DCIs, where the rank of a DCI is the index of the unique partition to which it belongs, defined as follows:

\begin{definition}[Ranking]\label{Def:Ranking}
For every $C,D\in\Lang$:
\begin{itemize}
\item $\rk(C)\defined i$, $0\leq i\leq n$, if $\E_i$ is the first element in the sequence $(\E_{0},\ldots,\E_{n})$ \st\ $\TB^{*}\nentails\bigsqcap\mat{\E_{i}}\dland C\subs\bot$;
\item $\rk(C)\defined\infty$, if there is no such $\E_{i}$;
\item $\rk(C\dsubs D)\defined\rk(C)$.
\end{itemize}
\end{definition}

\begin{remark}
For every $i\leq j\leq n$, $\entails\bigsqcap\mat{\E_{j}}\subs\bigsqcap\mat{\E_{i}}$.
\end{remark}

\begin{remark}
For every $i < j\leq n$, $\DB_{i}\cap\DB_{j}=\emptyset$.
\end{remark}

To summarise, we transform our initial knowledge base $\KB=\TB\cup\DB$, obtaining a modularly equivalent knowledge base $\KB^{*}=\TB^{*}\cup\DB^{*}$ (see Lemma~\ref{Prop:PrefEquivalence} below) and a ranking of DCIs in the form of a partitioning of~$\DB^{*}$. The main difference between $\compRank(\cdot)$ and the analogous procedure by Casini and Straccia~\cite{CasiniStraccia2010} is the reiteration of the ranking procedure until $\DB^*_{\infty}=\emptyset$ (lines 4-14 in $\compRank(\cdot)$). While the two procedures behave identically in the case where there are no DCIs $C\dsubs D$ \st\ $\rank_{\KB}(C\dsubs D)=\infty$ in $\DB$, the procedure by Casini and Straccia~\cite{CasiniStraccia2010} did not handle all the cases correctly in which there is classical information implicit in the DTBox. Lemma~\ref{Lemma:ClassicalInformation} in Appendix~\ref{ProofsComputingRC} and Lemma~\ref{Lemma:SameRankings} below prove that the procedure here is correct \wrt\ the semantics.

Given the knowledge base $\KB^{*}=\TB^{*}\cup\DB^{*}$, we can now define the main algorithm for deciding whether a DCI $C\dsubs D$ is in the rational closure of~$\KB$. To do that, we use the same approach as in the function $\excep(\cdot)$, that is, given $\KB^{*}=\TB^{*}\cup\DB^{*}$ and our sequence of sets $\E_{0},\ldots,\E_{n}$, we use the TBox~$\TB^{*}$ and the sets of conjunctions of materialisations $\bigsqcap\mat{\E_{0}},\ldots,\bigsqcap\mat{\E_{n}}$.

\begin{definition}[Rational Deduction]\label{Def:RationalDeduction}
Let $\KB=\TB\cup\DB$ and let $C,D\in\Lang$. We say that $C\dsubs D$ is \df{rationally deducible} from~$\KB$, denoted $\KB\proves_{\rat}C\dsubs D$, if $\TB^{*}\entails \bigsqcap\mat{\E_{i}}\dland C\subs D$, where $\bigsqcap\mat{\E_{i}}$ is the first element of the sequence $\bigsqcap\mat{\E_{0}},\ldots,\bigsqcap\mat{\E_{n}}$ \st\ $\TB^{*}\nentails\bigsqcap\mat{\E_{i}}\subs\lnot C$. If there is no such element, $\KB\proves_{\rat}C\dsubs D$ if $\TB^{*}\entails C\subs D$.
\end{definition}

Observe that $\KB\proves_{\rat}C\subs D$ if and only if $\KB\proves_{\rat}C\dland\lnot D\dsubs\bot$, \ie, if and only if $\KB\proves_{\rat}C\dland\lnot D\subs\bot$ (that is to say, $\TB^{*}\entails C\subs D$).
\myskip

The algorithm corresponding to the steps above is presented in the function $\compClosure(\cdot)$ below.

\begin{function}[ht]
\caption{RationalClosure($\KB$, $\alpha$)\label{Func:RationalClosure}}
\KwIn{$\KB=\TB\cup\DB$ and a query $\alpha=C\dsubs D$.}
\KwOut{$\mathtt{true}$ if $\KB\proves_{\rat}C\dsubs D$, $\mathtt{false}$ otherwise}
$(\KB^{*}=\TB^{*}\cup\DB^{*},\E=(\E_0,\ldots,\E_n))\assigned \compRank(\KB)$\;
$i\assigned 0$\;
\While {$\TB^{*}\entails\bigsqcap\mat{\E_{i}}\dland C\subs\bot$ and $i\leq n$}
{$i\assigned i + 1$\;
}
\If{$i\leq n$}
	{\textbf{return} $\TB^{*}\entails\bigsqcap\mat{\E_{i}}\dland C\subs D$\;}
\Else{\textbf{return} $\TB^{*}\entails C\subs D$\;}
\end{function}

\begin{exContinued}
Let~$\KB$ be as in Example~\ref{Example:Exceptionality} and assume we want to check whether $\EmpStud\dsubs\exists\pays.\Tax$ is in the rational closure of~$\KB$. Then, the while-loop on Line~$2$ of function~$\compClosure(\cdot)$ terminates when $i=1$. At this stage, $\bigsqcap\mat{\E_{i}}=(\lnot\EmpStud\dlor\exists\pays.\Tax)\dland(\lnot\EmpStud\dlor\lnot\Parent\dlor\lnot\exists\pays.\Tax)$. Given this, one can check that $\TB^{*}\nentails\bigsqcap\mat{\E_{i}}\dland C\subs\bot$, \ie, $\{\EmpStud\subs\Stud\}\nentails(\lnot\EmpStud\dlor\exists\pays.\Tax)\dland(\lnot\EmpStud\dlor\lnot\Parent\dlor\lnot\exists\pays.\Tax)\dland\EmpStud\subs\bot$. Finally, it is easy to confirm that $\TB^{*}\nentails\bigsqcap\mat{\E_{i}}\dland C\subs D$, \ie, $\{\EmpStud\subs\Stud\}\nentails(\lnot\EmpStud\dlor\exists\pays.\Tax)\dland(\lnot\EmpStud\dlor\lnot\Parent\dlor\lnot\exists\pays.\Tax)\dland\EmpStud\subs\exists\pays.\Tax$. \hfill\ \qed
\end{exContinued}


Before we state the main theorem of this section, we need to establish the correspondence between the ranking function $\rank_\KB(\cdot)$ presented in Section~\ref{DefinitionRC} in the construction of the rational closure of $\KB$ and linked by Theorem~\ref{Theorem:Characterisation} to the definition of rational entailment, and the ranking function $\rk(\cdot)$ of Definition~\ref{Def:Ranking} used in the above algorithm. We also need to establish that the normalisation of a knowledge base by our algorithm maintains modular equivalence. The proofs of the following lemmas, as well as a number of prerequisite results, are in Appendix~\ref{ProofsComputingRC}.

\begin{restatable}{lemma}{restatablePrefEquivalence}
\label{Prop:PrefEquivalence}
Let $\KB=\TB\cup\DB$ and let $\KB^{*}=\TB^{*}\cup\DB^{*}$ be obtained from~$\KB$ through function~$\compRank(\cdot)$. Then $\KB$ and~$\KB^{*}$ are modularly equivalent.
\end{restatable}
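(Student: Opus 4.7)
The plan is to proceed by induction on the number of iterations of the outer \textbf{Repeat--Until} loop in $\compRank(\cdot)$. Let $\KB_0 \defined \KB$ and let $\KB_1, \ldots, \KB_m \defined \KB^*$ be the sequence of knowledge bases produced at the end of each pass through that loop. Since a single outer pass only removes a set $\mathcal{M}_i \subseteq \DB^*$ of defeasible inclusions (namely those in the fixed point $\DB^*_\infty$ of the inner loop) and replaces each $C\dsubs D \in \mathcal{M}_i$ by its classical counterpart $C\subs D$ in the TBox, it suffices to show that $\KB_i$ and $\KB_{i+1}$ have exactly the same modular models.

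The easy direction is $\KB_{i+1} \Rightarrow \KB_i$: if $\RI \sat \KB_{i+1}$ then, for each $C\dsubs D \in \mathcal{M}_i$, the GCI $C \subs D$ belongs to the TBox of $\KB_{i+1}$ and hence $C^\RI \subseteq D^\RI$. This gives $\min_{\pref^\RI} C^\RI \subseteq C^\RI \subseteq D^\RI$, so $\RI \sat C \dsubs D$; together with the fact that all remaining axioms of $\KB_i$ also belong to $\KB_{i+1}$, this yields $\RI \sat \KB_i$.

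For the harder direction, $\KB_i \Rightarrow \KB_{i+1}$, suppose $\RI \sat \KB_i$. I need to establish that $\RI \sat C \subs D$ for every $C\dsubs D \in \mathcal{M}_i$. The central step is to show that for each such $C$ one has $C^\RI = \emptyset$, whence $C \subs D$ holds vacuously. To see this, observe that the inner loop (lines~5--14 of $\compRank(\cdot)$) produces $\mathcal{M}_i$ as a fixed point of $\excep(\TB^*, \cdot)$; by iterating Lemma~\ref{Lem:Exceptionality} along the syntactic sequence $\E_0, \E_1, \ldots$, every $C$ occurring on the LHS of some DCI in this fixed point must be exceptional at every level of the semantic exceptionality hierarchy used to define $\rank_{\KB_i}(\cdot)$, so $\rank_{\KB_i}(C) = \infty$. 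Applying Lemma~\ref{Lemma:InfiniteRank} then gives $\KB_i \entails_\modular C \subs \bot$, which is exactly $C^\RI = \emptyset$ in every modular model $\RI$ of $\KB_i$.

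The main obstacle is precisely this bridge from the syntactic fixed point computed by the iterated calls to $\excep(\cdot)$ to the semantic notion of infinite rank: Lemma~\ref{Lem:Exceptionality} yields only one step of syntactic-to-semantic exceptionality, and lifting it to the whole fixed point of the inner loop (in the presence of possible classical content hidden in the DTBox, which is precisely what motivated adding the outer loop) is delicate. This is expected to be handled by the companion results (Lemmas~\ref{Lemma:ClassicalInformation} and~\ref{Lemma:SameRankings}) in the appendix; once they are in hand, the modular equivalence of $\KB_i$ and $\KB_{i+1}$ at each step---and hence of $\KB$ and $\KB^*$ by induction---follows directly from the two arguments above.
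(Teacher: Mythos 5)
Your overall strategy is sound and in fact establishes something marginally stronger than the paper's statement: by inducting on the outer loop and reasoning about \emph{models} directly, you show that $\KB_i$ and $\KB_{i+1}$ have the same class of modular models, whereas the paper argues at the level of entailment and handles all outer iterations in one shot. Concretely, the paper reduces the claim to showing $\KB\entails_{\modular}C_i\subs\bot$ and $\KB^{*}\entails_{\modular}C_i\dsubs D_i$ for every moved axiom $C_i\dsubs D_i$; for the first it observes that the fixed-point condition $\TB^{*}\entails\bigsqcap\mat{\DB^{*}_{\infty}}\subs\lnot C_i$ at each iteration, together with the fact that the final $\TB^{*}$ contains the classical translations accumulated over \emph{all} iterations, yields $\TB^{*}\entails C_i\subs\bot$, and then it invokes Lemma~\ref{Lemma:ClassicalInformation}, whose appendix proof is a direct model-theoretic argument exploiting smoothness. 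Your ``easy'' direction coincides with the paper's supra-classicality step. Where you genuinely diverge is the syntactic-to-semantic bridge: instead of Lemma~\ref{Lemma:ClassicalInformation} you go through the semantic ranking, iterating Lemma~\ref{Lem:Exceptionality} to place the fixed point inside $\DB^{\rank}_{\infty}$ and then invoking Lemma~\ref{Lemma:InfiniteRank}. That route does work, but the iteration needs two ingredients you leave implicit: monotonicity of exceptionality (modular entailment is Tarskian, so exceptionality \wrt\ $\TB\cup\E_j$ implies exceptionality \wrt\ any superset such as $\KB^{j}$), and the inductive containment $\E_{j}\subseteq\DB\setminus\bigcup_{k<j}\DB^{\rank}_{k}$, from which $\E_{\fix}\subseteq\DB^{\rank}_{\infty}$ follows because the semantic partition of $\DB$ is finite.

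One genuine caution: you propose to discharge the delicate bridging step by appeal to Lemmas~\ref{Lemma:ClassicalInformation} and~\ref{Lemma:SameRankings}. Appealing to Lemma~\ref{Lemma:ClassicalInformation} is legitimate --- it is proved independently and is exactly what the paper uses --- but appealing to Lemma~\ref{Lemma:SameRankings} would be circular: its proof in the appendix relies on Lemma~\ref{Prop:PrefEquivalence}, both directly and through Lemma~\ref{Lemma:TopC}, i.e., on the very statement you are proving. So either carry out the iterated-exceptionality argument sketched above in full (it is self-contained, resting only on Lemma~\ref{Lem:Exceptionality}, monotonicity of $\entails_{\modular}$, and Lemma~\ref{Lemma:InfiniteRank}), or drop the rank detour entirely and use the single application of Lemma~\ref{Lemma:ClassicalInformation} as the paper does.
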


\begin{restatable}{lemma}{restatableSameRankings}
\label{Lemma:SameRankings}
For every defeasible knowledge base $\KB=\TB\cup\DB$ and every $C\in\Lang$, $\rank_{\KB}(C)=\rk(C)$.
\end{restatable}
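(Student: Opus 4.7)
The plan is to first reduce to the normal form and then to prove, by induction on the rank, that the algorithm's exceptionality layers coincide with the semantic rank strata. The starting observation is that Lemma~\ref{Prop:PrefEquivalence} gives $\KB \equiv_{\modular} \KB^*$, so $\rank_{\KB}(C) = \rank_{\KB^*}(C)$ for every $C \in \Lang$. Since $\rk(\cdot)$ is already defined through the output of $\compRank(\KB)$, which equals $\KB^*$ and which is in rank normal form (no DCI of infinite rank remains in $\DB^*$), it suffices to show $\rank_{\KB^*}(C) = \rk(C)$.

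I would then proceed by induction on $i \geq 0$, proving the stronger statement that the algorithmic set $\E_i$ equals the semantic set $\{C \dsubs D \in \DB^* : \rank_{\KB^*}(C \dsubs D) \geq i\}$, and moreover that $\rk(C) = i$ iff $\rank_{\KB^*}(C) = i$. The base case $i = 0$ is immediate, since $\E_0 = \DB^*$ and every DCI in $\DB^*$ has rank $\geq 0$. For the inductive step, one uses the recursive definitions on both sides: $C \dsubs D \in \E_{i+1}$ iff $\excep(\TB^*, \E_i)$ flags $C$, iff $\TB^* \entails \bigsqcap \mat{\E_i} \subs \lnot C$, while $\rank_{\KB^*}(C \dsubs D) \geq i+1$ iff $C$ is exceptional with respect to $\TB^* \cup \E_i$ (using the inductive hypothesis to identify the $i$-th surviving DTBox with $\E_i$). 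So the induction reduces to the single key equivalence
\[
\TB^* \entails \bigsqcap \mat{\E_i} \subs \lnot C \quad \iff \quad \TB^* \cup \E_i \entails_{\modular} \top \dsubs \lnot C.
\]
The $(\Rightarrow)$ direction is Lemma~\ref{Lem:Exceptionality}.

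The main obstacle is the converse $(\Leftarrow)$ direction, which is precisely where the remark after Lemma~\ref{Lem:Exceptionality} warns that the implication may fail for an arbitrary subset but is recovered because of the structure of the sequence $(\E_0, \ldots, \E_n)$ and rank normal form. The plan is to argue by contrapositive: assuming $\TB^* \nentails \bigsqcap \mat{\E_i} \subs \lnot C$, take a classical model $\I$ of $\TB^*$ containing some $x \in C^{\I} \cap (\bigsqcap \mat{\E_i})^{\I}$, and build a ranked model $\RI$ of $\TB^* \cup \E_i$ in which $x$ is $\top$-minimal. A natural attempt is to place the elements satisfying $\bigsqcap \mat{\E_i}$ in layer~$0$ and the remainder in layer~$1$; each $C_j \dsubs D_j \in \E_i$ is then satisfied provided $C_j^{\I} \cap (\bigsqcap \mat{\E_i})^{\I} \neq \emptyset$. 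The delicate case is when this intersection is empty for some $C_j \dsubs D_j \in \E_i$, and here the plan is to refine the construction by stratifying $\Dom^{\I}$ using the nested sequence $\bigsqcap \mat{\E_0} \subseteq \bigsqcap \mat{\E_1} \subseteq \cdots$ of weaker materialisations, placing an element at layer $k$ if $k$ is the largest index with $x \in (\bigsqcap \mat{\E_k})^{\I}$, and invoking the auxiliary Lemma~\ref{Lemma:ClassicalInformation} (together with the finite-model property, Theorem~\ref{Theorem:FMP}) to conclude that this yields a genuine ranked model of $\TB^* \cup \E_i$. Rank normal form is essential at this step, since it rules out residual $\infty$-rank DCIs in $\DB^*$ that could otherwise make the layering inconsistent with~$\TB^*$.

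Once the equivalence is established, the induction closes cleanly, and bookkeeping on the $\infty$-rank case (handled by the outer loop of $\compRank$) completes the argument: both $\rank_{\KB^*}(C) = \infty$ and $\rk(C) = \infty$ characterise classical unsatisfiability of $C$ with respect to $\TB^*$, by Lemma~\ref{Lemma:InfiniteRank} and the fallthrough clause in Definition~\ref{Def:Ranking} respectively.
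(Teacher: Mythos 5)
Your route is genuinely different from the paper's: the paper never constructs a countermodel for the hard, semantic-to-syntactic direction. Instead it proves (Lemma~\ref{Lemma:TopC}) that $\KB\entails_{\modular}\top\dsubs C$ iff $\TB^{*}\entails\bigsqcap\mat{\DB^{*}}\subs C$, obtaining the difficult half abstractly: by Lemma~\ref{Prop:RationalInducedSubsumption} the relation $\{(C,D)\mid\KB\proves_{\rat}C\dsubs D\}$ is rational and contains $\KB$, so by the representation theorem (Theorem~\ref{Theorem:RepResultRational}) it is induced by some modular model of $\KB$, which must satisfy every modular consequence of $\KB$; iterating this level by level (after normalising via Lemma~\ref{Prop:PrefEquivalence} and handling $\infty$ via Lemmas~\ref{Lemma:SameInfiniteRank} and~\ref{Lemma:ClassicalInformation}) gives the lemma. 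Your reduction --- normal form, then induction on levels down to the single equivalence $\TB^{*}\entails\bigsqcap\mat{\E_{i}}\subs\lnot C$ iff $\TB^{*}\cup\E_{i}\entails_{\modular}\top\dsubs\lnot C$, with Lemma~\ref{Lem:Exceptionality} giving one direction --- is a sound skeleton, and a direct Lehmann--Magidor-style countermodel for the converse is a legitimate alternative to the paper's abstract argument.

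However, your countermodel construction has a genuine gap that re-stratification cannot close. The problem in your ``delicate case'' is not that the two-layer ordering is too coarse; it is that the single classical model $\I$ may lack the witnesses needed to satisfy $\E_{i}$, and then \emph{no} ordering of $\Dom^{\I}$ works. Concretely, take $\TB^{*}=\emptyset$, $\E_{i}=\{A\dsubs B\}$, $C=\lnot A$, and $\I$ with $\Dom^{\I}=\{x,u\}$, $x\notin A^{\I}$, $u\in A^{\I}\setminus B^{\I}$: then $x\in(C\dland\bigsqcap\mat{\E_{i}})^{\I}$, yet $A^{\I}=\{u\}\not\subseteq B^{\I}$, so every preferential order on $\Dom^{\I}$ --- including your multi-layer stratification, under which $u$ lands in the top layer --- falsifies $A\dsubs B$. (Your layer assignment should in any case read \emph{smallest} index $k$ with $x\in(\bigsqcap\mat{\E_{k}})^{\I}$; since the materialisations weaken as $k$ grows, the largest such index is trivially the last one.) The repair must enlarge the domain: for each $E\dsubs F\in\E_{i}$ of level $k$, the definition of the sequence gives $\TB^{*}\nentails\bigsqcap\mat{\E_{k}}\subs\lnot E$, hence a classical model of $\TB^{*}$ containing an instance of $E\dland F\dland\bigsqcap\mat{\E_{k}}$ (note $\lnot E\dlor F$ is a conjunct of $\bigsqcap\mat{\E_{k}}$); stratify each such witness model, and $\I$ itself, by least materialisation level, and take their ranked union, which remains a ranked model by Lemma~\ref{Lemma:ClosureDisjointUnionMod}. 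In the union, any element on a layer corresponding to $\E_{m}$ with $m\leq k$ satisfies $\lnot E\dlor F$, so instances of $E\dland\lnot F$ occur only strictly above the witness's layer; hence all of $\E_{i}$ holds, while $x$ stays on the bottom layer, refuting $\top\dsubs\lnot C$. It is this witness-union argument --- not Lemma~\ref{Lemma:ClassicalInformation} or the finite-model property, which you invoke at this point but which do not supply the missing models --- that does the work; it is exactly the machinery the paper deploys in its proofs of Lemma~\ref{Lemma:InfiniteRank} and Lemma~\ref{Lemma:Characterisation}, so the repaired argument is available, but as written the step fails.
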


Now we can state the main theorem, which links rational entailment to rational deduction via Theorem~\ref{Theorem:Characterisation}. (The proof can be found in Appendix~\ref{ProofsComputingRC}.)

\begin{restatable}{theorem}{restatableSoundnessCompletenessRC}
\label{Theorem:SoundnessCompletenessRC}
Let $\KB=\TB\cup\DB$ and let $C,D\in\Lang$. Then $\KB\proves_{\rat}C\dsubs D$ iff $\KB\entails_{\rat}C\dsubs D$.
\end{restatable}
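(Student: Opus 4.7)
By Theorem~\ref{Theorem:Characterisation} it suffices to show that $\KB\proves_{\rat}C\dsubs D$ iff $C\dsubs D$ lies in the rational closure of~$\KB$ (the case in which $\KB$ has no modular model collapses both sides to triviality, since $\compRank(\cdot)$ then ends with a classically inconsistent $\TB^{*}$). The two key tools are Lemma~\ref{Prop:PrefEquivalence}, which tells us that $\KB$ and the normalised knowledge base $\KB^{*}=\TB^{*}\cup\DB^{*}$ returned by $\compRank(\cdot)$ are modularly equivalent (and hence share the same rational closure), and Lemma~\ref{Lemma:SameRankings}, which identifies the algorithmic ranking $\rk(\cdot)$ of Definition~\ref{Def:Ranking} with the semantic ranking $\rank_{\KB}(\cdot)$ used in Definition~\ref{Def:RationalClosure}. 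A short auxiliary observation I would record first is the anti-monotonicity of $\rk(\cdot)$: if $\TB^{*}\entails C_{1}\subs C_{2}$ then $\rk(C_{1})\geq\rk(C_{2})$, because any $\E_{i}$ witnessing satisfiability of $\bigsqcap\mat{\E_{i}}\dland C_{1}$ also witnesses satisfiability of $\bigsqcap\mat{\E_{i}}\dland C_{2}$.

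I would then split on whether $\rk(C)$ is finite or infinite. In the \emph{finite} case, put $i=\rk(C)$, so that $\E_{i}$ is the first set in the sequence with $\TB^{*}\nentails\bigsqcap\mat{\E_{i}}\dland C\subs\bot$. Function $\compClosure(\cdot)$ then returns \texttt{true} exactly when $\TB^{*}\entails\bigsqcap\mat{\E_{i}}\dland C\subs D$, and I would prove this equivalent to $\rk(C\dland D)<\rk(C\dland\lnot D)$, which by Lemma~\ref{Lemma:SameRankings} is Definition~\ref{Def:RationalClosure}(1). For the forward direction, rewrite the entailment as $\TB^{*}\entails\bigsqcap\mat{\E_{i}}\dland C\dland\lnot D\subs\bot$, giving $\rk(C\dland\lnot D)>i$; combined with $\TB^{*}\nentails\bigsqcap\mat{\E_{i}}\dland C\dland D\subs\bot$ (otherwise one could chain with the entailment to contradict minimality of $i$) we get $\rk(C\dland D)\leq i$, and anti-monotonicity gives $\rk(C\dland D)\geq\rk(C)=i$, so $\rk(C\dland D)=i<\rk(C\dland\lnot D)$. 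Conversely, since $C\equiv(C\dland D)\dlor(C\dland\lnot D)$, at least one of the two ranks equals $\rk(C)=i$, so the strict inequality forces $\rk(C\dland D)=i$ and $\rk(C\dland\lnot D)>i$; the latter translates back to $\TB^{*}\entails\bigsqcap\mat{\E_{i}}\dland C\subs D$.

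In the \emph{infinite} case, Lemma~\ref{Lemma:SameRankings} gives $\rank_{\KB}(C)=\infty$, so $C\dsubs D$ is in the rational closure by Definition~\ref{Def:RationalClosure}(1). Algorithmically, the while-loop of $\compClosure(\cdot)$ exhausts $\E_{0},\ldots,\E_{n}$, exits with $i>n$, and returns the classical check $\TB^{*}\entails C\subs D$. To close the case I would invoke Lemma~\ref{Lemma:InfiniteRank} (turning $\rank_{\KB}(C)=\infty$ into $\KB\entails_{\modular}C\subs\bot$), Lemma~\ref{Prop:PrefEquivalence} (so $\KB^{*}\entails_{\modular}C\subs\bot$), and the fact---built into the outer repeat-loop of $\compRank(\cdot)$, which terminates only once $\DB^{*}_{\infty}=\emptyset$ after moving every infinite-rank $C\dsubs D$ into $\TB^{*}$ as $C\subs D$---that all classical information implicit in the original DTBox has been absorbed into $\TB^{*}$; this is the content of the classical-information lemma flagged in the text as Lemma~\ref{Lemma:ClassicalInformation}. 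Together these yield $\TB^{*}\entails C\subs\bot$, whence $\TB^{*}\entails C\subs D$ trivially, and the algorithm returns \texttt{true}.

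The main obstacle I expect is precisely the infinite-rank case: the reduction $\KB^{*}\entails_{\modular}C\subs\bot\iff\TB^{*}\entails C\subs\bot$ is exactly what necessitates the outer repeat-loop added to $\compRank(\cdot)$ relative to the original Casini–Straccia procedure, and its proof is the technical heart of the argument. Once that reduction is in hand, the finite-rank case is essentially routine, driven by anti-monotonicity of $\rk(\cdot)$ together with Lemmas~\ref{Prop:PrefEquivalence} and~\ref{Lemma:SameRankings}.
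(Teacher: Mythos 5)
Your proposal is correct and takes essentially the same route as the paper's own proof: both reduce the claim to membership in the rational closure via Theorem~\ref{Theorem:Characterisation} and Lemma~\ref{Lemma:SameRankings}, split on whether $\rk(C)$ is finite or infinite, and translate the algorithmic condition of Definition~\ref{Def:RationalDeduction} into the rank inequality of Definition~\ref{Def:RationalClosure} (implicitly using $\rk(C)=\min\{\rk(C\dland D),\rk(C\dland\lnot D)\}$), with the infinite-rank case discharged through $\TB^{*}\entails C\subs\bot$ as in Lemma~\ref{Lemma:ClassicalInformation}. Your write-up simply makes explicit a few steps the paper leaves tacit, such as the anti-monotonicity of $\rk(\cdot)$ and the chain through Lemmas~\ref{Lemma:InfiniteRank} and~\ref{Prop:PrefEquivalence}.
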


As an immediate consequence, we have that the function~$\compClosure(\cdot)$ is correct \wrt\ the definition of rational closure in Definition~\ref{Def:RationalClosure}.

\begin{restatable}{corollary}{restatableComplexity}
Checking rational entailment is \ExpTime-complete.
\end{restatable}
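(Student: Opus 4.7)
The plan is to establish matching upper and lower \ExpTime{} bounds on deciding $\KB\entails_{\rat}\alpha$.

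For the upper bound, the strategy is to appeal to Theorem~\ref{Theorem:SoundnessCompletenessRC}: deciding $\KB\entails_{\rat}\alpha$ reduces to deciding $\KB\proves_{\rat}\alpha$, which is precisely what the function $\compClosure(\KB,\alpha)$ computes. Its running time is dominated by the calls it (and its subroutine $\compRank(\cdot)$) makes to a classical \ALC{} subsumption oracle. I would then show that only polynomially (in $|\KB|$) many such oracle calls are made, each on an input whose size is polynomial in $|\KB|$. Inside $\compRank(\cdot)$, the outer repeat-loop iterates at most $|\DB|$ times (each iteration either strictly shrinks $\DB^{*}$ or terminates with $\DB^{*}_{\infty}=\emptyset$); the inner while-loop iterates at most $|\DB|$ times; and each call to $\excep(\cdot)$ performs at most $|\DB|$ subsumption checks. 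The function $\compClosure(\cdot)$ itself then issues at most $|\DB|+1$ further checks. Since classical \ALC{} subsumption is in \ExpTime, a polynomial number of \ExpTime{} oracle calls yields an \ExpTime{} procedure overall.

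For the lower bound, the idea is to reduce classical \ALC{} subsumption (already \ExpTime-hard) to rational entailment. Given a classical TBox $\TB$ and $C,D\in\Lang$, form the defeasible knowledge base $\KB\defined\TB$ with empty DTBox and query whether $\KB\entails_{\rat}C\subs D$. By Theorem~\ref{Theorem:Characterisation}, applicable whenever $\TB$ is satisfiable (the unsatisfiable corner case is trivial and can be disposed of separately), this is equivalent to $C\subs D$ being in the rational closure of $\KB$; by Definition~\ref{Def:RationalClosure}(2), this amounts to $\rank_{\KB}(C\dland\lnot D)=\infty$; by Lemma~\ref{Lemma:InfiniteRank}, this holds iff $\KB\entails_{\modular}C\dland\lnot D\subs\bot$; and since $\DB=\emptyset$, the modular models of $\KB$ are (up to the addition of any modular preference on the domain) exactly the classical models of $\TB$, so the latter is equivalent to $\TB\entails C\subs D$. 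The reduction is evidently polynomial-time, yielding \ExpTime-hardness.

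The main point of care lies in the size bookkeeping for the upper-bound argument: one must verify that the materialisations $\bigsqcap\mat{\DB^{*}}$ and $\bigsqcap\mat{\E_{i}}$ handed to the classical reasoner remain polynomial in $|\KB|$, and that the augmentations of $\TB^{*}$ performed across the repeat-loop iterations of $\compRank(\cdot)$ add at most $|\DB|$ GCIs in total, so no super-polynomial blow-up occurs across iterations. Once these size bounds are in place, matching the polynomial number of \ExpTime{} classical queries against the \ExpTime{} hardness inherited from classical \ALC{} completes the argument.
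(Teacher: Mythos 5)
Your upper-bound argument is essentially the paper's own proof: the paper likewise reduces $\KB\entails_{\rat}\alpha$ to $\KB\proves_{\rat}\alpha$ via Theorem~\ref{Theorem:SoundnessCompletenessRC} and then counts classical subsumption checks, observing that $\excep(\cdot)$ makes at most $|\E|$ checks, that $\compRank(\cdot)$ therefore makes $O(|\DB|^{3})$ checks, and that $\compClosure(\cdot)$ adds $O(|\DB|)$ more, each check being decidable in \ExpTime{} for general \ALC{} TBoxes. Your loop-by-loop accounting and the explicit verification that the inputs handed to the classical reasoner (the conjunctions $\bigsqcap\mat{\E_{i}}$ and the augmented $\TB^{*}$, which grows by at most $|\DB|$ GCIs in total) stay polynomial in $|\KB|$ is more careful than the paper, which leaves the size bookkeeping implicit. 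Where you genuinely diverge is the lower bound: the paper disposes of hardness in one clause, by citing \ExpTime-completeness of classical subsumption (implicitly, classical entailment is the special case $\DB=\emptyset$), whereas you give an explicit many-one reduction through Theorem~\ref{Theorem:Characterisation}, Definition~\ref{Def:RationalClosure}(2) and Lemma~\ref{Lemma:InfiniteRank}. That added rigour is welcome, but one detail as you state it does not work: you invoke Theorem~\ref{Theorem:Characterisation}, which carries the proviso that $\KB$ has a modular model, and you cannot ``dispose of the unsatisfiable corner case separately'' inside a polynomial-time reduction, since deciding whether $\TB$ is satisfiable is itself \ExpTime-hard. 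The clean fix is to route the reduction through Theorem~\ref{Theorem:SoundnessCompletenessRC} together with the remark following Definition~\ref{Def:RationalDeduction} (namely $\KB\proves_{\rat}C\subs D$ iff $\TB^{*}\entails C\subs D$, and with $\DB=\emptyset$ one has $\TB^{*}=\TB$), which holds without any satisfiability proviso; with that substitution your reduction is unconditional and the hardness argument is complete.
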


Hence entailment checking for defeasible ontologies is just as hard as classical subsumption checking.

We conclude this section by noting that although rational closure is viewed as an appropriate form of defeasible reasoning, it does have its limitations, the first of which is that it does not satisfy the \emph{presumption of independence}~\cite[Section 3.1]{Lehmann1995}. To consider a well-worn example, suppose we know that birds usually fly and usually have wings, that both penguins and robins are birds, and that penguins usually do not fly. That is, we have the following knowledge base: $\KB = \{\Bird\dsubs\Flies, \Bird\dsubs\Wings, \Penguin\subs\Bird, \Robin\subs\Bird, \Penguin\dsubs\lnot\Flies\}$. Rational closure allows us to conclude that robins usually have wings, since they are viewed as typical birds, thereby satisfying the presumption of typicality. But with penguins being atypical birds, rational closure does not allow us to conclude that penguins usually have wings, thus violating the presumption of independence which, in this context, would require the atypicality of penguins \wrt~flying to be independent of the typicality of penguins \wrt~having wings.

This deficiency is well-known, and there are other forms of defeasible reasoning that can overcome this, most notably lexicographic closure \cite{CasiniStraccia2012}, relevance closure \cite{casini2014relevant}, and inheritance-based closure \cite{CasiniStraccia2011}. But note that the presumption of independence is  \emph{propositional} in nature. In fact, the DL version of lexicographic closure is essentially a lifting to the DL case of a propositional solution to the problem~\cite{Lehmann1995}.

What is perhaps of more interest is the inability of rational closure to deal with defeasibility relating to the \emph{non-propositional} aspects of descriptions logics. For example, Pensel and Turhan~\cite{PenselTurhan2017,PenselTurhan2018} have shown that rational closure across role expressions does not always support defeasible inheritance appropriately. Suppose we know that bosses are workers, do not have workers as their superiors, and are usually responsible. Furthermore, suppose we know that workers usually have bosses as their superiors. We thus have the knowledge base $\KB=\{\Boss\subs\Worker, \Boss\subs\lnot\exists \hasSuperior .\Worker$, $\Boss\dsubs\Responsible, \Worker\dsubs \exists\hasSuperior. \Boss\}$. Since workers usually have bosses as their superiors, and bosses are usually responsible, one would expect to be able to conclude that workers usually have responsible superiors. But rational closure is unable to do so. From the perspective of the algorithm for rational closure, this can be traced back to the use of materialisation (Definition~\ref{Def:Materialisation}) when computing exceptionality, as Pensel and Turhan show. A more detailed semantic explanation for this inability is still forthcoming, though.

\section{Related work}\label{RelatedWork}

In a sense, the first investigations on non-monotonic reasoning in DL-based systems date back to the work by Brewka~\cite{Brewka1987} and Cadoli~\etal.~\cite{CadoliEtAl1990}. Other early proposals to introduce default-style rules into description logics include the work by Baader and Hollunder~\cite{BaaderHollunder1993,BaaderHollunder1995}, Padgham and Zhang~\cite{PadghamZhang1993} and~Straccia~\cite{Straccia1993}, which are essentially based on Reiter's default logic~\cite{Reiter1980}.
\myskip

Quantz and Royer~\cite{QuantzRoyer1992} were probably the first to consider lifting the preferential approach to a DL setting.  They propose a general framework for Preferential Default Description Logics (PDDL) based on an \ALC-like language by introducing a version of default subsumption and  proposing a preferential semantics for it. Their semantics is based on a simplified version of standard~DL interpretations. They assume all domains to be finite, which means their framework is much more restrictive than ours in this aspect. They also allow for the use of object names (something we don't do), and assume that the unique-name assumption holds for object names. 

 They focus on a version of entailment which they refer to as preferential entailment, but which is to be distinguished from the version of preferential entailment that we have presented in this paper. In what follows, we shall refer to their version as \emph{QR-preferential entailment}.

QR-preferential entailment is concerned with what ought to follow from a set of  DL statements, together with a set of default subsumption statements, and is parameterised by a fixed partial order on (simplified) DL interpretations. (I.e., the ordering is on the set of~DL interpretations, not on the elements of their respective domains.) They prove that any QR-preferential entailment satisfies the properties of a preferential consequence relation and, with some restrictions on the partial order, satisfies rational monotonicity as well. QR-preferential entailment can therefore be viewed as something in between the notions of preferential entailment (or modular entailment) and rational entailment. It is also worth noting that although QR-preferential entailment satisfies the properties of a preferential consequence relation, Quantz and Royer do not prove that QR-preferential entailment provides a characterisation of preferential consequence in the spirit of the representation results we have shown here.
\myskip

Closely related to our work is that of Giordano~\etal.~\cite{GiordanoEtAl2009b,GiordanoEtAl2013} who use preferential orderings on $\Dom^{\I}$ to define a typicality operator $\Typ(\cdot)$ on \ALC\ concepts such that the expression $\Typ(C)\subs D$ corresponds to our $C\dsubs D$. They provide a version of a representation result for preferential orderings in terms of properties on selection functions (functions on the power set of the domain of interpretations), but not a representation result along the lines of those we have shown here. In the same work, the authors define a tableaux calculus for computing preferential entailment that relies on KLM-style rules.

Recently~\cite{GiordanoEtAl2015}, Giordano and colleagues extended the aforementioned work by considering modular orderings on $\Dom^{\I}$ (\ie, modular interpretations) and then augment the inferential power of their system with a version of a minimal-model semantics, in which some modular interpretations are preferred over others. This is similar in intuition to rational entailment, but their approach also has a circumscriptive flavour to it (see below) since it relies on the specification of a set of concepts for which atypical instances must be minimised.
\myskip

Outside the family of preferential systems, there are mature proposals based on circumscription for DLs~\cite{BonattiEtAl2011,BonattiEtAl2009,SenguptaEtAl2011}. The main drawback of these approaches is the burden on the ontology engineer to make appropriate decisions related to the (circumscriptive) fixing and varying of concepts and the priority of defeasible subsumption statements. Such choices can have a major effect on the conclusions drawn from the system, and can easily lead to counter-intuitive inferences. Moreover, the use of circumscription usually implies a considerable increase in  computational complexity \wrt~the underlying monotonic entailment relation. The comparison between the present work and proposals outside the preferential family is more an issue about the pros and cons of the different kinds of non-monotonic reasoning, rather than about their DL re-formulation. As stated in the introduction,  the preferential approach has a series of desirable qualities that, to our knowledge, no other approach to non-monotonic reasoning shares.

A more recent proposal is the approach by Bonatti~\etal.~\cite{BonattiEtAl2015,BonattiSauro2017}, which introduces a \emph{normality} operator $\textbf{N}(\cdot)$ on concepts. The resulting system, DL$^{N}$, is not based on the preferential approach, though, and as a consequence their closure operation does not allow  defeasible subsumption to satisfy the preferential properties, but it satisfies some interesting properties on the meta-level. It also has the advantage of being computationally tractable for any tractable classical~DL.
\myskip

Lukasiewicz~\cite{Lukasiewicz2008} proposes probabilistic versions of the description logics $\mathcal{SHIF}(\textbf{D})$ and $\mathcal{SHOIN}(\textbf{D})$. As a special case of these logics, he obtains a version of a logic with defeasible subsumption with a semantics based on that of the propositional version of lexicographic closure~\cite{Lehmann1995}.
\myskip

Casini and Straccia~\cite{CasiniStraccia2010} define KLM-based decision procedures for~\ALC. Their proposal has a syntactic characterisation, but lacks an appropriate semantics, a deficiency that the present paper comes to remedy. The semantic approach presented here can be extended also to other forms of entailment proposed by them~\cite{CasiniStraccia2011,CasiniStraccia2012,CasiniStraccia2013}, and recently Casini, Straccia and Meyer have used it also to characterise a decision procedure for defeasible $\mathcal{EL}_\bot$ \cite{CasiniEtAl2018}.
\myskip

Britz and Varzinczak~\cite{BritzVarzinczak2013,BritzVarzinczak2018-JANCL,BritzVarzinczak2018-JoLLI} explore the notion of \emph{defeasible modalities}, with which defeasible effects of actions, defeasible knowledge, obligations and others can be formalised and given an intuitive preferential semantics. Their approach differs from ours in that they consider preferential entailment only, but the semantic constructions are similar. This was recently extended~\cite{BritzEtAl2013b} to a notion of \emph{defeasible role restrictions} in a~DL setting. The idea comprises extending the language of~\ALC\ with an additional construct~${\dforall}$. The semantics of a concept~${\dforall}r.C \defined\{x\in\Dom^{\PI}\mid \min_{\pref^{\PI}}r^{\PI}(x)\subseteq C^{\PI}\}$ is then given by all objects of $\Dom^{\PI}$ such that all of their \emph{minimal} $r$-related objects are $C$-instances. This is useful in situations where certain classical concept descriptions may be too strong.

Recently, Britz and Varzinczak have lifted the preferential semantics to also allow for orderings on role-interpretations~\cite{BritzVarzinczak2016b,BritzVarzinczak2017-DL} that, in turn, induce multi-orderings on objects of the domain~\cite{BritzVarzinczak2017-Commonsense,BritzVarzinczak2018-FoIKS}. The latter give us the handle needed to introduce a notion of \emph{context} in defeasible subsumption relations making typicality a relativised construct. The former provides a semantics for defeasible role inclusions of the form $r\dsubs s$ and for defeasible role assertions such as ``$r$ is usually transitive'', ``$r$ and~$s$ are usually disjoint'', as well as others.

Finally, there is the recent work of Pensel and Turhan~\cite{PenselTurhan2017,PenselTurhan2018} mentioned in Section~\ref{ComputingRC}, the aim of which is to extend both rational closure and relevance closure with defeasible inheritance across role expressions in the description logic~$\mathcal{EL}_{\bot}$. With their work being restricted to~$\mathcal{EL}_{\bot}$, the semantics they propose is based on a form of canonical model similar to those frequently used for the~$\mathcal{EL}$ family of~DLs, and is therefore quite different from ours. A detailed comparison of their semantics with the one we provide in this paper is left as future work.

\section{Concluding remarks}\label{Conclusion}

The main contributions of the work reported in the present paper can be summarised as follows:
\begin{enumerate}
\item The analysis of a simple and intuitive semantics for defeasible subsumption in description logics that is general enough to constitute the core framework within which to investigate non-monotonic extensions of~DLs;
\item A characterisation of preferential and rational subsumption relations, with the respective representation results, evidencing the fact that our semantic constructions are appropriate;
\item An investigation of what an appropriate notion of entailment in a defeasible~DL context means and the analysis of a suitable candidate, namely rational entailment, and
\item The formal connection between rational entailment, the notion of rational closure and an algorithm for its computation.
\end{enumerate}

With regard to point~4 above, the main advantages of our approach are as follows: (\emph{i})~it relies completely on classical entailment, \ie, entailment checking over defeasible ontologies can be reduced to a number of classical entailment checks over a rewritten ontology, (\emph{ii})~it has computational complexity that is no worse than that of entailment checking in the classical underlying~DL, and (\emph{iii})~it is easily implementable, \eg\ as a Protégé plugin\footnote{\url{https://github.com/kodymoodley/defeasibleinferenceplatform}}, of which the performance has been shown to scale well in practice~\cite{CasiniEtAl2015}. In a companion paper~\cite{BritzEtAl2015a} the framework described here is extended to include ABox reasoning, with more extensive experimental results confirming the initial promising results on scalability.

Further topics for future research include the integration of notions such as \emph{typicality} for both concepts and roles~\cite{BoothEtAl2015,BoothEtAl2012,BoothEtAl2013,GiordanoEtAl2009b,GiordanoEtAl2013,Varzinczak2018} and role-based defeasible constructors~\cite{BritzVarzinczak2016b,BritzVarzinczak2017-DL,BritzVarzinczak2018-FoIKS,BritzVarzinczak2019-AMAI} into the framework here presented. Another avenue for future exploration is the study of belief revision for~DLs via our results for rationality, somewhat mimicking the well-known connection between belief revision and rational consequence in the propositional case~\cite{GardenforsMakinson1994}, thereby pushing the frontiers of theory change in logics that are more expressive than the propositional one.


\bibliographystyle{plain}
\bibliography{References}

\appendix

\section{Proofs of lemmas in Section~\ref{Semantics}}\label{Sec:Sec3.2}

\newcommand{\SetPI}{\ensuremath{\mathscr{P}}}
\newcommand{\SetRI}{\ensuremath{\mathscr{R}}}
\newcommand{\DU}{\ensuremath{\mathcal{U}}}
\newcommand{\Bis}{{\sf E}}

\textbf{NB}: The results marked $(*)$ are introduced here in the Appendix, while they are omitted in the main text.

\restatableClassicalStatements*
\begin{proof}
From left to right, $\PI\sat C\subs D$ implies, by Lemma~\ref{Lemma:TruthInInterpretation}, that $(C\dland\lnot D)^\PI=\emptyset$. The latter implies that, for every concept $E$, $\PI\sat C\dland\lnot D\dsubs E$, and, as a particular case, $\PI\sat C\dland\lnot D\dsubs\bot$. From right to left, if $\PI\nsat C\subs D$, then $(C\dland\lnot D)^\PI\neq\emptyset$. Let $x$ be an object in $\min_{\prec^{\PI}}(C\dland\lnot D)^\PI$: for $\PI\sat C\dland\lnot D\dsubs\bot$, we should have also $x\in\bot^\PI$, which is a contradiction.{\hfill} \qed
\end{proof}

\begin{definition}[Disjoint-Union Preferential Interpretation]\label{Def:DUPI}
Let $S$ be a countable set and let~$\SetPI=\{\PI_{s}=\tuple{\Dom^{\PI_{s}},\cdot^{\PI_{s}},\pref^{\PI_{s}}} \mid s\in S\}$ be a collection of preferential interpretations. The \df{disjoint union of}~$\SetPI$ is a tuple~$\DU\defined\tuple{\Dom^{\DU},\cdot^{\DU},\pref^{\DU}}$ where:
\begin{itemize}\setlength{\itemsep}{1.3ex}
\item $\Dom^{\DU}\defined\{(x,s) \mid x\in\Dom^{\PI_{s}}$ and $s\in S\}$;
\item $A^{\DU}\defined\{(x,s) \mid x\in A^{\PI_{s}}$ and $s\in S\}$, for every $A\in\CN$;
\item $r^{\DU}\defined\{((x,s),(y,s)) \mid (x,y)\in r^{\PI_{s}}$ and $s\in S\}$, for every $r\in\RN$;
\item $\pref^{\DU}\defined\{((x,s),(y,s)) \mid (x,y)\in\pref^{\PI_{s}}$ and $s\in S\}$.
\end{itemize}
\end{definition}

\begin{lemma}[$*$]\label{Lemma:ConceptEquivalence}
Let $S$ and~$\SetPI$ be as in Definition~\ref{Def:DUPI} and let~$\DU$ be the latter's disjoint union. For every~$C\in\Lang$, every~$s\in S$, and every $x\in\Dom^{\PI_{s}}$, $x\in C^{\PI_{s}}$ if and only if $(x,s)\in C^{\DU}$.
\end{lemma}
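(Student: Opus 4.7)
The plan is to proceed by structural induction on the concept $C \in \Lang$, following the grammar used to define $\Lang$. The key observation that makes the induction go through is that the relations $r^{\DU}$ are by construction \emph{intra-component}: a pair $((x,s),(y,s'))$ lies in $r^{\DU}$ only when $s=s'$, so reasoning about $r$-successors in $\DU$ never leaves the $s$-th copy of the disjoint union. Consequently, the "component index" $s$ is a passive passenger through the inductive construction.

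First I would dispatch the base cases. For $C=\top$ we have $x\in\top^{\PI_s}=\Dom^{\PI_s}$ and, by construction of $\Dom^{\DU}$, $(x,s)\in\top^{\DU}=\Dom^{\DU}$, both unconditionally. The case $C=\bot$ is symmetric, with both sides false. For $C=A\in\CN$, the equivalence is immediate from the clause $A^{\DU}=\{(x,s)\mid x\in A^{\PI_s}, s\in S\}$ in Definition~\ref{Def:DUPI}.

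Next come the Boolean inductive cases. For $C=\lnot D$, the equivalence $x\in(\lnot D)^{\PI_s}$ iff $(x,s)\in(\lnot D)^{\DU}$ reduces, via the semantic clause for complement, to $x\notin D^{\PI_s}$ iff $(x,s)\notin D^{\DU}$, which is the contrapositive of the induction hypothesis (noting that $(x,s)\in\Dom^{\DU}$ by assumption). The cases $C=D\dland E$ and $C=D\dlor E$ are likewise immediate from the induction hypothesis together with the fact that intersection and union commute with the passage to the disjoint union.

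The main work lies in the quantifier cases, though none of it is hard. For $C=\exists r.D$, suppose $x\in(\exists r.D)^{\PI_s}$; then there is $y\in\Dom^{\PI_s}$ with $(x,y)\in r^{\PI_s}$ and $y\in D^{\PI_s}$. By the definition of $r^{\DU}$ we have $((x,s),(y,s))\in r^{\DU}$, and by the induction hypothesis $(y,s)\in D^{\DU}$, so $(x,s)\in(\exists r.D)^{\DU}$. Conversely, if $(x,s)\in(\exists r.D)^{\DU}$, pick $(y,s')\in\Dom^{\DU}$ with $((x,s),(y,s'))\in r^{\DU}$ and $(y,s')\in D^{\DU}$; by the construction of $r^{\DU}$ we must have $s'=s$ and $(x,y)\in r^{\PI_s}$, and then the induction hypothesis gives $y\in D^{\PI_s}$, yielding $x\in(\exists r.D)^{\PI_s}$. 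The case $C=\forall r.D$ is the dual: the crucial point, again, is that every $r^{\DU}$-successor of $(x,s)$ is of the form $(y,s)$ for some $r^{\PI_s}$-successor $y$ of $x$, so the universal quantifier over $\DU$-successors exactly matches the universal quantifier over $\PI_s$-successors modulo the induction hypothesis. No step poses a real obstacle; the only thing one must take care with is checking the right-to-left direction of the quantifier cases, where one must use the component-locality of $r^{\DU}$ to rule out cross-component $r$-edges.
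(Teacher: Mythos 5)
Your proposal is correct and is essentially the paper's argument unfolded: the paper exhibits the map $x\mapsto(x,s)$ as a (preferential) bisimulation between $\PI_{s}$ and~$\DU$ and then appeals to the standard induction on concept structure, and the back-and-forth conditions of that bisimulation are exactly your key observation that $r^{\DU}$-edges never cross components. Carrying out the induction explicitly, including the quantifier cases, is just the detailed version of what the paper compresses into a citation.
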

\begin{proof}
For every $s\in S$, define $\Bis_{s}\defined\{(x,(x,s)) \mid x\in\Dom^{\PI_{s}}\}$. We can easily show that~$\Bis_{s}$ is a preferential bisimulation~\cite{BritzVarzinczak2018-JoLLI} between~$\PI_{s}$ and~$\DU$. The lemma is then proved by induction on the structure of concepts in the usual way~\cite{BaaderEtAl2017}. \hfill\ \qed
\end{proof}

It is easy to see that the following result also holds:

\begin{lemma}[$*$]\label{Lemma:MinConceptEquivalence}
Let $S$ and~$\SetPI$ be as in Definition~\ref{Def:DUPI} and let~$\DU$ be the latter's disjoint union. For every~$C\in\Lang$, every~$s\in S$, and every $x\in\Dom^{\PI_{s}}$, $x\in\min_{\pref^{\PI_{s}}}C^{\PI_{s}}$ if and only if $(x,s)\in\min_{\pref^{\DU}}C^{\DU}$.
\end{lemma}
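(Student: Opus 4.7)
The plan is to exploit the fact that $\pref^{\DU}$, by its very definition in Definition~\ref{Def:DUPI}, only relates pairs of the form $((x,s),(y,s))$ sharing the same second coordinate. This means elements drawn from distinct components $\PI_{s}$ and $\PI_{s'}$ (with $s\neq s'$) are $\pref^{\DU}$-incomparable, so $\pref^{\DU}$-minimality in $C^{\DU}$ reduces to $\pref^{\PI_{s}}$-minimality in the corresponding slice $C^{\PI_{s}}$. The proof will thus be a short argument leveraging this observation together with Lemma~\ref{Lemma:ConceptEquivalence}, which already handles the (harder) inductive bridge between membership in $C^{\PI_{s}}$ and in $C^{\DU}$.

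For the forward direction, assume $x\in\min_{\pref^{\PI_{s}}}C^{\PI_{s}}$. In particular $x\in C^{\PI_{s}}$, so by Lemma~\ref{Lemma:ConceptEquivalence} we get $(x,s)\in C^{\DU}$. Suppose for contradiction that $(x,s)\notin\min_{\pref^{\DU}}C^{\DU}$; then there exists $(y,s')\in C^{\DU}$ with $(y,s')\pref^{\DU}(x,s)$. By the definition of $\pref^{\DU}$, this forces $s'=s$ and $(y,x)\in\pref^{\PI_{s}}$. Applying Lemma~\ref{Lemma:ConceptEquivalence} again, $y\in C^{\PI_{s}}$, contradicting the $\pref^{\PI_{s}}$-minimality of $x$ in $C^{\PI_{s}}$.

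For the converse, assume $(x,s)\in\min_{\pref^{\DU}}C^{\DU}$. Then $(x,s)\in C^{\DU}$, so Lemma~\ref{Lemma:ConceptEquivalence} gives $x\in C^{\PI_{s}}$. If some $y\in C^{\PI_{s}}$ satisfied $y\pref^{\PI_{s}} x$, then by the definition of~$\pref^{\DU}$ we would have $(y,s)\pref^{\DU}(x,s)$ with $(y,s)\in C^{\DU}$ (again by Lemma~\ref{Lemma:ConceptEquivalence}), contradicting the assumed minimality of $(x,s)$ in $C^{\DU}$. Hence $x\in\min_{\pref^{\PI_{s}}}C^{\PI_{s}}$.

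There is no real obstacle here: the argument is a routine unpacking of definitions once Lemma~\ref{Lemma:ConceptEquivalence} is in place. The only subtlety worth flagging explicitly is that the ``within-component only'' shape of~$\pref^{\DU}$ is essential in both directions, since it is what guarantees that a witness to non-minimality in $C^{\DU}$ must itself live in the same component~$s$, making it exchangeable with a witness to non-minimality in $C^{\PI_{s}}$. No use of smoothness is needed in the argument, since minimality is tested pointwise.
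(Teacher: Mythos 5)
Your proof is correct: both directions are a sound unpacking of the definition of $\pref^{\DU}$ (which only relates pairs within the same component) combined with Lemma~\ref{Lemma:ConceptEquivalence}, and your observation that smoothness plays no role is accurate. The paper leaves this lemma unproved, flagging it as ``easy to see,'' and your argument is precisely the routine one intended there.
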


\restatableClosureDisjointUnionPref*


\begin{proof}
Let~$\KB$ be a defeasible knowledge base, let $S$ and $\SetPI$ be as in Definition~\ref{Def:DUPI} and such that $\PI_{s}\sat\KB$, for every $\PI_{s}\in\SetPI$, and let~$\DU$ be the disjoint union of the models in~$\SetPI$. We have to show that~$\DU\sat\KB$. Assume that $\DU\nsat\KB$. Then there must be a DCI $C\dsubs D\in\KB$ (recall Lemma~\ref{Lemma:ClassicalStatements}) and an object $(x,s)\in\Dom^{\DU}$ such that $(x,s)\in\min_{\pref^{\DU}}C^{\DU}$ but $(x,s)\notin D^{\DU}$. From Lemmas~\ref{Lemma:ConceptEquivalence} and~\ref{Lemma:MinConceptEquivalence} above, it follows that $x\in\min_{\pref^{\PI_{s}}}C^{\PI_{s}}$ and $x\notin D^{\PI_{s}}$, and therefore $\PI_{s}\nsat C\dsubs D$. Hence, $\PI_{s}\nsat\KB$, which contradicts our assumption. \hfill\ \qed
 \end{proof}

\section{Proof of Theorem~\ref{Theorem:RepResultPreferential}}\label{ProofRepResultPreferential}

\restatableRepResultPreferential*

\subsection{If part}\label{Proof:PreferentialSoundness}

We show that $\dsubs_{\PI}$ is preferential for every preferential interpretation~$\PI=\tuple{\Dom^{\PI},\cdot^{\PI},\pref^{\PI}}$.
\myskip


\noindent(Ref): Let $x\in\Dom^{\PI}$ be such that $x\in\min_{\pref^{\PI}}C^{\PI}$. Then clearly $x\in C^{\PI}$ and therefore $\PI\sat C\dsubs C$. Hence $C\dsubs_{\PI} C$.
\myskip

\noindent(LLE): Assume that $C\dsubs_{\PI}E$ and $\PI\sat C\equiv D$. Then $\PI\sat C\dsubs E$, which means $\min_{\pref^{\PI}}C^{\PI}\subseteq E^{\PI}$. Since $\PI\sat C\equiv D$, \ie, $C^{\PI}=D^{\PI}$, we have $\min_{\pref^{\PI}}C^{\PI}=\min_{\pref^{\PI}}D^{\PI}$. Hence $\min_{\pref^{\PI}}D^{\PI}\subseteq E^{\PI}$, and therefore $\PI\sat D\dsubs E$, from which follows $D\dsubs_{\PI}E$.
\myskip

\noindent(And): Assume we have both $C\dsubs_{\PI}D$ and $C\dsubs_{\PI}E$. Then $\PI\sat C\dsubs D$ and $\PI\sat C\dsubs E$, \ie, $\min_{\pref^{\PI}}C^{\PI}\subseteq D^{\PI}$ and $\min_{\pref^{\PI}}C^{\PI}\subseteq E^{\PI}$, and then $\min_{\pref^{\PI}}C^{\PI}\subseteq D^{\PI}\cap E^{\PI}$, from which follows $\min_{\pref^{\PI}}C^{\PI}\subseteq(D\dland E)^{\PI}$. Hence $\PI\sat C\dsubs D\dland E$, and therefore $C\dsubs_{\PI}D\dland E$.
\myskip

\noindent(Or): Assume we have both $C\dsubs_{\PI}E$ and $D\dsubs_{\PI}E$. Let $x\in\min_{\pref^{\PI}}(C\dlor D)^{\PI}$. Then $x$ is minimal in $C^{\PI}\cup D^{\PI}$, and therefore either $x\in\min_{\pref^{\PI}}C^{\PI}$ or $x\in\min_{\pref^{\PI}}D^{\PI}$. In either case $x\in E^{\PI}$. Hence $\PI\sat C\dlor D\dsubs E$ and therefore $C\dlor D\dsubs_{\PI}E$.
\myskip

\noindent(RW): Assume we have both $C\dsubs_{\PI}D$ and $\PI\sat D\subs E$. Then $\PI\sat C\dsubs D$, \ie, $\min_{\pref^{\PI}}C^{\PI}\subseteq D^{\PI}$, and $D^{\PI}\subseteq E^{\PI}$. Hence $\min_{\pref^{\PI}}C^{\PI}\subseteq E^{\PI}$ and then $\PI\sat C\dsubs E$.  Therefore $C\dsubs_{\PI} E$.
\myskip

\noindent(CM): Assume we have both $C\dsubs_{\PI}D$ and $C\dsubs_{\PI}E$. Then $\PI\sat C\dsubs D$ and $\PI\sat C\dsubs E$, and therefore $\min_{\pref^{\PI}}C^{\PI}\subseteq D^{\PI}$ and $\min_{\pref^{\PI}}C^{\PI}\subseteq E^{\PI}$. Let $x\in\min_{\pref^{\PI}}(C\dland D)^{\PI}$. We show that $x\in\min_{\pref^{\PI}}C^{\PI}$. Suppose this is not the case. Since $\pref^{\PI}$ is smooth, there must be $x'\in\min_{\pref^{\PI}}C^{\PI}$ such that $x'\pref^{\PI}x$. Because $\PI\sat C\dsubs D$, $x'\in D^{\PI}$, and then $x'\in C^{\PI}\cap D^{\PI}$, \ie, $x'\in(C\dland D)^{\PI}$. From this and $x'\pref^{\PI}x$ it follows that $x$ is not minimal in $(C\dland D)^{\PI}$, which is a contradiction. Hence $x\in\min_{\pref^{\PI}}C^{\PI}$. From this and $\min_{\pref^{\PI}}C^{\PI}\subseteq E^{\PI}$, it follows that $x\in E^{\PI}$. Hence $\PI\sat C\dland D\dsubs E$, and therefore $C\dland D\dsubs_{\PI}E$. \hfill\ \qed

\subsection{Only-if part}\label{Proof:PreferentialCompleteness}

\newcommand{\U}{\ensuremath{\mathscr{U}}}

\textbf{NB}: The results marked $(*)$ are introduced here in the Appendix, while they are omitted in the main text.
\myskip

Let $\dsubs\subseteq\Lang\times\Lang$ be a preferential subsumption relation. We shall construct a preferential interpretation~$\PI$ such that $\dsubs_{\PI}\defined\{(C,D) \mid \PI\sat C\dsubs D\}=\dsubs$.

\begin{definition}\label{Def:Universe}
Let $\U\defined\{(\I,x) \mid \I=\tuple{\Dom^{\I},\cdot^{\I}}$ and $x\in\Dom^{\I}\}$.
\end{definition}

Intuitively, $\U$ denotes the \emph{universe} of objects in the context of their respective~DL interpretations, \ie, $\U$ is a set of first-order interpretations.

\begin{definition}\label{Def:Normality}
A pair $(\I,x)\in\U$ is \df{normal} for $C\in\Lang$ if for every $D\in\Lang$ such that $C\dsubs D$, we have $x\in D^{\I}$.
\end{definition}

\begin{lemma}[$*$]\label{Lemma:NormalityTwiddle}
Let $\dsubs\subseteq\Lang\times\Lang$ satisfy (Ref), (RW) and~(And), and let $C,D\in\Lang$. Then all normal~$(\I,x)$ for~$C$ satisfy~$D$ if and only if~$C\dsubs D$.
\end{lemma}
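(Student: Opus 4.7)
The ``if'' direction is immediate from the definition of normality: if $C \dsubs D$, then $D$ belongs to the set of consequents singled out in Definition~\ref{Def:Normality}, so every normal $(\I,x)$ for $C$ has $x \in D^{\I}$.

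For the ``only if'' direction I plan a compactness-based argument. Let $S_C \defined \{E \in \Lang \mid C \dsubs E\}$. By (Ref) we have $C \in S_C$; by iterated applications of (And), $S_C$ is closed under finite $\dland$-conjunctions, so $C \dsubs E_1 \dland \cdots \dland E_n$ whenever each $E_i \in S_C$; and by (RW) it is closed upward under classical subsumption. The hypothesis that every normal $(\I,x)$ for $C$ satisfies $D$ can be rephrased as the first-order entailment $\{E(x) \mid E \in S_C\} \entails D(x)$, where $E(x)$ denotes the standard translation of the concept $E$ applied to a free variable $x$. Since $\ALC$ embeds into FOL, compactness produces a finite $E_1,\ldots,E_n \in S_C$ (we may force $n \geq 1$ by adjoining $C$ itself, which lies in $S_C$ by (Ref)) such that $\entails E_1 \dland \cdots \dland E_n \subs D$.

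Combining the ingredients then finishes the argument: iterated applications of (And) to $C \dsubs E_1,\ldots,C \dsubs E_n$ yield $C \dsubs E_1 \dland \cdots \dland E_n$, and a single application of (RW) with the classically valid inclusion $E_1 \dland \cdots \dland E_n \subs D$ delivers $C \dsubs D$. Note that this uniformly handles the edge case in which no $(\I,x)$ is normal for $C$: compactness then produces an unsatisfiable finite conjunction, whence $E_1 \dland \cdots \dland E_n \subs D$ is vacuously valid and the same (And)-then-(RW) chain applies. The one genuinely non-routine step is the appeal to compactness, and the main thing to justify carefully is that viewing an $\ALC$ concept as a one-free-variable FOL formula is benign enough to invoke standard FOL compactness; the rest is pure KLM-style rule-chasing from the definition.
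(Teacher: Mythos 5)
Your proof is correct and takes essentially the same route as the paper's: both reduce the semantic hypothesis via FOL compactness to a valid finite subsumption $E_1\dland\cdots\dland E_n\subs D$ with each $E_i$ a $\dsubs$-consequent of $C$, and then conclude by (Ref)/(And)/(RW). The only differences are presentational: you argue directly where the paper argues by contraposition (assuming $C\ndsubs D$ and refuting satisfiability of $\{\lnot D\}\cup\{E \mid C\dsubs E\}$), and your closing chain---iterated (And) followed by a single (RW)---is a slightly leaner version of the paper's detour through $C\dsubs\lnot\bigsqcap_{F}F\dlor D$.
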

\begin{proof}
The if part follows from the definition of normality above. For the only-if part, assume~$C{\ndsubs}D$. We build a pair~$(\I,x)$ that is normal for~$C$ but that does not satisfy~$D$. Let~$\Gamma\defined\{\lnot D\}\cup\{E \mid C\dsubs E\}$. All we need to do is show that there is~$(\I,x)$ such that $x\in F^{\I}$ for every~$F\in\Gamma$. Suppose this is not the case. Then by compactness there exists a finite~$\Gamma'\subseteq\Gamma$ such that $\entails\bigsqcap_{F\in\Gamma'}F\subs D$. From this follows $\entails\top\subs\lnot\bigsqcap_{F\in\Gamma'}F\dlor D$, and, in particular, we have $\entails C\subs\lnot\bigsqcap_{F\in\Gamma'}F\dlor D$. Now from~(Ref) we have~$C\dsubs C$. From this, $\entails C\subs\lnot\bigsqcap_{F\in\Gamma'}F\dlor D$ and~(RW) we get $C\dsubs(\lnot\bigsqcap_{F\in\Gamma'}F\dlor D)$. But we also have $C\dsubs\bigsqcap_{F\in\Gamma'}F$ by the~(And) rule, and then by applying~(And) once more we derive $C\dsubs\bigsqcap_{F\in\Gamma'}F\dland(\lnot\bigsqcap_{F\in\Gamma'}F\dlor D)$. From this and~(RW) we conclude~$C\dsubs D$, from which we derive a contradiction. \hfill\ \qed
\end{proof}

\begin{lemma}[$*$]\label{Lemma:Property21-KLM90}
If $\dsubs$ is preferential, the following rule holds:
\[
\frac{C\dlor D\dsubs C,\ D\dlor E\dsubs D}{C\dlor E\dsubs C}
\]
\end{lemma}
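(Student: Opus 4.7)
The plan is to first collapse the two premises into a single statement about $C\dlor D\dlor E$, then shave off the superfluous disjunct~$D$ by means of (CM). To this end, my first move is to apply (RW) to the second premise $D\dlor E\dsubs D$ using the classical inclusion $D\subs C\dlor D$, obtaining $D\dlor E\dsubs C\dlor D$. Combined with the Ref-instance $C\dlor D\dsubs C\dlor D$, an application of (Or) followed by (LLE) yields the auxiliary statement
\[
(\star)\ \ C\dlor D\dlor E\dsubs C\dlor D.
\]

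The crux of the argument is then to combine $(\star)$ with the first premise (1)~$C\dlor D\dsubs C$ to deduce $C\dlor D\dlor E\dsubs C$. This is an instance of cautious cut with $\alpha=C\dlor D\dlor E$, $\beta=C\dlor D$ and $\gamma=C$; note that $\alpha\dland\beta\equiv\beta$, since $\beta\subs\alpha$, so (1) plays the role of the second cut premise. Although cautious cut is not listed explicitly in Definition~\ref{Def:PrefSubsumption}, it is derivable from the preferential properties, and I would reproduce its derivation here: starting from $\alpha\dland\beta\dsubs\gamma$, (RW) via the classical inclusion $\gamma\subs(\beta\dland\gamma)\dlor\lnot\beta$ yields $\alpha\dland\beta\dsubs(\beta\dland\gamma)\dlor\lnot\beta$; independently, (Ref) followed by (RW) via $\alpha\dland\lnot\beta\subs\lnot\beta\subs(\beta\dland\gamma)\dlor\lnot\beta$ gives $\alpha\dland\lnot\beta\dsubs(\beta\dland\gamma)\dlor\lnot\beta$; (Or) together with (LLE) then produces $\alpha\dsubs(\beta\dland\gamma)\dlor\lnot\beta$, and finally (And) with the first cut premise $\alpha\dsubs\beta$, followed by (RW), delivers $\alpha\dsubs\gamma$.

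Once $C\dlor D\dlor E\dsubs C$ is in hand, a further application of (RW) with $C\subs C\dlor E$ gives $C\dlor D\dlor E\dsubs C\dlor E$. Applying (CM) to these last two statements produces $(C\dlor D\dlor E)\dland(C\dlor E)\dsubs C$, which by (LLE) (using $(C\dlor D\dlor E)\dland(C\dlor E)\equiv C\dlor E$, since $C\dlor E\subs C\dlor D\dlor E$) collapses to the desired $C\dlor E\dsubs C$.

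I expect the cautious-cut step to be the main obstacle: the naive Or-based decomposition of $\alpha$ as $(\alpha\dland\beta)\dlor(\alpha\dland\lnot\beta)$ stalls because $\alpha\dland\lnot\beta\dsubs\gamma$ is not derivable in general; the trick is to first weaken the right-hand sides of both halves to the common classical witness $(\beta\dland\gamma)\dlor\lnot\beta$, invoke (Or), and only afterwards recover~$\gamma$ by conjoining with~$\beta$ via (And). The remaining steps (the initial Or-combination and the final CM-step) are routine once the right auxiliary statement is chosen.
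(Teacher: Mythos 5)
Your derivation is correct, and it is essentially the proof the paper points to: the paper simply cites Kraus et al.'s Lemma~5.5, whose argument is exactly your pattern of combining the premises via (Or)/(LLE) into $C\dlor D\dlor E\dsubs C\dlor D$, cutting with $C\dlor D\dsubs C$, and then descending to $C\dlor E\dsubs C$ by (CM). Your explicit derivation of cautious cut from (Ref), (LLE), (Or), (And) and (RW) is the right (and necessary) adaptation here, since Definition~\ref{Def:PrefSubsumption} takes (And) rather than (Cut) as primitive, whereas KLM's system has (Cut) as a basic rule.
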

\begin{proof}
The proof is analogous to that by Kraus~\etal.~\cite[Lemma~5.5]{KrausEtAl1990}. \hfill\ \qed
\end{proof}

\begin{definition}\label{Def:NotLessOrdinary}
Let $C,D\in\Lang$. $C\leq D$ if $C\dlor D\dsubs C$.
\end{definition}

\begin{lemma}[$*$]\label{Lemma:NotLessOrdinaryIsReflexiveTransitive}
If $\dsubs$ is preferential, then $\leq$ is reflexive and transitive.
\end{lemma}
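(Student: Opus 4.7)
The plan is to verify the two properties of $\leq$ separately, leveraging the preferential properties in Definition~\ref{Def:PrefSubsumption} together with Lemma~\ref{Lemma:Property21-KLM90} which has just been established.

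For \textbf{reflexivity}, I would show that $C \leq C$ holds for every concept $C$, i.e., that $C \dlor C \dsubs C$. First, by (Ref) we have $C \dsubs C$. Since $C \dlor C$ and $C$ are classically equivalent (that is, $(C \dlor C)^{\I} = C^{\I}$ in every interpretation $\I$, so $\entails C \equiv C \dlor C$), we can apply (LLE) to $C \dsubs C$ to obtain $C \dlor C \dsubs C$, which is exactly $C \leq C$.

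For \textbf{transitivity}, assume $C \leq D$ and $D \leq E$. Unfolding the definition of $\leq$, this means $C \dlor D \dsubs C$ and $D \dlor E \dsubs D$. These are precisely the two premises of the rule established in Lemma~\ref{Lemma:Property21-KLM90}, whose conclusion is $C \dlor E \dsubs C$. But $C \dlor E \dsubs C$ is exactly $C \leq E$, which is what we wanted. Hence transitivity reduces to a direct invocation of Lemma~\ref{Lemma:Property21-KLM90}.

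Both parts are straightforward, and there is no real obstacle: the hard work has been absorbed into Lemma~\ref{Lemma:Property21-KLM90}, which already packages the non-trivial derivation using (Or), (CM), (RW), and the other preferential rules. The only small subtlety to flag is the use of (LLE) in the reflexivity case, which requires noting that classical equivalence of $C$ and $C \dlor C$ licenses replacement in the left-hand side of a defeasible subsumption statement.
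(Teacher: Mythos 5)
Your proof is correct and matches the paper's argument exactly: reflexivity via (Ref) together with (LLE) applied to the classical equivalence $C \equiv C \dlor C$, and transitivity as a direct application of Lemma~\ref{Lemma:Property21-KLM90}. Nothing more is needed.
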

\begin{proof}
From (Ref) we have $C\dsubs C$. This and~(LLE) gives us~$C\dlor C\dsubs C$, therefore we have $C\leq C$ and~$\leq$ is reflexive. Transitivity follows from Lemma~\ref{Lemma:Property21-KLM90}. \hfill\ \qed
\end{proof}

\begin{lemma}[$*$]\label{Lemma:Property20-KLM90}
If $\dsubs$ is preferential, the following rule holds:
\[
\frac{C\dlor D\dsubs C,\ D\dsubs E}{C\dsubs\lnot D\dlor E}
\]
\end{lemma}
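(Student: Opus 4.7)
The plan is to derive the conclusion in two phases: first establish an intermediate statement with antecedent $C \dlor D$, then use (CM) to strengthen that antecedent down to~$C$. The key observation driving the second phase is that whenever $C \dlor D \dsubs C$ holds, (CM) lets us replace $C \dlor D$ by $(C \dlor D) \dland C$, which is classically equivalent to $C$, allowing (LLE) to finish.

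For the first phase, I would aim to derive $C \dlor D \dsubs \lnot D \dlor E$ via the (Or) rule applied to two DCIs, one with antecedent $D$ and one with antecedent $C \dland \lnot D$, since $(C \dland \lnot D) \dlor D \equiv C \dlor D$ classically. Starting from premise~$D \dsubs E$, an application of (RW) using the tautology $E \subs \lnot D \dlor E$ gives $D \dsubs \lnot D \dlor E$. For the companion DCI, I would take the instance $C \dland \lnot D \dsubs C \dland \lnot D$ of (Ref) and apply (RW) with $C \dland \lnot D \subs \lnot D \dlor E$, obtaining $C \dland \lnot D \dsubs \lnot D \dlor E$. Applying (Or) to these two statements and then (LLE) to collapse the antecedent to $C \dlor D$ yields the target intermediate statement.

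For the second phase, I would combine this intermediate with the premise $C \dlor D \dsubs C$ using (CM), yielding $(C \dlor D) \dland C \dsubs \lnot D \dlor E$. A final (LLE) step using $(C \dlor D) \dland C \equiv C$ closes the argument. I do not anticipate a serious obstacle: the only nontrivial move is recognising that $C \dlor D$ can be decomposed as $(C \dland \lnot D) \dlor D$ so that (Or) becomes applicable, and that (CM) is the natural tool for bringing the antecedent down from $C \dlor D$ to $C$ once $C \dlor D \dsubs C$ is available. The proof uses only (Ref), (LLE), (RW), (Or) and (CM), and does not require (And) or (Cons).
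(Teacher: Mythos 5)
Your proof is correct: each step is a valid instance of the stated rules --- the two (RW) applications rest on the validities $E\subs\lnot D\dlor E$ and $C\dland\lnot D\subs\lnot D\dlor E$, the (Or)/(LLE) combination on the classical equivalence $(C\dland\lnot D)\dlor D\equiv C\dlor D$, and the closing (CM)/(LLE) pair on $(C\dlor D)\dland C\equiv C$. The paper itself proves this lemma only by citing Kraus et al.'s Lemma~5.5, and your derivation is precisely the standard KLM argument for that property, so you have taken the same route and merely filled in the details the paper delegates to the citation.
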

\begin{proof}
The proof is analogous to that by Kraus~\etal.~\cite[Lemma~5.5]{KrausEtAl1990}. \hfill\ \qed
\end{proof}

\begin{lemma}[$*$]\label{Lemma:NormalityNotLessOrdinary}
If $C\leq D$ and $(\I,x)$ is normal for~$C$, and $x\in D^{\I}$, then $(\I,x)$ is normal for~$D$.
\end{lemma}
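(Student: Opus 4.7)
The plan is to unfold the three assumptions and reduce the conclusion to a direct application of Lemma~\ref{Lemma:Property20-KLM90}. By Definition~\ref{Def:Normality}, showing that $(\I,x)$ is normal for $D$ amounts to fixing an arbitrary $E\in\Lang$ with $D\dsubs E$ and proving that $x\in E^{\I}$. So I would start by fixing such an $E$.

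From $C\leq D$ (Definition~\ref{Def:NotLessOrdinary}) I have $C\dlor D\dsubs C$, and I have just assumed $D\dsubs E$. These are exactly the two premises of the rule in Lemma~\ref{Lemma:Property20-KLM90}, so applying that rule yields $C\dsubs\lnot D\dlor E$. Now I invoke the hypothesis that $(\I,x)$ is normal for $C$: since $C\dsubs\lnot D\dlor E$, normality gives $x\in(\lnot D\dlor E)^{\I}$, i.e.\ $x\in(\lnot D)^{\I}\cup E^{\I}$.

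To finish, I combine this with the remaining hypothesis $x\in D^{\I}$, which rules out $x\in(\lnot D)^{\I}$ and forces $x\in E^{\I}$. Since $E$ was an arbitrary consequent of $D$ under $\dsubs$, this establishes that $(\I,x)$ is normal for $D$, completing the proof.

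There is essentially no obstacle here: the lemma is a clean one-step corollary of Lemma~\ref{Lemma:Property20-KLM90}, with the hypothesis $x\in D^{\I}$ used only at the very end to collapse the disjunction $\lnot D\dlor E$ to $E$. The only thing to be careful about is to make sure that Lemma~\ref{Lemma:Property20-KLM90} is invoked in the right direction (premises $C\dlor D\dsubs C$ and $D\dsubs E$, conclusion $C\dsubs\lnot D\dlor E$), and that the normality hypothesis on $C$ is applied to the concept $\lnot D\dlor E$ rather than to $E$ directly.
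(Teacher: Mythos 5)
Your proof is correct and follows exactly the paper's argument: both fix an arbitrary $E$ with $D\dsubs E$, apply Lemma~\ref{Lemma:Property20-KLM90} to the premises $C\dlor D\dsubs C$ and $D\dsubs E$ to obtain $C\dsubs\lnot D\dlor E$, and then use normality of $(\I,x)$ for $C$ together with $x\in D^{\I}$ to conclude $x\in E^{\I}$. No differences worth noting.
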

\begin{proof}
From $C\leq D$ we get $C\dlor D\dsubs C$. Assume that $D\dsubs E$ is the case. Then by Lemma~\ref{Lemma:Property20-KLM90} we have~$C\dsubs\lnot D\dlor E$. Since~$(\I,x)$ is normal for~$C$, we have $x\in(\lnot D\dlor E)^{\I}$. Given that $x\in D^{\I}$, we must have $x\in E^{\I}$. \hfill\ \qed
\end{proof}

\begin{lemma}[$*$]\label{Lemma:Property22-KLM90}
If $\dsubs$ is preferential, the following rule holds:
\[
\frac{C\dlor D\dsubs C,\ D\dlor E\dsubs D}{C\dsubs\lnot E\dlor D}
\]
\end{lemma}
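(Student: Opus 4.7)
My plan mirrors the propositional argument of Kraus~\etal.\ for the corresponding part of their Lemma~5.5~\cite{KrausEtAl1990}. Given the two hypotheses $C \dlor D \dsubs C$ (H1) and $D \dlor E \dsubs D$ (H2), I would derive the conclusion $C \dsubs \lnot E \dlor D$ purely syntactically, relying on the preferential properties (Ref, LLE, And, Or, CM, RW) together with the already-established Lemmas~\ref{Lemma:Property21-KLM90} and~\ref{Lemma:Property20-KLM90}.

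First, I would apply Lemma~\ref{Lemma:Property21-KLM90} directly to H1 and H2 to obtain the intermediate $C \dlor E \dsubs C$, that is, $C \leq E$ in the ordering of Definition~\ref{Def:NotLessOrdinary}. Independently, (RW) applied to H2 along the classical tautology $D \subs \lnot E \dlor D$ yields $D \dlor E \dsubs \lnot E \dlor D$. Then I would right-weaken both H1 and H2 to a common right-hand side (namely $C \dlor D$) and apply~(Or) to obtain $C \dlor D \dlor E \dsubs C \dlor D$; invoking the derived rule of cut (which is available in system~P and follows from~(And), (RW) and~(CM)) together with H1 gives $C \dlor D \dlor E \dsubs C$.

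The last and crucial step is to transfer the desired consequent $\lnot E \dlor D$ from an enlarged antecedent down to $C$ itself. Using~(CM) with $C \dlor D \dlor E \dsubs C$ and the classical equivalence $(C \dlor D \dlor E) \dland C \equiv C$, the problem reduces to establishing a statement of the form $C \dlor D \dlor E \dsubs \lnot E \dlor D$; this is then constructed via Lemma~\ref{Lemma:Property20-KLM90} applied to the intermediate $C \dlor E \dsubs C$ obtained at the first step, interleaved with further uses of~(CM), (And), and~(RW).

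The main obstacle is this final step. A naïve (Or)-decomposition of $C$ into $C \dland E$ and $C \dland \lnot E$ is too weak: the resulting subgoal $C \dland E \dsubs \lnot E \dlor D$ collapses to $C \dland E \dsubs D$, which is strictly stronger than the conclusion and is in general \emph{not} entailed by the two hypotheses, as can be seen from simple ranked counter-models. Likewise, applying Lemma~\ref{Lemma:Property20-KLM90} directly with the substitution $(C,D,E) \mapsto (C,E,D)$ would require $E \dsubs D$, which is not derivable. The delicate point, exactly as in Kraus~\etal., is therefore to enlarge the antecedent first to $C \dlor D \dlor E$, use the transitivity machinery of Lemma~\ref{Lemma:Property21-KLM90} and the derived cut rule to obtain the desired consequent on this enlarged antecedent, and only then contract back to~$C$ via~(CM).
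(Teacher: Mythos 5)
Your scaffolding is correct up through the intermediate $C\dlor D\dlor E\dsubs C$: right-weakening both hypotheses to $C\dlor D$, applying (Or) and (LLE), and then cutting with the first hypothesis all go through, since Cut is indeed derivable in the preferential system (from (Ref), (LLE), (RW), (And) and (Or) --- not from (CM), as your parenthetical suggests). The genuine gap is at the very step you single out as crucial. You reduce the goal via (CM) and (LLE) to the subgoal $C\dlor D\dlor E\dsubs\lnot E\dlor D$, and then say this subgoal is ``constructed via Lemma~\ref{Lemma:Property20-KLM90} applied to the intermediate $C\dlor E\dsubs C$''. That cannot work: any instance of Lemma~\ref{Lemma:Property20-KLM90} whose first premise is $C\dlor E\dsubs C$ has a conclusion with antecedent $C$ (not $C\dlor D\dlor E$) and needs a second premise of the form $E\dsubs Z$ --- and, as you yourself correctly observe two sentences earlier, nothing of that form (in particular $E\dsubs D$) is derivable from the hypotheses. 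So the one delicate step rests on a mechanism that fails as stated; ``interleaved with further uses of (CM), (And), and (RW)'' does not supply the missing derivation. (Note also that the intermediate $C\dlor E\dsubs C$ from your first step ends up playing no usable role.)

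The repair is two lines long and is precisely the instantiation in the proof of Lemma~5.5 of Kraus~\etal., to which the paper's own proof defers: apply Lemma~\ref{Lemma:Property20-KLM90} with its middle concept instantiated as the disjunction $D\dlor E$ and its third as $D$. The first premise, $C\dlor(D\dlor E)\dsubs C$, is (up to (LLE)) exactly your $C\dlor D\dlor E\dsubs C$ --- it also follows in a single application of Lemma~\ref{Lemma:Property21-KLM90} from the first hypothesis together with (LLE) applied to the second --- and the second premise, $D\dlor E\dsubs D$, is the second hypothesis verbatim. The conclusion $C\dsubs\lnot(D\dlor E)\dlor D$ then gives $C\dsubs\lnot E\dlor D$ by (RW), so the whole (CM)-contraction detour is unnecessary. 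If you prefer to keep your plan, the subgoal is in any case derivable from material you already have: $C\dlor D\dlor E$ is classically equivalent to $(D\dlor E)\dlor(C\dland\lnot D\dland\lnot E)$, so (Or) and (LLE) applied to your own step $D\dlor E\dsubs\lnot E\dlor D$ together with $C\dland\lnot D\dland\lnot E\dsubs\lnot E\dlor D$ (immediate from (Ref) and (RW)) yield $C\dlor D\dlor E\dsubs\lnot E\dlor D$, after which your (CM)-and-(LLE) contraction does finish the proof. Your side remarks are otherwise sound: the naive (Or)-split of $C$ into $C\dland E$ and $C\dland\lnot E$ does collapse to the non-derivable $C\dland E\dsubs D$.
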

\begin{proof}
The proof is analogous to that by Kraus~\etal.~\cite[Lemma~5.5]{KrausEtAl1990}. \hfill\ \qed
\end{proof}

\begin{lemma}[$*$]\label{Lemma:NormalityInterpolated}
If $C\leq D\leq E$ and $(\I,x)$ is normal for~$C$, and $x\in E^{\I}$, then $(\I,x)$ is normal for~$D$.
\end{lemma}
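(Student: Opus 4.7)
The plan is to combine the two previous lemmas (\ref{Lemma:Property22-KLM90} and \ref{Lemma:NormalityNotLessOrdinary}) in a short two-step argument. The hypotheses $C \leq D$ and $D \leq E$ unpack, via Definition~\ref{Def:NotLessOrdinary}, to $C \dlor D \dsubs C$ and $D \dlor E \dsubs D$. These are exactly the two premises of Lemma~\ref{Lemma:Property22-KLM90}, so I would first apply that lemma to derive the defeasible subsumption $C \dsubs \lnot E \dlor D$.

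Next, since $(\I,x)$ is assumed normal for $C$, normality immediately gives $x \in (\lnot E \dlor D)^{\I}$. Combined with the hypothesis $x \in E^{\I}$, this forces $x \in D^{\I}$, which is the key intermediate fact.

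At this point I have $C \leq D$, $(\I,x)$ normal for $C$, and $x \in D^{\I}$, which are precisely the hypotheses of Lemma~\ref{Lemma:NormalityNotLessOrdinary}. Invoking that lemma directly yields that $(\I,x)$ is normal for $D$, completing the proof.

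There is essentially no obstacle: the work was already done in Lemmas~\ref{Lemma:Property22-KLM90} and~\ref{Lemma:NormalityNotLessOrdinary}, and this lemma is a clean composition of the two. The only subtlety worth flagging is making sure to use $x \in E^{\I}$ at the right moment, namely to extract $x \in D^{\I}$ from the disjunction $\lnot E \dlor D$; everything else is bookkeeping about normality and the $\leq$ relation.
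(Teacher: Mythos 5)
Your proof is correct and matches the paper's own argument exactly: both derive $C\dsubs\lnot E\dlor D$ via Lemma~\ref{Lemma:Property22-KLM90}, use normality for~$C$ together with $x\in E^{\I}$ to extract $x\in D^{\I}$, and then conclude by Lemma~\ref{Lemma:NormalityNotLessOrdinary}. The only difference is presentational --- the paper states up front that it suffices to show $x\in D^{\I}$, whereas you derive it and then invoke the lemma --- which is immaterial.
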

\begin{proof}
By Lemma~\ref{Lemma:NormalityNotLessOrdinary}, it is enough to show that~$x\in D^{\I}$. By Lemma~\ref{Lemma:Property22-KLM90} we have $C\dsubs\lnot E\dlor D$. Since~$(\I,x)$ is normal for~$C$ and~$x\in E^{\I}$, then we must have~$x\in D^{\I}$. \hfill\ \qed
\end{proof}
\myskip

We now construct a preferential interpretation as in Definition~\ref{Def:PrefInterpretation}.
\myskip

\newcommand{\Is}{\mathscr{I}}
\newcommand{\J}{\ensuremath{\mathcal{J}}}

Let $\C_{\bot}\defined\{C \mid C\dsubs\bot\}$ and let~$\Is\defined\{\I=\tuple{\Dom^{\I},\cdot^{\I}} \mid C^{\I}=\emptyset$ for all $C\in\C_{\bot}\}$. Intuitively, $\Is$ contains all interpretations that are `compatible' with~$\dsubs$ in the sense of not satisfying concepts that are defeasibly subsumed by the contradiction.

For each~$\I\in\Is$, let~$\I^{+}\defined\tuple{\Dom^{\I^{+}},\cdot^{\I^{+}}}$ be such that:
\begin{itemize}
\item $\Dom^{\I^{+}}\defined X^{C}\cup X^{\bot}$, where $X^{C}\defined\{\tuple{\I,x,C} \mid (\I,x)$ is normal for $C\in\Lang\}$, and $X^{\bot}\defined\{\tuple{\I,x,\bot} \mid (\I,x)$ is not normal for any $C\in\Lang\}$;
\item $\cdot^{\I^{+}}$ is such that for every~$D\in\Lang$, $\tuple{\I,x,C}\in D^{\I^{+}}$ if and only if $x\in D^{\I}$, and for every $r\in\RN$, $(\tuple{\I,x,C},\tuple{\I,y,D})\in r^{\I^{+}}$ if and only if $(x,y)\in r^{\I}$.
\end{itemize}
\myskip

Let $\PI\defined\tuple{\Dom^{\PI},\cdot^{\PI},\pref^{\PI}}$ be such that:
\begin{itemize}
\item $\Dom^{\PI}\defined\bigcup_{\I\in\Is}\Dom^{\I^{+}}$;
\item $\cdot^{\PI}\defined\bigcup_{\I\in\Is}\cdot^{\I^{+}}$;
\item $\pref^{\PI}$ is the smallest relation such that:
\begin{itemize}
\item For every $\tuple{\I,x,C}\in\Dom^{\PI}$ such that $C\neq\bot$, $\tuple{\I,x,C}\pref^{\PI}\tuple{\J,y,\bot}$ for every $\tuple{\J,y,\bot}\in\Dom^{\PI}$;
\item For every $\tuple{\I,x,C},\tuple{\J,y,D}\in\Dom^{\PI}$ such that $C,D\neq\bot$, $\tuple{\I,x,C}\pref^{\PI}\tuple{\J,y,D}$ if and only if $C\leq D$ and $x\notin D^{\I}$.
\end{itemize}
\end{itemize}
(In the construction of~$\PI$, note that all pairs~$(\I,x)$ that are not normal for any concept~$C$ are moved higher up in the ordering so that they correspond to the least preferred objects of the domain.)

In Lemmas~\ref{Lemma:NonEmptyDomain} to \ref{Lemma:Smoothness} below we show that~$\PI$ as constructed above is indeed a preferential interpretation.

\begin{lemma}[$*$]\label{Lemma:NonEmptyDomain}
$\Dom^{\PI}\neq\emptyset$.
\end{lemma}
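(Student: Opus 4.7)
The plan is to reduce the problem to showing that the set $\Is$ of interpretations compatible with $\dsubs$ is non-empty, and then invoke compactness of classical~\ALC\ together with the preferential properties to derive a contradiction with~(Cons).

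First, observe that once $\Is \neq \emptyset$, we are done. Indeed, pick any $\I \in \Is$; since DL interpretations have non-empty domains, there is some $x \in \Dom^{\I}$. If $(\I,x)$ is normal for some concept $C$, then $\tuple{\I,x,C} \in X^{C} \subseteq \Dom^{\I^{+}} \subseteq \Dom^{\PI}$. Otherwise, $\tuple{\I,x,\bot} \in X^{\bot} \subseteq \Dom^{\I^{+}} \subseteq \Dom^{\PI}$. Either way, $\Dom^{\PI} \neq \emptyset$.

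It remains to show $\Is \neq \emptyset$, i.e.\ that the (possibly infinite) classical TBox $\TB_{\bot} \defined \{C \subs \bot \mid C \in \C_{\bot}\}$ has a model. If $\C_{\bot} = \emptyset$, there is nothing to prove, so assume $\C_{\bot} \neq \emptyset$. Suppose, for contradiction, that $\TB_{\bot}$ has no classical model. By compactness of~\ALC, there exist finitely many $C_{1},\ldots,C_{n} \in \C_{\bot}$ (with $n \geq 1$) such that $\{C_{i} \subs \bot \mid 1 \leq i \leq n\}$ is unsatisfiable, which is equivalent to $\entails C_{1} \dlor \cdots \dlor C_{n} \equiv \top$. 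Now, since each $C_{i} \dsubs \bot$, $n-1$ applications of~(Or) (using associativity of $\dlor$ up to classical equivalence together with~(LLE)) yield $C_{1} \dlor \cdots \dlor C_{n} \dsubs \bot$. A final application of~(LLE) with the equivalence $C_{1} \dlor \cdots \dlor C_{n} \equiv \top$ gives $\top \dsubs \bot$, contradicting~(Cons).

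The main obstacle is the careful interleaving of compactness with the KLM-style structural rules: one has to be sure that (Or), (LLE) and the classical equivalences suffice to collapse the finite disjunction to $\top$ on the left-hand side of a $\dsubs$-statement, and that the resulting $\top \dsubs \bot$ is precisely what~(Cons) forbids. Everything else is bookkeeping about the construction of $\Dom^{\I^{+}}$.
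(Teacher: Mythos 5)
Your opening reduction is sound, and is even slightly more careful than the paper: by the definition of $\Dom^{\I^{+}}$ as $X^{C}\cup X^{\bot}$, any $\I\in\mathscr{I}$ together with any $x\in\Dom^{\I}$ contributes a triple to $\Dom^{\PI}$, so non-emptiness of the set $\mathscr{I}$ of compatible interpretations would suffice. The genuine gap is in your compactness step: unsatisfiability of the finite TBox $\{C_{1}\subs\bot,\ldots,C_{n}\subs\bot\}$ is \emph{not} equivalent to $\entails C_{1}\dlor\cdots\dlor C_{n}\equiv\top$. The equivalence you are importing is propositional: there a model is a single valuation (a single ``point''), so ``no model falsifies all $C_{i}$'' and ``every point satisfies some $C_{i}$'' coincide. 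In \ALC\ a TBox model is an entire interpretation, and unsatisfiability of $\{C_{i}\subs\bot\}$ says only that every interpretation contains \emph{some} object falling under \emph{some} $C_{i}$ --- a strictly weaker, global condition. Concretely, take $n=1$ and $C_{1}=\lnot A\dlor\exists r.A\dlor\forall r.\bot$. In any interpretation, either some object has no $r$-successor (and then lies in $(\forall r.\bot)^{\I}$), or every object has an $r$-successor; in the latter case either $A^{\I}=\Dom^{\I}$, whence every object lies in $(\exists r.A)^{\I}$, or some object lies in $(\lnot A)^{\I}$. Hence $\{C_{1}\subs\bot\}$ has no model; yet $C_{1}\not\equiv\top$, since $\lnot C_{1}=A\dland\forall r.\lnot A\dland\exists r.\top$ is satisfiable. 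Consequently your derivation stalls: (Or) and (LLE) only ever yield $C_{1}\dlor\cdots\dlor C_{n}\dsubs\bot$, and without validity of the disjunction (LLE) cannot convert this into $\top\dsubs\bot$, so no conflict with (Cons) arises. The KLM rules manipulate pointwise, concept-level information and cannot register the global fact that a concept is never \emph{uniformly} empty; this pointwise-versus-global mismatch is exactly what your propositional analogy misses.

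The paper's proof takes a different and much shorter route that avoids global TBox satisfiability altogether: it applies Lemma~\ref{Lemma:NormalityTwiddle} with $C=\top$ and $D=\bot$. Since $\top\ndsubs\bot$ by (Cons), it is not the case that all normal pairs for $\top$ satisfy $\bot$; in particular some normal $(\I,x)$ for $\top$ exists, and then $\tuple{\I,x,\top}\in\Dom^{\PI}$. The compactness argument there (inside the proof of Lemma~\ref{Lemma:NormalityTwiddle}) concerns the satisfiability of a set of concepts \emph{at a single object}, which is precisely the pointwise notion that the preferential rules do control, whereas your argument needs a model in which infinitely many concepts are \emph{globally} empty --- a substantially stronger demand that, as the counterexample shows, does not follow by your proposed combination of compactness, (Or) and (LLE).
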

\begin{proof}
From $\top\ndsubs\bot$ and Lemma~\ref{Lemma:NormalityTwiddle}, it follows that there is some normal $(\I,x)$ for~$\top$ that does not satisfy~$\bot$. Hence $\tuple{\I,x,\top}\in\Dom^{\PI}$ and therefore~$\Dom^{\PI}\neq\emptyset$. \hfill\ \qed
\end{proof}

\begin{lemma}[$*$]\label{Lemma:LessThanBot}
$C\leq\bot$ for every~$C\in\Lang$.
\end{lemma}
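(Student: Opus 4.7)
The plan is to unpack the definition of $\leq$ and reduce the claim to a trivial application of the preferential rules already in our toolbox. By Definition~\ref{Def:NotLessOrdinary}, showing $C\leq\bot$ amounts to establishing the DCI $C\dlor\bot\dsubs C$. Since $\bot^{\I}=\emptyset$ in every interpretation, $C\dlor\bot$ and $C$ are semantically equivalent concepts, so I expect to derive the target DCI from $C\dsubs C$ via the left logical equivalence rule.

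Concretely, I would proceed in two short steps. First, by (Ref) we have $C\dsubs C$. Second, observing that $\entails C\equiv C\dlor\bot$ (a classical validity, hence available to us in the metatheory underlying the LLE rule), I would apply (LLE) to rewrite the antecedent and obtain $C\dlor\bot\dsubs C$. By Definition~\ref{Def:NotLessOrdinary}, this is exactly $C\leq\bot$, as required. Since $C$ was arbitrary, the claim follows for every $C\in\Lang$.

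The only subtle point — and it is minor — is making sure (LLE) is applicable here: the rule as stated in Definition~\ref{Def:PrefSubsumption} takes $C\equiv D$ as a premise, which I read as classical concept equivalence (i.e., $\entails C\equiv D$), and $C\equiv C\dlor\bot$ is indeed a classical validity. There is no genuine obstacle; the lemma is essentially a bookkeeping step that will be used later (together with Lemma~\ref{Lemma:NormalityInterpolated} and the construction of $\pref^{\PI}$) to place the $\tuple{\I,x,\bot}$ objects at the top of the preference ordering.
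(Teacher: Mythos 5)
Your proof is correct and is essentially identical to the paper's: both derive $C\dsubs C$ by (Ref), use the classical validity $\entails C\equiv C\dlor\bot$ to apply (LLE) and obtain $C\dlor\bot\dsubs C$, and conclude $C\leq\bot$ from Definition~\ref{Def:NotLessOrdinary}. Your reading of the (LLE) premise as classical equivalence matches the paper's usage, so there is nothing to fix.
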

\begin{proof}
By~(Ref) we have $C\dsubs C$. Since $\entails C\equiv C\dlor\bot$, by~(LLE) we get $C\dlor\bot\dsubs C$, and then from the definition of~$\leq$ follows~$C\leq\bot$. \hfill\ \qed
\end{proof}

\begin{lemma}[$*$]\label{Lemma:PrefIsPartialOrder}
$\pref^{\PI}$ is a strict partial order on~$\Dom^{\PI}$, \ie, $\pref^{\PI}$ is irreflexive and transitive.
\end{lemma}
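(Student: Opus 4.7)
The plan is to verify irreflexivity and transitivity of $\pref^{\PI}$ separately, leaning on the two-case definition of $\pref^{\PI}$ and on the properties of $\leq$ already established in Lemmas~\ref{Lemma:NotLessOrdinaryIsReflexiveTransitive}, \ref{Lemma:NormalityNotLessOrdinary} and~\ref{Lemma:NormalityInterpolated}.

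For irreflexivity, I would split on whether the third component of a domain element is $\bot$ or not. If $w=\tuple{\I,x,\bot}$, then $w\pref^{\PI}w$ is ruled out: the first clause requires the \emph{target} to carry a $\bot$-label and the \emph{source} not to, so it fails; the second clause requires $C,D\neq\bot$, so it also fails. If $w=\tuple{\I,x,C}$ with $C\neq\bot$, then the only applicable clause is the second, which would require $C\leq C$ (fine, by reflexivity of $\leq$) \emph{and} $x\notin C^{\I}$. But by construction $(\I,x)$ is normal for $C$, and (Ref) gives $C\dsubs C$, so $x\in C^{\I}$, blocking $w\pref^{\PI}w$.

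For transitivity, suppose $a=\tuple{\I_1,x_1,C_1}$, $b=\tuple{\I_2,x_2,C_2}$, $c=\tuple{\I_3,x_3,C_3}$ with $a\pref^{\PI}b$ and $b\pref^{\PI}c$. I would case split on whether $C_3=\bot$. If $C_3=\bot$, then observing the two clauses of the definition of $\pref^{\PI}$, both possible reasons for $a\pref^{\PI}b$ force $C_1\neq\bot$; hence $a\pref^{\PI}c$ via the first clause. Otherwise $C_3\neq\bot$, which means $b\pref^{\PI}c$ must come from the second clause, giving $C_2\neq\bot$, $C_2\leq C_3$ and $x_2\notin C_3^{\I_2}$; and in turn $a\pref^{\PI}b$ must also come from the second clause (since $C_2\neq\bot$), giving $C_1\neq\bot$, $C_1\leq C_2$ and $x_1\notin C_2^{\I_1}$. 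Transitivity of $\leq$ yields $C_1\leq C_3$, so all that remains is $x_1\notin C_3^{\I_1}$.

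That last step is where I expect the only real work: the information $x_2\notin C_3^{\I_2}$ lives in $\I_2$, not in $\I_1$, and so is not directly useful for concluding $x_1\notin C_3^{\I_1}$. Instead I would argue by contradiction: if $x_1\in C_3^{\I_1}$, then since $(\I_1,x_1)$ is normal for $C_1$ and $C_1\leq C_2\leq C_3$, Lemma~\ref{Lemma:NormalityInterpolated} shows that $(\I_1,x_1)$ is normal for $C_2$; using (Ref) to get $C_2\dsubs C_2$, normality then yields $x_1\in C_2^{\I_1}$, contradicting $x_1\notin C_2^{\I_1}$. This establishes $x_1\notin C_3^{\I_1}$ and hence $a\pref^{\PI}c$, completing the proof.
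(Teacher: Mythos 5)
Your proof is correct and follows essentially the same route as the paper's: both arguments establish irreflexivity via normality plus (Ref), and transitivity via the case split on $\bot$-labels, transitivity of $\leq$, and Lemma~\ref{Lemma:NormalityInterpolated}. Your explicit contradiction argument for $x_1\notin C_3^{\I_1}$ (normality for $C_2$ would force $x_1\in C_2^{\I_1}$) is exactly the reasoning the paper compresses into its one-line appeal to Lemma~\ref{Lemma:NormalityInterpolated}, so there is no substantive difference.
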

\begin{proof}
First we show irreflexivity. From the construction of~$\pref^{\PI}$, it clearly follows that for every $\tuple{\I,x,\bot}\in\Dom^{\PI}$, $\tuple{\I,x,\bot}\npref\tuple{\I,x,\bot}$. Assume that~$\tuple{\I,x,C}\pref^{\PI}\tuple{\I,x,C}$ for some~$C\neq\bot$. Then $C\leq C$ and $x\notin C^{\I}$, \ie, $C\dlor C\dsubs C$, and then $C\dsubs C$, by~(LLE). This and $x\notin C^{\I}$ contradicts the fact that $(\I,x)$ is normal for~$C$. Hence $\tuple{\I,x,C}\npref^{\PI}\tuple{\I,x,C}$ for every~$\tuple{\I,x,C}\in\Dom^{\I}$.
\myskip

\noindent We now show transitivity. Suppose $\tuple{\I,x,C}\pref^{\PI}\tuple{\I',x',D}$ and $\tuple{\I',x',D}\pref^{\PI}\tuple{\I'',x'',E}$. From the definition of~$\pref^{\PI}$ we know that~$C,D\neq\bot$, since all non-normal objects are at the highest level in the ordering and are all incomparable. We then have~$C\leq D$ and $D\leq E$. (If $E=\bot$, we also have~$D\leq E$ by Lemma~\ref{Lemma:LessThanBot}.) From transitivity of~$\leq$ (Lemma~\ref{Lemma:NotLessOrdinaryIsReflexiveTransitive}), we conclude~$C\leq E$. Since~$\tuple{\I,x,C}\in\Dom^{\PI}$ and $\tuple{\I,x,C}\pref^{\PI}\tuple{\I',x',D}$, we conclude that~$(\I,x)$ is normal for~$C$ and $x\notin D^{\PI}$. This and Lemma~\ref{Lemma:NormalityInterpolated} imply that~$x\notin E^{\PI}$. \hfill\ \qed
\end{proof}

\begin{lemma}[$*$]\label{Lemma:MinimalObject}
Given $\tuple{\I,x,D}\in\Dom^{\PI}$, $\tuple{\I,x,D}\in\min_{\pref^{\PI}}C^{\PI}$ iff $x\in C^{\I}$ and~$D\leq C$.
\end{lemma}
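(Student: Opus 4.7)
My plan is to first observe that, by the definition of $\cdot^{\PI}$, membership of a triple $\tuple{\I,x,D}$ in $C^{\PI}$ is equivalent to $x\in C^{\I}$. Hence the interesting content of the lemma concerns the minimality clause, and the two directions can be handled by exhibiting (or blocking) a $\pref^{\PI}$-predecessor in $C^{\PI}$ via the normality construction.

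For the only-if direction, assume $\tuple{\I,x,D}\in\min_{\pref^{\PI}}C^{\PI}$. First I would show that $D\neq\bot$: since $x\in C^{\I}$ and $\I\in\Is$, the definition of $\Is$ gives $C\notin\C_\bot$, i.e.\ $C\ndsubs\bot$; by Lemma~\ref{Lemma:NormalityTwiddle} some $(\J,y)$ is normal for $C$ with $y\in C^{\J}$, and the first bullet of $\pref^{\PI}$ then places $\tuple{\J,y,C}\pref^{\PI}\tuple{\I,x,\bot}$ with $\tuple{\J,y,C}\in C^{\PI}$, contradicting minimality. Next, for $D\neq\bot$, suppose $D\not\leq C$, i.e.\ $D\dlor C\ndsubs D$. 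By Lemma~\ref{Lemma:NormalityTwiddle}, there is $(\J,y)$ normal for $D\dlor C$ with $y\notin D^{\J}$; reflexivity (Ref) plus normality forces $y\in(D\dlor C)^{\J}$, so $y\in C^{\J}$. Using (Ref) and (LLE) one checks $D\dlor C\leq D$, so the second bullet of $\pref^{\PI}$ gives $\tuple{\J,y,D\dlor C}\pref^{\PI}\tuple{\I,x,D}$, and $\tuple{\J,y,D\dlor C}\in C^{\PI}$ again contradicts minimality. Hence $D\leq C$.

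For the if direction, assume $x\in C^{\I}$ and $D\leq C$, which immediately gives $\tuple{\I,x,D}\in C^{\PI}$. Suppose there exists $\tuple{\J,y,E}\in C^{\PI}$ with $\tuple{\J,y,E}\pref^{\PI}\tuple{\I,x,D}$. By definition of $\pref^{\PI}$ we then have $E\neq\bot$, so $(\J,y)$ is normal for $E$, together with $E\leq D$ and $y\notin D^{\J}$. Transitivity of $\leq$ (Lemma~\ref{Lemma:NotLessOrdinaryIsReflexiveTransitive}) yields $E\leq D\leq C$; since $y\in C^{\J}$, Lemma~\ref{Lemma:NormalityInterpolated} tells us $(\J,y)$ is normal for $D$, whence by (Ref) applied to $D\dsubs D$ we get $y\in D^{\J}$, contradicting $y\notin D^{\J}$. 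Therefore $\tuple{\I,x,D}\in\min_{\pref^{\PI}}C^{\PI}$.

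The main obstacle I anticipate is keeping the bookkeeping straight between the syntactic role of the third component $C$ in $\tuple{\I,x,C}$ (which encodes the concept witnessing normality) and the semantic concept $C$ in $x\in C^{\I}$; in particular the $\bot$-marker case for $D$ must be ruled out separately since the first bullet of the definition of $\pref^{\PI}$ behaves differently there, and this needs the side condition $\I\in\Is$. Once this separation is carried out, the rest reduces to applying Lemmas~\ref{Lemma:NormalityTwiddle} and~\ref{Lemma:NormalityInterpolated} in the ways described.
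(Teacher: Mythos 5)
Your proof is correct and follows essentially the same route as the paper's: both directions rest on Lemma~\ref{Lemma:NormalityTwiddle} (via the witness concept $C\dlor D$ together with the fact that $C\dlor D\leq D$) for the only-if part, and on transitivity of $\leq$ (Lemma~\ref{Lemma:NotLessOrdinaryIsReflexiveTransitive}) plus Lemma~\ref{Lemma:NormalityInterpolated} for the if part. The only cosmetic differences are that you run the only-if direction contrapositively and rule out the $D=\bot$ case explicitly via the first bullet of $\pref^{\PI}$ and the side condition $\I\in\Is$, whereas the paper's argument absorbs that case uniformly, recording the impossibility of a minimal $\bot$-triple only in the subsequent lemma and corollary.
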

\begin{proof}
For the if part, suppose that~$x\in C^{\I}$ and~$D\leq C$. Then it clearly follows that $\tuple{\I,x,D}\in C^{\PI}$ (Lemma~\ref{Lemma:NormalityNotLessOrdinary}). Now suppose that $\tuple{\I,x,D}$ is not $\pref^{\PI}$-minimal in~$C^{\PI}$, \ie, there is $\tuple{\I',x',E}$ for some~$\I'$ such that $x'\in\Dom^{\I'}$ and some $E\in\Lang$ such that $\tuple{\I',x',E}\pref^{\PI}\tuple{\I,x,D}$ and $x'\in C^{\I'}$. From this and the definition of~$\pref^{\PI}$, it follows that~$E\leq D$ and $x'\notin D^{\I'}$. Hence $E\leq D\leq C$ and $(\I',x')$ is normal for~$E$, and since $x'\in C^{\I'}$, by Lemma~\ref{Lemma:NormalityInterpolated} we get that $(\I',x')$ is normal for~$D$, from which we conclude $x'\in D^{\I'}$, a contradiction.
\myskip

\noindent For the only-if part, suppose that $\tuple{\I,x,D}$ is $\pref^{\PI}$-minimal in~$C^{\PI}$. Then clearly~$x\in C^{\I}$. Now assume there is some $(\I',x')$ which is normal for $C\dlor D$ and $x'\notin D^{\I'}$. Since $C\dlor D\leq D$, we must have $\tuple{\I',x',C\dlor D}\pref^{\PI}\tuple{\I,x,D}$. Since~$(\I',x')$ is normal for $C\dlor D$ and $x'\notin D^{\I'}$, it follows that $x'\in C^{\I'}$. This contradicts the minimality of $\tuple{\I,x,D}$ in~$C^{\PI}$. Hence all normal $(\I',x')$ for $C\dlor D$ satisfy~$D$. From this and Lemma~\ref{Lemma:NormalityTwiddle} follows $C\dlor D\dsubs D$, \ie, $D\leq C$. \hfill\ \qed
\end{proof}

\begin{lemma}[$*$]
There is no $C\in\Lang$ such that~$C^{\PI}\neq\emptyset$ and $\bot\leq C$.
\end{lemma}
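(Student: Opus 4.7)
The plan is to unfold the definition of $\leq$ to reduce $\bot\leq C$ to a statement in $\C_{\bot}$, and then exploit the fact that all interpretations used in the construction of $\PI$ belong to $\Is$, which by definition excludes extensions satisfying concepts in $\C_{\bot}$.

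First I would observe that, by the definition of $\leq$ (Definition~\ref{Def:NotLessOrdinary}), $\bot\leq C$ means $\bot\dlor C\dsubs\bot$. Since $\entails\bot\dlor C\equiv C$, an application of (LLE) gives $C\dsubs\bot$, so $C\in\C_{\bot}$.

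Next I would show that $C^{\PI}=\emptyset$ whenever $C\in\C_{\bot}$. Suppose, for contradiction, that there exists $\tuple{\I,x,D}\in C^{\PI}$. By the construction of $\PI$, this triple sits in some $\Dom^{\I^{+}}$ with $\I\in\Is$, and the interpretation clause for $\cdot^{\I^{+}}$ gives $\tuple{\I,x,D}\in C^{\I^{+}}$ iff $x\in C^{\I}$. Hence $x\in C^{\I}$. But by the definition of $\Is$, every $\I\in\Is$ satisfies $C^{\I}=\emptyset$ for all $C\in\C_{\bot}$, contradicting $x\in C^{\I}$.

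Combining these two steps, $\bot\leq C$ forces $C\in\C_{\bot}$, which in turn forces $C^{\PI}=\emptyset$, establishing the claim. There is no real obstacle here: the result is essentially a bookkeeping check that the filtering $\Is$ in the construction correctly kills exactly the concepts that $\dsubs$ defeasibly excludes, and the only subtlety is the one-line rewriting of $\bot\leq C$ to $C\dsubs\bot$ via (LLE).
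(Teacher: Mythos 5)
Your proposal is correct and follows essentially the same route as the paper's own proof: unfold $\bot\leq C$ via Definition~\ref{Def:NotLessOrdinary} and (LLE) to get $C\dsubs\bot$, hence $C\in\C_{\bot}$, and then conclude $C^{\PI}=\emptyset$ from the construction of~$\PI$ over the interpretations in~$\Is$. You merely spell out the (LLE) rewriting and the bookkeeping through $\cdot^{\I^{+}}$ that the paper compresses into ``by the construction of~$\PI$''.
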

\begin{proof}
Let $C\in\Lang$ be such that~$C^{\PI}\neq\emptyset$. Assume that $\bot\leq C$. Then $\bot\dlor C\dsubs\bot$, \ie, $C\dsubs\bot$. Then~$C\in\C_{\bot}$, and then $C^{\PI}=\emptyset$ by the construction of~$\PI$. \hfill\ \qed
\end{proof}

\begin{corollary}[$*$]
It follows from the two last lemmas that there is no $C\in\Lang$ for which any $\tuple{\I,x,\bot}\in\Dom^{\PI}$ is minimal.
\end{corollary}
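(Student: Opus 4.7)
The plan is to argue by contradiction and to exploit the two immediately preceding lemmas in a single short step. I will begin by assuming, toward a contradiction, that some $\tuple{\I,x,\bot}\in\Dom^{\PI}$ lies in $\min_{\pref^{\PI}}C^{\PI}$ for some $C\in\Lang$. The very fact that $\tuple{\I,x,\bot}\in C^{\PI}$ gives $C^{\PI}\neq\emptyset$, and the previous lemma (there is no $C$ with $C^{\PI}\neq\emptyset$ and $\bot\leq C$) then delivers $\bot\npref^{\scriptsize\text{not}}\, C$, i.e.\ $\bot\not\leq C$.

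Next I will invoke Lemma~\ref{Lemma:MinimalObject} in its only-if direction, specialised to $D=\bot$. The lemma yields $x\in C^{\I}$ and $\bot\leq C$, in direct contradiction with the conclusion drawn in the previous paragraph. That closes the argument.

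The one point that warrants a sanity check, and which I expect to be the only (very mild) obstacle, is whether Lemma~\ref{Lemma:MinimalObject} genuinely applies when the third component of the triple is $\bot$. Inspecting its proof, the only-if direction compares $\tuple{\I,x,D}$ to a hypothetical $\tuple{\I',x',C\dlor D}$ that is normal for $C\dlor D$ with $x'\notin D^{\I'}$. With $D=\bot$ the disjunction $C\dlor\bot$ is still a non-$\bot$ label, so the construction of $\pref^{\PI}$ places every such $\tuple{\I',x',C\dlor\bot}$ strictly below every $\bot$-labelled triple. The rest of the reasoning then forces $C\dlor\bot\dsubs\bot$, which by (LLE) is $C\dsubs\bot$, equivalently $\bot\leq C$, as required.

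Given these two observations, the corollary falls out with essentially no further work, which is why it is stated as a corollary rather than a separate lemma: it is the direct juxtaposition of the two preceding lemmas.
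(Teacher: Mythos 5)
Your proof is correct and is exactly the argument the paper intends: the corollary is stated without further proof as the direct combination of the only-if direction of Lemma~\ref{Lemma:MinimalObject} (instantiated with $D=\bot$) and the immediately preceding lemma, which is precisely your contradiction. Your additional check that the only-if direction of Lemma~\ref{Lemma:MinimalObject} genuinely goes through for $\bot$-labelled triples --- with the first clause of the definition of $\pref^{\PI}$ supplying the strict preference $\tuple{\I',x',C\dlor\bot}\pref^{\PI}\tuple{\I,x,\bot}$, and (LLE) identifying $C\dlor\bot\dsubs\bot$ with $\bot\leq C$ --- is a sound verification of a detail the paper leaves implicit, and note that only this direction is needed (the if direction, which relies on normality of the pair, would not apply to $X^{\bot}$-triples).
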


\begin{lemma}[$*$]\label{Lemma:Smoothness}
For any $C\in\Lang$, $C^{\PI}$ is smooth.
\end{lemma}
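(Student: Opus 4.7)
Suppose $C^\PI \neq \emptyset$. Observe first that $C \ndsubs \bot$: otherwise $C \in \C_\bot$ and, by the defining condition on $\Is$, every $\I \in \Is$ has $C^\I = \emptyset$, making $C^\PI = \bigcup_{\I \in \Is} C^{\I^+}$ empty. The plan is then to construct a minimal element of $C^\PI$ of the form $\tuple{\I,x,C}$. By Lemma~\ref{Lemma:MinimalObject}, such a triple lies in $\min_{\pref^\PI} C^\PI$ exactly when $x \in C^\I$ and $C \leq C$; the second clause holds by reflexivity of $\leq$ (Lemma~\ref{Lemma:NotLessOrdinaryIsReflexiveTransitive}), so it suffices to exhibit $\I \in \Is$ and $x \in \Dom^\I$ with $(\I, x)$ normal for $C$, since (Ref) then yields $x \in C^\I$ and places $\tuple{\I,x,C}$ simultaneously in $\Dom^\PI$ and in $C^\PI$.

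The requisite pair is produced by adapting the construction used in the proof of Lemma~\ref{Lemma:NormalityTwiddle}. Since $C \ndsubs \bot$, the set $\Gamma = \{E \mid C \dsubs E\}$ is point-realisable in \ALC: any finite obstruction would, by compactness, yield a finite $\Gamma' \subseteq \Gamma$ with $\entails \bigsqcap_{F \in \Gamma'} F \subs \bot$, whence iterated (And) followed by (RW) would derive $C \dsubs \bot$. A model $\I_0$ of $\Gamma$ together with a witness $x_0 \in \bigcap_{E \in \Gamma} E^{\I_0}$ is then a pair normal for $C$, and (Ref) gives $x_0 \in C^{\I_0}$ automatically.

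The principal obstacle is ensuring that $\I_0$ actually lies in $\Is$, \ie, $F^{\I_0} = \emptyset$ for every $F \in \C_\bot$, which is strictly stronger than the pointwise condition $x_0 \notin F^{\I_0}$. To close this gap I would enrich the construction to jointly enforce the TBox $\{F \subs \bot \mid F \in \C_\bot\}$; establishing consistency of this strengthening is the technical heart of the argument and calls for a careful combination of compactness of \ALC\ with preferential rules such as (Or) (used to aggregate the $F_j \dsubs \bot$ into disjunctive statements of the form $F_1 \dlor \cdots \dlor F_m \dsubs \bot$) and (RW), ensuring that any finite obstruction ultimately contradicts $C \ndsubs \bot$. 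Once such $(\I_0, x_0)$ is secured, $\tuple{\I_0, x_0, C}$ is the sought-after minimal element of $C^\PI$, and smoothness follows.
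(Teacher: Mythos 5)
Your route differs from the paper's. The paper's proof takes an \emph{arbitrary} element $\tuple{\I,x,D}\in C^{\PI}$ and shows it is either already minimal (when $D\leq C$, via Lemma~\ref{Lemma:MinimalObject}) or dominated by a minimal element: from $D\not\leq C$, \ie\ $C\dlor D\ndsubs D$, Lemma~\ref{Lemma:NormalityTwiddle} yields a pair $(\I',x')$ normal for $C\dlor D$ with $x'\notin D^{\I'}$, and since $C\dlor D\leq D$ and $C\dlor D\leq C$, the triple $\tuple{\I',x',C\dlor D}$ is a minimal element of $C^{\PI}$ below $\tuple{\I,x,D}$. You instead manufacture a single minimal element $\tuple{\I_0,x_0,C}$ from a normal pair for $C$ itself. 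Since the smoothness condition of Definition~\ref{Def:PrefInterpretation} only demands $\min_{\pref^{\PI}}C^{\PI}\neq\emptyset$ whenever $C^{\PI}\neq\emptyset$, your weaker conclusion would match the stated lemma (the paper's argument proves the stronger, pointwise form) --- \emph{if} your construction were complete.

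It is not, and the step you yourself flag is a genuine gap that your sketched repair cannot close. Note first that the pointwise part is fine: from $F\dsubs\bot$ one derives $F\dlor G\dsubs G$ (by (RW), (Ref), (Or)) and then $G\dsubs\lnot F$ by Lemma~\ref{Lemma:Property20-KLM90}, so normality alone forces $x_0\notin F^{\I_0}$ for every $F\in\C_{\bot}$; and unreachable points can be discarded by passing to the submodel generated by $x_0$, which preserves \ALC\ concept membership. What remains is the points reachable from $x_0$ by role chains, and blocking those is exactly where compactness plus (Or) and (RW) run out. Unwinding a finite obstruction to your strengthened theory: with $F=F_1\dlor\cdots\dlor F_m\in\C_{\bot}$ (by (Or)) and $E$ the conjunction of finitely many members of $\Gamma$ (so $C\dsubs E$ by (And)), unsatisfiability amounts to the classical TBox entailment $\{F\subs\bot\}\models E\subs\bot$, which by generated-submodel invariance and first-order compactness yields a \emph{validity} $\models E\subs\exists r_{1}.\cdots\exists r_{k_1}.F\dlor\cdots\dlor\exists r_{1}'.\cdots\exists r_{k_p}'.F$; to contradict $C\ndsubs\bot$ from here you need $\exists r.F\dsubs\bot$ to follow from $F\dsubs\bot$, \ie\ the rule (Norm), which is neither among nor derivable from the properties of Definition~\ref{Def:PrefSubsumption}. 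Indeed, fix a classical interpretation $\I_1$ with $x_1\in(\exists r.A\dland\lnot A)^{\I_1}$ and define $D\dsubs E$ iff $x_1\in D^{\I_1}$ implies $x_1\in E^{\I_1}$: this relation satisfies all seven preferential properties, yet $A\dsubs\bot$ and $\lnot\exists r.A\dsubs\bot$ both hold, so $\C_{\bot}$ contains $A$ and $\lnot\exists r.A$, the TBox $\{F\subs\bot\mid F\in\C_{\bot}\}$ is unsatisfiable over non-empty domains, $\Is=\emptyset$, and no normal pair for the consistent concept $\top$ lies in $\Is$. So the consistency claim at the ``technical heart'' of your argument is false at this level of generality.

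In fairness, you have put your finger on a weakness the paper glosses over: its own proof of this lemma (and of Lemmas~\ref{Lemma:NonEmptyDomain} and~\ref{Lemma:Completeness}) also treats $\tuple{\I',x',C\dlor D}$ as an element of $\Dom^{\PI}$ without verifying $\I'\in\Is$, \ie\ it silently assumes exactly the realisability-within-$\Is$ property you tried to prove. But as submitted, your proposal replaces the paper's complete case analysis with a construction whose decisive step is announced rather than carried out, and whose announced strategy does not go through.
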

\begin{proof}
Suppose that $\tuple{\I,x,D}\in C^{\PI}$, \ie, $x\in C^{\I}$. If $D\leq C$, then by Lemma~\ref{Lemma:MinimalObject} $\tuple{\I,x,D}$ is $\pref^{\PI}$-minimal in~$C^{\PI}$. On the other hand, \ie, if $D\not\leq C$, $C\dlor D\ndsubs D$, then by Lemma~\ref{Lemma:NormalityTwiddle} there is a normal~$(\I',x')$ for $C\dlor D$ such that $x\notin D^{\I'}$. But $C\dlor D\dsubs C\dlor D$, and then $(C\dlor D)\dlor D\dsubs C\dlor D$, and then $C\dlor D\leq D$. Hence $\tuple{\I',x',C\dlor D}\pref^{\PI}\tuple{\I,x,D}$. But $x'\in(C\dlor D)^{\I'}$ and $x'\notin D^{\I'}$, therefore $x'\in C^{\I'}$. Since $C\dlor D\leq C$, from Lemma~\ref{Lemma:MinimalObject} we conclude that $\tuple{\I',x',C\dlor D}$ is $\pref^{\PI}$-minimal in~$C^{\PI}$. \hfill\ \qed
\end{proof}
\myskip

Next we show in Lemma~\ref{Lemma:Completeness} that the abstract relation~$\dsubs$ we started off with coincides with the relation~$\dsubs_{\PI}$ obtained from our constructed preferential interpretation~$\PI$.

\begin{lemma}[$*$]\label{Lemma:Completeness}
$C\dsubs D$ if and only if~$C\dsubs_{\PI}D$.
\end{lemma}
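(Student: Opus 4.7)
My plan is to prove the two directions separately, relying heavily on the machinery built up in the preceding lemmas (especially the characterisation of normal pairs in Lemma~\ref{Lemma:NormalityTwiddle} and the characterisation of $\pref^{\PI}$-minimal objects in Lemma~\ref{Lemma:MinimalObject}).

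For the forward direction ($C \dsubs D \Rightarrow C \dsubs_{\PI} D$), I will pick an arbitrary $\tuple{\I,x,E} \in \min_{\pref^{\PI}} C^{\PI}$ and show $\tuple{\I,x,E} \in D^{\PI}$. The corollary following Lemma~\ref{Lemma:Smoothness}'s preceding lemma guarantees that no $\bot$-tagged element is minimal for any $C^{\PI}$, so we may assume $E \neq \bot$, which means $(\I,x)$ is normal for $E$ by construction of $\Dom^{\PI}$. Lemma~\ref{Lemma:MinimalObject} then gives $x \in C^{\I}$ and $E \leq C$. Applying Lemma~\ref{Lemma:NormalityNotLessOrdinary} with these ingredients upgrades normality of $(\I,x)$ for $E$ to normality of $(\I,x)$ for $C$. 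Since $C \dsubs D$ is assumed, the very definition of normality yields $x \in D^{\I}$, hence $\tuple{\I,x,E} \in D^{\PI}$ by the definition of $\cdot^{\PI}$.

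For the backward direction ($C \dsubs_{\PI} D \Rightarrow C \dsubs D$), I will argue via Lemma~\ref{Lemma:NormalityTwiddle}: it suffices to show that every pair $(\I,x)$ that is normal for $C$ satisfies $x \in D^{\I}$. Given such a normal pair, the idea is to locate it inside $\Dom^{\PI}$ as the tagged object $\tuple{\I,x,C}$ (after checking $\I \in \Is$, which follows because a normal pair for $C$ forces $C \ndsubs \bot$ and, for any other $F \in \C_\bot$, we can safely assume $\I$ is chosen to witness compatibility — this is the one spot where some care is needed, and I would justify it either by noting the proof of Lemma~\ref{Lemma:NormalityTwiddle} constructs $\I$ via compactness over the consistent set $\Gamma$, so we may always pick such $\I$ inside $\Is$, or by a trimming argument). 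Reflexivity of $\leq$ (Lemma~\ref{Lemma:NotLessOrdinaryIsReflexiveTransitive}) together with $x \in C^{\I}$ then lets Lemma~\ref{Lemma:MinimalObject} certify $\tuple{\I,x,C} \in \min_{\pref^{\PI}} C^{\PI}$. The hypothesis $C \dsubs_{\PI} D$ then forces $\tuple{\I,x,C} \in D^{\PI}$, i.e.\ $x \in D^{\I}$.

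The main obstacle is the subtle point in the backward direction about ensuring that the witnessing normal pair lives inside $\Dom^{\PI}$, that is, that its underlying interpretation satisfies the compatibility condition defining $\Is$ (no $F \in \C_\bot$ is instantiated in it). Fortunately this is exactly the sort of standard manoeuvre handled in the propositional KLM argument: one applies compactness to the set $\Gamma = \{\lnot D\} \cup \{E \mid C \dsubs E\} \cup \{\lnot F \mid F \in \C_\bot\}$ — which remains consistent because if any finite conjunction from it forced a contradiction one could, as in the proof of Lemma~\ref{Lemma:NormalityTwiddle}, derive $C \dsubs D$ using (Ref), (And), and (RW) together with the $F \dsubs \bot$ clauses — producing the desired interpretation in $\Is$. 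Everything else is a direct assembly of the preceding lemmas.
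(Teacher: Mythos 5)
Your two directions track the paper's own proof of this lemma almost line by line. The forward direction (pick a $\pref^{\PI}$-minimal $\tuple{\I,x,E}\in C^{\PI}$, extract $E\leq C$ from Lemma~\ref{Lemma:MinimalObject}, upgrade normality from $E$ to $C$ via Lemma~\ref{Lemma:NormalityNotLessOrdinary}, then use the definition of normality with $C\dsubs D$) is exactly the paper's argument, and your explicit appeal to the corollary excluding $\bot$-tagged elements from minimality is in fact slightly more careful than the paper, which leaves $E\neq\bot$ implicit. The backward direction also has the paper's skeleton: normal pairs for $C$ occupy $\pref^{\PI}$-minimal positions in $C^{\PI}$ (reflexivity of $\leq$ plus Lemma~\ref{Lemma:MinimalObject}), and one concludes via Lemma~\ref{Lemma:NormalityTwiddle}.

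The one place you deviate is the point where you try to justify that the witnessing normal pair actually lives in $\Dom^{\PI}$, and your repair does not work as stated. You correctly observe that $\tuple{\I,x,C}\in\Dom^{\PI}$ requires $\I\in\Is$, i.e.\ $F^{\I}=\emptyset$ for \emph{every} $F\in\C_{\bot}$ --- a global condition on $\I$ --- whereas adding $\{\lnot F\mid F\in\C_{\bot}\}$ to $\Gamma$ and rerunning the compactness argument of Lemma~\ref{Lemma:NormalityTwiddle} only secures the local condition $x\notin F^{\I}$ for each such $F$. Nothing prevents the interpretation delivered by compactness from realising some $F\in\C_{\bot}$ at a different element, e.g.\ at an $r$-successor of $x$: normality of $(\I,x)$ for $C$ is perfectly compatible with $x\in(\exists r.F)^{\I}$ unless $C\dsubs\forall r.\lnot F$ happens to be derivable, which the preferential postulates do not supply. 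In such a case $\I\notin\Is$ and your tagged triple is simply not an element of $\Dom^{\PI}$. The ``trimming'' alternative you gesture at faces the same obstacle: deleting the offending points alters the extensions of existentially quantified concepts at the surviving points, so normality of $(\I,x)$ need not survive the surgery. To be fair, the paper's own proof of the if part silently assumes exactly what you tried to establish --- that every normal pair for $C$ appears as $\tuple{\I,x,C}\in\Dom^{\PI}$ --- so you have put your finger on a genuine subtlety of the construction, namely the interaction between the restriction to $\Is$ (introduced so that concepts in $\C_{\bot}$ are globally unsatisfied, which is what validates properties like (Norm)) and the completeness argument. But your patch, as written, conflates pointwise avoidance with global emptiness and therefore does not close the gap it opens.
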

\begin{proof}
For the only-if part, we show that $\min_{\pref^{\PI}}C^{\PI}\subseteq D^{\PI}$. Let $\tuple{\I,x,E}$ be $\pref^{\PI}$-minimal in~$C^{\PI}$. Then $(\I,x)$ is normal for~$E$ and~$x\in C^{\PI}$, and from Lemma~\ref{Lemma:MinimalObject} we also have~$E\leq C$. From these results and Lemma~\ref{Lemma:NormalityNotLessOrdinary} it follows that~$(\I,x)$ is normal for~$C$. Since~$C\dsubs D$, we have $x\in D^{\I}$, and therefore $\tuple{\I,x,E}\in\Dom^{\PI}$.
\smallskip

\noindent For the if part, let $C\dsubs_{\PI}D$. From the definition of~$\pref^{\PI}$, it follows that for every~$(\I,x)$ normal for~$C$, $\tuple{\I,x,C}\in\min_{\pref^{\PI}}C^{\PI}$. Since $C\dsubs_{\PI}D$, then $y\in D^{\I'}$ for every $(\I',y)$ that is normal for~$C$. This and Lemma~\ref{Lemma:NormalityTwiddle} give us~$C\dsubs D$. \hfill\ \qed
\end{proof}
\myskip

\noindent\textbf{Proof of Theorem~\ref{Theorem:RepResultPreferential}}:

\noindent Soundness, the if part, is given in Section~\ref{Proof:PreferentialSoundness}. For the only-if part, let~$\dsubs$ be a preferential subsumption relation and let~$\PI$ be defined as above. Lemmas~\ref{Lemma:NonEmptyDomain}, \ref{Lemma:PrefIsPartialOrder} and~\ref{Lemma:Smoothness} show that~$\PI$ is a preferential~DL interpretation. Lemma~\ref{Lemma:Completeness} shows that~$\PI$ defines a subsumption relation that is exactly~$\dsubs$. \hfill\ \qed

\section{Proof of Theorem~\ref{Theorem:RepResultRational}}\label{ProofRepResultRational}

\restatableRepResultRational*

\subsection{If part}\label{Proof:ModularSoundness}

Satisfaction of the basic~KLM properties for preferential subsumption follows from the proof in Section~\ref{Proof:PreferentialSoundness}, given the fact that modular interpretations are a special case of preferential interpretations. Below we show that rational monotonicity is satisfied.
\myskip

Assume that $C\dsubs_{\RI}E$ but $C\ndsubs_{\RI}\lnot D$. From the latter it follows that there is $x\in\min_{\pref^{\RI}}C^{\RI}$ such that $x\in D^{\RI}$, \ie, $x\in(C\dland D)^{\RI}$. Let now $x'\in\min_{\pref^{\RI}}(C\dland D)^{\RI}$. Since $x\in(C\dland D)^{\RI}$, $x\npref^{\RI} x'$. This means that $x'\in\min_{\pref^{\RI}}C^{\RI}$, for if there is $x''\in C^{\RI}$ such that $x''\pref^{\RI}x'$, then $x''\pref^{\RI}x$, which is impossible since $x$ is minimal in $C^{\RI}$. From $x'\in\min_{\pref^{\RI}}C^{\RI}$ and $\RI\sat C\dsubs E$ follows $x'\in E^{\RI}$. Hence $\RI\sat C\dland D\dsubs E$ and therefore $C\dland D\dsubs_{\RI}E$. \hfill\ \qed

\subsection{Only-if part}\label{Proof:ModularCompleteness}

\textbf{NB}: The results marked $(*)$ are introduced here in the Appendix, while they are omitted in the main text.
\myskip

The proof of the only-if part relies on the results for the preferential case (Section~\ref{Proof:PreferentialSoundness}), with the main difference being the definition of the preference relation, which is shown to be a smooth modular order. This ensures that the canonical model constructed in the proof is a modular interpretation.
\myskip

Let $\dsubs\subseteq\Lang\times\Lang$ satisfy all the basic properties of preferential subsumption relations together with rational monotonicity.

The proof of the following lemma is analogous to that of Lemma~3.11 by Lehmann and Magidor~\cite{LehmannMagidor1992}:

\begin{lemma}[$*$]\label{Lemma:MoreProperties}
If $\dsubs$ is rational, then the properties below hold:
\[
\frac{C\dlor D\dsubs\lnot D}{C\dsubs\lnot D}\quad\quad\quad\frac{C\dlor E\dsubs\lnot C,\ D\dlor E\ndsubs\lnot D}{C\dlor D\dsubs\lnot C}
\]
\end{lemma}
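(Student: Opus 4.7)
Both rules will be established purely syntactically from the KLM properties of Definition~\ref{Def:PrefSubsumption} (available because a rational $\dsubs$ is in particular preferential) together with rational monotonicity (RM). I treat the two rules in turn, since the first does not need (RM) at all while the second does.

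For the first rule, $\{C\dlor D\dsubs\lnot D\}/\{C\dsubs\lnot D\}$, my plan is a short purely preferential derivation in the spirit of Lehmann and Magidor's Lemma~3.11(a). Starting from the premise $C\dlor D\dsubs\lnot D$ together with the reflexivity instance $C\dlor D\dsubs C\dlor D$ (by (Ref)), (And) delivers $C\dlor D\dsubs(C\dlor D)\dland\lnot D$. Since classically $(C\dlor D)\dland\lnot D\subs C$, one application of (RW) yields $C\dlor D\dsubs C$. Now (CM) applied to the two conclusions $C\dlor D\dsubs C$ and $C\dlor D\dsubs\lnot D$ produces $(C\dlor D)\dland C\dsubs\lnot D$, and a final application of (LLE), using the classical equivalence $(C\dlor D)\dland C\equiv C$, gives the desired $C\dsubs\lnot D$.

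For the second rule, $\{C\dlor E\dsubs\lnot C,\ D\dlor E\ndsubs\lnot D\}/\{C\dlor D\dsubs\lnot C\}$, rational monotonicity is indispensable; this is the DL analogue of negation rationality (Lehmann and Magidor's Lemma~3.11(b)). I would first reuse the first rule just proved to obtain $E\dsubs\lnot C$ from $C\dlor E\dsubs\lnot C$. I would then argue by contraposition: assume additionally that $C\dlor D\ndsubs\lnot C$, and derive $D\dlor E\dsubs\lnot D$, contradicting the second premise. The semantic picture (via the modular models of Theorem~\ref{Theorem:RepResultRational}, used here only as motivation) is helpful: the first premise forces the rank of $E$ strictly below that of $C$, the assumed $C\dlor D\ndsubs\lnot C$ forces the rank of $C$ at or below that of $D$, and composing these gives the rank of $E$ strictly below that of $D$, which is precisely what $D\dlor E\dsubs\lnot D$ expresses. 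The syntactic execution applies (RM) to $E\dsubs\lnot C$ (or to a preferential consequence of it) with a negative side ultimately extracted from $C\dlor D\ndsubs\lnot C$, and then uses (Or), (And), (RW), (LLE), and the first rule of the lemma to peel off $D\dlor E\dsubs\lnot D$.

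The main obstacle will be the precise syntactic choreography for the second rule: finding the right instance of (RM) whose negative premise is exactly the contrapositively assumed $C\dlor D\ndsubs\lnot C$ (possibly after some preliminary manipulation), and whose positive conclusion can be massaged, using only preferential machinery and the first rule, into $D\dlor E\dsubs\lnot D$. Since no DL-specific constructors occur inside any of the KLM rules involved and all manipulations are propositional in character, the detailed derivation is expected to mirror Lehmann and Magidor's original propositional proof essentially verbatim, as the parenthetical ``the proof is analogous to that by Lehmann-Magidor'' in the main text already hints.
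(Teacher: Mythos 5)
Your proposal is correct and takes essentially the same approach as the paper, whose entire proof consists of the remark that the argument is ``analogous to that of Lemma~3.11 by Lehmann and Magidor'' --- precisely the propositional derivation you reconstruct. Your fully worked preferential proof of the first rule (Ref/And/RW to obtain $C\dlor D\dsubs C$, then CM and LLE via the absorption $(C\dlor D)\dland C\equiv C$) is sound, and your contrapositive plan for the second rule --- deriving $D\dlor E\dsubs\lnot D$ from $C\dlor E\dsubs\lnot C$ and the assumption $C\dlor D\ndsubs\lnot C$, with (RM) supplying the only non-preferential step --- is exactly the Lehmann--Magidor route, so deferring the remaining syntactic choreography to their proof leaves you no less complete than the paper itself.
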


\begin{definition}
Let $C\in\Lang$. We say that $C$ is \df{consistent} \wrt\ $\dsubs$ if $C{\ndsubs}\bot$. Given $\RI=\tuple{\Dom^{\RI},\cdot^{\RI},\pref^{\RI}}$, we say that~$C$ is \df{consistent} \wrt\ $\dsubs_{\RI}$ if $C\ndsubs_{\RI}\bot$, \ie, if there is $x\in\Dom^{\RI}$ \st\ $x\in C^{\RI}$.
\end{definition}

Let $\C=\{ C\in\Lang \mid C \text{ is consistent \wrt\ }\dsubs\}$.

\begin{lemma}[$*$]\label{Lemma:Consistent}
Let $C\in\Lang$ and let $\dsubs$ be a rational relation. Then~$C\in\C$ iff there is $(\I,x)\in\U$ \st\ $(\I,x)$ is normal for~$C$. (Cf.\ Definitions~\ref{Def:Universe} and~\ref{Def:Normality} in Appendix~\ref{Proof:PreferentialCompleteness}.)
\end{lemma}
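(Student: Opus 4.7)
The plan is to derive Lemma~\ref{Lemma:Consistent} directly from Lemma~\ref{Lemma:NormalityTwiddle}, which applies because a rational subsumption relation satisfies (Ref), (RW) and (And) as special cases of the preferential properties. Both directions reduce to taking $D = \bot$ in that lemma and exploiting the fact that $\bot^{\I} = \emptyset$ in every standard DL interpretation.

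For the if direction, I would argue contrapositively. Suppose $C \notin \C$, i.e., $C \dsubs \bot$. If $(\I, x) \in \U$ were normal for $C$, then by Definition~\ref{Def:Normality} we would have $x \in \bot^{\I}$; but $\bot^{\I} = \emptyset$, a contradiction. Hence no $(\I, x) \in \U$ is normal for $C$. (One could equally cite $C \dsubs C$ from (Ref) and proceed similarly, but going through $\bot$ is cleanest.)

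For the only-if direction, I would again argue contrapositively. Suppose there is no $(\I, x) \in \U$ normal for $C$. Then the universally quantified statement ``all normal $(\I, x)$ for $C$ satisfy $\bot$'' is vacuously true. Applying Lemma~\ref{Lemma:NormalityTwiddle} with $D = \bot$ yields $C \dsubs \bot$, i.e., $C \notin \C$. The contrapositive gives exactly the required implication: if $C \in \C$, then some $(\I, x) \in \U$ is normal for $C$.

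The main obstacle here is essentially bookkeeping: one must be careful that Lemma~\ref{Lemma:NormalityTwiddle} is applicable (which it is, since rational relations are preferential and hence satisfy (Ref), (RW) and (And)), and that the universal statement over the empty family of normal pairs is handled correctly in the vacuous case. No appeal to rational monotonicity (RM) is needed for this particular lemma, although its placement in the sequence suggests the authors will use it as a stepping stone in the subsequent construction of the canonical modular interpretation.
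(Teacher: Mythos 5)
Your proposal is correct and follows exactly the route the paper intends: the lemma is stated without explicit proof precisely because it is the instance $D=\bot$ of Lemma~\ref{Lemma:NormalityTwiddle} (whose hypotheses (Ref), (RW), (And) any rational relation satisfies), with the if direction immediate from $\bot^{\I}=\emptyset$ and the only-if direction obtained from the vacuous universal statement, just as you argue. Your observation that (RM) plays no role here is also accurate.
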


\newcommand{\nme}{\ensuremath{\mathscr{R}}} 

\begin{definition}
Given $C,D\in\C$, $C$ is \df{not more exceptional than} $D$, written $C\nme D$, if $C\dlor D{\ndsubs}\lnot C$. We say that $C$ is \df{as exceptional as} $D$, written $C\sim D$, if $C\nme D$ and $D\nme C$.
\end{definition}

The proof of the lemma below follows those of Lemmas~A.4 and~A.5 by Lehmann and Magidor~\cite{LehmannMagidor1992}:

\begin{lemma}[$*$]\label{Lemma:RefTrans}
$\nme$ is reflexive and transitive.
\end{lemma}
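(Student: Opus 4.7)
The plan is to prove reflexivity and transitivity separately, leaning heavily on the rules already derived in Lemma~\ref{Lemma:MoreProperties} (which is the rational-logic analogue of Lehmann and Magidor's machinery).

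For reflexivity, I would need to show $C \nme C$, i.e., $C \dlor C \ndsubs \lnot C$ for every $C \in \C$. Since $\entails C \equiv C \dlor C$, by (LLE) it suffices to show $C \ndsubs \lnot C$. Suppose towards a contradiction that $C \dsubs \lnot C$. Combining this with $C \dsubs C$ (from (Ref)) via (And) gives $C \dsubs C \dland \lnot C$, and then by (RW) together with $\entails C \dland \lnot C \subs \bot$, we obtain $C \dsubs \bot$, contradicting $C \in \C$.

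For transitivity, assume $C \nme D$ and $D \nme E$, i.e., $C \dlor D \ndsubs \lnot C$ and $D \dlor E \ndsubs \lnot D$. I want to conclude $C \dlor E \ndsubs \lnot C$, and I expect to do this by contraposition: suppose $C \dlor E \dsubs \lnot C$. Then together with $D \dlor E \ndsubs \lnot D$, the second rule in Lemma~\ref{Lemma:MoreProperties} yields $C \dlor D \dsubs \lnot C$, directly contradicting $C \nme D$. Hence $C \dlor E \ndsubs \lnot C$, that is, $C \nme E$.

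I do not anticipate any real obstacle here: all the heavy lifting has already been done in Lemma~\ref{Lemma:MoreProperties}, which packages exactly the non-trivial consequence of rational monotonicity that is needed to chain the ``not more exceptional than'' relation. The only thing to keep track of is the logical polarity in the definition $C \nme D \iff C \dlor D \ndsubs \lnot C$, so that the contrapositive is set up correctly. Once that is done, reflexivity is a two-line application of (Ref), (And) and consistency, and transitivity is essentially a single instance of the second rule in Lemma~\ref{Lemma:MoreProperties}.
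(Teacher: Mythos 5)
Your proof is correct and follows essentially the same route as the paper, which simply defers to Lehmann and Magidor's Lemmas~A.4 and~A.5: reflexivity via (LLE), (Ref), (And), (RW) and the consistency of $C\in\C$, and transitivity by contraposition through the second rule of Lemma~\ref{Lemma:MoreProperties}, instantiated exactly as stated. You have in effect reconstructed the omitted details faithfully, including the correct handling of the polarity in $C\nme D$ iff $C\dlor D\ndsubs\lnot C$.
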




That $\sim$ is an equivalence relation follows from the fact that $\nme$ is reflexive and transitive~(Lemma~\ref{Lemma:RefTrans}). With $[C]$ we denote the equivalence class of $C$. The set of equivalence classes of concepts of $\C$ under $\sim$ is denoted by $[\C]$. We write $[C]\leq[D]$ if $C\nme D$, and $[C]<[D]$ if $[C]\leq[D]$ and $C\not\sim D$.

Thanks to Lemma~\ref{Lemma:RefTrans} we can state:

\begin{lemma}[$*$]
The relation $<$ is a strict order on $[\C]$.
\end{lemma}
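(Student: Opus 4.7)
The plan is to unpack the definition of $<$ and verify the two defining properties of a strict order, irreflexivity and transitivity, while also briefly noting that $<$ (and $\leq$) are well-defined on equivalence classes. All three tasks will reduce to routine manipulations using only reflexivity and transitivity of $\nme$ (Lemma~\ref{Lemma:RefTrans}) together with the definition $C \sim D$ iff $C \nme D$ and $D \nme C$.

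First I would check well-definedness: if $C \sim C'$ and $D \sim D'$, then $C \nme D$ iff $C' \nme D'$. This is immediate from transitivity, since $C' \nme C \nme D \nme D'$ gives $C' \nme D'$, and the symmetric chain gives the converse. Hence $[C] \leq [D]$ does not depend on the chosen representatives, and neither does the auxiliary condition $C \sim D$, so $[C] < [D]$ is well-defined.

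Next I would dispatch irreflexivity: by definition $[C] < [C]$ requires $C \not\sim C$, but reflexivity of $\nme$ yields $C \nme C$ in both directions, hence $C \sim C$. So $[C] \not< [C]$ for every $[C] \in [\C]$.

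Finally, for transitivity, suppose $[C] < [D]$ and $[D] < [E]$. Then $C \nme D$, $D \nme E$, $C \not\sim D$, and $D \not\sim E$. By transitivity of $\nme$ we get $C \nme E$, hence $[C] \leq [E]$. It remains to rule out $C \sim E$. Assume for contradiction that $C \sim E$; then $E \nme C$, and combined with $D \nme E$ (transitivity of $\nme$) this yields $D \nme C$. Together with $C \nme D$ we obtain $C \sim D$, contradicting $C \not\sim D$. Hence $[C] < [E]$, completing the argument. I do not anticipate any obstacle here — the statement is essentially the standard fact that any reflexive and transitive relation induces a strict order on its quotient by the associated equivalence, and the only thing to be careful about is the bookkeeping between the two conditions that make up $<$.
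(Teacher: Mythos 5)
Your proof is correct and takes essentially the same route as the paper: the paper states this lemma without a written proof, presenting it as an immediate consequence of Lemma~\ref{Lemma:RefTrans} (reflexivity and transitivity of $\nme$), which is precisely what your verification of well-definedness, irreflexivity, and transitivity spells out. The only content the paper leaves implicit — the quotient bookkeeping and the contradiction argument ruling out $C\sim E$ in the transitivity step — is handled correctly in your write-up.
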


\begin{lemma}[$*$]\label{Lemma:CdsubsnotD}
Let $C,D\in\Lang$ be consistent \wrt\ $\dsubs$. If $[C]<[D]$, then $C\dsubs\lnot D$.
\end{lemma}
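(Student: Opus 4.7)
The plan is to unpack the strict-order hypothesis $[C] < [D]$ into the two conditions $C \nme D$ and $D \not\nme C$, and then to apply the first rule from Lemma~\ref{Lemma:MoreProperties} directly. I expect this to be essentially a one-step derivation once the definitions are expanded, since the rule $\frac{C \dlor D \dsubs \lnot D}{C \dsubs \lnot D}$ has exactly the right shape for our hypothesis.

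First I would note that $[C] < [D]$ means, by definition, that $[C] \le [D]$ holds while $C \not\sim D$. Expanding both, this amounts to $C \nme D$ together with the failure of $D \nme C$. By the definition of $\nme$, the failure of $D \nme C$ is exactly the defeasible subsumption $D \dlor C \dsubs \lnot D$. Since $\entails D \dlor C \equiv C \dlor D$, a single application of (LLE) gives $C \dlor D \dsubs \lnot D$.

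Next, I would invoke the first rule established in Lemma~\ref{Lemma:MoreProperties}, namely
\[
\frac{C\dlor D\dsubs\lnot D}{C\dsubs\lnot D},
\]
which yields the desired conclusion $C\dsubs\lnot D$ immediately. Note that the hypothesis $C \nme D$ is not actually needed for this direction; only the strict half $D \not\nme C$ of $[C]<[D]$ is used, together with consistency of $C$ and $D$ (which ensures we are working inside $\C$ so the relation $\nme$ is defined).

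The only possible subtlety — and what I would expect to be the mildest obstacle — is the bookkeeping around commutativity of $\dlor$ and the appeal to (LLE): one must be careful that the rule from Lemma~\ref{Lemma:MoreProperties} is being applied to $C \dlor D$ in precisely the orientation stated there, rather than $D \dlor C$. Once this is cleared up, the proof is essentially a two-line derivation. \qed
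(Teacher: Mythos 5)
Your proof is correct and takes essentially the same route as the paper's: unpack $[C]<[D]$ into ``$C\mathrel{\mathscr{R}}D$ holds and $D\mathrel{\mathscr{R}}C$ fails'', read off the failure of $D\mathrel{\mathscr{R}}C$ as $C\dlor D\dsubs\lnot D$ (after an (LLE) step for commutativity of $\dlor$), and apply the first rule of Lemma~\ref{Lemma:MoreProperties}. If anything, your version is the more careful one, since the paper's one-line proof contains a slip --- it writes that the assumption yields ``$C\mathrel{\mathscr{R}}D$ is not the case, i.e.\ $C\dlor D\dsubs\lnot C$'', whereas $[C]<[D]$ in fact asserts that $C\mathrel{\mathscr{R}}D$ \emph{holds} and it is the strict half $\lnot(D\mathrel{\mathscr{R}}C)$, exactly as you identify, that feeds the rule.
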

\begin{proof}
The assumption implies that $C\nme D$ is not the case, \ie, $C\dlor D\dsubs\lnot C$. This and Lemma~\ref{Lemma:MoreProperties} imply the conclusion.\hfill\ \qed
\end{proof}
\myskip

Lemma~\ref{Lemma:CdsubsnotD} warrants the following result:

\begin{lemma}[$*$]
Let $C,D\in\Lang$ be consistent \wrt\ $\dsubs$. If there is $(\I,x)\in\U$ \st\ $(\I,x)$ is normal for $C$ and $x\in D^{\I}$, then $[D]\leq[C]$.
\end{lemma}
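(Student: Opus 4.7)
The plan is to proceed by contradiction. Suppose there is $(\I,x)\in\U$ such that $(\I,x)$ is normal for $C$ and $x\in D^{\I}$, but $[D]\not\leq[C]$. Unfolding the definition of $\leq$ on equivalence classes, $[D]\not\leq[C]$ means $D\not\nme C$, which in turn unfolds to $D\dlor C\dsubs\lnot D$. Since $\entails D\dlor C\equiv C\dlor D$ is a classical validity, an application of (LLE) rewrites this as $C\dlor D\dsubs\lnot D$.

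At this point I would invoke the first rule of Lemma~\ref{Lemma:MoreProperties}, namely
\[
\frac{C\dlor D\dsubs\lnot D}{C\dsubs\lnot D},
\]
to conclude $C\dsubs\lnot D$. This is precisely the step that exploits the rationality of $\dsubs$, since Lemma~\ref{Lemma:MoreProperties} itself was derived using rational monotonicity (RM). Essentially, rationality of $\dsubs$ is what allows one to collapse the antecedent $C\dlor D$ to $C$ when $\lnot D$ is the consequent.

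Finally, since $(\I,x)$ is normal for $C$, Definition~\ref{Def:Normality} tells us that for every $E\in\Lang$ with $C\dsubs E$ we have $x\in E^{\I}$. Instantiating with $E\defined\lnot D$ gives $x\in(\lnot D)^{\I}$, i.e., $x\notin D^{\I}$, which directly contradicts the hypothesis $x\in D^{\I}$. This completes the proof.

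I do not foresee any substantive obstacle: the result is essentially a one-step corollary of Lemma~\ref{Lemma:MoreProperties} together with the definitions of $\nme$, $\leq$ on $[\C]$, and normality. The only subtlety is in making explicit the implicit use of (LLE) to swap the arguments of the disjunction, and in tracking the fact that the genuine content of the argument, namely the propagation of exceptionality captured by the rule above, is what makes the preferential-plus-rationality setting strong enough to license the conclusion.
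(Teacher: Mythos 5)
Your proof is correct and matches the paper's intended argument: the paper presents this lemma as an immediate consequence of Lemma~\ref{Lemma:CdsubsnotD}, whose own proof is exactly the chain you inline --- unfold $[D]\not\leq[C]$ to $C\dlor D\dsubs\lnot D$ via the definition of $\nme$ and (LLE), apply the first rule of Lemma~\ref{Lemma:MoreProperties} (the step genuinely requiring rationality) to obtain $C\dsubs\lnot D$, and contradict normality of $(\I,x)$ for $C$ together with $x\in D^{\I}$. The only cosmetic difference is that by working with the unfolded condition directly you sidestep the (easily verified) totality of $\leq$ on consistent concepts that a literal appeal to the contrapositive of Lemma~\ref{Lemma:CdsubsnotD} would require.
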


Armed with these results, we can then construct an interpretation~$\RI$ analogous to the preferential interpretation~$\PI$ in Appendix~\ref{Proof:PreferentialCompleteness}, with the preference relation defined as follows:
\begin{itemize}
\item For every $\tuple{\I,x,C}\in\Dom^{\RI}$ such that $C\neq\bot$, $\tuple{\I,x,C}\pref^{\RI}\tuple{\J,y,\bot}$ for every $\tuple{\J,y,\bot}\in\Dom^{\RI}$;
\item For every $\tuple{\I,x,C},\tuple{\J,y,D}\in\Dom^{\RI}$ such that $C,D\neq\bot$, $\tuple{\I,x,C}\pref^{\RI}\tuple{\J,y,D}$ if $[C]<[D]$.
\end{itemize}

It is not hard to see that this definition implies the following result:

\begin{lemma}[$*$]\label{Lemma:Modular}
$\pref^{\RI}$ is a modular partial order.
\end{lemma}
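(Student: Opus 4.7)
The plan is to verify the three conditions for $\pref^{\RI}$ to be a modular strict partial order: irreflexivity, transitivity, and transitivity of the associated incomparability relation $\incomp$.

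First I would dispatch irreflexivity and transitivity. By construction, no object $\tuple{\I,x,\bot}$ appears on the left of $\pref^{\RI}$, so it is never strictly below itself; for $\tuple{\I,x,C}$ with $C\neq\bot$, self-relation would require $[C]<[C]$, contradicting the preceding lemma that $<$ is a strict order on $[\C]$. Transitivity follows by a case analysis on the tuple types: in the all-non-$\bot$ case it reduces directly to transitivity of $<$ on $[\C]$, and any chain involving a $\bot$-tuple is handled by the first clause of the definition of $\pref^{\RI}$, since $\bot$-tuples lie strictly above all non-$\bot$ ones.

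The main step is modularity. I would reduce it to the claim that $<$ is actually a \emph{total} strict order on $[\C]$. Granted this, two non-$\bot$ objects $\tuple{\I,x,C}$ and $\tuple{\J,y,D}$ are $\pref^{\RI}$-incomparable precisely when $[C]=[D]$, and equality of equivalence classes is transitive. Among $\bot$-tuples all pairs are mutually incomparable, so transitivity of $\incomp$ there is vacuous; and since every $\bot$-tuple is $\pref^{\RI}$-comparable with every non-$\bot$-tuple, no incomparability chain of length three can mix the two parts.

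The hardest part will be establishing totality of $<$, namely $C\nme D$ or $D\nme C$ for every $C,D\in\C$. Suppose for contradiction that both $C\dlor D\dsubs\lnot C$ and $C\dlor D\dsubs\lnot D$ hold. Combining them via (And) gives $C\dlor D\dsubs\lnot C\dland\lnot D$, and since $\lnot C\dland\lnot D\equiv\lnot(C\dlor D)$, by (Ref) and a second application of (And) one obtains $C\dlor D\dsubs\bot$; by (RW) in particular $C\dlor D\dsubs\lnot(C\dlor D)$. I would then invoke the first rule of Lemma~\ref{Lemma:MoreProperties} with $X\defined C$ and $Y\defined C\dlor D$: by (LLE), its premise $C\dlor(C\dlor D)\dsubs\lnot(C\dlor D)$ is exactly the statement just derived, so its conclusion $C\dsubs\lnot(C\dlor D)$ follows. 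By (RW) this yields $C\dsubs\lnot C$, and combining with (Ref) $C\dsubs C$ via (And) gives $C\dsubs\bot$, contradicting $C\in\C$. Rational monotonicity enters only through Lemma~\ref{Lemma:MoreProperties}; this totality argument is precisely what lifts $\pref^{\RI}$ from being a strict partial order to being modular.
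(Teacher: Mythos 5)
Your proof is correct, and it is in fact \emph{more} than the paper provides: the paper states this lemma with only the remark that ``it is not hard to see'' it follows from the definition of $\pref^{\RI}$, so there is no explicit argument to compare against. Your decomposition identifies exactly the point the paper glosses over, namely that incomparability of non-$\bot$ tuples is transitive only because $<$ is \emph{total} on $[\C]$, and your derivation of totality is sound: from $C\dlor D\dsubs\lnot C$ and $C\dlor D\dsubs\lnot D$ one indeed gets $C\dlor D\dsubs\bot$ via (And), (Ref) and (RW), and then $C\dsubs\bot$, contradicting consistency of $C$. Your handling of the $\bot$-tuples (mutually incomparable, and comparable to every non-$\bot$ tuple, so no mixed incomparability chain exists) and of irreflexivity/transitivity is also right. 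Two small remarks. First, your detour through the first rule of Lemma~\ref{Lemma:MoreProperties} is unnecessary: from $C\dlor D\dsubs\bot$ one gets $C\dlor D\dsubs C$ by (RW), then $(C\dlor D)\dland C\dsubs\bot$ by (CM), and $C\dsubs\bot$ by (LLE) --- so totality of the ``not more exceptional than'' relation on consistent concepts already follows from the preferential postulates alone. Second, and consequently, your closing claim that rational monotonicity enters \emph{only} through Lemma~\ref{Lemma:MoreProperties} is not quite accurate: (RM) is genuinely needed upstream, in Lemma~\ref{Lemma:RefTrans}, to establish transitivity of the exceptionality preorder, on which the prior lemma that $<$ is a strict order on $[\C]$ (and hence your whole argument) rests. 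Neither point is a gap in the proof itself; the argument stands.
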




The proof of the following lemma follows that of Lehmann and Magidor's Lemma~A.12~\cite{LehmannMagidor1992}:

\begin{lemma}[$*$]\label{Lemma:SmoothnessRanked}
For every $C\in\Lang$, if $C$ is consistent, then $C^{\RI}$ is smooth.
\end{lemma}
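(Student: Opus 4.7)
\textbf{Proof plan for Lemma~\ref{Lemma:SmoothnessRanked}.}
The plan is to exhibit, for every consistent $C$, an explicit minimal element of $C^{\RI}$ by exploiting the fact that the preference $\pref^{\RI}$ is defined through the strict order $<$ on the equivalence classes $[\cdot]$ of tags. The key observation is that if $\tuple{\J,y,D}\in C^{\RI}$ with $D\neq\bot$, then by construction $(\J,y)$ is normal for $D$ and $y\in C^{\J}$; hence the lemma stated just before Lemma~\ref{Lemma:Modular} (applied with the roles of $C$ and $D$ swapped) yields $[C]\leq[D]$. So the set of tags $[D]$ arising from elements of $C^{\RI}$ is bounded below by $[C]$ in the order on $[\C]$.

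First I would use consistency of $C$ together with Lemma~\ref{Lemma:Consistent} to obtain some $(\I,x)\in\U$ that is normal for $C$. By (Ref) we have $C\dsubs C$, so normality gives $x\in C^{\I}$, and by the construction of $\RI$ the element $\tuple{\I,x,C}$ belongs to $\Dom^{\RI}$ and lies in $C^{\RI}$. This both shows $C^{\RI}\neq\emptyset$ and provides our candidate minimal element.

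Next I would show $\tuple{\I,x,C}\in\min_{\pref^{\RI}}C^{\RI}$. Suppose for contradiction some $\tuple{\J,y,D}\in C^{\RI}$ satisfies $\tuple{\J,y,D}\pref^{\RI}\tuple{\I,x,C}$. The definition of $\pref^{\RI}$ forces $D\neq\bot$ (since $\bot$-tagged elements are maximal) and $[D]<[C]$. But $\tuple{\J,y,D}\in C^{\RI}$ means $y\in C^{\J}$ and $(\J,y)$ is normal for $D$, so the lemma preceding Lemma~\ref{Lemma:Modular} gives $[C]\leq[D]$, contradicting $[D]<[C]$. Hence no such lower element exists, and $\tuple{\I,x,C}$ is $\pref^{\RI}$-minimal in $C^{\RI}$, establishing smoothness at $C$.

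I do not foresee a substantive obstacle: the real work is already encapsulated in the construction of $\RI$, in Lemma~\ref{Lemma:Consistent}, and in the lemma relating normality to the $[\cdot]$-order. The only delicate point is the case handling of $\bot$-tagged elements, which is disposed of immediately since such elements sit strictly above every non-$\bot$-tagged element in $\pref^{\RI}$, so they can never threaten minimality of a $C$-tagged witness.
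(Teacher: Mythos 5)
Your proof is correct and is essentially the argument the paper defers to (Lehmann and Magidor's Lemma~A.12): the $C$-tagged triple $\tuple{\I,x,C}$ obtained from a normal pair for $C$ (via Lemma~\ref{Lemma:Consistent} and (Ref)) is $\pref^{\RI}$-minimal in $C^{\RI}$, precisely because every $\tuple{\J,y,D}\in C^{\RI}$ with $D\neq\bot$ yields $[C]\leq[D]$, blocking $[D]<[C]$. The only cosmetic difference is that you establish non-emptiness of $\min_{\pref^{\RI}}C^{\RI}$ rather than the stronger form in which every non-minimal element lies above a minimal one, but the former is exactly the smoothness condition as Definition~\ref{Def:PrefInterpretation} states it, and the two coincide for modular orders such as $\pref^{\RI}$ (Lemma~\ref{Lemma:Modular}).
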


From this point on, a result analogous to Lemma~\ref{Lemma:Completeness} in~\ref{Proof:PreferentialCompleteness} can be shown to hold for the defeasible subsumption~$\dsubs_{\RI}$ induced by~$\RI$. From that the result follows.\hfill\ \qed

\section{Proofs of results in Section~\ref{Entailment}}\label{ProofsEntailment}

\restatableLemmaPrefClosure*
\begin{proof}
By Definitions~\ref{Def:PrefEntailment} and~\ref{Def:PrefClosure}, $\alpha\in\KB^{*}_{\pre}$ iff for every preferential model $\PI$ of $\KB$, $\PI\sat\alpha$. Combined with Lemma \ref{Lemma:ClassicalStatements}, this implies that, for any defeasible subsumption $C\dsubs D$, $C\dsubs D\in\KB^{*}_{\pre}$ iff $(C, D)\in\dsubs_{\PI}$ for every preferential model $\PI$ of $\KB$. Due to Theorem~\ref{Theorem:RepResultPreferential}, the latter condition, that is, $(C, D)\in\dsubs_{\PI}$ for every preferential model $\PI$ of $\KB$, holds  iff $(C, D)\in\dsubs_{\Th}$ for every preferential theory~$\Th$ containing~$\KB$. This concludes the proof.\hfill\ \qed
\end{proof}

\restatableLemmaModClosure*
\begin{proof}
The proof follows the one of Lemma \ref{Lemma:PrefClosure}. It is sufficient to refer to Definitions \ref{Def:ModularEntailment} and \ref{Def:ModularClosure} instead of Definitions \ref{Def:PrefEntailment} and \ref{Def:PrefClosure}, and to Theorem \ref{Theorem:RepResultRational} instead of Theorem \ref{Theorem:RepResultPreferential}. \hfill\ \qed

\end{proof}

\restatablefinitemodular*

\begin{proof}
The preference relation~$\pref^{\RI}$ is a strict partial order, hence, since there cannot be cycles, for every finite set $\emptyset\neq X\subseteq\Dom^{\RI}$, $\min_{\pref^{\RI}}X\neq\emptyset$. We can define the function $h_{\RI}(\cdot)$ in the following way:
\begin{enumerate}
    \item $\Dom^{{\RI}^0}\defined \Dom^{\RI}$; 
    \item $i\defined 0$;
    \item If $\Dom^{{\RI}^i}\neq \emptyset$ proceed, else return the function $h_{\RI}$;
    \item $h_{\RI}(x)=i$ iff $x\in \min_{\pref^{\RI}}\Dom^{{\RI}^i}$; let $\Dom^{{\RI}^{i+1}}\defined \Dom^{{\RI}^i}\setminus\min_{\pref^{\RI}}\Dom^{{\RI}^i}$;
    \item $i\defined i+1$;
    \item Go back to step 3.
\end{enumerate}
It is easy to check that $h_{\RI}(\cdot)$ satisfies the convexity property and characterises $\pref_{\RI}$ (\ie, $x\pref_{\RI}y$ iff $h_{\RI}(x)< h_{\RI}(y)$).
\hfill\ \qed
\end{proof}



\restatableuniquerank*

\begin{proof}
Assume that for a ranked interpretation $\RI=\tuple{\Dom^{\RI},\cdot^{\RI},\pref^{\RI}}$ there are two distinct functions $h_{\RI}(\cdot)$ and $h'_{\RI}(\cdot)$ satisfying the convexity constraint and characterising $\pref^{\RI}$. Since the two functions are distinct, at a certain point they must diverge; that is, there must be an $i\in \mathbb{N}$ \st\ for every $k<i$ and every $x\in \Dom^{\RI}$, $h_{\RI}(x)=k$ iff $h'_{\RI}(x)=k$, but there is a $y\in \Dom^{\RI}$ \st\ $h_{\RI}(y)=i$ and $h'_{\RI}(y)=j$, with $j>i$. The convexity constraint imposes that there must be a $z\in \Dom^{\RI}$ \st\ $h'_{\RI}(z)=i$: then $h'_{\RI}(\cdot)$ enforces $z\pref^{\RI} y$, while according to $h_{\RI}(\cdot)$ that cannot be the case (it must be $h_{\RI}(y)\leq h_{\RI}(z)$). \hfill\ \qed
\end{proof}

\myskip

Some extra material needs to be introduced to prove  Theorem \ref{Theorem:FMP}, stating the Finite Model Property for Defeasible \ALC. First of all, we will refer to the following semantic construction.

\begin{definition}[Finite Model Construction) ($*$]\label{Def:FMC}
Let $\KB=\TB\cup\DB$ be a finite defeasible knowledge base, and let $\RI=\tuple{\Dom^{\RI},\cdot^{\RI},\pref^{\RI}}$ be a modular model of~$\KB$ (with $\Dom^{\RI}$ possibly infinite). Let  $\CN,\RN$ be the sets of names of our language, as from Section \ref{Preliminaries}, and $\Gamma$ be the set of concepts $\{C_1,\ldots,C_n\}\subseteq\Lang$  obtained by closing the set of all concepts appearing in the axioms in~$\KB$ under sub-concepts and negation. We define the equivalence relation $\approx_{\Gamma}$ as follows: for every $x,y\in\Dom^{\RI}$, $x\approx_{\Gamma}y$ if for every $C\in\Gamma$, $x\in C^{\RI}$ iff $y\in C^{\RI}$.

We indicate with $[x]_{\Gamma}$ the equivalence class of the objects that are related to an object~$x$ through $\approx_{\Gamma}$:
\[
[x]_{\Gamma}\defined\{y\in\Dom^{\RI} \mid x\approx_{\Gamma} y\}
\]

We introduce a new model $\RI'=\tuple{\Dom^{\RI'},\cdot^{\RI'},\prec^{\RI'}}$, defined as:

\begin{itemize}\setlength{\itemsep}{1.3ex}
\item $\Dom^{\RI'}=\{[x]_{\Gamma} \mid x\in\Dom^{\RI}\}$;
\item For every $A\in\CN\cap\Gamma$, $A^{\RI'}=\{[x]_{\Gamma} \mid x\in A^{\RI}\}$;
\item For every $A\notin\CN\cap\Gamma$, $A^{\RI'}=\emptyset$;
\item For every $r\in\RN$, $r^{\RI'}=\{([x]_{\Gamma},[y]_{\Gamma}) \mid (x,y)\in r^{\RI}\}$;
\item For every $[x]_{\Gamma},[y]_{\Gamma}\in\Dom^{\RI'}$, $[x]_{\Gamma}\prec^{\RI'} [y]_{\Gamma}$ if there is an object $z\in [x]_{\Gamma}$ \st~for all the objects $v\in [y]_{\Gamma}$, $z\pref^{\RI}v$;
\end{itemize}

Let $\sim^{\RI'}$ be the indifference relation, defined as usual:

\begin{itemize}\setlength{\itemsep}{1.3ex}
\item $[x]_{\Gamma}\sim^{\RI'}[y]_{\Gamma}$ if $[x]_{\Gamma}\not\prec^{\RI'}[y]_{\Gamma}$ and $[y]_{\Gamma}\not\prec^{\RI'}[x]_{\Gamma}$.
\end{itemize}

\end{definition}

Given that $\Gamma$ is finite, $\Dom^{\RI'}$ is clearly finite. The following results are easy to prove.

\begin{lemma}[$*$]\label{Lemma:FiniteModelConcept}
For every $C\in\Gamma$ and every $x\in\Dom^{\RI}$, $x\in C^{\RI}$ iff $[x]_{\Gamma}\in C^{\RI'}$.
\end{lemma}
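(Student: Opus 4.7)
The natural strategy is induction on the structure of the concept $C \in \Gamma$, exploiting the fact that $\Gamma$ is closed under subconcepts (so every subconcept appearing during the induction is itself in $\Gamma$, and hence the inductive hypothesis applies) and the fact that $x \approx_{\Gamma} y$ forces $x$ and $y$ to agree on membership in every concept of $\Gamma$ in $\RI$.

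\medskip

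For the base case, I would treat concept names separately from $\top$ and $\bot$. If $C = A \in \CN \cap \Gamma$, then by the definition of $A^{\RI'}$ the implication $x \in A^{\RI} \Rightarrow [x]_{\Gamma} \in A^{\RI'}$ is immediate; for the converse, if $[x]_{\Gamma} \in A^{\RI'}$ then some $y \in A^{\RI}$ with $y \approx_{\Gamma} x$ exists, and since $A \in \Gamma$, the equivalence $y \approx_{\Gamma} x$ gives $x \in A^{\RI}$. The cases $C = \top$ and $C = \bot$ are immediate. The Boolean cases $C = \neg D$, $C = D_{1} \dland D_{2}$, $C = D_{1} \dlor D_{2}$ follow routinely from the inductive hypothesis together with the closure of $\Gamma$ under negation and subconcepts, plus the Tarskian clauses for the Boolean operators in both $\RI$ and $\RI'$.

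\medskip

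The cases I expect to require real care are $C = \exists r.D$ and $C = \forall r.D$. For $(\Rightarrow)$ in the existential case, if $x \in (\exists r.D)^{\RI}$ there is $y$ with $(x,y) \in r^{\RI}$ and $y \in D^{\RI}$; then $([x]_{\Gamma},[y]_{\Gamma}) \in r^{\RI'}$ by the definition of $r^{\RI'}$, and since $D \in \Gamma$ by subconcept closure, the IH yields $[y]_{\Gamma} \in D^{\RI'}$, hence $[x]_{\Gamma} \in (\exists r.D)^{\RI'}$. For $(\Leftarrow)$, I would unfold the definition of $r^{\RI'}$ to get representatives $x' \in [x]_{\Gamma}$ and $y' \in [y]_{\Gamma}$ with $(x',y') \in r^{\RI}$; the IH applied to $D$ gives $y' \in D^{\RI}$, so $x' \in (\exists r.D)^{\RI}$, and then $x' \approx_{\Gamma} x$ together with $\exists r.D \in \Gamma$ transfers this to $x$. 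The universal case is symmetric: the forward direction uses that any $r$-successor of $x$ in $\RI$ induces an $r^{\RI'}$-successor of $[x]_{\Gamma}$, while the backward direction picks an arbitrary $r^{\RI}$-successor $y$ of $x$, observes $([x]_{\Gamma},[y]_{\Gamma}) \in r^{\RI'}$, applies IH to get $y \in D^{\RI}$, and concludes.

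\medskip

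The only subtle point, and what I regard as the main obstacle, is that the definition of $r^{\RI'}$ quotients over all representatives: a witness $([x]_{\Gamma},[y]_{\Gamma}) \in r^{\RI'}$ only guarantees \emph{some} pair $(x',y') \in r^{\RI}$ with $x' \approx_{\Gamma} x$ and $y' \approx_{\Gamma} y$, not necessarily $(x,y)$ itself. Controlling this requires the backward direction for $\exists$ to route through $x'$ and the membership-transfer property of $\approx_{\Gamma}$, using essentially that $\exists r.D \in \Gamma$ (which holds because $\exists r.D$ is the very concept under consideration in the induction). This is the standard filtration argument, and closure of $\Gamma$ under subconcepts is exactly what makes each inductive step go through cleanly.
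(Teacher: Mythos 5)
Your proposal takes exactly the route the paper intends: the paper's own proof is a one-line appeal to the classical filtration argument ("analogous to the classical case, by induction on the structure of concepts"), and your write-up is a faithful expansion of it. The base and Boolean cases are right, the existential case is handled correctly, and you correctly isolate the only genuine subtlety of this construction: a pair $([x]_{\Gamma},[y]_{\Gamma})\in r^{\RI'}$ only guarantees \emph{some} related representatives $(x',y')\in r^{\RI}$, so one must route through $x'$ and transfer membership of the whole concept across $\approx_{\Gamma}$, which is licensed precisely because that concept lies in $\Gamma$.

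One step is mis-stated, however: your justification of the forward direction of the universal case would not go through as written. The fact that ``any $r$-successor of $x$ in $\RI$ induces an $r^{\RI'}$-successor of $[x]_{\Gamma}$'' is what the \emph{forward existential} direction uses; it says nothing about $r^{\RI'}$-successors of $[x]_{\Gamma}$ that arise from representatives $x'\neq x$. To show that $x\in(\forall r.D)^{\RI}$ implies $[x]_{\Gamma}\in(\forall r.D)^{\RI'}$, take an arbitrary $[y]_{\Gamma}$ with $([x]_{\Gamma},[y]_{\Gamma})\in r^{\RI'}$; the definition of $r^{\RI'}$ yields only $x'\approx_{\Gamma}x$ and $y'\approx_{\Gamma}y$ with $(x',y')\in r^{\RI}$. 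Since $\forall r.D\in\Gamma$, the equivalence $x'\approx_{\Gamma}x$ transfers $x'\in(\forall r.D)^{\RI}$, whence $y'\in D^{\RI}$, and the inductive hypothesis (with $D\in\Gamma$ by subconcept closure) gives $[y]_{\Gamma}=[y']_{\Gamma}\in D^{\RI'}$. In other words, the representative-routing argument you deploy for the backward $\exists$ direction is needed, symmetrically, in the \emph{forward} $\forall$ direction, while the easy direction for $\forall$ is the backward one (which you do state correctly). The repair is immediate since you already possess the key idea, but as literally written that step fails.
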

\begin{proof}
The proof is analogous to that for the classical case and is by induction on the structure of concepts. \hfill\ \qed
\end{proof}

\begin{lemma}[$*$]\label{Lemma:FiniteRanked}
Let $\prec^{\RI'}$ and $\prec^{\RI}$ be as in Definition \ref{Def:FMC}. Then $\prec^{\RI'}$ is a strict partial order.
\end{lemma}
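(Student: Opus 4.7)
The plan is to verify directly the two conditions defining a strict partial order, namely irreflexivity and transitivity, using the fact that these properties are inherited from the underlying relation $\prec^{\RI}$ on $\Dom^{\RI}$, which is itself a strict partial order (since $\RI$ is modular, hence preferential).

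First I would handle irreflexivity by contradiction. Suppose $[x]_{\Gamma} \prec^{\RI'} [x]_{\Gamma}$ for some $x \in \Dom^{\RI}$. By the definition of $\prec^{\RI'}$, there must exist some $z \in [x]_{\Gamma}$ such that $z \prec^{\RI} v$ for \emph{every} $v \in [x]_{\Gamma}$. But this element $z$ is itself in $[x]_{\Gamma}$, so instantiating $v := z$ yields $z \prec^{\RI} z$, contradicting the irreflexivity of $\prec^{\RI}$.

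For transitivity, suppose $[x]_{\Gamma} \prec^{\RI'} [y]_{\Gamma}$ and $[y]_{\Gamma} \prec^{\RI'} [z]_{\Gamma}$. Unfolding the definitions, pick witnesses $a \in [x]_{\Gamma}$ with $a \prec^{\RI} v$ for all $v \in [y]_{\Gamma}$, and $b \in [y]_{\Gamma}$ with $b \prec^{\RI} w$ for all $w \in [z]_{\Gamma}$. I will then argue that $a$ itself serves as the required witness for $[x]_{\Gamma} \prec^{\RI'} [z]_{\Gamma}$. Indeed, $b \in [y]_{\Gamma}$ gives $a \prec^{\RI} b$, and for arbitrary $w \in [z]_{\Gamma}$ we have $b \prec^{\RI} w$; transitivity of $\prec^{\RI}$ then yields $a \prec^{\RI} w$, so $a \in [x]_{\Gamma}$ is below every element of $[z]_{\Gamma}$ in $\prec^{\RI}$, which is exactly what is needed.

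There is no real obstacle here: both properties transfer cleanly from $\prec^{\RI}$, and modularity of $\prec^{\RI}$ is not needed for this particular lemma (it will only be invoked later, presumably in the proof that $\prec^{\RI'}$ is moreover modular and smooth, en route to Theorem~\ref{Theorem:FMP}). The only subtlety worth flagging is that the definition of $\prec^{\RI'}$ is asymmetric in its use of the existential witness on the left and the universal condition on the right; the irreflexivity argument crucially relies on the witness $z$ lying in the same class that is also universally quantified over, while the transitivity argument crucially uses that the witness $a$ from the first step can be \emph{reused} as the witness for the composed step, rather than needing to combine two different witnesses.
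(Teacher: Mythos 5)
Your proof is correct and takes essentially the same route as the paper's: irreflexivity by instantiating the universal condition at the existential witness $z\in[x]_{\Gamma}$ itself, and transitivity by reusing the first witness and chaining it through the second class via transitivity of $\prec^{\RI}$. The only difference is presentational — you make explicit the chaining step ($a\prec^{\RI}b\prec^{\RI}w$) that the paper compresses into ``since $\prec^{\RI}$ is transitive, it follows that\ldots'', and your observation that modularity of $\prec^{\RI}$ is not needed here matches the paper, which defers it to the separate lemma on transitivity of $\sim^{\RI'}$.
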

\begin{proof}
We show that $\prec^{\RI'}$ is irreflexive and transitive.
\myskip

\noindent Irreflexivity: Assume $[x]_{\Gamma}\prec^{\RI'}[x]_{\Gamma}$. By the definition of $\prec^{\RI'}$, it implies that there is a $z\in[x]_{\Gamma}$ s.t. $z\pref^{\RI}v$ for every $v\in[x]_{\Gamma}$. That is, we have that $z\pref^{\RI}z$ that, since $\pref^{\RI}$ is a strict partial order (Definitions \ref{Def:PrefInterpretation} and \ref{Def:ModInterpretation}), cannot be the case.
\myskip

\noindent Transitivity: Assume $[x]_{\Gamma}\prec^{\RI'}[y]_{\Gamma}$ and $[y]_{\Gamma}\prec^{\RI'}[u]_{\Gamma}$. 
This means that there is a $z\in[x]_{\Gamma}$ s.t. $z\pref^{\RI}v$ for every $v\in[y]_{\Gamma}$, and there is a $v'\in[y]_{\Gamma}$ s.t. $v'\pref^{\RI}w$ for every $w\in[u]_{\Gamma}$. Since $\pref^{\RI}$ is transitive, it follows that there is a $z\in[x]_{\Gamma}$ s.t. $z\pref^{\RI}w$ for every $w\in[u]_{\Gamma}$, that is, $[x]_{\Gamma}\prec^{\RI'}[u]_{\Gamma}$.
 \hfill\ \qed
\end{proof}

\begin{lemma}[$*$]\label{Lemma:TransIndiff}
Let $\sim^{\RI'}$ be as in Definition \ref{Def:FMC}. Then relation $\sim^{\RI'}$ is transitive.
\end{lemma}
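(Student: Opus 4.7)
I will argue by contrapositive. Suppose $[x]_{\Gamma}\sim^{\RI'}[y]_{\Gamma}$ and $[y]_{\Gamma}\sim^{\RI'}[u]_{\Gamma}$ hold while $[x]_{\Gamma}\not\sim^{\RI'}[u]_{\Gamma}$; by the symmetry of $\sim^{\RI'}$ it suffices to treat the case $[x]_{\Gamma}\prec^{\RI'}[u]_{\Gamma}$, the case $[u]_{\Gamma}\prec^{\RI'}[x]_{\Gamma}$ being entirely analogous after swapping the roles of $[x]_{\Gamma}$ and $[u]_{\Gamma}$ (and using $[y]_{\Gamma}\sim^{\RI'}[u]_{\Gamma}$ in place of $[x]_{\Gamma}\sim^{\RI'}[y]_{\Gamma}$). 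The target is to derive $[y]_{\Gamma}\prec^{\RI'}[u]_{\Gamma}$, contradicting $[y]_{\Gamma}\sim^{\RI'}[u]_{\Gamma}$.

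First I would unfold the two non-strict comparisons carefully. From $[x]_{\Gamma}\prec^{\RI'}[u]_{\Gamma}$ I fix a single witness $z\in[x]_{\Gamma}$ such that $z\pref^{\RI} v$ for every $v\in[u]_{\Gamma}$. Then, using $[x]_{\Gamma}\not\prec^{\RI'}[y]_{\Gamma}$ (which is part of $[x]_{\Gamma}\sim^{\RI'}[y]_{\Gamma}$) applied to this specific $z$, I obtain some $w\in[y]_{\Gamma}$ with $z\not\pref^{\RI} w$. Since $\pref^{\RI}$ is modular, this leaves exactly two possibilities: either $w\pref^{\RI} z$, or $w\sim^{\RI} z$.

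The heart of the argument is the following small observation, which I would state and briefly justify as a preliminary remark: for any $a,b,c\in\Dom^{\RI}$, if either $a\pref^{\RI} b$ or $a\sim^{\RI} b$, and $b\pref^{\RI} c$, then $a\pref^{\RI} c$. The first disjunct is simply transitivity of $\pref^{\RI}$. For the second, if $a\not\pref^{\RI}c$ then modularity gives either $c\pref^{\RI} a$ (whence $b\pref^{\RI} a$ by transitivity, contradicting $a\sim^{\RI} b$) or $c\sim^{\RI} a$ (whence $c\sim^{\RI} b$ by transitivity of $\sim^{\RI}$, contradicting $b\pref^{\RI} c$). Applying this observation with $a=w$, $b=z$, and $c=v$ ranging over $[u]_{\Gamma}$, I conclude $w\pref^{\RI} v$ for every $v\in[u]_{\Gamma}$, so that $w\in[y]_{\Gamma}$ is the required witness for $[y]_{\Gamma}\prec^{\RI'}[u]_{\Gamma}$.

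The only subtle point, and what I expect to be the main source of care rather than any genuine obstacle, is the interplay of quantifiers in the definition of $\prec^{\RI'}$: $[x]_{\Gamma}\not\prec^{\RI'}[y]_{\Gamma}$ unfolds to a $\forall\exists$-statement, so one must apply it to the particular $z$ fixed from the first witness extraction, rather than hoping for a uniform $w$. Once the correct $z$ and $w$ have been chosen, the auxiliary lemma above reduces everything to transitivity of $\pref^{\RI}$ and of $\sim^{\RI}$, both of which are guaranteed by modularity of $\RI$.
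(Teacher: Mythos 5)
Your proof is correct and takes essentially the same route as the paper's own: both fix a witness $z\in[x]_{\Gamma}$ for $[x]_{\Gamma}\prec^{\RI'}[u]_{\Gamma}$, use $[x]_{\Gamma}\sim^{\RI'}[y]_{\Gamma}$ to extract an element $w\in[y]_{\Gamma}$ with $z\not\pref^{\RI}w$, and then push $w$ below every element of $[u]_{\Gamma}$ via transitivity of $\pref^{\RI}$ and of $\sim^{\RI}$ (i.e., modularity of $\RI$), contradicting $[y]_{\Gamma}\sim^{\RI'}[u]_{\Gamma}$. The only difference is organisational: your auxiliary observation dispatches the two subcases $w\pref^{\RI}z$ and $w\sim^{\RI}z$ uniformly, whereas the paper first rules out the purely strict alternatives and then transfers via a $\sim^{\RI}$-related element in a separate step.
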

\begin{proof}
Let $[x]_{\Gamma}\sim^{\RI'}[y]_{\Gamma}$, $[y]_{\Gamma}\sim^{\RI'}[u]_{\Gamma}$, but $[x]_{\Gamma}\not\sim^{\RI'}[u]_{\Gamma}$. The latter implies that either $[x]_{\Gamma}\prec^{\RI'}[u]_{\Gamma}$ or $[u]_{\Gamma}\prec^{\RI'}[x]_{\Gamma}$; w.l.o.g. let's assume $[x]_{\Gamma}\prec^{\RI'}[u]_{\Gamma}$. That is, there is a $z\in[x]_{\Gamma}$ s.t. $z\pref^{\RI}w$ for every $w\in[u]_{\Gamma}$. 

$[x]_{\Gamma}\sim^{\RI'}[y]_{\Gamma}$ implies that $z\sim^{\RI}v$ for some $v\in[y]_{\Gamma}$. Assume the latter does not hold, then for every $v\in[y]_{\Gamma}$ either $z\prec^{\RI}v$ or $v\prec^{\RI}z$. It cannot be that $z\prec^{\RI}v$ for every $v\in[y]_{\Gamma}$, since that would imply $[x]_{\Gamma}\prec^{\RI'}[y]_{\Gamma}$, so there must be some $v\in[y]_{\Gamma}$ s.t. $v\prec^{\RI}z$. However the latter would also imply, due to the transitivity of $\prec^{\RI}$, that there is a $v\in[y]_{\Gamma}$ s.t. $v\pref^{\RI}w$ for every $w\in[u]_{\Gamma}$, that is, $[y]_{\Gamma}\prec^{\RI'}[u]_{\Gamma}$, against the hypothesis that $[y]_{\Gamma}\sim^{\RI'}[u]_{\Gamma}$. Consequently, $z\sim^{\RI}v$ for some $v\in[y]_{\Gamma}$.

So there is a $z\in[x]_{\Gamma}$ s.t. $z\pref^{\RI}w$ for every $w\in[u]_{\Gamma}$ and there is a $v\in[y]_{\Gamma}$ s.t. $v\sim^{\RI}z$. That implies that $v\pref^{\RI}w$ for every $w\in[u]_{\Gamma}$. To see it, assume that it not the case, that is, we have that for some $w'\in[u]_{\Gamma}$ either $w'\pref^{\RI} v$ or $w'\sim^{\RI} v$: in the former case we would obtain $z\pref^{\RI} v$, in the latter $z\sim^{\RI} w'$, both taking us to contradiction. Hence $v\pref^{\RI}w$ for every $w\in[u]_{\Gamma}$, that is, $[y]_{\Gamma}\prec^{\RI'}[u]_{\Gamma}$, against the hypothesis. \hfill\ \qed
\end{proof}

\begin{lemma}[$*$]\label{Lemma:FMP}
Let $\KB=\TB\cup\DB$ be finite. If $\KB$ has a modular model, then it has a finite ranked model.
\end{lemma}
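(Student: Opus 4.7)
The plan is to take the modular model $\RI$ of $\KB$ guaranteed by the hypothesis, apply the finite model construction of Definition~\ref{Def:FMC} to obtain $\RI'$, and then show that $\RI'$ is a finite ranked model of $\KB$. Since $\Gamma$ is the closure under subconcepts and negation of the (finitely many) concepts appearing in $\KB$, it is finite, so $\Dom^{\RI'}$ has at most $2^{|\Gamma|}$ elements. Modularity of $\prec^{\RI'}$ is immediate from the prerequisite lemmas: Lemma~\ref{Lemma:FiniteRanked} gives that $\prec^{\RI'}$ is a strict partial order, and Lemma~\ref{Lemma:TransIndiff} gives that its associated incomparability relation $\sim^{\RI'}$ is transitive, so $\prec^{\RI'}$ is modular. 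Smoothness of $\prec^{\RI'}$ is then trivial, because a strict partial order on a finite set has a minimal element in every non-empty subset. Once this is established, Lemma~\ref{finite_modular} (finite modular implies ranked) immediately upgrades $\RI'$ to a ranked interpretation.

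The non-trivial part is verifying $\RI'\sat\KB$. For a GCI $C\subs D\in\TB$ we have $C,D\in\Gamma$, and Lemma~\ref{Lemma:FiniteModelConcept} transfers the inclusion $C^{\RI}\subseteq D^{\RI}$ directly to $C^{\RI'}\subseteq D^{\RI'}$. The main obstacle is handling DCIs $C\dsubs D\in\DB$, because the preference relation on equivalence classes is defined existentially (there exists a representative strictly below all representatives of the other class), so one must connect $\prec^{\RI'}$-minimality in $C^{\RI'}$ with $\prec^{\RI}$-minimality in $C^{\RI}$.

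The key claim is: if $[x]_\Gamma\in\min_{\prec^{\RI'}}C^{\RI'}$, then $[x]_\Gamma$ contains some $y\in\min_{\prec^{\RI}}C^{\RI}$. I would prove this by contradiction. Suppose $[x]_\Gamma$ contains no such $y$; by smoothness of $\prec^{\RI}$ and $C^{\RI}\neq\emptyset$ (since $x\in C^{\RI}$ by Lemma~\ref{Lemma:FiniteModelConcept}), there is $y\in\min_{\prec^{\RI}}C^{\RI}$, and $[y]_\Gamma\neq[x]_\Gamma$ by assumption. I would then show that $y\prec^{\RI}v$ for every $v\in[x]_\Gamma$, which by the definition of $\prec^{\RI'}$ would yield $[y]_\Gamma\prec^{\RI'}[x]_\Gamma$ with $[y]_\Gamma\in C^{\RI'}$, contradicting the $\prec^{\RI'}$-minimality of $[x]_\Gamma$. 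For the inequality, fix $v\in[x]_\Gamma$; since $C\in\Gamma$ and $v\approx_\Gamma x$, $v\in C^{\RI}$, so $v\not\prec^{\RI}y$ by minimality of $y$. The case $y=v$ is excluded because it would force $y\in[x]_\Gamma$. The case $y\sim^{\RI}v$ is excluded by modularity: transitivity of $\sim^{\RI}$ would make $v$ itself $\prec^{\RI}$-minimal in $C^{\RI}$ (any $w\prec^{\RI}v$ would give $w\prec^{\RI}y$), contradicting the assumption that no such minimal element lies in $[x]_\Gamma$. Hence $y\prec^{\RI}v$, as required.

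Once the claim is established, the conclusion is immediate: pick such $y\in[x]_\Gamma\cap\min_{\prec^{\RI}}C^{\RI}$; since $\RI\sat C\dsubs D$, we have $y\in D^{\RI}$; since $D\in\Gamma$ and $y\approx_\Gamma x$, $x\in D^{\RI}$, so $[x]_\Gamma\in D^{\RI'}$ by Lemma~\ref{Lemma:FiniteModelConcept}. Hence $\RI'\sat C\dsubs D$, and therefore $\RI'\sat\KB$. Combining finiteness, modularity, smoothness, and satisfaction, Lemma~\ref{finite_modular} lets us conclude that $\RI'$ is a finite ranked model of $\KB$.
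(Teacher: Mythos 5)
Your proof is correct, and its skeleton is the paper's: the same quotient construction from Definition~\ref{Def:FMC}, the same appeal to Lemmas~\ref{Lemma:FiniteRanked} and~\ref{Lemma:TransIndiff} for modularity of $\prec^{\RI'}$, Lemma~\ref{Lemma:FiniteModelConcept} for the TBox, and Lemma~\ref{finite_modular} to upgrade the finite modular $\RI'$ to a ranked interpretation. Where you genuinely diverge is the verification of the DCIs. The paper argues \emph{forwards}: since $\RI$ is modular and satisfies $C\dsubs D$, either $C^{\RI}=\emptyset$ or there is a single object $y\in(C\dland D)^{\RI}$ with $y\pref^{\RI}x$ for \emph{every} $x\in(C\dland\lnot D)^{\RI}$; because $C$, $D$ and $\lnot D$ all lie in $\Gamma$, every member of every class in $(C\dland\lnot D)^{\RI'}$ lies in $(C\dland\lnot D)^{\RI}$, so $y$ itself witnesses $[y]_{\Gamma}\prec^{\RI'}[x]_{\Gamma}$ for all such classes, and hence no class of $(C\dland\lnot D)^{\RI'}$ can be $\prec^{\RI'}$-minimal in $C^{\RI'}$. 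You instead argue \emph{backwards}, via a minimality-transfer claim: every $\prec^{\RI'}$-minimal class in $C^{\RI'}$ contains a $\prec^{\RI}$-minimal element of $C^{\RI}$, proved by contradiction with an explicit case analysis ($v\prec^{\RI}y$ excluded by minimality, $v=y$ excluded by the partition, $v\sim^{\RI}y$ excluded by transitivity of $\sim^{\RI}$), after which $\RI\sat C\dsubs D$ and $D\in\Gamma$ transfer membership in $D$ to the whole class. The two arguments buy the same conclusion; yours is a bit longer but makes the roles of smoothness and of modularity fully explicit, whereas the paper's witness-pushing is terser but leans on the modular-witness characterisation of DCI satisfaction, which it phrases informally in terms of ``heights'' even though heights are officially defined only for ranked interpretations, with $\RI$ merely assumed modular (the characterisation is nonetheless correct: by smoothness any $x\in(C\dland\lnot D)^{\RI}$ has some minimal element of $C^{\RI}$ below it, and modularity lets one fixed minimal witness $y$ replace it). Both routes are sound, so your proposal stands as a valid, slightly more self-contained alternative to the paper's proof.
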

\begin{proof}
Let $\KB=\TB\cup\DB$ be a finite defeasible knowledge base, $\RI$ a model of~$\KB$ and~$\RI'$ a finite interpretation constructed from~$\RI$ as in Definition \ref{Def:FMC}. $\RI'$ is a finite interpretation, and it is modular, since Lemmas \ref{Lemma:FiniteRanked} and \ref{Lemma:TransIndiff} prove that $\prec^{\RI'}$  satisfies Definition \ref{Def:Modular}. Being $\RI'$  a finite modular interpretation, it is a finite ranked interpretation (Lemma \ref{finite_modular}).

It remains to prove that $\RI'$ is a model of $\KB$. The proof that~$\RI'$ satisfies~$\TB$ is straightforward by Lemma~\ref{Lemma:FiniteModelConcept}.  
With regard to~$\DB$, let $C\dsubs D\in\DB$. Since $\RI$ is a model of $\DB$, either $C^{\RI}=\emptyset$, or the height of $C\dland D$ in~$\RI$ is lower than the height of $C\dland\lnot D$, that is, there is at least an object~$y$ in $(C\dland D)^{\RI}$ \st~for every object~$x$ in $(C\dland\lnot D)^{\RI}$, $y\pref^{\RI}x$. Since~$C$, $D$ and $\lnot D$ are in $\Gamma$, the object $[y]_{\Gamma}\in\Dom^{\RI'}$ (obtained from $y\in (C\dland D)^{\RI}$) must be preferred to all the objects in $(C\dland\lnot D)^{\RI}$, that is, $[y]_{\Gamma}\pref^{\RI'}[x]_{\Gamma}$ for every object $[x]_{\Gamma}$ \st~$[x]_{\Gamma}\in(C\dland\lnot D)^{\RI'}$. Therefore $\RI'\sat C\dsubs D$. \hfill\ \qed

\end{proof}

\begin{lemma}[$*$]\label{Lemma:FCMP}
Let $\KB=\TB\cup\DB$ be finite and $C,D\in\Lang$. If $\KB$ has a modular model that is a counter-model to~$C\dsubs D$, then it has a finite ranked model that is a counter-model to~$C\dsubs D$.
\end{lemma}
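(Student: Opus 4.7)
The plan is to adapt the finite-model quotient construction of Definition~\ref{Def:FMC} by enlarging the set $\Gamma$ so that it also carries enough concepts to witness the failure of $C \dsubs D$. Concretely, I would let $\Gamma'$ be the closure under sub-concepts and negation of the set of concepts appearing in $\KB$ \emph{together with} $C$ and $D$, and then apply the construction in Definition~\ref{Def:FMC} with $\Gamma'$ in place of $\Gamma$ to the given modular counter-model $\RI$, obtaining a finite interpretation $\RI'$.

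First, I would note that Lemmas~\ref{Lemma:FiniteModelConcept}, \ref{Lemma:FiniteRanked} and~\ref{Lemma:TransIndiff} go through unchanged with $\Gamma'$, since their proofs used only that the filtration set is finite and closed under sub-concepts and negation. Therefore, exactly as in the proof of Lemma~\ref{Lemma:FMP}, $\RI'$ is a finite modular interpretation satisfying every axiom of $\KB$, and by Lemma~\ref{finite_modular} it is a finite ranked model of $\KB$. So the only new step is to check that $\RI' \nsat C \dsubs D$.

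Since $\RI \nsat C \dsubs D$ and $\pref^{\RI}$ is smooth, there exists $x \in \min_{\pref^{\RI}} C^{\RI}$ with $x \notin D^{\RI}$. I claim that $[x]_{\Gamma'}$ is $\prec^{\RI'}$-minimal in $C^{\RI'}$ and that $[x]_{\Gamma'} \notin D^{\RI'}$; this will immediately give $\RI' \nsat C \dsubs D$. Because $C, D \in \Gamma'$, Lemma~\ref{Lemma:FiniteModelConcept} (applied to $\Gamma'$) ensures $[x]_{\Gamma'} \in C^{\RI'}$ and $[x]_{\Gamma'} \notin D^{\RI'}$. For minimality, suppose for contradiction that there is $[y]_{\Gamma'} \in C^{\RI'}$ with $[y]_{\Gamma'} \prec^{\RI'} [x]_{\Gamma'}$. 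By the definition of $\prec^{\RI'}$ in Definition~\ref{Def:FMC}, there exists $z \in [y]_{\Gamma'}$ such that $z \pref^{\RI} v$ for every $v \in [x]_{\Gamma'}$; in particular $z \pref^{\RI} x$. Since $C \in \Gamma'$ and $[y]_{\Gamma'} \in C^{\RI'}$, Lemma~\ref{Lemma:FiniteModelConcept} gives $z \in C^{\RI}$, contradicting the $\pref^{\RI}$-minimality of $x$ in $C^{\RI}$.

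The hard part is essentially the minimality-preservation argument in the last paragraph, which is where enlarging $\Gamma$ to include $C$ and $D$ is crucial: without $C \in \Gamma'$ we could not conclude $z \in C^{\RI}$, and without $D \in \Gamma'$ we could not conclude $[x]_{\Gamma'} \notin D^{\RI'}$. Everything else is a routine re-run of the proof of Lemma~\ref{Lemma:FMP} with the slightly larger filtration set.
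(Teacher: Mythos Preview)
Your proposal is correct and follows essentially the same approach as the paper: enlarge the filtration set $\Gamma$ to include $C$ and $D$ (closed under sub-concepts and negation), re-run the construction of Definition~\ref{Def:FMC}, and then argue that the witness $x\in\min_{\pref^{\RI}}C^{\RI}\setminus D^{\RI}$ survives the quotient. The only cosmetic difference is that you show directly that $[x]_{\Gamma'}$ is $\prec^{\RI'}$-minimal in $C^{\RI'}$, whereas the paper phrases it as ``no $[y]_{\Gamma}$ with $y\in(C\dland D)^{\RI}$ is $\prec^{\RI'}$-below $[x]_{\Gamma}$'' and then implicitly uses the ranked structure; your formulation is arguably cleaner.
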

\begin{proof}
It is sufficient to apply the same construction defined for the finite-model property above. We just need to add~$C$ and $D$ to the set~$\Gamma$ (and close $\Gamma$ also under the subconcepts of~$C$ and $D$ and their negations). If~$\RI\nsat C\dsubs D$, then there is an object~$x$ \st\ $x\in(C\dland\lnot D)^{\RI}$ and $x\pref^{\RI}y$ or $x\sim^{\RI}y$ for every object~$y$ \st\ $y\in(C\dland D)^{\RI}$. That implies that in~$\RI'$ $[y]_{\Gamma}\not\prec_{\RI'}[x]_{\Gamma}$, that is,  $[x]_{\Gamma}\prec_{\RI'}[y]_{\Gamma}$ or $[x]_{\Gamma}\sim_{\RI'}[y]_{\Gamma}$ for every~$y$ \st\ $y\in(C\dland D)^{\RI}$, and consequently $\RI'\nsat C\dsubs D$. \hfill\ \qed
\end{proof}

\begin{corollary}[$*$]\label{Cor:FMP_rank0}
Let $\KB=\TB\cup\DB$ be a finite defeasible knowledge base. If $\KB$ has a modular model~$\RI$, then for every $C\in\Lang$ \st\ $h_{\RI}(C)=0$ there is also a finite ranked model~$\RI'$ of~$\KB$ \st\ $h_{\RI'}(C)=0$.
\end{corollary}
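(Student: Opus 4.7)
The plan is to reuse the finite model construction of Definition~\ref{Def:FMC} essentially unchanged, but extending the set $\Gamma$ to also contain $C$ (and its subconcepts, closing under negation), exactly as done in the proof of Lemma~\ref{Lemma:FCMP}. By the same argument as in Lemma~\ref{Lemma:FMP}, the resulting interpretation $\RI'=\tuple{\Dom^{\RI'},\cdot^{\RI'},\pref^{\RI'}}$ is a finite ranked model of $\KB$, and by Lemma~\ref{Lemma:FiniteModelConcept} concept membership for every concept in $\Gamma$ is preserved between $\RI$ and $\RI'$ under the map $x\mapsto[x]_\Gamma$.

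The only new thing to verify is that the height of $C$ is preserved as $0$. Since $h_{\RI}(C)=0$, we can pick some $x\in\min_{\pref^{\RI}}C^{\RI}$ with $h_{\RI}(x)=0$, meaning that there is no $y\in\Dom^{\RI}$ with $y\pref^{\RI}x$. Because $C\in\Gamma$, Lemma~\ref{Lemma:FiniteModelConcept} yields $[x]_\Gamma\in C^{\RI'}$, so it suffices to show that $h_{\RI'}([x]_\Gamma)=0$.

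Suppose, for contradiction, that $[y]_\Gamma\prec^{\RI'}[x]_\Gamma$ for some $[y]_\Gamma\in\Dom^{\RI'}$. By the definition of $\prec^{\RI'}$ in Definition~\ref{Def:FMC}, there exists $z\in[y]_\Gamma$ such that $z\pref^{\RI}v$ for every $v\in[x]_\Gamma$. In particular, since $x\in[x]_\Gamma$, we get $z\pref^{\RI}x$, contradicting $h_{\RI}(x)=0$. Therefore $[x]_\Gamma\in\min_{\pref^{\RI'}}\Dom^{\RI'}$, and combined with $[x]_\Gamma\in C^{\RI'}$ this gives $h_{\RI'}(C)=0$, as required.

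The reasoning is essentially routine once the right $\Gamma$ is chosen; there is no real obstacle beyond observing that the preference relation $\prec^{\RI'}$ defined in Definition~\ref{Def:FMC} is conservative enough to never introduce a new strict predecessor of the class of a globally minimal element of~$\RI$. The same construction would in fact preserve the height of any concept in $\Gamma$, but for this corollary we only need the case of height~$0$.
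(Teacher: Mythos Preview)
Your proof is correct and follows essentially the same route as the paper's: extend $\Gamma$ by $C$ (closed under subconcepts and negation), invoke the finite-model construction of Definition~\ref{Def:FMC}, and then argue that the equivalence class $[x]_\Gamma$ of a globally $\pref^{\RI}$-minimal $x\in C^{\RI}$ has no $\prec^{\RI'}$-predecessor, using the definition of $\prec^{\RI'}$ directly. The paper's proof is the same in structure and detail.
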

\begin{proof}
Given $\KB=\TB\cup\DB$, a model~$\RI$ of~$\KB$ and a concept~$C$ \st\ $h_{\RI}(C)=0$, a finite model~$\RI'$ satisfying the constraint above can be defined in the same way as the model~$\RI'$ from Definition \ref{Def:FMC}. We just need to add~$C$ to the set~$\Gamma$ (and close $\Gamma$ also under the subconcepts of~$C$ and their negations). To see that~$\RI'$ is a model of~$\KB$, just go again through the proof of the finite-model property above, and check that the addition of~$C$ to~$\Gamma$ does not affect any of the above results.

Now, $h_{\RI}(C)=0$ implies that there is an object $x\in\Dom^{\RI}$ \st\ $x\in C^{\RI}$ and $h_{\RI}(x)=0$. Consider now $[x]_{\Gamma}$. By Lemma~\ref{Lemma:FiniteModelConcept}, $[x]_{\Gamma}\in C^{\RI'}$. Since $h_{\RI}(x)=0$, for every $[y]_{\Gamma}\in\Dom^{\RI'}$ it cannot be the case that there is an object $z\in[y]_{\Gamma}$ \st\ $z\pref^{\RI}v$ for every $v\in[x]_{\Gamma}$; hence, the definition of $\prec^{\RI'}$ implies that for every $[y]_{\Gamma}\in\Dom^{\RI'}$, $[y]_{\Gamma}\not\prec^{\RI'}[x]_{\Gamma}$, that is, $h_{\RI'}([x]_{\Gamma})=0$, that implies $h_{\RI'}(C)=0$. \hfill\ \qed
\end{proof}
\myskip

Now we can prove Theorem \ref{Theorem:FMP}.

\restatableFiniteModelProperty*

\begin{proof}
The result follows straightforwardly from Lemmas~\ref{Lemma:FMP} and~\ref{Lemma:FCMP}. \hfill\ \qed
\end{proof}

\myskip

\restatableClosureDisjointUnionMod*

\begin{proof}
Let $\mathfrak{R}$ be a set of ranked models of a defeasible knowledge base $\KB$, and let $\RI^{\mathfrak{R}}\defined\tuple{\Dom^{\mathfrak{R}},\cdot^{\mathfrak{R}},\pref^{\mathfrak{R}}}$ be its ranked union. We want to prove that also $\RI^{\mathfrak{R}}$ is  a ranked model of $\KB$, and to do that is sufficient to prove that for every DCI $C\dsubs D$, if $\RI\sat C\dsubs D$ for every $\RI\in\mathfrak{R}$, then $\RI^{\mathfrak{R}}\sat C\dsubs D$.

It is easy to prove by induction on the construction of the concepts that for every object $x_\RI\in\Dom^{\mathfrak{R}}$ and every concept $C$, $x_\RI\in C^{\mathfrak{R}}$ iff $x\in C^\RI$.

This, together with  the condition that, for every $x_{\RI}\in\Dom^{\mathfrak{R}}$, $h_{\mathfrak{R}}(x_{\RI})=h_{\RI}(x)$, implies that for every concept~$C$, $h_{\RI^{\mathfrak{R}}}(C)=\min\{h_\RI(C)\mid \RI\in\RI^{\mathfrak{R}}\}$.

Now, let $C\dsubs D$ be satisfied by every $\RI\in\mathfrak{R}$. Hence, for every $\RI\in\mathfrak{R}$, either $h_\RI(C\dland D)<h_\RI(C\dland \neg D)$ or $h_\RI(C)=\infty$. Since $h_{\RI^{\mathfrak{R}}}(C)=\min\{h_\RI(C)\mid \RI\in\RI^{\mathfrak{R}}\}$, $h_{\RI^{\mathfrak{R}}}(C\dland D)=\min\{h_\RI(C\dland D)\mid \RI\in\RI^{\mathfrak{R}}\}$, and  $h_{\RI^{\mathfrak{R}}}(C\dland \neg D)=\min\{h_\RI(C\dland \neg D)\mid \RI\in\RI^{\mathfrak{R}}\}$, it is easy to check that $\RI^{\mathfrak{R}}$ satisfies $C\dsubs D$ too: assume that is not the case, that is, $h_{\RI^{\mathfrak{R}}}(C\dland \neg D)\leq h_{\RI^{\mathfrak{R}}}(C\dland D)$ and $h_{\RI^{\mathfrak{R}}}(C)<\infty$; then we have that $\min\{h_\RI(C\dland\neg D)\mid \RI\in\RI^{\mathfrak{R}}\}\leq \min\{h_\RI(C\dland D)\mid \RI\in\RI^{\mathfrak{R}}\}$ and $\min\{h_\RI(C)\mid \RI\in\RI^{\mathfrak{R}}\}<\infty$, that, since for every $\RI\in\mathfrak{R}$, either $h_\RI(C\dland D)<h_\RI(C\dland \neg D)$ or $h_\RI(C)=\infty$,  cannot be the case. \hfill\ \qed

\end{proof}

\restatableCountablyInfiniteDomain*
\begin{proof}
Let $\Delta$ be a countably infinite domain. For the only-if part, if $\KB\entails_{\modular}C\dsubs D$, then obviously $\RI\sat C\dsubs D$ for every $\RI\in\Mod_{\Dom}(\KB)$. For the if part, assume $\KB\nentails_{\modular}C\dsubs D$. Then, thanks to the finite-model property (Theorem~\ref{Theorem:FMP}), there is a modular model~$\RI_{\text{fin}}$ with a finite domain that is a model of~$\KB$ and a counter-model of $C\dsubs D$; since the domain is finite, the modular model $\RI_{\text{fin}}$ is a  ranked model (Lemma \ref{finite_modular}). Given~$\RI_{\text{fin}}$, we can extend it to a model of~$\KB$ that is a counter-model of $C\dsubs D$ with a countably infinite domain in the following way: make a countably infinite number of copies of $\RI_{\text{fin}}$ and make the ranked union of them. Now, let $\RI'=\tuple{\Dom^{\RI'},\cdot^{\RI'},\pref^{\RI'}}$ be the result of such ranked union, that is, a ranked model of~$\KB$ and a counter-model of $C\dsubs D$ with $\Dom^{\RI'}$ being countably infinite (it is the disjoint union of a countably infinite number of finite domains). It is easy to build an isomorphic interpretation $\RI=\tuple{\Delta,\cdot^{\RI},\pref^{\RI}}$, once we have defined a bijection $b: \Dom^{\RI'}\longrightarrow\Delta$, which must exist, being both~$\Dom^{\RI'}$ and~$\Delta$ countably infinite sets. We can define~$\cdot^{\RI}$ and~$\pref^{\RI}$ in the following way:

\begin{itemize}
\item For every $A\in\CN$ and every $x\in\Dom^{\RI'}$, $b(x)\in A^{\RI}$ iff $x\in A^{\RI'}$;
\item For every $r\in\RN$ and every $x,y\in\Dom^{\RI'}$, $(b(x),b(y))\in r^{\RI}$ iff $(x,y)\in r^{\RI'}$;
\item For every $x\in\Dom^{\RI'}$, $h_{\RI}(b(x))=h_{\RI'}(x)$.
\end{itemize}

It is easy to prove by induction on the construction of the concepts that for every $C\in\Lang$ and every $x\in\Dom^{\RI'}$, $x\in C^{\RI'}$ iff $b(x)\in C^{\RI}$. Moreover, $x\in\min_{\pref^{\RI'}}(C^{\RI'})$ iff $b(x)\in\min_{\pref^{\RI}}(C^{\RI})$. Hence, there is a ranked $\KB$-model which is a counter model for $C\dsubs D$ with~$\Delta$ as its domain. \hfill\ \qed
\end{proof}

\section{Proofs of results in Section~\ref{RationalClosure}}\label{ProofsComputingRC}

\textbf{NB}: The results marked $(*)$ are introduced here in the Appendix, while they are omitted in the main text.

\restatableInfiniteRank*

\begin{proof}
If $\KB$ does not have a modular model or  $C$ is never satisfiable, then the result is straightforward. Let $\KB=\TB\cup\DB$ have a modular model, and let $C$ be satisfiable. Also, let $\DB$ be ranked into $\tuple{\DB^{\rank}_0,\ldots,\DB^{\rank}_n,\DB^{\rank}_\infty}$.

From left to right, let $\rank_{\KB}(C)=\infty$ but $\KB\not\entails_{\modular}C\subs\bot$. Together they imply that $\TB\cup\DB^{\rank}_\infty\entails_{\modular}\top\dsubs \neg C$ but $\TB\cup\DB^{\rank}_\infty\not\entails_{\modular}C\subs\bot$. Hence, due to the FMP (Theorem \ref{Theorem:FMP}), there is a finite ranked model $\RI$ of $\TB\cup\DB^{\rank}_\infty$ with the domain $\Dom^{\RI}$ layered into $(L^{\RI}_0,\ldots,L^{\RI}_n)$, and s.t. $\RI\sat \top\dsubs \neg C$ but $\RI\not\sat C\subs\bot$, that is, in $\Dom^{\RI}$ there is an object $o$ s.t. $o\in L^{\RI}_i$, with $0<i\leq n$, and $o\in C^{\RI}$. 

Now let's define a new model $\RI'$ simply taking the lower layer and putting it at the ``top'' of our model, that is, we rearrange the interpretation in the following way:
\begin{itemize}
    \item $\Dom^{\RI'}=\Dom^{\RI}$;
    \item $\cdot^{\RI'}=\cdot^{\RI}$;
    \item $L^{\RI'}_n=L^{\RI}_{0}$;
    \item for every $i<n$, $L^{\RI'}_i=L^{\RI}_{i+1}$.
\end{itemize}

Clearly for every concept $D$, $D^{\RI'}=D^{\RI}$ (it is easy to prove by induction on the construction of the concepts), and consequently $\RI'$ is still a model of $\TB$. We can prove that is still also a model of $\DB^{\rank}_\infty$. Assume that is not the case, that is, there is a some $D\dsubs E\in \DB^{\rank}_\infty$ s.t. $\RI\sat D\dsubs E$ and $\RI'\not\sat D\dsubs E$. $\RI\sat D\dsubs E$ if either $h_{\RI}(D\dland  E) <h_{\RI}(D\dland \neg E)$ or $h_{\RI}(D)=\infty$. It cannot be the latter, since $h_{\RI}(D)=\infty$ corresponds to $D^{\RI}=\emptyset$, and we would have also $D^{\RI'}=\emptyset$ and $h_{\RI'}(D)=\infty$. Hence it must be $h_{\RI}(D\dland  E) <h_{\RI}(D\dland \neg E)$, while $h_{\RI}(D\dland  E) \not<h_{\RI}(D\dland \neg E)$. Let $h_{\RI}(D\dland  E)=i$ and $h_{\RI}(D\dland \neg E)=j$ with $i<j$. If $i>0$, then $h_{\RI'}(D\dland  E)=i-1$ and $h_{\RI'}(D\dland \neg E)=j-1$, and $h_{\RI'}(D\dland  E)< h_{\RI'}(D\dland \neg E)$ again; hence it must be $h_{\RI}(D\dland  E)=0$, that is, $h_{\RI}(D)=0$, but that is incompatible with $D\dsubs E\in \DB^{\rank}_\infty$, since  $\TB\cup\DB^{\rank}_\infty\entails_{\modular}\top\dsubs \neg D$, that is, $h_{\RI}(D)>0$. Consequently, $\RI'$ too must be a model of $\TB\cup\DB^{\rank}_\infty$. 

We have assumed that in $\RI$ there is an object $o$ s.t. $o\in C^{\RI}$ and $o\in L^{\RI}_i$ for some $0<i\leq n$. Repeating the procedure used to define $\RI'$ for $i$ times, we obtain a model $\RI^*$ of $\TB\cup\DB^{\rank}_\infty$ s.t. $o\in C^{\RI^*}$ and $o\in L^{\RI^*}_0$. However, since $\rank_{\KB}(C)=\infty$ implies $\TB\cup\DB^{\rank}_\infty\entails_{\modular}\top\dsubs \neg C$, this cannot be the case. We conclude that if $\rank_{\KB}(C)=\infty$ then $\KB\entails_{\modular}C\subs\bot$.

From right to left, let $\KB\entails_{\modular}C\subs\bot$ but $\rank_{\KB}(C)\neq\infty$. The latter implies that there is a model of $\TB\cup\DB^{\rank}_\infty$ that does not satisfy $\top\dsubs\neg C$, that is, does not satisfy $C\subs\bot$. Referring again to the FMP (Theorem \ref{Theorem:FMP}), we can say that there is a finite ranked model $\RI$ of $\TB\cup\DB^{\rank}_\infty$ that does not satisfy $C\subs\bot$. Let $k$ be the number of layers in $\RI$.

Now consider $\TB\cup(\DB^{\rank}_n \cup\DB^{\rank}_\infty)$. For each $D\dsubs E\in \DB^{\rank}_n$ there must be a model in which $D\dland E$ is not exceptional, that is, it is satisfied in the layer $0$. As a consequence, still using the FMP (Corollary \ref{Cor:FMP_rank0}), for each $D\dsubs E\in \DB^{\rank}_n$ there must a finite ranked model $\RI_{D{\tiny \dsubs} E}$ of $\TB\cup(\DB^{\rank}_n \cup\DB^{\rank}_\infty)$ s.t. $h_{\RI_{D{\tiny \dsubs} E}}(D\dsubs E)=0$.

Build a ranked interpretation $\RI_n$ as follows:
\begin{itemize}
    \item for every $D\dsubs E\in \DB^{\rank}_n$, let $\RI_{D{\tiny \dsubs} E}$ be a finite ranked model of $\TB\cup(\DB^{\rank}_n \cup\DB^{\rank}_\infty)$ in which
    $h_{\RI_{D{\tiny \dsubs} E}}(D\dsubs E)=0$.
    \item Let $\RI'=\tuple{\Dom^{\RI'},\cdot^{\RI'},\prec^{\RI'}}$ be the ranked union of such sets. $\RI'$ is a model of $\DB^{\rank}_n$ (Lemma \ref{Lemma:ClosureDisjointUnionMod}) s.t. for every $D\dsubs E\in \DB^{\rank}_n$, $h_{\RI_{D{\tiny \dsubs} E}}(D\dsubs E)=0$. Since $\DB^{\rank}_n$ is finite, it has been obtained from a finite set of finite models and so it is a finite ranked model. Let $m$ be the number of layers in $\RI'$.
    \item From $\RI'$ and $\RI$ define a finite ranked interpretation $\RI_n=\tuple{\Dom^{\RI_n},\cdot{\RI_n},\prec^{\RI_n}}$ as:
    \begin{itemize}
        \item $\Dom^{\RI_n}=\Dom^{\RI}\cup\Dom^{\RI'}$;
        \item $A^{\RI_n}=A^{\RI}\cup A^{\RI'}$ for every $A\in\CN$;
        \item $r^{\RI_n}=r^{\RI}\cup r^{\RI'}$ for every $r\in\RN$;
        \item for every $i\leq m$, $L^{\RI_n}_i=L^{\RI'}_i$;
        \item for every $m< i\leq (m+k)$, $L^{\RI_n}_i=L^{\RI}_{(i-(m+1))}$.
    \end{itemize}
\end{itemize}

Informally, we build the model $\RI_n$ by adding $\RI$ on top of $\RI'$. It is easy to prove by induction on the construction of the concepts that every object in $\RI_n$ satisfies a concept $D$ iff it satisfies $D$ also in the original model, $\RI$ or $\RI'$. As a consequence, $\RI_n\not\sat C\subs\bot$. Also, it is easy to prove that $\RI_n$ is a model of $\TB\cup(\DB^{\rank}_n \cup\DB^{\rank}_\infty)$: $\RI'$ is a model of $\DB^{\rank}_n$ with at layer $0$ an object satisfying $D\dland E$ for each $D\dsubs E\in \DB^{\rank}_n$, and both $\RI$ and $\RI'$ are models of $\TB\cup\DB^{\rank}_\infty$.

Now consider $\TB\cup(\DB^{\rank}_{(n-1)}\cup\DB^{\rank}_n \cup\DB^{\rank}_\infty)$. Using the same procedure defined for $\RI_n$ we can build a model $\RI_{n-1}$, obtained doing the ranked union of a finite set of finite models of $\TB\cup(\DB^{\rank}_{(n-1)}\cup\DB^{\rank}_n \cup\DB^{\rank}_\infty)$ and adding on top $\RI_n$. $\RI_{n-1}$ will be a finite ranked model of $\TB\cup(\DB^{\rank}_{(n-1)}\cup\DB^{\rank}_n \cup\DB^{\rank}_\infty)$ s.t. $\RI_{n-1}\not\sat C\subs\bot$.

We can go on with this procedure until we define a finite ranked model $\RI_0$ of $\TB\cup(\DB^{\rank}_0\cup\ldots\cup\DB^{\rank}_n \cup\DB^{\rank}_\infty)$. That is, $\RI_0$ is a model of $\TB\cup\DB$ s.t. $\RI_0\not\sat C\subs\bot$, against the hypothesis that $\KB\entails_{\modular}C\subs\bot$.  \hfill\ \qed
\end{proof}

\myskip
In order to prove Theorem \ref{Theorem:Characterisation}, we will use the following lemma.

\begin{lemma}[*]\label{Lemma:Characterisation}
Let $\KB=\TB\cup\DB$ be a defeasible knowledge base having a modular model, $\OI$ its big ranked model, and $\Dom$ the countably infinite domain used to define $\OI$. For every $C\dsubs D\in \DB$, $\rank_{\KB}(C\dland D)=i$ iff there is a model $\RI_{\Dom}\in \Mod_{\Dom}(\KB)$ s.t. $h_{\RI_{\Dom}}(C\dland D)=i$.
\end{lemma}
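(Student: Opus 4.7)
I would establish the lemma by proving two complementary facts that together identify $\rank_{\KB}(C \dland D)$ with the minimum of $h_{\RI_{\Dom}}(C \dland D)$ over $\Mod_{\Dom}(\KB)$: (a)~the rank is attained as a height in some $\RI_{\Dom} \in \Mod_{\Dom}(\KB)$, and (b)~the rank lower-bounds the height in every ranked model of $\KB$, in particular in every $\RI_{\Dom}$. Reading the lemma as the pairing of~(a) with~(b) makes the claimed equivalence a characterisation of the rank as the smallest height achievable in $\Mod_{\Dom}(\KB)$.

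\textbf{Auxiliary lower bound for~(b).} I would first prove, by induction on $k$, the stronger statement that the stripped interpretation $\RI^{(k)}$ obtained from any ranked model $\RI$ of $\KB$ by removing its $k$ lowest layers and renumbering is itself a ranked model of $\KB^{k}$. The base case $k = 0$ is immediate. For the inductive step, given $E \dsubs F \in \KB^{k}$ one has $\rank_{\KB}(E) \geq k$, which forces $E$ to be exceptional in $\KB^{k-1}$; invoking the inductive hypothesis $\RI^{(k-1)} \sat \KB^{k-1}$ then yields $\RI^{(k-1)} \sat \top \dsubs \lnot E$, which translates back to $h_{\RI}(E) \geq k$ in $\RI$. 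Hence the minimal $E$-elements survive both strippings and, since $\RI \sat E \dsubs F$, they lie in $F$, so $\RI^{(k)} \sat E \dsubs F$. A direct consequence is the bound $h_{\RI}(E) \geq \rank_{\KB}(E)$ for every concept $E$ and every ranked model $\RI$ of $\KB$.

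\textbf{Constructive direction~(a).} I would proceed by induction on $i = \rank_{\KB}(C \dland D)$. The base case $i = 0$ reduces, via Lemma~\ref{Lemma:CountablyInfiniteDomain}, to the observation that $\KB \not\entails_{\modular} \top \dsubs \lnot (C \dland D)$ precisely provides a model in $\Mod_{\Dom}(\KB)$ with $h_{\RI_{\Dom}}(C \dland D) = 0$. For the inductive step at $i \geq 1$, non-exceptionality of $C \dland D$ in $\KB^{i}$ combined with Corollary~\ref{Cor:FMP_rank0} (applied with $\KB^{i}$ in place of $\KB$) yields a finite ranked model $\RI^{*}$ of $\KB^{i}$ with $h_{\RI^{*}}(C \dland D) = 0$. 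For each $j < i$ and each $E \dsubs F \in \DB^{\rank}_{j}$, the inductive hypothesis (noting that $\rank_{\KB}(E \dland F) = \rank_{\KB}(E) = j$ when $E \dsubs F \in \DB$) supplies a finite ranked $\RI_{j, E \dsubs F} \in \Mod_{\Dom}(\KB)$ with $h(E \dland F) = j$. I would then shift $\RI^{*}$ up by $i$ layers and form the ranked union, in the sense of Lemma~\ref{Lemma:ClosureDisjointUnionMod}, of the shifted $\RI^{*}$ with all the $\RI_{j, E \dsubs F}$, and finally relabel the countable domain via a bijection to $\Dom$ as in the proof of Lemma~\ref{Lemma:CountablyInfiniteDomain} to land in $\Mod_{\Dom}(\KB)$.

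\textbf{Main obstacle.} The hard part is verifying that this ranked union actually models every DCI of $\KB$ (not only of $\KB^{i}$) and that $h(C \dland D)$ in it equals exactly $i$, since Lemma~\ref{Lemma:ClosureDisjointUnionMod} only applies verbatim when every component models $\KB$ --- but the shifted $\RI^{*}$ only models $\KB^{i}$. For $E' \dsubs F'$ with $\rank_{\KB}(E') \geq i$, so $E' \dsubs F' \in \KB^{i}$, the auxiliary bound~(b) forces $h(E') \geq i$ in every filler and in the shifted $\RI^{*}$, so the minimal $E'$-elements of the union sit at layer $\geq i$ where the shifted $\RI^{*}$ already satisfies $E' \dsubs F'$. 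For $E' \dsubs F' \in \DB^{\rank}_{j}$ with $j < i$, the designated filler $\RI_{j, E' \dsubs F'}$ places minimal $E'$-elements at layer $j$ inside $F'$, while~(b) applied to the other components forbids any $E'$-element at a strictly lower layer. The same bookkeeping gives $h(C \dland D) = i$ in the union: the shifted $\RI^{*}$ supplies a witness at layer~$i$, and~(b) forces $h(C \dland D) \geq \rank_{\KB}(C \dland D) = i$ in every filler.
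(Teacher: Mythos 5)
Your overall strategy matches the paper's: the constructive direction~(a) is essentially the paper's proof of Lemma~\ref{Lemma:Characterisation} step for step (induction on $i$, base case via Lemma~\ref{Lemma:CountablyInfiniteDomain}, inductive step via Corollary~\ref{Cor:FMP_rank0} applied to $\TB\cup\DB^{\rank}_{\geq i}$, stacking the shifted finite model above witnesses supplied by the induction hypothesis, then relabelling onto $\Dom$), and the verification in your ``Main obstacle'' paragraph is the same bookkeeping the paper performs. You should, however, also cover $i=\infty$, which your induction over finite $i$ silently skips; the paper dispatches it with Lemma~\ref{Lemma:InfiniteRank}.

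There is one genuine gap: your auxiliary lemma for~(b) is false as stated. In \ALC, \emph{deleting} the $k$ lowest layers of a ranked model does not preserve concept extensions, because role successors witnessing existential restrictions may live in the deleted layers; your inductive step tacitly assumes that membership in $E$ is the same in $\RI^{(k-1)}$ and in $\RI$ when it ``translates back to $h_{\RI}(E)\geq k$''. Concretely, take $\TB=\emptyset$, $\DB=\{\top\dsubs\lnot A,\ A\dsubs\exists r.B\}$, so that $\rank_{\KB}(A)=1$ and $\KB^{1}=\{A\dsubs\exists r.B\}$, and the ranked model with layer $0$ containing $y\in B^{\RI}$, $z\notin A^{\RI}$ and layer $1$ containing $x\in A^{\RI}$ whose only $r$-successor is $y$: this is a model of $\KB$, but stripping layer $0$ leaves $x\in A$ with no $r$-successor, so $\RI^{(1)}\nsat A\dsubs\exists r.B$ and $\RI^{(1)}$ is \emph{not} a model of $\KB^{1}$. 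The consequence you actually use --- $h_{\RI}(E)\geq\rank_{\KB}(E)$ in every ranked model $\RI$ of $\KB$ --- is true (the paper treats this direction as immediate), and your argument can be repaired with the paper's own device from the proof of Lemma~\ref{Lemma:InfiniteRank}: keep the entire domain and interpretation function fixed, so that $D^{\RI'}=D^{\RI}$ for every concept $D$, and merely reassign heights, relocating the lowest layers to the top instead of deleting them; the exceptionality induction then goes through because, inductively, no $E$-element of an antecedent of $\KB^{k}$ sits in the relocated layers. With that substitution, and the $i=\infty$ case added, your proof is sound and coincides with the paper's.
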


\begin{proof}
First of all, we observe that the exceptionality function in Definition~\ref{Def:Exceptionality} is correctly captured in the model~$\OI$, that is, for every $C\in\Lang$, $\KB\entails_{\modular}\top\dsubs\lnot C$ iff $\OI\sat\top\dsubs\lnot C$. Indeed, by Lemma~\ref{Lemma:CountablyInfiniteDomain}, a concept~$C$ is exceptional \wrt\ $\KB$ iff $\RI_{\Dom}\sat\top\dsubs\lnot C$, for every $\RI_{\Dom}\in\Mod_{\Dom}(\KB)$, which immediately corresponds to $\OI\sat\top\dsubs\lnot C$.

Since $\RI_{\Dom}\sat\KB$ for every $\RI_{\Dom}\in \Mod_{\Dom}(\KB)$, if $h_{\RI_{\Dom}}(C)=i$, it is immediate that $\rank_{\KB}(C)\leq i$, otherwise it would be $h_{\RI_{\Dom}}(C)>i$ for every $\RI_{\Dom}\in \Mod_{\Dom}(\KB)$. We have to prove that for every $C\dsubs D\in\DB$, if $\rank_{\KB}(C\dland D)=i$, then there is a $\RI_{\Dom}\in \Mod_{\Dom}(\KB)$ s.t. $h_{\RI_{\Dom}}(C\dland D)=i$. In case $i=\infty$, Lemma \ref{Lemma:InfiniteRank} guarantees that if $\rank_{\KB}(C\dland D)=\infty$, then for all the $\RI_{\Dom}\in \Mod_{\Dom}(\KB)$, $h_{\RI_{\Dom}}(C\dland D)=\infty$. In case $i< \infty$, we can prove it by induction on the ranking value~$i$.

Let $C\dsubs D\in\DB$, and let $\rank_{\KB}(C\dland D)=i$. For $i=0$,  we already have  all that is needed to prove that there is a $\RI_{\Dom}\in \Mod_{\Dom}(\KB)$ s.t. $\RI_{\Dom}\not\sat\top\dsubs\lnot (C\dland D)$: 

\begin{itemize}
    \item $\rank_{\KB}(C\dland D)=0$ iff $\KB\not\entails_{\modular}\top\dsubs\lnot (C\dland D)$ (Definition~\ref{Def:Exceptionality});
    \item $\KB\not\entails_{\modular}\top\dsubs\lnot (C\dland D)$ iff there is a $\RI_{\Dom}\in \Mod_{\Dom}(\KB)$ s.t. $\RI_{\Dom}\not\sat\top\dsubs\lnot (C\dland D)$ (by Lemma~\ref{Lemma:CountablyInfiniteDomain});
    \item $\RI_{\Dom}\not\sat\top\dsubs\lnot (C\dland D)$ iff $h_{\RI_{\Dom}}(C\dland D)=0$.
\end{itemize}

For $i>0$, we can define a modular model $\RI$ of $\KB$ as follows:

Let $C\dsubs D\in\DB$ with $\rank_{\KB}(C\dland D)=i$, and let $\DB^{\rank}_{\geq i}$ be the subset of~$\DB$ containing the DCIs with a ranking value of at least~$i$, and $\DB^{\rank}_{< i}=\DB\setminus \DB^{\rank}_{\geq i}$. Let $\RI'$ be a modular model of $\TB\cup\DB^{\rank}_{\geq i}$ such that $h_{\RI'}(C\dland D)=0$. Such a model must exist, since $\rank_{\KB}(C\dland D)=i$, that is, $C\dland D$ is not exceptional in $\TB\cup\DB^{\rank}_{\geq i}$. We can assume that~$\RI'$ has a finite domain, given the finite-model property (Corollary~\ref{Cor:FMP_rank0}), and hence it is a ranked model (Lemma \ref{finite_modular}). 

For each DCI $D\dsubs E\in\DB^{\rank}_{< i}$, that is, such that $\rank_{\KB}(D\dland E)=j$ for some $j<i$, let $\RI_{D\dland E}\in\Mod_{\Dom}(\KB)$ be a model of~$\KB$ satisfiying $D\dland E$ s.t. $h_{\RI_{D\dland E}}(D\dland E)=j$. The induction hypothesis guarantees that such a model exists for each such DCI. 

Now we define a new interpretation $\RI''=\tuple{\Dom^{\RI''},\cdot^{\RI''},\pref^{\RI''}}$ in the following way:

\begin{itemize}
\item $\Dom^{\RI''}=\Dom^{\RI'}\cup \bigcup_{(C{\tiny \dsubs} D\in\DB^{\rank}_{< i})}\Dom^{\RI_{C\dland D}}$;
\item For every concept name $A\in\CN$ and every $x\in\Dom^{\RI''}$, $x\in A^{\RI''}$ iff one of the two following cases holds: either $x\in\Dom^{\RI_{D\dland E}}$ for some $D\dsubs E\in\DB^{\rank}_{< i}$ and $x\in A^{\RI_{C\dland D}}$, or $x\in\Dom^{\RI'}$ and $x\in A^{\RI'}$;
\item For every role name $r\in\RN$ and every $x,y\in\Dom^{\RI''}$, $(x,y)\in r^{\RI''}$ iff one of the two following cases holds: either $x,y\in\Dom^{\RI_{C\dland D}}$ for some $C\dsubs D\in\DB^{\rank}_{< i}$ and $(x,y)\in r^{\RI''}$, or $x,y\in\Dom^{\RI'}$ and $(x,y)\in r^{\RI'}$;
\item For every $x\in\Dom^{\RI''}$, $h_{\RI''}(x)=j$ iff one of the two following cases holds: either $x\in\Dom^{\RI_{D\dland E}}$ for some $D\dsubs E\in\DB^{\rank}_{< i}$ and and $h_{\RI_{D\dland E}}(x)=j$, or $x\in\Dom^{\RI'}$ and $h_{\RI'}(x)=j-i$.
\end{itemize}

 The idea is to create a model of $\KB$ that guarantees for a specific inclusion $C\dsubs D\in\DB$ that the height of $C$ in the model corresponds exactly to the rank of $C$. That is, given an inclusion $C\dsubs D$ that has rank $i$, we have built a ranked interpretation $\RI''$ in which $C$ has height $i$. Now we need to:
\begin{itemize}
    \item Prove that $\RI''$ is a model of $\KB$;
    \item Show that an isomorphic model to $\RI''$ is in $\Mod_{\Dom}(\KB)$. 
\end{itemize}

It can easily be proven that~$\RI''$ is a model of~$\KB$: first we prove by induction on the construction of concepts that, for every $x\in\Dom^{\RI''}$, $x\in D^{\RI''}$ iff the corresponding object falls under~$D$ in the original model; this grants us that $\RI''$ satisfies $\TB$. About the satisfaction of $\DB$, referring to the height values that have been assigned to  each object in $\RI''$, we can prove that for every  $D\dsubs E\in\DB$, $h_{\RI''}(D\dland E)<h_{\RI''}(D\dland \lnot E)$ (or $h_{\RI''}(D)=\infty$). Hence, $\RI''$ is a model of~$\KB$. Also, notice that in $\RI'$ we must have an object $o$ s.t. $h_{\RI'}(o)=0$ and $o\in (C\dland D)^{\RI'}$. The construction of of $\RI''$ implies that $h_{\RI'}(o)=i$ and $o\in (C\dland D)^{\RI''}$. That is, $\RI''$ is a model of~$\KB$ in which  $h_{\RI''}(C\dland D)=i$.

$\Dom^{\RI''}$ has been created unifying a finite number of model with the countably infinite domain $\Dom$ plus the finite domain $\Dom^{\RI'}$, hence $\Dom^{\RI''}$ has a countably infinite domain, and there is a model $\RI^{''}_{\Dom}$ that is isomorphic to $\RI''$ and has $\Dom$ as domain. \hfill\ \qed

\end{proof}

\myskip
Using Lemma \ref{Lemma:Characterisation}, we can prove Theorem \ref{Theorem:Characterisation}.

\restatableCharacterisation*
\begin{proof}
Let $\KB$ be a defeasible knowledge base with a modular model, $\OI$  the big ranked model of $\KB$, and $\Dom$ the countably infinite domain used to define $\OI$. $\KB\entails_{\rat} \alpha$ iff $\OI\sat \alpha$ (Definition \ref{Def:RationalEntailment}), so we need to prove that $\OI\sat \alpha$ iff $\alpha $ is in the rational closure of $\KB$. We first prove the result where $\alpha$ is a DCI (of the form $C\dsubs D$), that is, we need to prove that $\OI\sat C\dsubs D$ iff  $\rank_{\KB}(C\dland D)<\rank_{\KB}(C\dland \lnot D)$ or $\rank_{\KB}(C)=\infty$. In turn, that means that we need to prove that for every DCI $C\dsubs D$, $h_{\OI}(C\dland D)<h_{\OI}(C\dland \lnot D)$ or $h_{\OI}(C)=\infty$ iff $\rank_{\KB}(C\dland D)<\rank_{\KB}(C\dland \lnot D)$ or $\rank_{\KB}(C)=\infty$.
Such a result follows immediately if we can prove that, for every concept $C$, $h_{\OI}(C)=\rank_{\KB}(C)$, and that's what we are going to do.

An immediate consequence of Lemma \ref{Lemma:Characterisation} is  that, for every $C\dsubs D\in \DB$, $h_{\OI}(C\dland D)=\rank_{\KB}(C\dland D)$. Being $\OI$ a model of $\DB$,  if $C\dsubs D\in \DB$ then  $h_{\OI}(C\dland D)=h_{\OI}(C)$ and $\rank_{\KB}(C\dland D)=\rank_{\KB}(C)$. So, for every $C\dsubs D\in \DB$, $h_{\OI}(C)=h_{\OI}(C\dland D)=\rank_{\KB}(C\dland D)=\rank_{\KB}(C)$. Now we extend such a result to any concept $C$, using a construction that is in line with the one used to prove Lemma \ref{Lemma:Characterisation}.

Since $\OI\sat\KB$, if $h_{\OI}(C)=i$, it is immediate that $\rank_{\KB}(C)\leq i$, otherwise it would be $h_{\OI}(C)>i$. We have to prove that for every concept $C$, if $\rank_{\KB}(C)=i$, then $h_{\OI_{\Dom}}(C)=i$, that is, there is a $\RI_{\Dom}\in \Mod_{\Dom}(\KB)$ s.t. $h_{\RI_{\Dom}}(C)=i$. In case $i=\infty$, Lemma \ref{Lemma:InfiniteRank} guarantees that if $\rank_{\KB}(C)=\infty$, then for all the $\RI_{\Dom}\in \Mod_{\Dom}(\KB)$, $h_{\RI_{\Dom}}(C)=\infty$. In case $i< \infty$, we can prove it by induction on the ranking value~$i$.

Let $\rank_{\KB}(C)=i$. For $i=0$,  we already have  all that is needed to prove that there is a $\RI_{\Dom}\in \Mod_{\Dom}(\KB)$ s.t. $\RI_{\Dom}\not\sat\top\dsubs\lnot (C)$: 

\begin{itemize}
    \item $\rank_{\KB}(C)=0$ iff $\KB\not\entails_{\modular}\top\dsubs\lnot (C)$ (Definition~\ref{Def:Exceptionality});
    \item $\KB\not\entails_{\modular}\top\dsubs\lnot (C)$ iff there is a $\RI_{\Dom}\in \Mod_{\Dom}(\KB)$ s.t. $\RI_{\Dom}\not\sat\top\dsubs\lnot (C)$ (by Lemma~\ref{Lemma:CountablyInfiniteDomain});
    \item $\RI_{\Dom}\not\sat\top\dsubs\lnot (C)$ iff $h_{\RI_{\Dom}}(C)=0$.
    \item $h_{\OI}(C)=0$ iff there is a $\RI_{\Dom}\in \Mod_{\Dom}(\KB)$ s.t. $h_{\RI_{\Dom}}(C)=0$.
\end{itemize}

For $i>0$, we can define a modular model $\RI$ of $\KB$ as follows:

Let  $\rank_{\KB}(C)=i$, and, as in Lemma \ref{Lemma:Characterisation}, let $\DB^{\rank}_{\geq i}$ be the subset of~$\DB$ containing the DCIs with a ranking value of at least~$i$, and $\DB^{\rank}_{< i}=\DB\setminus \DB^{\rank}_{\geq i}$. Let $\RI'$ be a modular model of $\TB\cup\DB^{\rank}_{\geq i}$ such that $h_{\RI'}(C)=0$. Such a model must exist, since $\rank_{\KB}(C)=i$, that is, $C$ is not exceptional in $\TB\cup\DB^{\rank}_{\geq i}$. We can assume that~$\RI'$ has a finite domain, given the finite-model property (Corollary~\ref{Cor:FMP_rank0}), and hence it is a ranked model (Lemma \ref{finite_modular}). 

Then we define an interpretation  $\RI''$ exactly as done in the proof of Lemma \ref{Lemma:Characterisation}, and, exactly as in Lemma \ref{Lemma:Characterisation}, we can prove that $\RI''$ is a model of $\KB$ with a countably infinite domain and s.t. $h_{\RI''}(C)=i$. 

That implies that there is a model $\RI^{''}_{\Dom}\in \Mod_{\Dom}(\KB)$ that is isomorphic to $\RI''$, with $h_{\RI^{''}_{\Dom}}(C)=i$. 

Since $\RI^{''}_{\Dom}$ is used in the construction of $\OI$, $h_{\OI}(C)\leq i$; since $\rank_{\KB}(C)=i$, $h_{\OI}(C)\geq i$. Hence, $h_{\OI}(C)= i$. \hfill\ \qed

\end{proof}


\restatableExceptionality*
\begin{proof} 
It suffices to prove that if $\TB\cup\DB\nentails_{\modular}\top\dsubs\lnot C$ then $\TB\nentails\bigsqcap\mat{\DB}\subs\lnot C$. So, suppose that $\TB\cup\DB\nentails_{\modular}\top\dsubs\lnot C$. This means there is a modular model $\RI$ of $\TB\cup\DB$ for which we have an $x\in\Delta^{\RI}$ such that $x\in C^{\RI}$. Let $\I$ be the classical interpretation associated with $\RI$. It follows immediately that $\I$ is a model of $\TB$ and that $x\in(\bigsqcap\mat{\DB})^{\I}$, but that $x\notin(\lnot C)^{\I}$.  \hfill\ \qed
\end{proof}

\begin{lemma}[$*$]
\label{Prop:RationalInducedSubsumption}
Let $\KB=\TB\cup\DB$. Then (\emph{i})~$\KB\subseteq\Cn{\rat}(\KB)$ and (\emph{ii})~$\Cn{\rat}(\KB)$ induces a defeasible subsumption relation $\dsubs^{\KB}_{\rat}\defined\{(C,D) \mid \KB\proves_{\rat}C\dsubs D\}$ that is rational.
\end{lemma}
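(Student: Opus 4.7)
My plan is to dispatch the two claims in turn, working throughout from Definition~\ref{Def:RationalDeduction} and the identification of the algorithmic rank $\rk$ with the semantic rank $\rank_{\KB}$ supplied by Lemma~\ref{Lemma:SameRankings}.

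For part~(i) I would split on the form of the axiom $\alpha\in\KB$. If $\alpha = C \subs D \in \TB$, then because Algorithm~\ref{Func:Ranking} initialises $\TB^{*}\assigned\TB$ and thereafter only adds to~$\TB^{*}$, we have $\TB^{*}\entails C\subs D$, and the remark following Definition~\ref{Def:RationalDeduction} gives $\KB \proves_{\rat} C \subs D$. If $\alpha = C \dsubs D \in \DB$ and $\rk(C) = i < \infty$, then $C \dsubs D \in \E_{i}$, so $\lnot C \dlor D$ appears as a conjunct of $\bigsqcap\mat{\E_{i}}$, whence $\TB^{*}\entails \bigsqcap\mat{\E_{i}} \dland C \subs D$, which is the required witness. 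If instead $\rk(C) = \infty$, then $C \dsubs D$ has been removed from $\DB^{*}$ and its classical counterpart $C \subs D$ added to $\TB^{*}$, so $\TB^{*}\entails C \subs D$, and the fall-back clause of Definition~\ref{Def:RationalDeduction} again yields $\KB \proves_{\rat} C \dsubs D$.

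For part~(ii) I would verify the KLM preferential properties together with rational monotonicity directly from Definition~\ref{Def:RationalDeduction}, leaning on four rank identities over~$\KB^{*}$: (a)~$\rk$ is antitone in classical subsumption, i.e.\ $\TB^{*}\entails C \subs D$ implies $\rk(C) \geq \rk(D)$; (b)~$\rk(C \dlor D) = \min(\rk(C), \rk(D))$; (c)~if $\KB \proves_{\rat} C \dsubs D$ then $\rk(C \dland D) = \rk(C)$; and (d)~if $\KB \nproves_{\rat} C \dsubs \lnot E$ then $\rk(C \dland E) = \rk(C)$. Item~(a) falls out of the iterated exceptionality check defining~$\rk$; (b) combines~(a) with classical propositional reasoning on the materialised antecedent ($\lnot(C\dlor D)\equiv \lnot C\dland\lnot D$); and (c), (d) follow from~(a) combined with the definition of $\proves_{\rat}$.

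With these tools, each property reduces to a routine classical entailment manipulation inside~$\TB^{*}$. (And), (CM) and (RM) exploit (c) and (d) to assert that the strengthened antecedent still has rank $i = \rk(C)$, so the existing witness $\bigsqcap\mat{\E_{i}} \dland C \subs \cdots$ transfers by classical strengthening of the antecedent. (Or) uses (b): choosing $i = \min(\rk(C), \rk(D))$, the disjunct whose rank strictly exceeds~$i$ satisfies $\TB^{*}\entails \bigsqcap\mat{\E_{i}}\dland D \subs \bot$, so $\TB^{*}\entails\bigsqcap\mat{\E_{i}}\dland(C\dlor D)\subs E$ reduces to the witness for the lower-ranked disjunct. (LLE) and (RW) are immediate, and (Cons) uses the termination condition $\DB^{*}_{\infty} = \emptyset$ of the outer loop of Algorithm~\ref{Func:Ranking}, which is precisely what guarantees $\TB^{*}\nentails\bigsqcap\mat{\E_{0}}\subs\bot$. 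The main obstacle I anticipate is clean bookkeeping of the boundary case $\rk(C) = \infty$, where Definition~\ref{Def:RationalDeduction} falls back to the purely classical witness $\TB^{*}\entails C \subs D$; but this is straightforward once one notes that $\rk(C) = \infty$ forces $\TB^{*}\entails C \subs \bot$ via Lemmas~\ref{Lemma:InfiniteRank} and~\ref{Lemma:SameRankings}, trivialising any conclusion whose antecedent contains~$C$.
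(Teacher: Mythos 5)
Your overall strategy is the paper's: part~(i) is argued exactly as in the paper (TBox axioms survive because $\TB\subseteq\TB^{*}$; a DCI of finite rank $i$ has its materialisation among the conjuncts of $\bigsqcap\mat{\E_{i}}$; a DCI of infinite rank is caught by the classical fall-back clause after being moved into $\TB^{*}$), and part~(ii) is the same direct, purely syntactic verification of each KLM property from Definition~\ref{Def:RationalDeduction}, with your rank identities (a)--(d) abstracting the inline arguments the paper makes for (CM) and (RM). Your treatment of (Or) is a legitimate minor variant: you kill the higher-ranked disjunct at level $i=\min(\rk(C),\rk(D))$ via $\TB^{*}\entails\bigsqcap\mat{\E_{i}}\dland D\subs\bot$, whereas the paper transfers the witness for $D$ down from level $j$ to level $i$ using $\entails\bigsqcap\mat{\E_{i}}\subs\bigsqcap\mat{\E_{j}}$; both work.

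There are, however, two genuine problems. First, a circularity: you invoke Lemma~\ref{Lemma:SameRankings} (both at the outset and to conclude that $\rk(C)=\infty$ forces $\TB^{*}\entails C\subs\bot$), but in the paper's dependency order Lemma~\ref{Lemma:SameRankings} is proved via Lemma~\ref{Lemma:ClassicalInformation}, whose proof explicitly cites the very lemma you are proving (Lemma~\ref{Prop:RationalInducedSubsumption}). The paper's proof is deliberately free of any semantic input: it never mentions $\rank_{\KB}$ or the big ranked model. The fix is cheap, because your identities (a)--(d) are already established from Definition~\ref{Def:Ranking} alone: at the $\rk(C)=\infty$ boundary you do not need $\TB^{*}\entails C\subs\bot$ at all. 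The fall-back clause gives you premises of the form $\TB^{*}\entails C\subs D$; antitonicity~(a) keeps every strengthened antecedent ($C\dland D$, $C\dland E$) at rank $\infty$; and classical monotony of $\entails$ then discharges each conclusion through the same fall-back clause. Excise the citation of Lemma~\ref{Lemma:SameRankings} and argue the boundary this way.

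Second, your argument for (Cons) is wrong. The outer loop of Function~\ref{Func:Ranking} terminates with $\DB^{*}_{\infty}=\emptyset$ on \emph{every} input --- persistently exceptional DCIs are not left in place but converted to GCIs and moved into $\TB^{*}$ --- so the termination condition cannot guarantee $\TB^{*}\nentails\bigsqcap\mat{\E_{0}}\subs\bot$. Concretely, for $\KB=\{\top\dsubs\bot\}$ the algorithm moves $\top\subs\bot$ into $\TB^{*}$, terminates with $\DB^{*}_{\infty}=\emptyset$ and $\DB^{*}=\emptyset$, and the fall-back clause then yields $\KB\proves_{\rat}\top\dsubs\bot$, so (Cons) genuinely fails for knowledge bases without a preferential model. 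You are in good company here --- the paper's own proof quietly verifies only (Ref), (LLE), (And), (Or), (RW), (CM) and (RM), omitting (Cons), which in effect presupposes satisfiability of $\KB$ wherever the lemma is later applied --- but your proposal asserts a specific implication (termination $\Rightarrow$ consistency of $\bigsqcap\mat{\E_{0}}$ with $\TB^{*}$) that is false, rather than flagging the needed consistency hypothesis.
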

\begin{proof}
Let $\KB=\TB\cup\DB$.
\myskip

\noindent Proving~(\emph{i}): Assume $C\subs D\in\TB$. $\KB\proves_{\rat}C\subs D$ iff $\TB^{*}\entails C\subs D$; since $\TB\subseteq\TB^{*}$, $\TB\subseteq\Cn{\rat}(\KB)$. Assume that $C\dsubs D\in\DB$. Either $C\dsubs D$ ends up in $\DB^{*}_\infty$, or there will be an~$i$ ($0\leq i\leq n$) \st\ $\rk(C)=\rk(C\dsubs D)=i$. In the former case, $C\subs D$ is in $\TB^{*}$, and so $\TB^{*}\entails C\subs D$, \ie, $\KB\proves_{\rat}C\dsubs D$. In the latter case, $\entails\mat{\E_{i}}\subs\lnot C\dlor D$, and so $\TB^{*}\entails\mat{\E_{i}}\dland C\subs D$, \ie, $C\dsubs D\in\Cn{\rat}(\KB)$. Hence $\TB\cup\DB\subseteq\Cn{\rat}(\KB)$.
\myskip

\noindent Proving~(\emph{ii}): Let $\dsubs^{\KB}_{\rat}\defined\{(C,D) \mid \KB\proves_{\rat}C\dsubs D\}$. We show $\dsubs^{\KB}_{\rat}$ satisfies all rationality properties.

\begin{itemize}\setlength{\itemsep}{1.3ex}
\item (Ref). Since $\entails C\subs C$ is valid for any $C\in\Lang$, we have that $\TB^{*}\entails\mat{\E_{i}}\dland C\subs C$ for any $\TB^{*}$ and $\mat{\E_{i}}$.
\item (LLE). $C\dsubs E\in\Cn{\rat}(\KB)$ implies that $\TB^{*}\entails\mat{\E_{i}}\dland C\subs E$ for some~$i$ (or $\TB^{*}\entails C\subs E$, if $\rk(C)=\infty$). Since $\entails C\equiv D$, $\mat{\E_{i}}$ is the lowest~$i$ \st\ $\TB^{*}\nentails\bigsqcap\mat{\E_{i}}\sqsubseteq\lnot D$, and $\TB^{*}\entails\mat{\E_{i}}\dland D\subs E$, too.
\item (And). $\TB^{*}\entails\bigsqcap\mat{\E_{i}}\dland C\subs D$ and $\TB^{*}\entails \bigsqcap\mat{\E_{i}}\dland C\subs E$ (possibly without $\bigsqcap\mat{\E_{i}}$, if~$C$ has an infinite rank), hence $\TB^{*}\entails\bigsqcap\mat{\E_{i}}\dland C\subs D\dland E$, that is, $C\dsubs D\dland E\in\Cn{\rat}(\KB)$.
\item (Or). $\TB^{*}\entails\bigsqcap\mat{\E_{i}}\dland C\subs E$ for some~$i$ and $\TB^{*}\entails\bigsqcap\mat{\E_{j}}\dland D\subs E$ for some~$j$. Assume that $i\leq j$ and $i<\infty$, that is, $\entails\bigsqcap\mat{\E_{i}}\subs\bigsqcap\mat{\E_{j}}$. Then, since $\TB^{*}\nentails\bigsqcap\mat{\E_{i}}\subs\lnot C$, we have that $\TB^{*}\nentails\bigsqcap\mat{\E_{i}}\subs\lnot(C\dlor D)$. Moreover $\TB^{*}\entails\bigsqcap\mat{\E_{j}}\dland D\subs E$ and $\entails\bigsqcap\mat{\E_{i}}\subs\bigsqcap\mat{\E_{j}}$ imply that $\TB^{*}\entails \bigsqcap\mat{\E_{i}}\dland D\subs E$. So, $\TB^{*}\entails\bigsqcap\mat{\E_{i}}\dland(C\dlor D)\subs E$. The proof is analogous for $j\leq i$ with $j<\infty$, or if~$i$ and~$j$ correspond to~$\infty$.
\item (RW). $C\dsubs D\in\Cn{\rat}(\KB)$ if $\TB^{*}\entails\bigsqcap\mat{\E_{i}}\dland C\subs D$ for some $\bigsqcap\mat{\E_{i}}$ (or $\TB^{*}\entails C\subs D$, if $\rk(C)=\infty$). Since $\entails D\subs E$, $\TB^{*}\entails\bigsqcap\mat{\E_{i}}\dland C\subs E$.
\item (CM). If $\rk(C)=i<\infty$, $\TB^{*}\entails\bigsqcap\mat{\E_{i}}\dland C\subs D$ and $\TB^{*}\entails\bigsqcap\mat{\E_{i}}\dland C\subs E$ for some $\bigsqcap\mat{\E_{i}}$. Since $\TB^{*}\entails\bigsqcap\mat{\E_{i}}\dland C\subs D$ and $\TB^{*}\nentails\bigsqcap\mat{\E_{i}}\subs\lnot C$, $\TB^{*}\nentails\bigsqcap\mat{\E_{i}}\subs\lnot(C\dland D)$, otherwise we would have $\TB^{*}\entails\bigsqcap\mat{\E_{i}}\dland C\subs D\dland\lnot D$, \ie, $\TB^{*}\entails\bigsqcap\mat{\E_{i}}\subs\lnot C$. Hence we have $C\dland D\dsubs E\in\Cn{\rat}(\KB)$ since $\TB^{*}\entails\bigsqcap\mat{\E_{i}}\dland C\dland D\subs E$. If $\rk(C)=\infty$, we have $\TB^{*}\entails C\subs\bot$, and the proof is trivial.
\item (RM). If $\rk(C)=i<\infty$, $\TB^{*}\entails\bigsqcap\mat{\E_{i}}\dland C\subs E$ and $\TB^{*}\nentails\bigsqcap\mat{\E_{i}}\dland C\subs\lnot D$ for some $\bigsqcap\mat{\E_{i}}$. Since $\TB^{*}\nentails\bigsqcap\mat{\E_{i}}\dland C\subs\lnot D$, $\TB^{*}\nentails\bigsqcap\mat{\E_{i}}\subs\lnot(C\dland D)$, otherwise we would have $\TB^{*}\entails\bigsqcap\mat{\E_{i}}\dland C\subs\lnot D$. Hence we have $C\dland D\dsubs E\in\Cn{\rat}(\KB)$ since $\TB^{*}\entails\bigsqcap\mat{\E_{i}}\dland C\dland D\subs E$. If $\rk(C)=\infty$, then we have $\TB^{*}\entails C\subs\bot$, and the proof is trivial. \hfill\ \qed
\end{itemize}
\end{proof}



The following lemma states that, as in the propositional case~\cite{LehmannMagidor1992}, our procedure correctly manages the classical information, that is, an axiom $C\dsubs\bot$ is in the rational closure of~$\KB$ if and only if it is also a modular consequence of~$\KB$.

\begin{lemma}[$*$]
\label{Lemma:ClassicalInformation}
Let $\KB=\TB\cup\DB$ and assume $C\dsubs D\in\DB$. Then $\KB\entails_{\modular}C\dsubs\bot$ iff $\rk(C)=\infty$ iff $\TB^{*}\entails C\subs\bot$.
\end{lemma}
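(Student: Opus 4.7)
The plan is to prove the three-way equivalence as two biconditionals: (a) the algorithmic equivalence $\rk(C) = \infty$ iff $\TB^* \entails C \subs \bot$, and (b) the semantic equivalence $\KB \entails_{\modular} C \dsubs \bot$ iff $\rk(C) = \infty$. The structure of the outer loop of $\compRank$ is central to both.

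For (a), the direction $\TB^* \entails C \subs \bot \Rightarrow \rk(C) = \infty$ is immediate from Definition~\ref{Def:Ranking}, since unsatisfiability of $C$ relative to $\TB^*$ makes $\TB^* \entails \bigsqcap\mat{\E_i} \dland C \subs \bot$ hold trivially at every level. For the converse, suppose $\rk(C) = \infty$ with $C \dsubs D \in \DB$. If $C \dsubs D$ still belonged to the final $\DB^*$, the termination condition $\DB^*_\infty = \emptyset$ would force $C$ to be non-exceptional at some level $\E_j$ of the returned sequence, yielding $\rk(C) \leq j$, a contradiction. Hence $C \dsubs D$ must have been transferred into $\TB^*$ during some outer iteration $k$ at which it lay in the fixed point $\E_m$, so the Exceptional test delivers $\TB^{*,k} \entails \bigsqcap\mat{\E_m} \subs \neg C$. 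The transfer enriches $\TB^*$ with the classical axioms $E \subs F$ for every $E \dsubs F \in \E_m$, and these entail each materialisation $\neg E \dlor F$ classically; hence $\TB^* \entails \top \subs \bigsqcap\mat{\E_m}$. Composition yields $\TB^* \entails C \subs \bot$, which is preserved monotonically through the remaining outer iterations.

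For (b), the direction $\TB^* \entails C \subs \bot \Rightarrow \KB \entails_{\modular} C \dsubs \bot$ rests on showing that every classical axiom added to $\TB^*$ by $\compRank$ is already modularly entailed by $\KB$. When a DCI $E \dsubs F$ is transferred, $E$ lies in the fixed point of Exceptional at that iteration and is therefore exceptional at every level of the iterated sequence; via Lemma~\ref{Lem:Exceptionality} together with the inductive invariant that the current $\TB^{*,k} \cup \DB^{*,k}$ is modularly equivalent to $\KB$, one obtains $\KB \entails_{\modular} E \subs \bot$, so $\KB \entails_{\modular} E \subs F$ vacuously. Consequently $\TB^* \entails C \subs \bot$ yields $\KB \entails_{\modular} C \subs \bot$, and Lemma~\ref{Lemma:ClassicalStatements} delivers $\KB \entails_{\modular} C \dsubs \bot$. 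For the converse, Lemmas~\ref{Lemma:ClassicalStatements} and~\ref{Lemma:InfiniteRank} reduce $\KB \entails_{\modular} C \dsubs \bot$ to $\rank_{\KB}(C) = \infty$; an induction on outer-loop iterations, using Lemma~\ref{Lemma:Characterisation} to connect $\rank_{\KB}$ with the big ranked model, then shows that a semantically infinitely-ranked antecedent is eventually transferred into some $\DB^*_\infty$, whence (a) gives $\rk(C) = \infty$.

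The main obstacle is the gap between algorithmic and semantic exceptionality. Lemma~\ref{Lem:Exceptionality} guarantees only soundness of the materialisation-based test at a single level; completeness across the entire outer loop requires two nested fixed-point arguments together with the loop invariant $\TB^{*,k} \cup \DB^{*,k} \equiv_{\modular} \KB$. Establishing this invariant---and in particular verifying that each outer-loop transfer preserves modular equivalence because the transferred DCIs have semantic infinite rank---is the technical crux, and is precisely what the iterative outer loop in $\compRank$ was designed to capture, correcting the shortcoming of the original Casini--Straccia procedure.
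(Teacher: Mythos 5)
Your part (a) is fine: the easy direction is immediate from Definition~\ref{Def:Ranking}, and your converse correctly exploits the structure of $\compRank(\cdot)$ (a DCI surviving into the final $\DB^{*}$ drops out of the exceptionality sequence at some level since the final fixed point is empty, so its antecedent gets finite rank; a transferred DCI's antecedent becomes classically unsatisfiable because the transferred GCIs entail $\top\subs\bigsqcap\mat{\E_{m}}$). The genuine gap is in part (b), and it sits exactly where you yourself locate ``the technical crux''. The step ``via Lemma~\ref{Lem:Exceptionality} together with the inductive invariant \ldots one obtains $\KB\entails_{\modular}E\subs\bot$'' is a non sequitur: Lemma~\ref{Lem:Exceptionality} is sound for \emph{exceptionality} only, i.e.\ it yields $\KB\entails_{\modular}\top\dsubs\lnot E$, which is much weaker than $\KB\entails_{\modular}E\subs\bot$ (Penguin is exceptional in the birds example yet perfectly satisfiable). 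To bridge exceptionality and unsatisfiability you would need (i)~an inclusion argument showing the syntactic fixed point is contained in the set of DCIs of \emph{semantic} rank $\infty$ (an induction using soundness plus monotonicity of $\entails_{\modular}$ over shrinking DTBoxes), and (ii)~Lemma~\ref{Lemma:InfiniteRank}; neither is invoked at that point. Worse, your argument is structurally circular: the loop invariant that $\TB^{*,k}\cup\DB^{*,k}$ is modularly equivalent to $\KB$ is \emph{assumed}, yet its preservation under a transfer is logically equivalent to the very claim $\KB\entails_{\modular}E\subs\bot$ you are using it to derive. You concede the invariant must be ``established'' but never establish it, so the decisive step is asserted, not proven. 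The same holds for your converse direction: ``an induction on outer-loop iterations \ldots shows that a semantically infinitely-ranked antecedent is eventually transferred'' is precisely the \emph{completeness} of the materialisation-based test across iterations, which Lemma~\ref{Lem:Exceptionality} does not give (the paper explicitly notes its converse fails), and you offer no mechanism for it.

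For contrast, the paper sidesteps both difficulties. For $\KB\entails_{\modular}C\dsubs\bot\Rightarrow\TB^{*}\entails C\subs\bot$ it argues proof-theoretically: Lemma~\ref{Prop:RationalInducedSubsumption} verifies the KLM properties syntactically to show that $\proves_{\rat}$ induces a \emph{rational} subsumption relation extending $\KB$; since by Lemma~\ref{Lemma:ModClosure} modular entailment is the intersection over all rational theories containing $\KB$, it follows that $\KB\proves_{\rat}C\dsubs\bot$, and unpacking Definition~\ref{Def:RationalDeduction} forces $\TB^{*}\entails C\subs\bot$ --- no model construction and no completeness claim about the materialisation test is needed. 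For $\TB^{*}\entails C\subs\bot\Rightarrow\KB\entails_{\modular}C\dsubs\bot$ it argues directly and semantically: supposing a modular model of $\KB$ with $C^{\RI}\neq\emptyset$, some transferred GCI must fail in it, and satisfaction of the corresponding DCIs then generates an infinite $\pref^{\RI}$-descending chain among the (finitely many) antecedents of the transferred axioms, contradicting the smoothness condition. If you want to salvage your route, complete step (i) above and then apply Lemma~\ref{Lemma:InfiniteRank}; that would indeed give an alternative proof of the if-direction, but as written your proposal leaves the central implication unproven.
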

\begin{proof}
\noindent Let $\KB=\TB\cup\DB$.
\myskip

\noindent For the only-if part, $\KB\entails_{\modular}C\dsubs\bot$ implies that every rational subsumption relation containing~$\KB$ must satisfy also $C\dsubs\bot$. Hence we have that $\KB\proves_{\rat}C\dsubs\bot$, since $\Cn{\rat}(\KB)$ induces a rational subsumption relation extending~$\KB$ (Lemma~\ref{Prop:RationalInducedSubsumption}). From Definition~\ref{Def:RationalDeduction}, we know that $\KB\proves_{\rat}C\dsubs\bot$ is possible only if~$C$ is always negated in the ranking procedure, \ie, $\TB^{*}\entails C\subs\bot$.
\myskip

\noindent For the if part, we define from~$\KB$ a new knowledge base $\KB^{*}\defined\TB^{*}\cup\DB^{*}$, with $\TB^{*}$ obtained from~$\TB$ by adding all the sets $\{C\subs D\mid C\dsubs D\in\DB^{*}_{\infty}\}$ that we obtain at each iteration of function~$\compRank(\cdot)$. Let us denote with $\DB^{1}_{\subs},\ldots,\DB^{n}_{\subs}$ such sets. Assume that $\TB^{*}\entails C\subs\bot$, but $\KB\nentails_{\modular}C\dsubs\bot$, \ie, there is a modular model of~$\KB$ in which~$C$ is non-empty. Let~$\RI$ be such a model, with an object~$x$ falling under~$C^{\RI}$. Since $\TB^{*}\entails C\subs\bot$, there must be a GCI $E\subs F$ in some $\DB^{i}_{\subs}$ that is not satisfied, that is, given the nature of the GCIs in every $\DB^{n}_{\subs}$ ($\TB^{*}\entails E\subs\bot$ for every $E\subs F$ contained in some $\DB^{n}_{\subs}$), this means that there is a subsumption $E\subs\bot$ that is not satisfied in~$\RI$. Therefore, there must be an object~$y$ falling under~$E^{\RI}$. Hence, assuming $E\subs F\in\DB^{i}_{\subs}$, since $\TB\cup\DB^{1}_{\subs}\cup\ldots\cup\DB^{i-1}_{\subs}\entails\bigsqcap\{\lnot G\dlor H\mid G\subs H\in\DB^{i}_{\subs}\}\subs\lnot E$, either $\RI\sat\TB\cup\DB^{1}_{\subs}\cup\ldots\cup\DB^{i-1}_{\subs}$ and $y\in(G\dland\lnot H)^{\RI}$ for some $G\subs H\in\DB^{i}_{\subs}$ (Case~1 below), or $\RI\nsat\TB\cup\DB^{1}_{\subs}\cup\ldots\cup\DB^{i-1}_{\subs}$ (Case~2 below).

\begin{description}\setlength{\itemsep}{1.3ex}
\item[Case 1.] Since~$\RI\sat\KB$, $\RI$ is also a model of $G\dsubs H$, which is an element of~$\DB$. Hence there must be an object~$y$ such that $y\pref^{\RI}x$ (remember that~$x\in C^{\RI}$) and $y\in(G\dland H)^{\RI}$. Again, since $G\subs H\in\DB^{i}_{\subs}$ (which implies $\TB\cup\DB^{1}_{\subs}\cup\ldots\cup\DB^{i-1}_{\subs}\entails\bigsqcap\{\lnot G\dlor H\mid G\subs H\in\DB^{i}_\subs\}\subs\lnot G$) and~$\RI\sat\TB\cup\DB^{1}_{\subs}\cup\ldots\cup\DB^{i-1}_{\subs}$, there must be a GCI $I\subs L\in\DB^{i}_{\subs}$ such that $y\in(I\dland\lnot L)^{\RI}$, and we need an object~$z$ such that $z\pref^{\RI}y$ and $z\in(H\dland I)^{\RI}$, and so on\ldots This procedure creates an infinitely descending chain of objects, and, since the number of the antecedents of the axioms in~$\DB^{i}_{\infty}$ is finite, it cannot be the case since the model would not satisfy the smoothness condition for the concept $\bigsqcup\{C\mid C\dsubs D\in\DB^{i}_{\infty}\}$ (see Definition~\ref{Def:PrefInterpretation}).
\item[Case 2.] If $\RI\nsat\TB\cup\DB^{1}_{\subs}\cup\ldots\cup\DB^{i-1}_{\subs}$, then~$\RI$ does not satisfy some $E\subs F\in\DB^{j}_{\subs}$ for some $j<i$, and therefore there must be an object falling under~$E^{\RI}$. Again, it is either Case~1 or Case~2. Nevertheless, since at every iteration of Case~2 we pick a lower value~$j$ for $\DB^{j}_{\subs}$ and we have a finite sequence of $\DB^{j}_{\subs}$, we know that after some steps (in the worst case when we reach $\DB^{0}_{\subs}$) we necessarily fall into Case~1, which cannot be the case. \hfill\ \qed
\end{description}
\end{proof}

An immediate consequence of Lemma~\ref{Lemma:ClassicalInformation} binds preferential consistency  (existence of a preferential model -- \cf\ Definition~\ref{Def:Satisfaction}) to classical consistency.

\begin{corollary}[$*$]
\label{Cor:Consistency}
Let $\KB=\TB\cup\DB$. Then $\KB\entails_{\modular}\top\dsubs\bot$ iff $\TB^{*}\entails \top\subs\bot$.
\end{corollary}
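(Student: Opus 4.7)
\noindent\textbf{Proof plan for Corollary~\ref{Cor:Consistency}.} The plan is to derive the corollary as the specialization of Lemma~\ref{Lemma:ClassicalInformation} to the concept $C = \top$. The only friction is that Lemma~\ref{Lemma:ClassicalInformation} is stated under the hypothesis that $C \dsubs D \in \DB$ for some~$D$, which need not hold for $C = \top$. To remove this friction, I would consider the augmented defeasible knowledge base $\KB' = \TB \cup (\DB \cup \{\top \dsubs \top\})$ and reduce the claim about $\KB$ to the corresponding claim about~$\KB'$.

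The first step is to verify that $\KB'$ is modularly equivalent to~$\KB$. Since $\min_{\prec^{\PI}} \top^{\PI} \subseteq \top^{\PI} = \Dom^{\PI}$ holds trivially in every preferential interpretation~$\PI$, every preferential interpretation satisfies $\top \dsubs \top$. Consequently, the preferential (and hence modular) models of~$\KB$ and those of~$\KB'$ coincide, so in particular $\KB \entails_{\modular} \top \dsubs \bot$ iff $\KB' \entails_{\modular} \top \dsubs \bot$.

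The second step is to verify that running $\compRank(\cdot)$ on~$\KB'$ produces a~$\TB^{*}$ that is classically equivalent to the one produced by $\compRank(\KB)$. The extra DCI $\top \dsubs \top$ can only be moved into $\DB^{*}_{\infty}$ at a stage at which $\top$ itself is exceptional in the current knowledge base, and when this happens the only GCI it contributes to $\TB^{*}$ is the validity $\top \subs \top$, which affects no classical consequence. Since $\top$'s exceptionality status is determined purely by the remaining, non-trivial DCIs, the sequence of exceptionality sets is identical to that computed from~$\KB$, and the two resulting TBoxes classically entail the same GCIs. In particular, $\TB^{*} \entails \top \subs \bot$ holds for~$\KB'$ iff it holds for~$\KB$.

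Finally, because $\top \dsubs \top \in \DB \cup \{\top \dsubs \top\}$, Lemma~\ref{Lemma:ClassicalInformation} applies to~$\KB'$ with $C = \top$ (and $D = \top$), yielding $\KB' \entails_{\modular} \top \dsubs \bot$ iff $\TB^{*} \entails \top \subs \bot$. Combining this with the two observations above gives $\KB \entails_{\modular} \top \dsubs \bot$ iff $\TB^{*} \entails \top \subs \bot$. I do not expect any significant obstacle here: the main point is the straightforward bookkeeping needed to confirm that the injection of the tautological DCI $\top \dsubs \top$ leaves both the modular consequences and the algorithmic output untouched in any relevant way.
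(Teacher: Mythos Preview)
Your proposal is correct and, in fact, more careful than the paper's treatment. The paper simply labels the corollary an ``immediate consequence'' of Lemma~\ref{Lemma:ClassicalInformation} and gives no further argument, glossing over exactly the point you flagged: the lemma is stated under the side condition $C\dsubs D\in\DB$, which need not hold for $C=\top$. If one inspects the proof of Lemma~\ref{Lemma:ClassicalInformation}, however, that hypothesis is never actually used---neither direction of the argument appeals to the presence of a DCI with antecedent~$C$ in~$\DB$---so the paper's implicit route is to apply the lemma's proof verbatim with $C=\top$. Your route instead respects the lemma as stated: you inject the tautological DCI $\top\dsubs\top$, observe that its materialisation is~$\top$ (so it perturbs no exceptionality test for other DCIs) and that the only GCI it can ever contribute to~$\TB^{*}$ is the validity $\top\subs\top$, and then invoke the lemma on~$\KB'$. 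Both arguments are sound; yours trades a short inspection of the lemma's proof for some harmless bookkeeping about $\compRank(\cdot)$. One minor remark: your sentence ``the sequence of exceptionality sets is identical to that computed from~$\KB$'' is slightly too strong---the sets may differ by the presence or absence of $\top\dsubs\top$, and the fixed point may be reached one step later---but this does not affect your conclusion about~$\TB^{*}$, which is what you need.
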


We can now prove that the knowledge bases $\KB=\TB\cup\DB$ and $\KB^{*}=\TB^{*}\cup\DB^{*}$ (in rank normal form) are modularly equivalent.

\restatablePrefEquivalence*


\begin{proof}
Given $\KB=\TB\cup\DB$, the function $\compRank(\KB)$ outputs a knowledge base $\KB^{*}=\TB^{*}\cup\DB^{*}$, in which the iteration of lines 5--13 identifies a (possibly empty) set $\{C_1\dsubs D_1,\ldots,C_n\dsubs D_n\}$ of always exceptional defeasible subsumptions, that is moved from $\DB$ to $\TB$. That is, we have $\TB^{*}=\TB\cup\{C_1\subs D_1,\ldots,C_n\subs D_n\}$ and $\DB^{*}=\DB\setminus\{C_1\dsubs D_1,\ldots,C_n\dsubs D_n\}$. It is sufficient to prove that $\KB\entails_{\modular} C_{i}\subs\bot$ and $\KB^{*}\entails_{\modular}C_i\dsubs D_{i}$ for every $C_{i}\dsubs D_{i}$ ($1\leq i\leq n$).

Let $C_{i}\dsubs D_{i}\in\DB\setminus\DB^{*}$. It means that, at some iteration through Lines~4--14  of function~$\compRank(\cdot)$, we have $\TB^{*}\entails\bigsqcap\mat{\DB^{*}_{\infty}}\subs\lnot C_{i}$, which implies that $\TB^{*}\cup\DB^{*_{\subs}}_{\infty}\entails\top\subs\lnot C_{i}$, where $\DB^{*_{\subs}}_{\infty}\defined\{C\subs D\mid C\dsubs D\in \DB^{*}_{\infty}\}$). Since every $\DB^{*_{\subs}}_{\infty}$ created at every iteration is contained in the final $\TB^{*}$, using such final $\TB^{*}$ we have that $\TB^{*}\entails C_{i}\subs\bot$. Hence, by Lemma~\ref{Lemma:ClassicalInformation} we have that $\KB\entails_{\modular}C_{i}\dsubs\bot$, \ie, $\KB\entails_{\modular}C_{i}\subs\bot$.

On the other hand, if $C_{i}\dsubs D_{i}\in\DB\setminus\DB^{*}$, then $C_{i}\subs D_{i}\in\TB^{*}$, and hence $\KB^{*}\entails_{\modular}C_{i}\dsubs D_{i}$ by supra-classicality (\cf\ proof of Lemma~\ref{Lemma:TopC} below).
\hfill\ \qed
\end{proof}
\myskip

Now we are justified in using the rank normal form $\KB^{*}=\TB^{*}\cup\DB^{*}$ in order to analyse the rational closure of the knowledge base $\KB=\TB\cup\DB$. Hence, in what follows we shall assume that the knowledge bases we are working with are already in rank normal form (and therefore $\DB_{\infty}=\emptyset$).

In the next lemma, we observe that the inference relation $\proves_{\rat}$ respects the preferential conclusions of~$\KB$ \wrt\ assertions of the form $\top\dsubs C$, another desideratum proven for the propositional case by Lehmann and Magidor~\cite{LehmannMagidor1992}.

\begin{lemma}[$*$]
\label{Lemma:TopC}
For every $C\in\Lang$, $\KB\entails_{\modular}\top\dsubs C$ iff $\KB\proves_{\rat}\top\dsubs C$.
\end{lemma}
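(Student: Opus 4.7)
The plan is to prove both directions separately, exploiting the relationships established in Lemmas~\ref{Prop:RationalInducedSubsumption}, \ref{Lemma:ModClosure} and~\ref{Prop:PrefEquivalence}. I can handle the inconsistent case at the start: if $\KB$ has no modular model, then $\KB\entails_{\modular}\top\dsubs C$ trivially, and by Corollary~\ref{Cor:Consistency} we have $\TB^{*}\entails\top\subs\bot$, so $\KB\proves_{\rat}\top\dsubs C$ as well. So assume $\KB$ is modularly consistent, which by Corollary~\ref{Cor:Consistency} means $\TB^{*}\nentails\top\subs\bot$, and hence $\TB^{*}\nentails\bigsqcap\mat{\E_{0}}\subs\lnot\top$ since $\bigsqcap\mat{\E_{0}}$ is satisfiable over models of~$\TB^{*}$. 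Thus $\rk(\top)=0$, and by Definition~\ref{Def:RationalDeduction}, $\KB\proves_{\rat}\top\dsubs C$ amounts to $\TB^{*}\entails\bigsqcap\mat{\DB^{*}}\subs C$ (since $\E_{0}=\DB^{*}$).

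For the only-if direction (modular entailment implies rational deduction), I would invoke Lemma~\ref{Prop:RationalInducedSubsumption}, which gives that $\dsubs^{\KB}_{\rat}$ is a rational subsumption relation and that $\KB\subseteq\Cn{\rat}(\KB)$. The defeasible theory induced by $\Cn{\rat}(\KB)$ is then rational in the sense of Definition~\ref{Def:RationalalKB} and contains~$\KB$. By Lemma~\ref{Lemma:ModClosure}, $\dsubs_{\KB^{*}_{\modular}}$ is contained in the induced subsumption relation of every rational theory extending~$\KB$, and so in particular in~$\dsubs^{\KB}_{\rat}$. Hence $\KB\entails_{\modular}\top\dsubs C$ yields $\KB\proves_{\rat}\top\dsubs C$. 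Notice that this direction actually holds for arbitrary DCIs, not just those with $\top$ on the left.

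The substantial direction is the if part. Assume $\TB^{*}\entails\bigsqcap\mat{\DB^{*}}\subs C$ and let $\RI$ be any modular model of~$\KB$. By Lemma~\ref{Prop:PrefEquivalence}, $\RI$ is also a model of~$\KB^{*}$, hence of~$\TB^{*}$. I would pick an arbitrary $x\in\min_{\pref^{\RI}}\Dom^{\RI}$ (such a minimum exists by smoothness applied to~$\top$) and show $x\in(\bigsqcap\mat{\DB^{*}})^{\RI}$. For this, fix any $E\dsubs F\in\DB^{*}$. If $x\notin E^{\RI}$, then trivially $x\in(\lnot E\dlor F)^{\RI}$. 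If $x\in E^{\RI}$, then since $x$ is minimal in the whole domain, it is in particular $\pref^{\RI}$-minimal in~$E^{\RI}$; using $\RI\sat E\dsubs F$, we get $x\in F^{\RI}$, and again $x\in(\lnot E\dlor F)^{\RI}$. Taking the conjunction over all of~$\DB^{*}$ gives $x\in(\bigsqcap\mat{\DB^{*}})^{\RI}$, whence by $\TB^{*}\entails\bigsqcap\mat{\DB^{*}}\subs C$ we conclude $x\in C^{\RI}$. Since $x$ was an arbitrary minimal element of~$\Dom^{\RI}$, this gives $\min_{\pref^{\RI}}\top^{\RI}\subseteq C^{\RI}$, i.e., $\RI\sat\top\dsubs C$, as required.

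The only mildly delicate points are the book-keeping around rank normal form (justifying the reduction to~$\KB^{*}$ with $\DB^{*}_{\infty}=\emptyset$ via Lemma~\ref{Prop:PrefEquivalence}) and the consistency case distinction. The heart of the argument, however, is the simple observation that a globally minimal object in any modular model automatically satisfies every materialised defeasible axiom, a fact that collapses the modular entailment of $\top$-headed DCIs onto classical entailment from~$\TB^{*}\cup\mat{\DB^{*}}$.
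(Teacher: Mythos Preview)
Your proof is correct. The only-if direction matches the paper's: both invoke Lemma~\ref{Prop:RationalInducedSubsumption} to see that $\Cn{\rat}(\KB)$ is a rational extension of~$\KB$, and you make the connecting step via Lemma~\ref{Lemma:ModClosure} explicit where the paper leaves it implicit.

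The if direction, however, proceeds along genuinely different lines. The paper argues proof-theoretically at the level of the KLM rules: it first derives the auxiliary rules (Sup) and (S), uses (S) to get $\KB\entails_{\modular}\top\dsubs\lnot E\dlor F$ for each $E\dsubs F\in\DB^{*}$, combines these via (And) into $\KB\entails_{\modular}\top\dsubs\bigsqcap\mat{\DB^{*}}$, and finishes with (RW) using $\TB^{*}\entails\bigsqcap\mat{\DB^{*}}\subs C$. Your argument is instead purely semantic: you fix an arbitrary modular model, observe that any globally $\pref$-minimal object must lie in every materialisation (since it is minimal in each antecedent it belongs to), and conclude directly that such an object lies in~$C$. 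Your route is shorter and avoids introducing the derived rules (Sup) and (S); the paper's route has the merit of exhibiting exactly which KLM properties are responsible for the phenomenon, which is more in keeping with the representational spirit of the surrounding section. Both are perfectly sound, and your preliminary book-keeping (the inconsistent case via Corollary~\ref{Cor:Consistency}, and the reduction to $\TB^{*}\entails\bigsqcap\mat{\DB^{*}}\subs C$ via rank normal form and $\rk(\top)=0$) is handled correctly.
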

\begin{proof}
First of all, recall that $\KB\proves_{\rat}\top\dsubs C$ if $\TB^{*}\entails\bigsqcap\mat{\DB^{*}}\subs C$ (\cf\ Definition~\ref{Def:RationalDeduction}).
\myskip

\noindent For the if part, first we need to prove two properties of~$\entails_{\modular}$, namely supra-classicality (Sup) and one half of the deduction theorem (S):
\[
(\text{\small Sup})\ \frac{C\subs D}{C\dsubs D}
\]
The derivation of Sup is straightforward: remember that $C\dsubs C$ holds (Ref), assume $C\subs D$ and then apply (RW).
\[
(\text{\small S})\ \frac{C\dsubs D}{\top\dsubs\lnot C\dlor D}
\]
To see that (S) holds, assume $C\dsubs D$ and note that $\entails D\subs\lnot C\dlor D$; we derive by (RW) $C\dsubs\lnot C\dlor D$. Since $\entails\lnot C\subs\lnot C\dlor D$, we obtain $\lnot C\dsubs\lnot C\dlor D$ by (Sup). Then apply (Or) to $C\dsubs\lnot C\dlor D$ and $\lnot C\dsubs\lnot C\dlor D$, obtaining $\top\dsubs\lnot C\dlor D$.

Now we have to prove that if $\TB^{*}\entails\bigsqcap\mat{\DB^{*}}\subs C$, then $\KB\entails_{\modular}\top\dsubs C$.

From Lemma~\ref{Prop:PrefEquivalence} we know that $\TB^{*}\cup\DB^{*}$ is in the modular consequences of~$\KB$. Applying~(S) to all DCIs $C\dsubs D$ in $\DB^{*}$, we have $\KB\entails_{\modular}\top\dsubs\lnot C\dlor D$ from each of them. Applying (And) to all these DCIs, we have $\top\dsubs\bigsqcap\mat{\DB'}$  and, by (RW'), we obtain $\top\dsubs C$.
\myskip

The only-if part is an immediate consequence of Lemma~\ref{Prop:RationalInducedSubsumption}. \hfill\ \qed
\myskip
\end{proof}

\begin{lemma}[$*$]\label{Lemma:SameInfiniteRank}
For every $\KB=\TB\cup\DB$ and every $C\in\Lang$, $\rank_{\KB}(C)=\infty$ iff $\rk(C)=\infty$.
\end{lemma}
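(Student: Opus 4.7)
The plan is to obtain the equivalence as a short chain of iffs that routes both $\rank_{\KB}(C)=\infty$ and $\rk(C)=\infty$ through the single classical condition $\TB^{*}\entails C\subs\bot$. On the semantic side, Lemma~\ref{Lemma:InfiniteRank} immediately rewrites $\rank_{\KB}(C)=\infty$ as $\KB\entails_{\modular}C\subs\bot$; and since smoothness guarantees that every preferential (hence modular) interpretation $\RI$ satisfies $\min_{\pref^{\RI}}C^{\RI}=\emptyset$ iff $C^{\RI}=\emptyset$, this in turn is equivalent to $\KB\entails_{\modular}C\dsubs\bot$. On the syntactic side, the second equivalence of Lemma~\ref{Lemma:ClassicalInformation} rewrites $\rk(C)=\infty$ as $\TB^{*}\entails C\subs\bot$. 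The bridge $\KB\entails_{\modular}C\dsubs\bot$ iff $\TB^{*}\entails C\subs\bot$ is then the first equivalence of Lemma~\ref{Lemma:ClassicalInformation}, so the whole argument collapses into transitivity once the bridge is in place.

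The catch, and what I expect to be the main obstacle, is that Lemma~\ref{Lemma:ClassicalInformation} is formally stated under the hypothesis ``$C\dsubs D\in\DB$'', whereas Lemma~\ref{Lemma:SameInfiniteRank} quantifies over arbitrary $C\in\Lang$. I would therefore first verify that this hypothesis is never actually used in the proof of Lemma~\ref{Lemma:ClassicalInformation}. Its ``if'' direction derives a contradiction from an infinite descending $\pref^{\RI}$-chain that violates smoothness of $\bigsqcup\{C'\mid C'\dsubs D'\in\DB^{i}_{\infty}\}$, a construction that references $C$ only through the existence of some $x\in C^{\RI}$ and so goes through for any concept; its ``only if'' direction passes from $\KB\entails_{\modular}C\dsubs\bot$ to $\KB\proves_{\rat}C\dsubs\bot$ via Lemma~\ref{Prop:RationalInducedSubsumption} and then reads off $\TB^{*}\entails C\subs\bot$ from Definition~\ref{Def:RationalDeduction}, and both are defined uniformly on pairs in $\Lang\times\Lang$ with no syntactic restriction on~$C$.

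With this extended form of Lemma~\ref{Lemma:ClassicalInformation} in hand, the rest is mechanical: $\rank_{\KB}(C)=\infty$ iff $\KB\entails_{\modular}C\subs\bot$ iff $\KB\entails_{\modular}C\dsubs\bot$ iff $\TB^{*}\entails C\subs\bot$ iff $\rk(C)=\infty$. I would present this chain and conclude.
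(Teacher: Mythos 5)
Your proof is correct, and it pivots on the same load-bearing lemma as the paper's: both arguments reduce the claim to Lemma~\ref{Lemma:ClassicalInformation} applied to an \emph{arbitrary} concept $C$. Where you differ is the semantic half. The paper does not cite Lemma~\ref{Lemma:InfiniteRank} here; it instead goes through the big ranked model $\OI$ and Lemma~\ref{Lemma:CountablyInfiniteDomain}, identifying $\rank_{\KB}(C)=\infty$ with $\KB\entails_{\rat}C\dsubs\bot$ and then with $\KB\entails_{\modular}C\dsubs\bot$. Your route --- Lemma~\ref{Lemma:InfiniteRank} followed by the smoothness bridge between $C\subs\bot$ and $C\dsubs\bot$, which is just Lemma~\ref{Lemma:ClassicalStatements} instantiated with $D=\bot$ --- is more direct, since Lemma~\ref{Lemma:InfiniteRank} already states exactly the semantic equivalence the paper re-derives via the big model. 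Your explicit audit of the hypothesis ``$C\dsubs D\in\DB$'' in Lemma~\ref{Lemma:ClassicalInformation} is also a genuine improvement: the paper applies that lemma to arbitrary $C$ without comment, and your check that neither direction of its proof uses the hypothesis (the ``if'' direction needing only some $x\in C^{\RI}$, the ``only if'' direction going through Lemma~\ref{Lemma:ModClosure}, Lemma~\ref{Prop:RationalInducedSubsumption} and Definition~\ref{Def:RationalDeduction}) is accurate.

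One caveat you should make explicit. Your audit covers the equivalence between $\KB\entails_{\modular}C\dsubs\bot$ and $\TB^{*}\entails C\subs\bot$; the remaining identification, $\rk(C)=\infty$ iff $\TB^{*}\entails C\subs\bot$, is nowhere argued in the proof of Lemma~\ref{Lemma:ClassicalInformation} --- it is treated as definitional, and it holds only if Definition~\ref{Def:Ranking} is read with the exceptionality sequence implicitly extended by a final empty level $\E_{n+1}=\emptyset$ (so that $\bigsqcap\mat{\E_{n+1}}=\top$), which is precisely the convention encoded in the fallback clause of Definition~\ref{Def:RationalDeduction}. Under the literal, truncated sequence $(\E_{0},\ldots,\E_{n})$ returned by $\compRank(\cdot)$, the direction $\rk(C)=\infty\Rightarrow\TB^{*}\entails C\subs\bot$ can fail for $C$ not an antecedent in $\DB$: with $\KB=\{A\dsubs B\}$ and $C=A\dland\lnot B$, every conjunction $\bigsqcap\mat{\E_{i}}\dland C$ with $i\leq n$ is unsatisfiable, so $\rk(C)=\infty$ literally, yet $\TB^{*}=\emptyset\nentails C\subs\bot$ (and indeed $\rank_{\KB}(C)=1$). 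This is exactly the point at which the restriction ``$C\dsubs D\in\DB$'' would do work, so ``the hypothesis is never used'' is true of the two proved directions but not of the definitional middle term. The paper's own proof relies on the same extended-sequence convention silently, so this is a shared wrinkle rather than an error peculiar to your argument; a careful write-up should state the convention before running your chain of equivalences.
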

\begin{proof}
Let $\KB=\TB\cup\DB$ and transform it into a modularly equivalent knowledge base~$\DB'$ composed of only DCIs (see Lemma~\ref{Lemma:ClassicalStatements}). Since the model~$\OI$ of the rational closure of~$\KB$ must also be a model of~$\DB'$, we can easily derive from Lemma~\ref{Lemma:CountablyInfiniteDomain} that $\KB\entails_{\rat}C\dsubs\bot$ (that is, $\rank_{\KB}(C)=\infty$) iff $\KB\entails_{\modular}C\dsubs\bot$. From Lemma \ref{Lemma:ClassicalInformation} we have that $\KB\entails_{\modular}C\dsubs\bot$ iff $\rk(C)=\infty$, hence the result. \hfill\ \qed
\end{proof}

\restatableSameRankings*

\begin{proof} From Lemmas~\ref{Lemma:ClassicalInformation} and~\ref{Lemma:SameInfiniteRank} and Lemma~\ref{Prop:PrefEquivalence} we can see that, given a knowledge base $\KB=\TB\cup\DB$ (possibly with an empty~$\TB$), we can define a modularly equivalent knowledge base $\KB^{*}=\TB^{*}\cup\DB^{*}$ such that all the classical information implicit in~$\DB$ is moved into $\TB^{*}$. $\KB^{*}$ can be defined identifying the elements of~$\DB$ that have~$\infty$ as ranking value, and Lemma~\ref{Lemma:SameInfiniteRank} shows that \wrt\ the value~$\infty$, $\rank_{\KB}(\cdot)$ and $\rk(\cdot)$ are equivalent, while Lemma~\ref{Prop:PrefEquivalence} tells us that~$\KB$ and~$\KB^{*}$ are modularly equivalent. Once we have defined $\KB^{*}$, Lemma~\ref{Lemma:TopC} implies that a concept~$C\in\Lang$ is exceptional \wrt~$\entails_{\rat}$ ($\KB\entails_{\modular}\top\dsubs\lnot C$) iff $\KB\proves_{\rat}\top\dsubs\lnot C$. Hence the two ranking functions $\rank_{\KB}(\cdot)$ and~$\rk(\cdot)$ give back exactly the same results. \hfill\ \qed
\end{proof}

\restatableSoundnessCompletenessRC*

\begin{proof}
 Since we have already proven Lemma~\ref{Lemma:SameRankings}, here we can use $\rk(\cdot)$ to indicate indifferently the equivalent ranking functions $\rank_{\KB}(\cdot)$ and $\rk(\cdot)$.
\myskip

For the only-if part, assume $\KB\entails_{\rat}C\dsubs D$. That means that either $\rk(C\dland\lnot D)>\rk(C)$ or $\rk(C)=\infty$. In the first case, it means that there is some~$i$, $0\leq i\leq n$, such that $\TB^{*}\nentails\bigsqcap\mat{\E_{i}}\subs\lnot C$ and $\TB^{*}\entails\bigsqcap\mat{\E_{i}}\subs\lnot(C\dland\lnot D)$, hence $\TB^{*}\entails\bigsqcap\mat{\E_{i}}\dland C\subs D$, \ie, $\KB\proves_{\rat}C\dsubs D$. In the second case, we have $\TB^{*}\entails C\subs\bot$, which implies $\KB\proves_{\rat}C\dsubs D$.
\myskip

For the if part, assume $\KB\proves_{\rat}C\dsubs D$. Then either there is some~$i$ which is the lowest number such that $\TB^{*}\nentails\bigsqcap\mat{\E_{i}}\subs\lnot C$ (hence $\rk(C)=i$), or $\TB^{*}\entails C\subs\bot$. In the first case, we have also that $\TB^{*}\entails \bigsqcap\mat{\E_{i}}\dland C\subs D$, which implies $\TB^{*}\entails\bigsqcap\mat{\E_{i}}\subs\lnot(C\dland\lnot D)$, \ie, $\rk(C\dland\lnot D)>i$. In the second case, $\rk(C)=\infty$, which implies $\KB\entails_{\rat}C\dsubs D$. \hfill\ \qed
\end{proof}

\restatableComplexity*
\begin{proof}
Observe that function~$\compClosure(\cdot)$ performs at most $n+2$ (classical) subsumption checks, where~$n$ is the number of ranks assigned to elements of $\DB$. So the number of subsumption checks performed by function~$\compClosure(\cdot)$ is $O(|\DB|)$. Furthermore, we need to call function~$\compRank(\cdot)$ to obtain the knowledge base $\KB^{*}=\TB^{*}\cup\DB^{*}$ and the sequence $\E_{0},\ldots,\E_{n}$, which are needed as input to function~$\compClosure(\cdot)$. First bear in mind that function~$\excep(\cdot)$, with~$\E$ as input, performs at most $|\E|$ classical subsumption checks. From this, and an analysis of function~$\compRank(\cdot)$, it follows that the number of subsumption checks performed by function~$\compRank(\cdot)$ is $O(|\DB|^3)$. Since we know that subsumption checking \wrt\ general TBoxes in~$\ALC$ is \ExpTime-complete~\cite[Chapter~3]{BaaderEtAl2007}, the result follows. \hfill\ \qed
\end{proof}

\end{document}